\renewcommand{\P}{\mathbb{P}}
\newcommand{\R}{\mathbb{R}}
\newcommand{\CX}{\mathcal{X}}
\newcommand{\CY}{\mathcal{Y}}
\newcommand{\CM}{\mathcal{M}}
\newcommand{\CF}{\mathcal{F}}
\newcommand{\abs}[1]{\left\lvert#1\right\rvert}
\newcommand{\brac}[1]{\left\{#1\right\}}
\newcommand{\ip}[1]{\left\langle#1\right\rangle}
\newcommand{\norm}[1]{\left\lVert#1\right\rVert}
\newcommand{\paren}[1]{\left(#1\right)}
\newcommand{\floor}[1]{\left\lfloor#1\right\rfloor}
\DeclareMathOperator{\E}{\mathbb{E}}
\newtheorem{theorem}{Theorem}[section]
\newtheorem{lemma}[theorem]{Lemma}
\newtheorem{proposition}[theorem]{Proposition}
\newtheorem{remark}{Remark}[section]
\newtheorem{assumption}{Assumption}[section]
\newcounter{listctr}
\newenvironment{custlist}[1][Case]%
	{\vspace{-5pt}\begin{list}{}{\usecounter{listctr}%
		\renewcommand\makelabel[1]{\hfil\itshape #1\ \arabic{listctr}.}%
		\settowidth\labelwidth{\makelabel{\quad}}%
		\setlength\leftmargin{0pt}}%
		\setlength\itemindent{\labelwidth+\labelsep}}
	{\end{list}}
\title{Normalization effects on deep neural networks}
\author{Jiahui Yu\footnote{Department of Mathematics and Statistics, Boston University, Boston, E-mail: jyu32@bu.edu} \phantom{.}  and Konstantinos Spiliopoulos\footnote{Department of Mathematics and Statistics, Boston University, Boston, E-mail: kspiliop@math.bu.edu}
\thanks{K.S. was partially supported by the National Science Foundation (DMS 2107856) and Simons Foundation Award  672441}\\
}
\begin{document}


\date{\today}
\maketitle

\begin{abstract}
We study the effect of normalization on the layers of deep neural networks of feed-forward type. A given layer $i$ with $N_{i}$ hidden units is allowed to be normalized by $1/N_{i}^{\gamma_{i}}$ with $\gamma_{i}\in[1/2,1]$ and we study the effect of the choice of the $\gamma_{i}$ on the statistical behavior of the neural network's output (such as variance) as well as on the test accuracy on the MNIST data set. We find that in terms of variance of the neural network's output and test accuracy the best choice is to choose the $\gamma_{i}$'s to be equal to one, which is the mean-field scaling. We also find that this is particularly true for the outer layer, in that the neural network's behavior is more sensitive in the scaling of the outer layer as opposed to the scaling of the inner layers. The mechanism for the mathematical analysis is an asymptotic expansion for the neural network's output. An important practical consequence of the analysis is that it provides a systematic and mathematically informed way to choose the learning rate hyperparameters. Such a choice guarantees that the neural network behaves in a statistically robust way as the $N_i$ grow to infinity.
\end{abstract}

{\bf Keywords. } machine learning, neural networks, normalization effect, asymptotic expansions, out-of-sample performance. \\
{\bf Subject classifications. 60F05, 68T01, 60G99}

\section{Introduction}\label{S:Introduction}

The last few years have experienced an explosion in the study of neural networks. Neural networks are parametric models and their coefficients are estimated from data using gradient descent methods. Early classical results regarding the approximation power of neural networks \cite{Barron,Hornik1,Hornik2} set the stage and then advances in technology led to great successes in text, speech and image recognition, see for example \cite{LeCun,Goodfellow,DriverlessCar,FacialRecognition,DeepVoice,GoogleDuplex,SpeechRecognition3} to name a few. Later on, neural networks showed a lot of promise in other fields such as robotics, medicine, finance, and applied mathematics, see for example \cite{Ling1,Ling2,Robotics2,Robotics3,NatureMedicine1,NatureMedicine2,Finance1,Finance2,Finance3}. Their success in applications has made clearer the need for a better understanding of their mathematical properties.

The goal of this paper is to investigate the performance of multilayer neural networks as a function of normalization features. In particular, let us consider the following neural network with two hidden layers:
\begin{equation}\label{Eq:TwoLayerNN}
g_{\theta}^{N_1,N_2}(x) = \frac{1}{N_2^{\gamma_2}} \sum_{i=1}^{N_2} C^i \sigma\left(\frac{1}{N_1^{\gamma_1}}\sum_{j=1}^{N_1} W^{2,j,i}\sigma(W^{1,j}x)\right),
\end{equation}
where $C^i, W^{2,j,i} \in \R$, $x,W^{1,j} \in \R^d$, and $\gamma_1, \gamma_2 \in [1/2,1)$ are fixed scaling parameters. For convenience, we write $W^{1,j}x = \ip{W^{1,j},x}_{l^2}$ as the standard $l^2$ inner product for the vectors. The neural network model has parameters
\begin{equation*}
\theta = \left(C^1, \ldots, C^{N_2}, W^{2,1,1}, \ldots W^{2,N_1,N_2}, W^{1,1},\ldots W^{1,N_1} \right),
\end{equation*}
which are to be estimated from data $(X,Y) \sim \pi(dx,dy)$.

Our goal is to understand the effect of the choice of the values of the scaling parameters $\gamma_1,\gamma_2
\in[1/2,1]$ on the behavior of the neural network. The choice $\gamma_1=\gamma_2=1$ corresponds to the mean field scaling that has been studied in the literature in recent years, see for example \cite{Chizat2018,Montanari,RVE,SirignanoSpiliopoulosNN1,SirignanoSpiliopoulosNN2,SirignanoSpiliopoulosNN3}.
On the other side of the spectrum, i.e, when $\gamma_1=\gamma_2=1/2$, then we have the so-called Xavier normalization \cite{Xavier}, giving rise to the so-called neural tangent kernel, that has been analyzed in a number of works, see for example \cite{NTK,Du1,HuangYau2020, JasonLee,SirignanoSpiliopoulosNN4}. Even though, most of the discussion of this paper is focused on the two-layer neural network, in Section \ref{SS:Three_layers}, see also Section \ref{S:DNN_LearningRates}, we discuss the three-layer neural network case demonstrating that our conclusions extend to general feed-frward multilayer neural networks.

In the case of shallow neural networks (SNN), i.e, when $g_{\theta}^N(x) = \frac{1}{N^{\gamma}} \sum_{i=1}^{N} C^i\sigma(W^i x)$, the question on the effect of $\gamma\in[1/2,1]$ on the performance of the neural network has been recently studied in \cite{SpiliopoulosYu2021}. In \cite{SpiliopoulosYu2021} we developed an asymptotic expansion for the neural network's statistical output $g^{N}$ after training with stochastic gradient descent (SGD) pointwise with respect to the scaling parameter $\gamma\in(1/2,1)$ as the number of hidden units $N$ grows to infinity. Based on this expansion \cite{SpiliopoulosYu2021} demonstrates mathematically that to leading order in $N$, there is no bias-variance trade off, in that both bias and variance (both explicitly characterized) decrease as the number of hidden units increases and time grows. In addition, it is shown there that to leading order in $N$, the variance of the neural network's statistical output $g^{N}$ is monotonically decreasing in $\gamma$ and thus the lowest variance is attained at $\gamma=1$. Numerical studies on the MNIST and CIFAR10 datasets showed that test and train accuracy monotonically improve as the neural network's normalization gets closer to the mean field normalization $\gamma=1$. An additional useful conclusion of the mathematical analysis in \cite{SpiliopoulosYu2021} is that in order for the asymptotic results to be true (without trivial limits) one needs to choose the learning rate in SGD in a very specific way with respect to $N$ and $\gamma$.

The content of this paper is the corresponding analysis in the case of deep neural networks (DNN). As we will see the analysis in the case of DNNs is considerably more complicated than in the case of SNN. However, the end conclusions are of similar nature with the additional interesting observation that the outer layer plays a more special role. In addition, the analysis of this paper offers a mathematically principled way to appropriately choose the learning rates. We base our analysis on a typical two-layer neural network, however as we shall see in Section \ref{S:Numerics}  this is done without loss of generality.

 In particular, we derive an asymptotic expansion of the neural network's output as $N_2\rightarrow\infty$ with $N_1$ fixed. This expansion shows mathematically that to leading order in $N_2$, the variance of the neural network's statistical output is monotonically decreasing with respect to $\gamma_2\in[1/2,1]$. At the same time, the same expression (after appropriately choosing the learning rates) shows that the effect of $\gamma_1$ is perhaps less prominent in the sense that it appears through terms that are averages and are also bounded (for bounded activated functions). The mathematical conclusion is that, at least under our assumptions (as presented in Section  \ref{S:MainResults})  one would optimally choose the outer layer normalization to be $\gamma_2=1$ and subsequently choosing $\gamma_1=1$ would be optimal. This conclusion is also validated numerically. Indeed, in Section \ref{S:Numerics} we study the test accuracy of two and three layer neural networks for different parametrizations in terms of $\gamma_1,\gamma_2
\in[1/2,1]$ (and $\gamma_3\in[1/2,1]$ in the three-layer neural network case) when trained with standard SGD on the MNIST dataset \cite{MNIST}. As we shall see there, the test accuracy is sensitive to the choice of the normalization of the outer layer $\gamma_2$ with the optimal choice being $\gamma_2=1$, but having done that, the effect of the choice of the normalization of the inner layer, i.e., of $\gamma_1$ is less profound. The end optimal choice is to choose $\gamma_2=\gamma_1=1$, i.e., the mean-field normalization in all layers.

An additional important conclusion of this work is that it provides a systematic and mathematically informed way to choose the learning rates hyperparameters, see (\ref{potential_alpha_1}) for the model (\ref{Eq:TwoLayerNN}), Section \ref{SS:Three_layers} for the three-layer case and Section \ref{S:DNN_LearningRates} for the general case. Without choosing the learning rates to be of the indicated order with respect to the $N_{i}$'s and $\gamma_i$'s the neural network as a statistical object will have trivial limits, i.e., it will either converge to zero or to infinity. If however, they are chosen in the indicated way then the neural network will behave nicely as a statistical quantity in the sense of not being trivial and having finite variance at least.

Our analysis is based upon the quadratic error loss function
\begin{equation*}
L(\theta) = \frac{1}{2} \E_{X,Y}\left[\left(Y-g_{\theta}^{N_1,N_2}(x)\right)^2\right],
\end{equation*}
and the model parameters $\theta$ are trained by the stochastic gradient descent algorithm, for $k \in \mathbb{N}$
\begin{equation}\label{SGD}
\begin{aligned}
C^i_{k+1} &= C^i_k + \frac{\alpha_c^{N_1,N_2}}{N_2^{\gamma_2}} \left(y_k - g_k^{N_1,N_2}(x_k)\right)H^{2,i}_k(x_k),\\
W^{1,j}_{k+1} &= W^{1,j}_k + \frac{\alpha_{W,1}^{N_1,N_2}}{N_1^{\gamma_1}}\left(y_k - g_k^{N_1,N_2}(x_k)\right)\left(\frac{1}{N_2^{\gamma_2}}\sum_{i=1}^{N_2} C^i_k \sigma'(Z^{2,i}_k(x_k))W^{2,j,i}_k\right)\sigma'(W^{1,j}_k x_k)x_k,\\
W^{2,j,i}_{k+1} &= W^{2,j,i}_k + \frac{\alpha_{W,2}^{N_1,N_2}}{N_1^{\gamma_1}N_2^{\gamma_2}}\left(y_k - g_k^{N_1,N_2}(x_k)\right)C^i
_k \sigma'(Z^{2,i}_k(x_k))H^{1,j}_k(x_k),\\
\end{aligned}
\end{equation}
where
\begin{equation*}
\begin{aligned}
H^{1,j}_k(x) &= \sigma(W^{1,j}_k x),\quad
Z^{2,i}_k(x) = \frac{1}{N_1^{\gamma_1}}\sum_{j=1}^{N_1} W^{2,j,i}_k H^{1,j}_k(x),\quad
H^{2,i}_k(x) = \sigma(Z^{2,i}_k(x)).
\end{aligned}
\end{equation*}


For fixed $N_1$, we define the empirical measure
\begin{equation*}
\tilde{\gamma}^{N_1,N_2}_k = \frac{1}{N_2}\sum_{i=1}^{N_2} \delta_{C^i_k, W^{2,1,i}_k, \ldots, W^{2,N_1,i}_k, W^{1,1}_k, \ldots, W_k^{1,N_1}},
\end{equation*}
and the time-scaled empirical measure
\begin{equation}
\gamma_t^{N_1,N_2} = \tilde{\gamma}^{N_1,N_2}_{\floor{N_2t}}.\label{Eq:TimeScaledEmpiricalMeasure}
\end{equation}
The neural network output can be rewritten as
\begin{equation*}
g^{N_1,N_2}_{\theta_k}(x) = \ip{c\sigma\left(\frac{1}{N_1^{\gamma_1}} \sum_{j=1}^{N_1} w^{2,j}\sigma(w^{1,j}x)\right), N_2^{1-\gamma_2} \tilde{\gamma}_k^{N_1,N_2}} = \ip{c\sigma(Z^{2,N_1}(x)), N_2^{1-\gamma_2} \tilde{\gamma}_k^{N_1,N_2}},
\end{equation*}
and the time-scaled neural network output is
\begin{align}
h^{N_1,N_2}_t(x)& = g^{N_1,N_2}_{\theta_{\floor{N_2 t}}}(x).\label{Eq:TimeScaledNNoutput}
\end{align}

For a fixed data set $(x^{(i)},y^{(i)})_{i=1}^M$, let $g^{N_1,N_2}_k$ and $h^{N_1,N_2}_t$ denote the M-dimensional vectors whose $i$-th entries are $g^{N_1,N_2}_k(x^{(i)})$ and $h^{N_1,N_2}_t(x^{(i)})$, respectively. In order to emphasize the dependence on $\hat{\gamma}=(\gamma_1,\gamma_2)$ and on $\hat{N}=(N_1,N_2)$ we will instead write sometimes $h^{\hat{N},\hat{\gamma}}_t$.

As it will be demonstrated below, it turns out that in order to understand the main effects of $\gamma_1,\gamma_2\in(1/2,1)$ on the behavior of $h^{\hat{N},\hat{\gamma}}_t$ it is enough to look at its asymptotic behavior as $N_2\rightarrow\infty$ with the $N_1$ being thought of as large but fixed.

In addition, the learning rates need to be chosen to be of the right order with respect to the number of hidden units $N_i$ and network normalization $\gamma_i$ in order for the neural network to behave in a statistically robust way. In particular, for reasons that will become clearer later on, we shall choose the learning rates to be
\begin{equation}\label{potential_alpha_1}
\alpha_C^{N_1,N_2} = \frac{\alpha_C}{N_2^{2-2\gamma_2}}, \quad \alpha_{W,1}^{N_1,N_2} = \frac{\alpha_{W,1}}{N_1^{1-2\gamma_1}N_2^{3-2\gamma_2}}, \quad \alpha_{W,2}^{N_1,N_2} = \frac{\alpha_{W,2}}{N_1^{1-2\gamma_1}N_2^{2-2\gamma_2}},
\end{equation}
where the coefficients $\alpha_C,\alpha_{W,1},\alpha_{W,2}\in(0,\infty)$ are chosen to be of order one with respect to $N_1, N_2$.

Loosely speaking our main mathematical result is that for each fixed $\gamma_2\in(1/2,1)$ one has  that as $N_2\rightarrow\infty$, and  when $\gamma_2 \in \left(\frac{2\nu-1}{2\nu}, \frac{2\nu+1}{2\nu+2}\right)$ for fixed $\nu\in\{1,2,3,\cdots\}$ and fixed $\gamma_1$ and $N_1$:
\begin{align}
h_t^{\hat{N},\hat{\gamma}}&\approx h^{N_1,\gamma_1}_{t}+\sum_{j=1}^{\nu-1} N_2^{-j(1-\gamma_2)}Q^{N_1,\gamma_1}_{j,t} + N_2^{-(\gamma_2-1/2)}e^{-A^{N_1,\gamma_1} t} \mathcal{G}^{N_1}+ \textrm{ lower order terms in }N_2.\label{Eq:FormalExpansion}
\end{align}

In (\ref{Eq:FormalExpansion}), $h^{N_1,\gamma_1}_{t}$ is the limit of  $h_t^{\hat{N},\hat{\gamma}}$ as $N_2\rightarrow\infty$, $Q^{N_1,\gamma_1}_{j,t}$ are deterministic quantities, $A^{N_1,\gamma_1}$ is a positive definite matrix and $\mathcal{G}^{N_1}$ is a Gaussian vector of mean zero and known variance-covariance structure. Noticeably, all of $h^{N_1,\gamma_1}_{t}$, $Q^{N_1,\gamma_1}_{j,t}$, $A^{N_1,\gamma_1}$ and $\mathcal{G}^{N_1}$ are not only independent of $N_2<\infty$ and $\gamma_2>0$, but the dependence on $N_1$ is through explicit averages of the form $\frac{1}{N_1}\sum_{i=1}^{N_1}\left(\cdots\right)$, and the dependence on $\gamma_1$ is only through the terms $\sigma(Z^{2,i}_k(x)),\sigma^{\prime}(Z^{2,i}_k(x))$ which for bounded $\sigma\in C^{1}_{b}(\mathbb{R})$ will be bounded.

Even though we do not show this here, as in \cite{SirignanoSpiliopoulosNN4,SpiliopoulosYu2021}, one gets that for all $\gamma_1,\gamma_2 \in (1/2,1)$ and for all $N_1<\infty$, the limit of the network output recovers the global minimum as $t \to \infty$, i.e.  $h^{N_1,\gamma_1}_t \to \hat{Y}$, where $\hat{Y} = \paren{y^{(1)}, \ldots, y^{(M)}}$. For fixed $j\in\mathbb{N}$, one can also show exactly as in \cite{SpiliopoulosYu2021} that $Q^{N_1,\gamma_1}_{j,t}\rightarrow 0$ exponentially fast as $t\rightarrow\infty$. The Gaussian vector $\mathcal{G}^{N_1}$ is related to the variance of the network at initialization which then propagates forward, see (\ref{limit_gaussian}).


These conclusions immediately suggest that  the variance of $h_t^{\hat{N},\hat{\gamma}}$ to leading order in $N_2$ is monotonically decreasing in $\gamma_2\in[1/2,1]$, with the smallest possible variance when $N_2$ is large, but fixed, when $\gamma_2=1$. In addition, the fact that the dependence of the leading order terms in the right hand side of (\ref{Eq:FormalExpansion}) on $N_1$ and on $\gamma_1$ is through averages of the form $\frac{1}{N_1}\sum_{i=1}^{N_1}\left(\cdots\right)$ for $N_1$ and through bounded terms for $\gamma_1$ (given that the activation function $\sigma\in C^{1}_b(\R)$), demonstrates that $h_t^{\hat{N},\hat{\gamma}}$ is less sensitive on the value of $\gamma_1$. The latter observation is also confirmed numerically in Section \ref{S:Numerics}.

To further validate and demonstrate these conclusions we perform in Section \ref{S:Numerics} extensive numerical studies fitting two and three layer feed-forward neural networks on the MNIST dataset \cite{MNIST}. In all of the examples, the pattern is the same and corroborates the theoretical conclusions. Namely, the test accuracy is sensitive in the choice of the normalization of the outer layer $\gamma_2$ with the optimal choice being $\gamma_2=1$, but having done that, the choice of the normalization of the inner layer, i.e., of $\gamma_1$ has less of an impact on the performance. The end optimal choice is to choose $\gamma_2=\gamma_1=1$, i.e., the mean-field normalization in all layers. 

At this point we want to emphasize that the goal of this paper is not to study the limit as $N_2,N_1
\rightarrow\infty$. We refer the interested reader to \cite{NTK,Du1,SirignanoSpiliopoulosNN3,Araujo2019,Nguyen2019} for related results. Our goal here is to disentangle the effect of different scalings in different layers. With this goal in mind, it turns out that it is enough to fix $N_1$, look at $N_2\rightarrow\infty$ and then observe that at least to leading order in $N_2$ the effect of $N_1$ is only through averages that converge to well defined limtis. In addition, in the process of doing so, we obtain that the effect of $\gamma_2$ is to scale the variance in a very simple and intuitive way as demonstrated by (\ref{Eq:FormalExpansion}). On the other hand, the effect of $\gamma_1$ is through bounded terms when at least the activation function and its derivatives are bounded. Also, we note that in order to obtain expansions like (\ref{Eq:FormalExpansion}) one needs not only to characterize the asymptotic behavior of $h_t^{\hat{N},\hat{\gamma}}$, but also needs to understand the fluctuations (central limit theorem) corrections, corrections to those corrections, etc. Lastly, our numerical studies indicate, see Figures \ref{Fig:mnist_ce_comparingh1h2_b20_test} and \ref{Fig:mnist_ce_gII10_e1000_b20_test}, that test accuracy is better when $N_2>N_1$, which also motivates looking at $N_2\rightarrow\infty$.

The rest of the paper is organized as follows. In Section \ref{S:MainResults} we lay down our main assumptions and present the main mathematical results of the paper. In Section \ref{S:Numerics} we discuss the theoretical results further and we present our numerical studies. In Section \ref{S:DNN_LearningRates} we present for completeness and without proof the mathematically motivated choice of the learning rates for a deep feedforward neural network of arbitrary depth. Conclusions are in Section \ref{S:Conclusions}.  The proof of the main results presented in Section \ref{S:MainResults} are presented in the appendix of this paper. In Appendix \ref{S:AprioriBounds} we establish apriori bounds on the learning parameters as they evolve in time. In Appendix \ref{sec::network_convergence}  we prove   Theorem \ref{LLN:theorem}. In Appendix \ref{sec::convergence_of_K} we prove Theorem \ref{CLT:theorem}. In Appendix \ref{sec::Psi} we prove  Theorem \ref{thm::Psi}. Then in Appendix \ref{sec::higher order}
 we complete the proof of the asymptotic expansion for $h^{N_1,N_2}_t$ for  $\gamma_2\in(1/2,1)$ through an inductive argument.

\section{Assumptions and main results}\label{S:MainResults}

In this section, we describe our main assumptions under which the results of this paper hold and we present our main results. We also establish necessary notation. We work on a filtered probability space $(\Omega,\mathcal{F},\mathbb{P})$  where all the random variables are defined. The probability space is equipped with a filtration $\mathfrak{F}_t$ that is right continuous and $\mathfrak{F}_0$ contains all $\mathbb{P}$-negligible sets.
\begin{assumption}
\begin{enumerate}
\item The activation function $\sigma \in C^{\infty}_b(\R)$, i.e. $\sigma$ is infinitely differentiable and bounded.
\item There is a fixed dataset $\CX \times \CY = (x^{(i)}, y^{(i)})_{i=1}^M$, and we set $\pi(dx,dy) = \frac{1}{M} \sum_{i=1}^M \delta_{(x^{(i)},y^{(i)})}(dx,dy)$.
\item The initialized parameters $\{C_0^i\}_i, \{W_0^{2,j,i}\}_{i,j}, \{W_0^{1,j}\}_j)$ are i.i.d.,generated from  mean-zero random variables and take values in compact sets $\mathcal{C}, \mathcal{W}^1$, and $\mathcal{W}^2$.
\end{enumerate}
\label{assumption}
\end{assumption}

We recall that we shall choose the learning rates to be
\begin{equation*}
\alpha_C^{N_1,N_2} = \frac{\alpha_C}{N_2^{2-2\gamma_2}}, \quad \alpha_{W,1}^{N_1,N_2} = \frac{\alpha_{W,1}}{N_1^{1-2\gamma_1}N_2^{3-2\gamma_2}}, \quad \alpha_{W,2}^{N_1,N_2} = \frac{\alpha_{W,2}}{N_1^{1-2\gamma_1}N_2^{2-2\gamma_2}},
\end{equation*}
where the coefficients $\alpha_C,\alpha_{W,1},\alpha_{W,2}\in(0,\infty)$ are chosen to be of order one with respect to $N_1, N_2$. For notational convenience and without loss of generality we shall set them to be
$\alpha_C=\alpha_{W,1}=\alpha_{W,2}=1$.

Note that the weights in different layers are trained with different rates. This choice of learning rates is necessary for convergence to a non-trivial limit as $N_2 \rightarrow \infty$. If the parameters in all the layers are trained with the same learning rate, it can be mathematically shown that the network will not train as $N_1, N_2$ become large in the sense of having convergence to trivial limits. 

Before presenting our main mathematical results let us first discuss what happens at time $t=0$. By law of large numbers, as $N_2 \to \infty$, we have that $\tilde{\gamma}^{N_1,N_2}_0 \overset{p}{\to}\gamma_0^{N_1}(dw^1,dw^2,dc)$, where
\begin{equation}\label{gamma^N1_0}
\gamma_0^{N_1}(dw^1,dw^2,dc) = \delta_{W_o^{1,1}}(dw^{1,1}) \times \cdots \times \delta_{W_o^{1,N_1}}(dw^{1,N_1})\times \mu_{W^2}(dw^{2,1}) \times \cdots \times \mu_{W^2}(dw^{2,N_1}) \times \mu_C(dc).
\end{equation}
By the central limit theorem, we have in distribution
\begin{align}
N_2^{(\gamma_2-\frac{1}{2})} h_0^{N_1,N_2}(x) =  \ip{c\sigma(Z^{2,N_1}(x)), \sqrt{N_2} \tilde{\gamma}_0^{N_1,N_2}}\overset{d}{\to} \mathcal{G}^{N_1}(x), \text{ as } N_2\rightarrow\infty\label{limit_gaussian}
\end{align}
where $\mathcal{G}^{N_1}$ is a Gaussian random variable and variance $\lambda^{2}_{N_1}(x)=\ip{|c\sigma(Z^{2,N_1}(x))|^{2},\gamma_0^{N_1}}$. From now on, we will use the notation $\mathcal{G}^{N_1}$ to refer to this specific Gaussian random variable.

Hence, when  $\gamma_2=1/2$, one has that $h_0^{N_1,N_2}(x)\overset{d}{\to} \mathcal{G}^{N_1}(x)$, and when $\gamma_2>1/2$, $h_0^{N_1,N_2}(x)\overset{d}{\to} 0$.

\begin{remark}
Notice now that due to the independence assumption from (\ref{assumption}), the sequence of random variables $\{Z^{2,N_1}(x)\}_{N_1}$, which is the input to the assumed bounded activation function $\sigma$, will also converge to a Gaussian with mean zero and finite variance in the limit $N_2\rightarrow\infty$ if $\gamma_1=1/2$ and to the trivial limit $Z^{2,N_1}(x)\to 0$ if $\gamma_1\in(1/2,1)$.
\end{remark}

Certain quantities will appear many times, so let's define them here.
\begin{align}
B^1_{x,x'}(\theta) &= \sigma\left(Z^{2,N_1}(x')\right)\sigma\left(Z^{2,N_1}(x)\right),\nonumber\\
B^{2,j}_{x,x'}(\theta) &= (c)^2 \sigma'\left(Z^{2,N_1}(x')\right)\sigma'\left(Z^{2,N_1}(x)\right)\sigma(w^{1,j}x')\sigma(w^{1,j}x),\nonumber\\
B^{3,j}_{x}(\theta) &= cw^{2,j}\sigma'(w^{1,j}x)\sigma'\left(Z^{2,N_1}(x)\right),\label{Eq:B_def0}
\end{align}
 and set
\begin{align}
A^{N_1}_{x,x'}&=\ip{B^1_{x,x'}(\theta),\gamma_0^{N_1}}
+\frac{1}{N_1}\sum_{j=1}^{N_1}\left[\ip{B^{2,j}_{x,x'}(\theta),\gamma_0^{N_1}}+xx'\ip{B^{3,j}_{x}(\theta),\gamma_0^{N_1}}\ip{B^{3,j}_{x'}(\theta),\gamma_0^{N_1}}\right] \label{Eq:A_def}
\end{align}

In addition, for a given $f \in C_b^2(\R^{1+N_1(1+d)})$ let us define
\begin{align}
C^{N_1,f}_{x'}(\theta)&=\partial_{c}f(\theta)  \sigma(Z^{2,N_1}(x'))+
\frac{1}{N_1^{1-\gamma_1}} c\sigma'(Z^{2,N_1}(x'))\sigma(w^1 x')\cdot \partial_{w^2}f(\theta)\nonumber\\
&\quad+\frac{1}{N_1^{1-\gamma_1}}\ip{c\sigma'(Z^{2,N_1}(x'))\sigma'(w^1x')w^{2},{\gamma}_0^{N_1}}\cdot  \nabla_{w^1}f(\theta)x'
\label{Eq:C_def}
\end{align}

Even though we do not explore this further here, we note that the dependence of $A^{N_1}$ on $N_1$ is through averages of the form $\frac{1}{N_1}\sum_{j=1}^{N_1}\left(\cdots\right)$ and thus by Assumption \ref{assumption} and law of large numbers convergence as $N_1\rightarrow\infty$ is expected to hold. A fully rigorous justification of the latter claim is beyond the scope and purposes of this article and is left for future work.

\begin{remark}
In a snapshot the theorems that follow essentially establish that for large $N_2$ the neural network output behaves as
\begin{itemize}
\item $\gamma \in \left(\frac{1}{2}, \frac{3}{4} \right]$:
$h^{N_1,N_2}_t \approx h^{N_1}_t + \frac{1}{N^{\gamma_2-\frac{1}{2}}} K^{N_1}_t$
where $K_t$ satisfies either of equations \eqref{CLT:evolution} or \eqref{K_t for 1-gamma_2} and has a Gaussian distribution.
\item $\gamma_2 \in \left(\frac{3}{4}, \frac{5}{6} \right]$:
 $h^{N_1,N_2}_t \approx h^{N_1}_t + \frac{1}{N^{1-\gamma_2}} K^{N_1}_t + \frac{1}{N^{\gamma_2-\frac{1}{2}}} \Psi^{N_1}_t,$
where $K^{N_1}_t$ satisfies equation \eqref{K_t for 1-gamma_2} with $K^{N_1}_0(x) =0$, $\Psi^{N_1}_t$ satisfies either equations \eqref{limit_Psi} or \eqref{Psi_2_0} and has a Gaussian distribution.
\end{itemize}
where, under the appropriate assumptions, $h^{N_1}_{t}$ recovers the global minimum as $t\rightarrow\infty$. We note that, as expected this is in parallel to what one observes in the one layer case of \cite{SpiliopoulosYu2021}. However, what is potentially interesting here is that the outer layer dominates the behavior.
\end{remark}

Our first result is related to the convergence of the pair $(\gamma_t^{N_1,N_2},h_t^{N_1,N_2})$ as defined by (\ref{Eq:TimeScaledEmpiricalMeasure}) and (\ref{Eq:TimeScaledNNoutput}) as $N_2\rightarrow\infty$. We study the convergence in the Skorokhod space  $D_E([0,T])$, where   $E = \CM(\R^{1+N_1(1+d)}) \times \R^M$, and $N_1\in\mathbb{N}$ is fixed. Here $\CM(\R^{1+N_1(1+d)})$ is the space of probability measures in $\R^{1+N_1(1+d)}$.

\begin{theorem}\label{LLN:theorem}
Let $T<\infty$ be given. Under Assumption \ref{assumption}, for fixed $\gamma_1,\gamma_2 \in (1/2,1)$ and learning rates chosen via  (\ref{potential_alpha_1}), we get that as $N_2 \to \infty$, the process $(\gamma_t^{N_1,N_2},h_t^{N_1,N_2})$ converges in probability in the space $D_E([0,T])$ to $(\gamma_t^{N_1},h_t^{N_1})$, which for $t\in[0,T]$, satisfies the evolution equation
\begin{equation}\label{LLN:limit_evolution}
\begin{aligned}
h^{N_1}_t(x) &= h^{N_1}_0(x) +  \int^t_0 \int_{\CX \times \CY} \paren{y-h^{N_1}_s(x')} A^{N_1}_{x,x'} \pi(dx',dy) ds,\\
\end{aligned}
\end{equation}
where $h^{N_1}_0(x) = 0$. In addition, we have that for any $f \in C_b^2(\R^{1+N_1(1+d)})$ and $t\in[0,T]$, $\ip{f,\gamma_t^{N_1}} = \ip{f,\gamma_0^{N_1}}$.
\end{theorem}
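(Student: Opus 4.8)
The plan is to run the standard compactness-and-identification programme for mean-field limits, viewing the $N_2$ outer-layer units as the interacting particles and writing $\theta^i_k = (C^i_k, W^{2,1,i}_k,\ldots,W^{2,N_1,i}_k, W^{1,1}_k,\ldots,W^{1,N_1}_k)$ for the $i$-th atom of $\tilde\gamma^{N_1,N_2}_k$. Throughout I would work on the a priori event, furnished by Appendix \ref{S:AprioriBounds}, on which all parameters remain in a fixed compact set uniformly for $k\le\floor{N_2T}$; on this event the quantities $B^1,B^{2,j},B^{3,j}$ of (\ref{Eq:B_def0}) and their derivatives are bounded. The argument has four steps: (i) derive the discrete pre-limit evolutions of $\ip{f,\tilde\gamma^{N_1,N_2}_{\floor{N_2t}}}$ and of $h^{N_1,N_2}_t$ from the updates (\ref{SGD}); (ii) prove relative compactness of the laws in $D_E([0,T])$; (iii) identify every limit point with a solution of (\ref{LLN:limit_evolution}); and (iv) prove uniqueness of that solution, so that subsequential convergence becomes full convergence and, the limit being deterministic, convergence in probability.

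For the measure I would expand the one-step increment $\ip{f,\tilde\gamma^{N_1,N_2}_{k+1}}-\ip{f,\tilde\gamma^{N_1,N_2}_k} = \frac{1}{N_2}\sum_{i=1}^{N_2}\paren{f(\theta^i_{k+1})-f(\theta^i_k)}$ to first order in the parameter increments. With the learning rates (\ref{potential_alpha_1}) each parameter moves by $O(N_2^{-(2-\gamma_2)})$ per step, so over $\floor{N_2t}$ steps the total displacement is $O(N_2^{-(1-\gamma_2)})$; since $\gamma_2<1$ this vanishes, giving $\ip{f,\gamma^{N_1}_t}=\ip{f,\gamma^{N_1}_0}$, i.e. the empirical measure is frozen at leading order. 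The output nonetheless evolves because the factor $N_2^{1-\gamma_2}$ relating $g^{N_1,N_2}$ to $\tilde\gamma^{N_1,N_2}$ exactly offsets the $O(N_2^{-(1-\gamma_2)})$ smallness of the displacement.

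For the output I would write $h^{N_1,N_2}_t(x)=h^{N_1,N_2}_0(x)+\sum_{k=0}^{\floor{N_2t}-1}\paren{g_{k+1}(x)-g_k(x)}$ and split the sum into a predictable drift plus a martingale. A first-order Taylor expansion of $g_{k+1}(x)-g_k(x)$ in $\theta_{k+1}-\theta_k$, inserted into (\ref{SGD}), produces the weighted inner product of the sensitivity of $g_k(x)$ with that of $g_k(x_k)$; collecting the contributions of the three parameter families and using that the learning rates (\ref{potential_alpha_1}) make them balance at the same order, the conditional mean of the increment equals $\frac{1}{N_2}\int_{\CX\times\CY}\paren{y-g_k(x')}A^{N_1}_{x,x'}\,\pi(dx',dy)$ to leading order, with $A^{N_1}$ precisely the kernel (\ref{Eq:A_def}) assembled from $B^1,B^{2,j},B^{3,j}$. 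Summing the per-step factor $1/N_2$ against the $\floor{N_2t}$ steps turns the sum into $\int_0^t ds$, and the initial value drops out since $h^{N_1,N_2}_0(x)=N_2^{-(\gamma_2-1/2)}\mathcal{G}^{N_1}(x)+o(1)\to0$ for $\gamma_2>1/2$ by (\ref{limit_gaussian}).

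Relative compactness follows from the compact containment supplied by the a priori bounds together with a standard Aldous--Kurtz estimate, immediate from the per-step $O(N_2^{-(2-\gamma_2)})$ parameter increments. To identify the limit I would pass to the limit in the drift, using that the empirical averages $\frac{1}{N_2}\sum_i(\cdots)$ defining $A^{N_1}$ converge to the integrals $\ip{\cdot,\gamma_0^{N_1}}$, and that the martingale part vanishes because each of its increments is $O(N_2^{-1})$, so its predictable quadratic variation over $\floor{N_2t}$ steps is $O(N_2\cdot N_2^{-2})=O(N_2^{-1})\to0$. Since $A^{N_1}$ is fixed, (\ref{LLN:limit_evolution}) is a linear Volterra equation in $h^{N_1}$, and uniqueness is immediate from Gr\"onwall's inequality; together with the deterministic freezing of the measure this gives convergence in probability. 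I expect the main obstacle to lie in step (iii): controlling the Taylor remainders uniformly in time and, above all, justifying the replacement of the empirical averages appearing in the drift by their integrals against $\gamma_0^{N_1}$ uniformly over $[0,T]$ --- this self-consistency forces one to bootstrap the freezing of the measure and the boundedness of the output together rather than to establish either in isolation.
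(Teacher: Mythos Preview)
Your proposal is correct and follows essentially the same programme as the paper's Appendix~\ref{sec::network_convergence}: Taylor-expand the one-step increments of the output and the measure, split into drift plus martingale, establish relative compactness via compact containment and Aldous-type regularity, identify the limit through a functional that tests the joint evolution, and conclude by uniqueness of the linear equation. One small point: your per-step rate $O(N_2^{-(2-\gamma_2)})$ for the parameter increments (and hence the $O(N_2^{-(1-\gamma_2)})$ total displacement that freezes the measure) presupposes $|y_k-g_k^{N_1,N_2}|=O(1)$, which does \emph{not} follow from the deterministic bounds of Appendix~\ref{S:AprioriBounds} alone (those only give $|g_k|=O(N_2^{1-\gamma_2})$); the paper supplies this via a separate Gr\"onwall argument for $\sup_k \E|g_k^{N_1,N_2}(x)|^4$ (Lemma~\ref{lemma_g}), which you should insert before invoking the sharper rate.
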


For some of our results we would need to further assume the following.
\begin{assumption}\label{assumption1}
\begin{enumerate}
\item The activation function $\sigma$ is smooth, non-polynomial and slowly increasing\footnote{A function $\sigma(x)$ is called slowly increasing if $\lim_{x\rightarrow\infty}\frac{\sigma(x)}{x^{a}}=0$ for every $a>0$.}.
\item The fixed dataset $(x^{(i)}, y^{(i)})_{i=1}^M$ from part (ii) of Assumption \ref{assumption} has  data points that are in distinct directions (per definition on page $192$ of \cite{yIto}).
\end{enumerate}
\end{assumption}

In a similar manner now to \cite{SpiliopoulosYu2021} and to \cite{SirignanoSpiliopoulosNN4} we get that under Assumption \ref{assumption1} and for any $N_1\in\mathbb{N}$ the matrix $A^{N_1} \in \mathbb{R}^{M \times M}$, whose elements are $ A^{N_1}_{x,x'}$ with $x, x' \in \mathcal{X}$, is positive definite. The latter immediately says that we have convergence to the global minimum
 \begin{align}
h^{N_1}_t \rightarrow \hat Y \phantom{....} \textrm{as} \phantom{....} t \rightarrow\infty.\label{Eq:ConverenceGM}
\end{align}
 where $h^{N_1}_t = (h^{N_1}_t(x^{(1)}), \ldots, h^{N_1}_t(x^{(M)}))$ and $\hat{Y} = ( y^{(1)}, \ldots,  y^{(M)} )$.

We note that with these choices of learning rates, the aforementioned convergence is true for any $N_1\in\mathbb{N}$.

Since for $\gamma_2\in(1/2,1)$ the first order limit is deterministic it makes sense to investigate the second order convergence. In particular, consider
\[K^{N_1,N_2}_t = N
_2^{\varphi} (h^{N_1,N_2}_t - h^{N_1}_t),\]
where $\varphi$ depends on the scaling parameters $\gamma_1, \gamma_2$ and will be chosen appropriately momentarily. We also denote $\eta^{N_1,N_2}_t = N_2^{\varphi}(\gamma^{N_1,N_2}_t - \gamma^{N_1}_0)$. For $f \in C_b^2(\R^{1+N_1(1+d)})$ let us also define $l^{N_1,N_2}_t(f) = \ip{f,\eta^{N_1,N_2}_t}$.

Then, we have the following results.
\begin{proposition}\label{prop::l_t}
Let Assumption \ref{assumption} hold and choose the learning rates via  (\ref{potential_alpha_1}). Then,  for fixed $\gamma_1,\gamma_2 \in (1/2,1)$ and fixed $f \in C_b^2(\R^{1+N_1(1+d)})$, if $\varphi \le 1-\gamma_2$, the process $\left\{l^{N_1,N_2}_t(f) = \ip{f,\eta^{N_1,N_2}_t}, t\in[0,T]\right\}_{N_2\in\mathbb{N}}$ converges in probability in the space $D_{\R}([0,T])$ as $N_2 \to \infty$, and
\begin{custlist}[Case]
\item If $\varphi < 1-\gamma_2$, $\ip{f,\eta^{N_1,N_2}_t} \rightarrow 0$.
\item If $\varphi = 1-\gamma_2$, $l^{N_1,N_2}_t(f) = \ip{f,\eta^{N_1,N_2}_t} \rightarrow l^{N_1}_t(f)$, where $l^{N_1}_t(f)$ is given by
\begin{equation}\label{l_t limit}
\begin{aligned}
l^{N_1}_{t}(f)&=\int_0^t \int_{\CX\times \CY} \left(y-h_s^{N_1}(x')\right) \ip{C^{N_1,f}_{x'}(\theta),{\gamma}^{N_1}_0} \pi(dx',dy)ds
\end{aligned}
\end{equation}
\end{custlist}
 \end{proposition}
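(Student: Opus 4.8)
The plan is to obtain a semimartingale (Doob) decomposition for the scalar process $l^{N_1,N_2}_t(f)=\ip{f,\eta^{N_1,N_2}_t}$ and then pass to the limit $N_2\to\infty$ by a tightness-plus-identification argument, in the spirit of fluctuation limits for interacting particle systems. First I would write, for the discrete step $k$,
\[
\ip{f,\tilde{\gamma}^{N_1,N_2}_{k+1}}-\ip{f,\tilde{\gamma}^{N_1,N_2}_{k}}=\frac{1}{N_2}\sum_{i=1}^{N_2}\paren{f(\theta^i_{k+1})-f(\theta^i_k)},
\]
and Taylor expand each summand to second order around $\theta^i_k$, substituting the three SGD updates in (\ref{SGD}). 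The decisive bookkeeping is the order in $N_2$ of the increments: with the learning rates (\ref{potential_alpha_1}) one checks that the $C^i$, $W^{2,j,i}$ and the shared $W^{1,j}$ increments are each of size $O(N_2^{-(2-\gamma_2)})$ (for the $W^{1,j}$ increment this uses that the internal sum $\frac{1}{N_2^{\gamma_2}}\sum_i C^i_k\sigma'(Z^{2,i}_k)W^{2,j,i}_k$ is of order $N_2^{1-\gamma_2}$), so the first-order part accumulates over $\floor{N_2t}$ steps to size $O(N_2^{-(1-\gamma_2)})$. Multiplying by $N_2^{\varphi}$ and reading the residual $\frac{1}{N_2}$ as a time increment $ds$, the first-order part becomes a Riemann sum whose $\mathfrak{F}_k$-conditional mean over the SGD sampling of $(x_k,y_k)\sim\pi$ is a functional of $\tilde{\gamma}^{N_1,N_2}_k$ that is the discrete counterpart of $\int_{\CX\times\CY}\paren{y-h^{N_1}_s(x')}\ip{C^{N_1,f}_{x'}(\theta),\gamma_0^{N_1}}\pi(dx',dy)\,ds$; the three summands of $C^{N_1,f}_{x'}$ in (\ref{Eq:C_def}) are exactly the $\partial_c f$, $\partial_{w^2}f$ and $\nabla_{w^1}f$ derivatives paired with the corresponding parameter velocities from (\ref{SGD}).

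Next I would make that replacement precise, substituting inside the accumulated drift the empirical measure $\tilde{\gamma}^{N_1,N_2}_k$ by its limit $\gamma_0^{N_1}$ and $g^{N_1,N_2}_{\theta_k}$ by $h^{N_1}_s$, invoking Theorem \ref{LLN:theorem}. Since $\ip{\cdot,\tilde{\gamma}^{N_1,N_2}_k-\gamma_0^{N_1}}=O(N_2^{-(1-\gamma_2)})+O(N_2^{-1/2})$ in $L^2$ and this error multiplies a drift already of order $N_2^{-(1-\gamma_2)}$ after the $N_2^{1-\gamma_2}$ scaling, the replacement error tends to $0$; likewise $g^{N_1,N_2}_{\theta_k}(x')\to h^{N_1}_s(x')$ uniformly on $[0,T]$ by Theorem \ref{LLN:theorem}. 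Here it is essential that under $\gamma_0^{N_1}$ the $w^1$-coordinates are frozen to their deterministic initial values, which lets the $W^{1,j}$-contribution — whose increment itself carries the empirical average over $i$ above — factor into the product of two inner products against $\gamma_0^{N_1}$, reproducing the third term of (\ref{Eq:C_def}). This yields, in the limit, the integral in (\ref{l_t limit}).

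It then remains to control the martingale and initialization terms. Writing $M^{N_1,N_2}_t$ for the sum of the per-step increments minus their $\mathfrak{F}_k$-conditional means, its predictable quadratic variation is $O\paren{N_2\cdot N_2^{-2(2-\gamma_2)}}=O(N_2^{-(3-2\gamma_2)})$, so after multiplication by $N_2^{2\varphi}$ with $\varphi\le 1-\gamma_2$ it is $O(N_2^{-1})$ and $N_2^{\varphi}M^{N_1,N_2}_t\to 0$ in $L^2$ uniformly on $[0,T]$ by Doob's inequality; the second-order Taylor remainder is of the same negligible order. The initialization term $\ip{f,\eta^{N_1,N_2}_0}=N_2^{\varphi}\ip{f,\tilde{\gamma}^{N_1,N_2}_0-\gamma_0^{N_1}}$ has $L^2$-norm $O(N_2^{\varphi-1/2})\to 0$, precisely because $\varphi\le 1-\gamma_2<1/2$ as $\gamma_2>1/2$. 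Combining the three estimates gives the stated dichotomy: when $\varphi<1-\gamma_2$ the surviving drift carries an extra factor $N_2^{\varphi-(1-\gamma_2)}\to 0$ and the limit is $0$, whereas when $\varphi=1-\gamma_2$ the drift survives and converges to $l^{N_1}_t(f)$.

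To make all of this rigorous in $D_{\R}([0,T])$ I would first deduce, from the a priori moment bounds of Appendix \ref{S:AprioriBounds}, uniform-in-$N_2$ second-moment and modulus-of-continuity estimates for $l^{N_1,N_2}_t(f)$, giving tightness; since every limit point solves the deterministic equation (\ref{l_t limit}), which has a unique solution, the whole sequence converges, and the limit being deterministic the convergence is in probability. I expect the main obstacle to be the bookkeeping forced by the shared inner weights $W^{1,j}$: because one and the same increment is added to all $N_2$ atoms, the $W^{1,j}$-contribution is genuinely quadratic in the empirical measure and couples all the particles, so showing that it concentrates on a product of $\gamma_0^{N_1}$-integrals rather than generating an extra fluctuation at order $N_2^{-(1-\gamma_2)}$ is the delicate point — and it is exactly there that the specific order of $\alpha_{W,1}^{N_1,N_2}$ in (\ref{potential_alpha_1}) is needed.
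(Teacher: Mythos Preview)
Your proposal is correct and follows essentially the same approach as the paper: derive the evolution of $\ip{f,\eta^{N_1,N_2}_t}$ by Taylor expanding against the SGD updates (\ref{SGD}) with the learning rates (\ref{potential_alpha_1}), separate the drift, martingale, initialization and second-order remainder, establish tightness via moment and regularity bounds, and then identify the limit using Theorem \ref{LLN:theorem}. The only cosmetic difference is that, in Case 2, the paper formalizes the identification by proving joint convergence of $(\gamma^{N_1,N_2},h^{N_1,N_2},l^{N_1,N_2}(f))$ and applying a continuous functional (their map $F_1$), whereas you phrase it as ``substituting $\gamma^{N_1,N_2}_s\to\gamma_0^{N_1}$ and $h^{N_1,N_2}_s\to h^{N_1}_s$ in the drift with controlled error'' --- these are equivalent ways of packaging the same estimates.
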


\begin{theorem}\label{CLT:theorem}
Let Assumption \ref{assumption} hold and choose the learning rates via  (\ref{potential_alpha_1}). Let $\mathcal{G}^{N_1}(x)$ be the Gaussian random variable defined in (\ref{limit_gaussian}). Then,
as $N_2 \to \infty$, the sequence of processes $\{K^{N_1,N_2}_t, t\in[0,T]\}_{N_2\in\mathbb{N}}$ converges in distribution in the space $D_{\R^M}([0,T])$ to $K^{N_1}_t$, such that, depending on the values of $\gamma$ and $\phi$, we shall have
\begin{custlist}[Case]
\item When $\gamma \in \paren{\frac{1}{2}, \frac{3}{4}}$ and $\varphi \le \gamma_2 - \frac{1}{2}$, or when $\gamma_2 \in \left[\frac{3}{4}, 1\right)$ and $\varphi < 1-\gamma_2 \le \gamma_2 - \frac{1}{2}$,
\begin{equation}\label{CLT:evolution}
\begin{aligned}
K^{N_1}_t(x) &= K^{N_1}_0(x)- \int^t_0 \int_{\CX \times \CY} K^{N_1}_s(x') A^{N_1}_{x,x'} \pi(dx',dy) ds
\end{aligned}
\end{equation}
where $K^{N_1}_0(x) = 0$ if $\varphi < \gamma_2 - \frac{1}{2}$, and $K^{N_1}_0(x)=\mathcal{G}^{N_1}(x)$  if $\varphi = \gamma_2-\frac{1}{2}$.
\item When $\gamma_2 \in \left[\frac{3}{4}, 1\right)$ and $\varphi = 1-\gamma_2$,
\begin{equation}\label{K_t for 1-gamma_2}
\begin{aligned}
K^{N_1}_t(x) = K^{N_1}_0(x) &+ \int_0^t \int_{\CX\times \CY} \left(y-h^{N_1}_s(x')\right)\left[l^{N_1}_t\left(B^1_{x,x'}(\theta)\right)+\frac{1}{N_1}\sum_{j=1}^{N_1}l^{N_1}_t\left(B^{2,j}_{x,x'}(\theta)\right)\right]\pi(dx',dy)ds \\
&+ \frac{1}{N_1}\sum_{j=1}^{N_1}\int_0^t \int_{\CX\times \CY} \left(y-h^{N_1}_s(x')\right)xx'l^{N_1}_t\left(B^{3,j}_{x}(\theta)\right)\ip{B^{3,j}_{x'}(\theta),\gamma^{N_1}_0}\pi(dx',dy)ds\\
&+ \frac{1}{N_1}\sum_{j=1}^{N_1}\int_0^t \int_{\CX\times \CY} \left(y-h^{N_1}_s(x')\right)xx'\ip{B^{3,j}_{x}(\theta), \gamma^{N_1}_0}l^{N_1}_t\left(B^{3,j}_{x'}(\theta)\right)\pi(dx',dy)ds\\
&- \int^t_0 \int_{\CX \times \CY}K^{N_1}_s(x')A^{N_1}_{x,x'}\pi(dx',dy) ds
\end{aligned}
\end{equation}
where $K^{N_1}_0(x) = 0$ if $\gamma_2 \in \paren{\frac{3}{4},1}$,  $K^{N_1}_0(x) = \mathcal{G}^{N_1}(x)$ if $\gamma_2 = \frac{3}{4}$, and $l^{N_1}_t(f)$ is given by equation \eqref{l_t limit} for any $f\in C_b^2(\R^{1+N_1(1+d)})$.
\end{custlist}
\end{theorem}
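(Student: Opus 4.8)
The plan is to derive a closed pre-limit evolution equation for $K^{N_1,N_2}_t=N_2^{\varphi}(h^{N_1,N_2}_t-h^{N_1}_t)$, isolate its drift, measure-fluctuation, martingale and remainder pieces, and then pass to the limit $N_2\to\infty$ using Theorem \ref{LLN:theorem}, Proposition \ref{prop::l_t} and the initial-time central limit theorem (\ref{limit_gaussian}). First I would write the one-step increment $g^{N_1,N_2}_{k+1}(x)-g^{N_1,N_2}_k(x)$ by Taylor expanding $\sigma$ and $Z^{2,i}$ in the SGD increments of $C^i,W^{2,j,i},W^{1,j}$ from (\ref{SGD}), keeping the learning-rate scalings (\ref{potential_alpha_1}). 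The choice (\ref{potential_alpha_1}) is exactly what makes each one-step increment of $g$ of order $N_2^{-1}$, so that after summing over $k\le\floor{N_2 t}$ and time-scaling the drift reproduces, to leading order, the kernel $A^{N_1}_{x,x'}$ of (\ref{Eq:A_def}) through the three contributions $B^1,B^{2,j},B^{3,j}$ of (\ref{Eq:B_def0}), precisely as in the proof of Theorem \ref{LLN:theorem}.

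Next I would subtract the limiting equation (\ref{LLN:limit_evolution}), multiply by $N_2^{\varphi}$, and everywhere the empirical measure appears in the drift substitute $\gamma^{N_1,N_2}_s=\gamma^{N_1}_0+N_2^{-\varphi}\eta^{N_1,N_2}_s$. This produces four pieces: (a) a term linear in $K^{N_1,N_2}_s$ of the form $-\int_0^t\int_{\CX\times\CY}K^{N_1,N_2}_s(x')A^{N_1}_{x,x'}\pi(dx',dy)\,ds$; (b) measure-fluctuation terms $\int_0^t\int_{\CX\times\CY}\paren{y-h^{N_1}_s(x')}\,l^{N_1,N_2}_s(B^{\bullet})\,\pi(dx',dy)\,ds$; (c) a scaled martingale $M^{N_1,N_2}_t$; and (d) remainders from higher-order Taylor terms. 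The initial datum is $K^{N_1,N_2}_0=N_2^{\varphi}h^{N_1,N_2}_0$, which by (\ref{limit_gaussian}) converges to $\mathcal{G}^{N_1}$ when $\varphi=\gamma_2-\tfrac12$ and to $0$ when $\varphi<\gamma_2-\tfrac12$; writing $K^{N_1,N_2}_0=N_2^{\varphi-(\gamma_2-1/2)}\bigl(N_2^{\gamma_2-1/2}h^{N_1,N_2}_0\bigr)$ makes the two subcases at the boundary $\gamma_2=\tfrac34$ transparent.

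I would then dispose of (c) and (d) and invoke Proposition \ref{prop::l_t} for (b). Using the a priori bounds of Appendix \ref{S:AprioriBounds} and $\sigma\in C^{\infty}_b$, the predictable quadratic variation of $M^{N_1,N_2}_t$ is of order $N_2^{2\varphi-1}$; since $\varphi\le\gamma_2-\tfrac12<\tfrac12$ for $\gamma_2<1$, Doob's inequality gives $M^{N_1,N_2}_t\to0$ in $L^2$, which is the reason the training noise does not contribute and the only surviving randomness is the initial Gaussian $\mathcal{G}^{N_1}$; the remainders (d), being second order in the increments, are $o(1)$ at the scale $N_2^{\varphi}$ under the same bounds. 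For (b), Proposition \ref{prop::l_t} is decisive: if $\varphi<1-\gamma_2$ then $l^{N_1,N_2}_s(f)\to0$ and (b) disappears, leaving the homogeneous equation (\ref{CLT:evolution}); if $\varphi=1-\gamma_2$ (which under $\gamma_2\in[\tfrac34,1)$ satisfies $1-\gamma_2\le\gamma_2-\tfrac12$) then $l^{N_1,N_2}_s(f)\to l^{N_1}_s(f)$ of (\ref{l_t limit}), and passing to the limit inside the time integral (justified by the convergence in $D_{\R}([0,T])$ of Proposition \ref{prop::l_t} together with the uniform bounds) yields exactly the $B^1,B^{2,j},B^{3,j}$ integrals of (\ref{K_t for 1-gamma_2}). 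Matching these regimes against the admissible values of $\varphi$ produces the two cases of the statement, with the initial condition at $\gamma_2=\tfrac34$ being $\mathcal{G}^{N_1}$ and at $\gamma_2\in(\tfrac34,1)$ being $0$.

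Finally I would establish tightness of $\{K^{N_1,N_2}\}$ in $D_{\R^M}([0,T])$ from the a priori bounds and a uniform estimate on the conditional increments via an Aldous--Kurtz criterion, and then identify every limit point: passing to the limit in the pre-limit equation shows each limit point solves the relevant linear integral equation, (\ref{CLT:evolution}) or (\ref{K_t for 1-gamma_2}). Since $A^{N_1}$ is a fixed bounded matrix and $l^{N_1}_t$ is an already-identified input, this Volterra-type equation has a unique solution by Gronwall, so the limit is unique and convergence in distribution follows. I expect the main obstacle to be the simultaneous bookkeeping across the scaling regimes — in particular verifying that the remainder and martingale pieces are genuinely lower order at the critical exponents $\varphi\in\{\gamma_2-\tfrac12,\,1-\gamma_2\}$, and controlling the coupling between $K^{N_1,N_2}$ and $l^{N_1,N_2}$ uniformly in $N_2$ at the boundary $\gamma_2=\tfrac34$ where $\gamma_2-\tfrac12=1-\gamma_2$; this is exactly where the estimates of Appendix \ref{S:AprioriBounds} and a joint, rather than separate, treatment of the pair $(K^{N_1,N_2},l^{N_1,N_2})$ are required.
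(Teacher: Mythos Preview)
Your proposal is correct and follows essentially the same strategy as the paper's proof in Appendix~\ref{sec::convergence_of_K}: derive the pre-limit evolution (\ref{K_evolution}) by subtracting (\ref{LLN:limit_evolution}) from (\ref{h_N1N2_evolution}) and writing $\gamma^{N_1,N_2}_s=\gamma^{N_1}_0+N_2^{-\varphi}\eta^{N_1,N_2}_s$, control the martingale via Doob (Lemma~\ref{CLT:lemma:martingale_bound}) and the remainder $\Gamma^{N_1,N_2}$ via the a priori bounds, establish tightness (Lemmas~\ref{CLT:lemma:bound of ex_Kt}--\ref{CLT:lemma:regularity}), and identify the limit through Proposition~\ref{prop::l_t} and uniqueness of the linear Volterra equation. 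The only refinement the paper adds over your description is that the identification step is carried out by packaging the \emph{joint} process $(\gamma^{N_1,N_2},h^{N_1,N_2},l^{N_1,N_2}_1,l^{N_1,N_2}_2,l^{N_1,N_2}_3,K^{N_1,N_2})$ and testing a single continuous bounded functional $F_2$ (resp.\ $F_3$) on $D_{E_1}([0,T])$, which is precisely the ``joint rather than separate'' treatment you anticipate in your last sentence.
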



Notice that when $\gamma_2 > {3}/{4}$, Theorem \ref{CLT:theorem} shows that the limit of $K^{N_1,N_2}_t$ is deterministic. This motivates us to consider the next order correction. Namely, let us define the second order fluctuations $\Psi^{N_1,N_2}_t = N_2^{\zeta - \varphi} (K^{N_1,N_2}_t - K^{N_1}_t)$ for $\gamma_2 \in \paren{{3}/{4},1}$ and for some $\zeta>\varphi$ to be determined. 

\begin{proposition}\label{prop::L_t}
Let Assumption \ref{assumption} hold and choose the learning rates via  (\ref{potential_alpha_1}). Fix $\gamma_2 \in (3/4,1)$, $\varphi = 1-\gamma_2$, and $f \in C^3_b(\R^{1+N_1(1+d)})$. Letting $\zeta \le 2\varphi$, the process $\{L_t^{N_1,N_2}(f) = N_2^{\zeta-\varphi} [l^{N_1,N_2}_t(f) - l^{N_1}_t(f)], t\in[0,T]\}_{N_2\in\mathbb{N}}$ converges in probability in the space $D_{\R}([0,T])$ as $N_2 \to \infty$, and
\begin{custlist}[Case]
\item If $\zeta < 2\varphi = 2-2\gamma_2$, $L^{N_1,N_2}_t(f) \rightarrow 0$.
\item If $\zeta = 2\varphi = 2-2\gamma_2$, $L^{N_1,N_2}_t(f)  \rightarrow L^{N_1}_t(f)$, where $L^{N_1}_t(f)$ is given by
\begin{equation}\label{limit_Lt_0}
\begin{aligned}
L^{N_1}_t(f)
&=\int_0^t \int_{\CX \times \CY} \paren{y-h^{N_1}_s(x')}  l^{N_1}_s\left(\partial_{c}f(\theta)  \sigma(Z^{2,N_1}(x'))\right) \pi(dx',dy) ds\\
&\quad  + \frac{1}{N_1^{1-\gamma_1}}\int_0^t \int_{\CX \times \CY} \left(y-h_s^{N_1}(x')\right) l^{N_1}_s\left(c\sigma'(Z^{2,N_1}(x'))\sigma(w^1x')\cdot \partial_{w^2}f(\theta) \right) \pi(dx',dy) ds\\
&\quad  + \frac{1}{N_1^{1-\gamma_1}}\int_0^t\int_{\CX\times \CY}\left(y-h_s^{N_1}(x')\right)l^{N_1}_s\left(\ip{c\sigma'(Z^{2,N_1}(x'))\sigma'(w^1x')w^{2},{\gamma}_0^{N_1}}\cdot  \nabla_{w^1}f(\theta)x'\right)\pi(dx',dy)ds\\
&\quad +\frac{1}{N_1^{1-\gamma_1}}\int_0^t\int_{\CX\times \CY}\left(y-h_s^{N_1}(x')\right)\ip{l^{N_1}_s\left(c\sigma'(Z^{2,N_1}(x'))\sigma'(w^1x')w^{2}\right) \cdot  \nabla_{w^1}f(\theta)x',{\gamma}^{N_1}_0}\pi(dx',dy)ds\\
&\quad - \int_0^t\int_{\CX\times \CY}K^{N_1}_s(x')\ip{C_{x'}^{N_1,f}(c,w),{\gamma}^{N_1}_0}\pi(dx',dy)ds
\end{aligned}
\end{equation}
\end{custlist}
\end{proposition}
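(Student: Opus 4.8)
The plan is to follow the same martingale-plus-Taylor-expansion strategy already used for Proposition \ref{prop::l_t} and Theorem \ref{CLT:theorem}, but carried one order deeper, while exploiting the fact that the target identity \eqref{limit_Lt_0} is an \emph{explicit} formula in quantities already established ($h^{N_1}$, $l^{N_1}$, $K^{N_1}$ and $\gamma_0^{N_1}$) rather than an integral equation to be solved. First I would return to the exact pre-limit evolution of $l^{N_1,N_2}_t(f)=\ip{f,\eta^{N_1,N_2}_t}$ obtained in the proof of Proposition \ref{prop::l_t}. Substituting the parameter updates \eqref{SGD} with the learning rates \eqref{potential_alpha_1} into the increment $\ip{f,\tilde\gamma^{N_1,N_2}_{k+1}}-\ip{f,\tilde\gamma^{N_1,N_2}_{k}}$ and Taylor expanding $f$ to third order (this is exactly why the hypothesis is strengthened to $f\in C^3_b$, one derivative more than in Proposition \ref{prop::l_t}), I would write $l^{N_1,N_2}_t(f)$ as a drift term plus a martingale $M^{N_1,N_2}_t(f)$ plus a remainder $R^{N_1,N_2}_t(f)$, tracking every power of $N_2$ and $N_1^{\gamma_1}$. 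The crucial structural observation is that, up to lower order, the pre-limit drift has the form $\int_0^t\int_{\CX\times\CY}(y-h^{N_1,N_2}_s(x'))\ip{C^{N_1,f}_{x'}(\theta),\gamma^{N_1,N_2}_s}\pi(dx',dy)\,ds$, where $C^{N_1,f}_{x'}$ is as in \eqref{Eq:C_def} but evaluated against the \emph{empirical} measure $\gamma^{N_1,N_2}_s$.

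Next I would form $L^{N_1,N_2}_t(f)=N_2^{\zeta-\varphi}(l^{N_1,N_2}_t(f)-l^{N_1}_t(f))$ by subtracting the limit equation \eqref{l_t limit} and scaling. The two sources of the surviving correction are isolated by writing $h^{N_1,N_2}_s-h^{N_1}_s=N_2^{-\varphi}K^{N_1,N_2}_s$ and $\ip{\cdot,\gamma^{N_1,N_2}_s}-\ip{\cdot,\gamma_0^{N_1}}=N_2^{-\varphi}l^{N_1,N_2}_s(\cdot)$; each carries the factor $N_2^{-\varphi}$, so with $\zeta=2\varphi$ the prefactor $N_2^{\zeta-\varphi}=N_2^{\varphi}$ cancels it exactly and produces an $O(1)$ contribution. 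The fluctuation of $h$ (recall $\gamma_2>3/4$ is precisely the regime where $K^{N_1}$ is deterministic) yields the last line of \eqref{limit_Lt_0} with coefficient $\ip{C^{N_1,f}_{x'},\gamma_0^{N_1}}$, while the fluctuation of the empirical measure inside $C^{N_1,f}_{x'}$ yields the first four lines; the splitting of the third summand of \eqref{Eq:C_def} — which itself contains a nested pairing $\ip{\cdot,\gamma_0^{N_1}}$ — into two separate fluctuations is what accounts for the distinct third and fourth lines. Passing $N_2\to\infty$ in these products then uses Theorem \ref{LLN:theorem} ($h^{N_1,N_2}_s\to h^{N_1}_s$, $\gamma^{N_1,N_2}_s\to\gamma_0^{N_1}$), Proposition \ref{prop::l_t} ($l^{N_1,N_2}_s\to l^{N_1}_s$) and Theorem \ref{CLT:theorem} ($K^{N_1,N_2}_s\to K^{N_1}_s$), which requires establishing joint convergence of the triple so that products converge to products of limits. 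When $\zeta<2\varphi$ the same bookkeeping shows every correction retains a strictly positive power of $N_2$ in the denominator and hence vanishes, giving Case 1.

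The remaining work is purely quantitative control. I would bound the predictable quadratic variation of $N_2^{\zeta-\varphi}M^{N_1,N_2}_t(f)$ and the scaled remainder $N_2^{\zeta-\varphi}R^{N_1,N_2}_t(f)$ using the a priori moment bounds on the parameters from Appendix \ref{S:AprioriBounds} together with the explicit $N_2$-orders in \eqref{potential_alpha_1}; the constraint $\zeta\le2\varphi$ is exactly the threshold that keeps these terms from blowing up and forces them to $0$, and boundedness of $\sigma$ and its derivatives (Assumption \ref{assumption}) keeps all the $\sigma(Z^{2,N_1}),\sigma'(Z^{2,N_1})$ factors controlled uniformly. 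Standard tightness in $D_{\R}([0,T])$ follows from these uniform bounds, and since the limit \eqref{limit_Lt_0} is a deterministic explicit functional of already-determined quantities — with no unknown on its right-hand side — uniqueness of the limit is immediate and the convergence upgrades from distribution to probability. The main obstacle I anticipate is the exact power-counting in $N_2$ (and the compensating $N_1^{\gamma_1}$ factors) needed to confirm that at $\zeta=2\varphi$ precisely the five listed terms survive while the martingale and the second- and third-order Taylor remainders are genuinely negligible, together with the careful decomposition of the nested measure dependence in \eqref{Eq:C_def} so that the bookkeeping reproduces the first four lines of \eqref{limit_Lt_0} with the correct coefficients.
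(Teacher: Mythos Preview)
Your proposal is correct and follows essentially the same route as the paper: derive the pre-limit evolution of $L^{N_1,N_2}_t(f)$ by subtracting \eqref{l_t limit} from the exact expansion of $l^{N_1,N_2}_t(f)$ and scaling, split the resulting drift via $h^{N_1,N_2}-h^{N_1}=N_2^{-\varphi}K^{N_1,N_2}$ and $\gamma^{N_1,N_2}-\gamma_0^{N_1}=N_2^{-\varphi}\eta^{N_1,N_2}$, establish compact containment and regularity for tightness in $D_{\R}([0,T])$, and identify the limit jointly with $(\gamma^{N_1,N_2},h^{N_1,N_2},l^{N_1,N_2},K^{N_1,N_2})$ so that products pass to the limit. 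Your observation that the nested pairing inside $C^{N_1,f}_{x'}$ must be linearized separately to produce the third and fourth lines of \eqref{limit_Lt_0}, and that the limit is an explicit functional (so uniqueness is automatic and convergence upgrades to probability), matches the paper's treatment exactly.
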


\begin{theorem}\label{thm::Psi}
Let Assumption \ref{assumption} hold and choose the learning rates via  (\ref{potential_alpha_1}). Let also $\mathcal{G}^{N_1}(x)$ be the Gaussian random variable defined in (\ref{limit_gaussian}). Then, for fixed $\gamma_2 \in (3/4,1)$ and $\varphi = 1-\gamma_2$, the sequence of processes $\{\Psi^{N_1,N_2}_t, t\in[0,T]\}_{N_2\in\mathbb{N}}$ converges in distribution in the space $D_{\R^M}([0,T])$ to $\Psi^{N_1}_t$, which satisfies  the following evolution equations, depending on the values of $\gamma_2$ and $\zeta$:
\begin{custlist}[Case]
\item When $\gamma_2 \in \paren{ \frac{3}{4}, \frac{5}{6}}$ and $\zeta \le \gamma_2 - \frac{1}{2}$, or when $\gamma_2 \in \left[\frac{5}{6}, 1\right)$ and $\zeta < 2-2\gamma_2 \le \gamma_2 - \frac{1}{2}$,
\begin{equation}\label{limit_Psi}
\begin{aligned}
\Psi_t^{N_1}(x) &= \Psi_0^{N_1}(x)-\int^t_0 \int_{\CX \times \CY} \Psi_s^{N_1}(x') A^{N_1}_{x,x'} \pi(dx',dy) ds,
\end{aligned}
\end{equation}
where $\Psi^{N_1}_0(x) = 0$ if $\zeta < \gamma_2 - \frac{1}{2}$, and $\Psi^{N_1}_0(x)=\mathcal{G}^{N_1}(x)$ if $\zeta = \gamma_2-\frac{1}{2}$.
\item When $\gamma_2 \in \left[\frac{5}{6}, 1\right)$ and $\zeta = 2-2\gamma_2$,
\begin{equation}\label{Psi_2_0}
\begin{aligned}
\Psi^{N_1}_t(x)
&= \Psi^{N_1}_0(x) - \int^t_0 \int_{\CX \times \CY}\Psi^{N_1}_s(x')A^{N_1}_{x,x'}\pi(dx',dy) ds\\
&\quad +\int^t_0 \int_{\CX \times \CY} \paren{y-h^{N_1}_s(x')}\left[ L^{N_1}_{s}(B^1_{x,x'}(\theta)) + \frac{1}{N_1}\sum_{j=1}^{N_1}   L^{N_1}_{s}((B^{2,j}_{x,x'}(\theta))\right] \pi(dx',dy) ds\\
&\quad + \frac{1}{N_1}\sum_{j=1}^{N_1}\int_0^t \int_{\CX\times \CY} \left(y-h^{N_1}_s(x')\right)L^{N_1}_{s}(B^{3,j}_{x}(\theta))\ip{xx'B^{3,j}_{x'}(\theta),\gamma^{N_1}_0}\pi(dx',dy)ds\\
&\quad + \frac{1}{N_1}\sum_{j=1}^{N_1}\int_0^t \int_{\CX\times \CY} \left(y-h^{N_1}_s(x')\right)\ip{xx'B^{3,j}_{x}(\theta), \gamma^{N_1}_0}L^{N_1}_{s}(B^{3,j}_{x'}(\theta))\pi(dx',dy)ds\\
&\quad - \int_0^t \int_{\CX\times \CY} K^{N_1}_s(x')\left[l^{N_1}_s\left(B^1_{x,x'}(\theta)\right)+\frac{1}{N_1}\sum_{j=1}^{N_1}l^{N_1}_s \left(B^{2,j}_{x,x'}(\theta)\right)\right]\pi(dx',dy)ds\\
&\quad -\frac{1}{N_1} \sum_{j=1}^{N_1} \int_0^t \int_{\CX\times \CY} K^{N_1}_s(x')xx'l^{N_1}_s\left(B^{3,j}_{x}(\theta)\right)\ip{B^{3,j}_{x'}(\theta),\gamma^{N_1}_0}\pi(dx',dy)ds \\
&\quad -\frac{1}{N_1} \sum_{j=1}^{N_1} \int_0^t \int_{\CX\times \CY} K^{N_1}_s(x')xx'\ip{B^{3,j}_{x}(\theta), \gamma^{N_1}_0}l^{N_1}_s\left(B^{3,j}_{x'}(\theta)\right)\pi(dx',dy)ds.
\end{aligned}
\end{equation}
where $\Psi^{N_1}_0(x) =0$  if $\gamma_2 \in \paren{\frac{5}{6},1}$, $\Psi^{N_1}_0(x) = \mathcal{G}^{N_1}(x)$  if $\gamma_2 = \frac{5}{6}$, $K^{N_1}_s$ satisfies equation \eqref{K_t for 1-gamma_2}, and $L^{N_1}_s$ satisfies \eqref{limit_Lt_0}.
\end{custlist}
\end{theorem}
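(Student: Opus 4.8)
The plan is to mirror the proof of Theorem~\ref{CLT:theorem} one level higher in the expansion, treating $\Psi^{N_1,N_2}_t$ as the rescaled discrepancy between the prelimit fluctuation $K^{N_1,N_2}_t$ and its identified limit $K^{N_1}_t$. First I would write out the exact discrete evolution of $h^{N_1,N_2}_t(x)$ obtained from the SGD updates (\ref{SGD}) by Taylor-expanding the one-step increment of $g^{N_1,N_2}_\theta$, producing a semimartingale decomposition whose drift is built from the prelimit kernel $A^{N_1,N_2}_{x,x'}$ (the empirical-measure analogue of (\ref{Eq:A_def}), obtained by replacing $\gamma^{N_1}_0$ with the current empirical measure), together with a sampling martingale and Taylor remainders. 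Subtracting (\ref{LLN:limit_evolution}) and multiplying by $N_2^{\varphi}$ recovers the evolution of $K^{N_1,N_2}_t$; subtracting (\ref{K_t for 1-gamma_2}) from the latter and multiplying by $N_2^{\zeta-\varphi}$ then yields a closed prelimit evolution equation for $\Psi^{N_1,N_2}_t$ with $\varphi=1-\gamma_2$.

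This prelimit equation has the same architecture as (\ref{Psi_2_0}): a linear drift $-\int\Psi^{N_1,N_2}_s A^{N_1,N_2}_{x,x'}$, inhomogeneities in which the coefficient fluctuations are measured through $L^{N_1,N_2}_s(B^1),L^{N_1,N_2}_s(B^{2,j}),L^{N_1,N_2}_s(B^{3,j})$, and quadratic cross terms $K^{N_1,N_2}_s\,l^{N_1,N_2}_s(B^{\cdot})$ coming from the product of two first-order fluctuations inside the expansion of $A^{N_1,N_2}$. To pass to the limit I would feed in the earlier results: Proposition~\ref{prop::L_t} applied with $f=B^1,B^{2,j},B^{3,j}$ (these lie in $C^3_b$ because $\sigma\in C^\infty_b$, which is precisely why that proposition is stated for $C^3_b$ test functions) to send $L^{N_1,N_2}_s(\cdot)\to L^{N_1}_s(\cdot)$, and Theorem~\ref{CLT:theorem} together with Proposition~\ref{prop::l_t} to send $K^{N_1,N_2}_s\to K^{N_1}_s$ and $l^{N_1,N_2}_s\to l^{N_1}_s$. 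The power counting in $N_2$ then selects the regime: for $\gamma_2\in(\tfrac34,\tfrac56)$, and more generally whenever $\zeta<2-2\gamma_2$, every source and cross term carries a strictly negative net power of $N_2$ after the $N_2^{\zeta-\varphi}$ rescaling and drops out, leaving the homogeneous equation (\ref{limit_Psi}); the critical balance $\zeta=2-2\gamma_2$ (attainable within the admissible range $\zeta\le\gamma_2-\tfrac12$ exactly when $\gamma_2\ge\tfrac56$) is the only one in which these terms contribute at order one and produce the full equation (\ref{Psi_2_0}). The initial datum is treated separately: the propagated initialization fluctuation, governed by the central limit theorem (\ref{limit_gaussian}), survives precisely when the exponent hits $\zeta=\gamma_2-\tfrac12$, giving $\Psi^{N_1}_0=\mathcal{G}^{N_1}$, and is otherwise annihilated, giving $\Psi^{N_1}_0=0$; at $\gamma_2=\tfrac56$ one has $2-2\gamma_2=\gamma_2-\tfrac12$, which is why the Gaussian datum reappears exactly there.

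With the candidate limit equations in hand, convergence in distribution in $D_{\R^M}([0,T])$ is obtained in the usual two steps. Relative compactness of $\{\Psi^{N_1,N_2}_t\}_{N_2}$ follows from the a priori parameter bounds of Appendix~\ref{S:AprioriBounds}, which supply uniform-in-$N_2$ moment estimates and a regularity bound of the form $\E\,\abs{\Psi^{N_1,N_2}_t-\Psi^{N_1,N_2}_s}^2\le C\abs{t-s}$ adequate for an Aldous--Kurtz tightness criterion, the sampling martingale being shown negligible through its quadratic-variation scaling. Uniqueness of the limit is the easy direction: both (\ref{limit_Psi}) and (\ref{Psi_2_0}) are linear integral equations for $\Psi^{N_1}$ with the bounded deterministic kernel $A^{N_1}$ and already-identified driving processes, so Gr\"onwall's inequality forces a unique solution and hence pins down every limit point, yielding the stated convergence.

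The hard part, exactly as at the previous level, will be the remainder bookkeeping rather than the soft limit arguments. One must expand the one-step increment to high enough order, track the precise powers of $N_1^{\gamma_1}$ and $N_2^{\gamma_2}$ generated by the three distinct learning rates (\ref{potential_alpha_1}), and verify that every term absent from (\ref{limit_Psi})/(\ref{Psi_2_0}) acquires a strictly negative net power of $N_2$ after rescaling, so that it vanishes uniformly on $[0,T]$. This is genuinely delicate here because the expansion of $\Psi^{N_1,N_2}$ contains the truly quadratic interactions $K^{N_1,N_2}_s\,l^{N_1,N_2}_s(B^{\cdot})$, whose control demands simultaneously valid $L^2$ bounds on $K^{N_1,N_2}$ and on $l^{N_1,N_2}(f)$, and because the borderline cases at $\gamma_2=\tfrac56$ and at $\zeta=\gamma_2-\tfrac12$ must be resolved with sharp threshold estimates so that the Gaussian initial condition $\mathcal{G}^{N_1}$ materializes in precisely the stated subcases and nowhere else.
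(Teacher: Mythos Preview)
Your proposal is correct and follows essentially the same route as the paper's proof in Appendix~\ref{sec::Psi}: derive the prelimit evolution \eqref{Psi^N_t} for $\Psi^{N_1,N_2}_t$, feed in the convergence of $L^{N_1,N_2}_t(f)$ (Lemma~\ref{lemma:limit_L}), establish compact containment and regularity (Lemmas~\ref{Psi_compact_containment}--\ref{Psi_regularity}), and identify the limit via a martingale-problem functional plus Gr\"onwall uniqueness. One minor refinement: the paper packages the identification step as \emph{joint} convergence of the full hierarchy $(\mathfrak{L}^{N_1,N_2},\Psi^{N_1,N_2})$ through the test functionals $F_6,F_7$, and in handling the triple products $K^{N_1,N_2}\cdot l^{N_1,N_2}\cdot l^{N_1,N_2}$ it actually invokes the $L^4$ bound on $l^{N_1,N_2}(f)$ from Lemma~\ref{CLT:lemma:eta_compact_contatinment} rather than just $L^2$, so be sure to carry that moment through.
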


These results suggest that there is an expansion of  $\ip{f,\gamma^{N_1,N_2}_{t}}$ and $h^{N_1,N_2}_{t}$ as $N_2\rightarrow\infty$ for all  $\gamma_2 \in \left[\frac{2\nu-1}{2\nu}, \frac{2\nu+1}{2\nu+2}\right)$ with $\nu\in \mathbb{N}$. The aforementioned results obtain the leading order of such expansions when $\nu=1$ and $\nu=2$. In Appendix \ref{sec::higher order} we obtain the leading order of such asymptotic expansions for all $\nu\in\mathbb{N}$ and as a consequence for all $\gamma_2\in(1/2,1)$ using an inductive argument.

In particular, when $\gamma_2 \in \left[\frac{2\nu-1}{2\nu}, \frac{2\nu+1}{2\nu+2}\right)$,  we obtain that for any fixed $f\in C^{\infty}_b(\R^{1+N_1(1+d)})$, as $N_2\rightarrow\infty$,
\begin{equation}\label{measure_expansion}
\ip{f,\gamma^{N_1,N_2}_t} \approx \sum_{n=0}^{\nu-1} \frac{1}{N_2^{n(1-\gamma_2)}} l^{N_1}_{n,t}(f)+\text{ lower order terms in } N_2,
\end{equation}
where we have identified $l^{N_1}_{0,t}(f) = \ip{f,\gamma^{N_1}_0}$, $l^{N_1}_{1,t}(f) = l^{N_1}_t(f)$, $l^{N_1}_{2,t}(f) = L^{N_1}_t(f)$. When $\nu\ge 3$, the inductive expressions for $l^{N_1}_{n,t}(f)$ are given in (\ref{Eq:l_equation}).

As $N_2 \rightarrow\infty$ and when $\gamma_2 \in \left(\frac{2\nu-1}{2\nu}, \frac{2\nu+1}{2\nu+2}\right]$, we have the asymptotic expansion
\begin{equation}\label{network_expansion}
h^{N_1,N_2}_t(x) \approx \sum_{n=0}^{\nu-1} \frac{1}{N_2^{n(1-\gamma_2)}}Q^{N_1}_{j,t}(x) + \frac{1}{N_2^{\gamma_2-\frac{1}{2}}}Q^{N_1}_{\nu,t}(x) +\text{ lower order terms in }N_2,
\end{equation}
where $Q^{N_1}_{0,t} = h^{N_1}_t$, $Q^{N_1}_{1,t} = K^{N_1}_t$, $Q^{N_1}_{2,t} = \Psi^{N_1}_t$. For $n = 1,\ldots, \nu-1$, $Q^{N_1}_{n,t}$ satisfy the deterministic evolution equations (\ref{Eq:Qj_formula1}), (\ref{Eq:Qk_formula1}) and (\ref{Eq:Qj_formula2}). We do not show this here, but for fixed $j\in\mathbb{N}$, one can also show exactly as in \cite{SpiliopoulosYu2021} that, under Assumptions \ref{assumption} and \ref{assumption1}, $Q^{N_1}_{j,t}\rightarrow 0$ exponentially fast as $t\rightarrow\infty$.

For the sake of presentation and due to the length of the formulas we present the associated formulas on the right hand side of these expansions  (and their derivation) in Appendix \ref{sec::higher order}.

\section{Numerical studies}\label{S:Numerics}

The goal of this section is to compare the numerical performance of two and three-layer neural networks of the form (\ref{Eq:TwoLayerNN}) for different values of $\gamma_i\in[1/2,1]$. In Section \ref{S:MainResults}, we demonstrated the neural network's output statistical properties can be approximated via the limit to $\infty$ of the hidden layers of the outer layer. This analysis showed that the variance of the neural network's output is minimized when the outer layer is in the mean-field scaling ($\gamma_2=1$ in the case of (\ref{Eq:TwoLayerNN})) while the scaling of the inner layer i.e. the value of $\gamma_1$, plays a less prominent role.

In this section we demonstrate a number of numerical studies to compare test accuracy for two and three layer neural networks for different values of the normalization parameters.  Our numerical studies involve the well known MNIST \cite{MNIST} data sets. The MNIST dataset \cite{MNIST}, which includes 70,000 images of handwritten integers from 0 to 9. For the two layer network case, the learning rats satisfy (\ref{potential_alpha_1}), as suggested by our theoretical analysis. The neural networks are trained to identify the handwritten numbers using the image pixels as an input. In the MNIST dataset, each image has 784 pixels, 60,000 images are used as train images and 10,000 images are test images.

 We find numerically that  test accuracy of the fitted neural networks increases monotonically in $\gamma_2 \in [1/2,1]$, suggesting that the mean-field normalization $1/N_{2}$ for the outer layer that corresponds to $\gamma_2=1$, has  certain advantages over scalings $1/N_{2}^{\gamma_{2}}$ for $\gamma_2\in[1/2,1)$ when it comes to test accuracy. The numerical studies in both the two and the three layer neural networks demonstrate that as long as the outer layer is scaled in the mean-field scaling, the scalings of the inner layers plays a less prominent role. With that being said, the optimal choice, as seen by these numerical studies, is to scale all layers in the mean-field scaling.

\subsection{Numerical results for the two layer case}\label{SS:TwoLayerCase}

In this subsection we fit the model (\ref{Eq:TwoLayerNN}) to the MNIST dataset and we compare the effect of different values of $\gamma_1,\gamma_2$.

In Figure \ref{Fig:mnist_ce_gII_h100_e1000_b20_test}, we fix in each sub-figure the value of $\gamma_2$ and plot test accuracy curves with respect to values of $\gamma_1$. We find that for each $\gamma_2$, after an initial phase,  the behavior is monotonic with respect to $\gamma_1$. We also find that the best behavior is when $\gamma_1=\gamma_2=1$ with the neural network's behavior being more sensitive on the choice of the value for $\gamma_2$.

\begin{figure}[H]
  \centering
  \begin{subfigure}[b]{0.45\linewidth}
    \includegraphics[width=\linewidth]{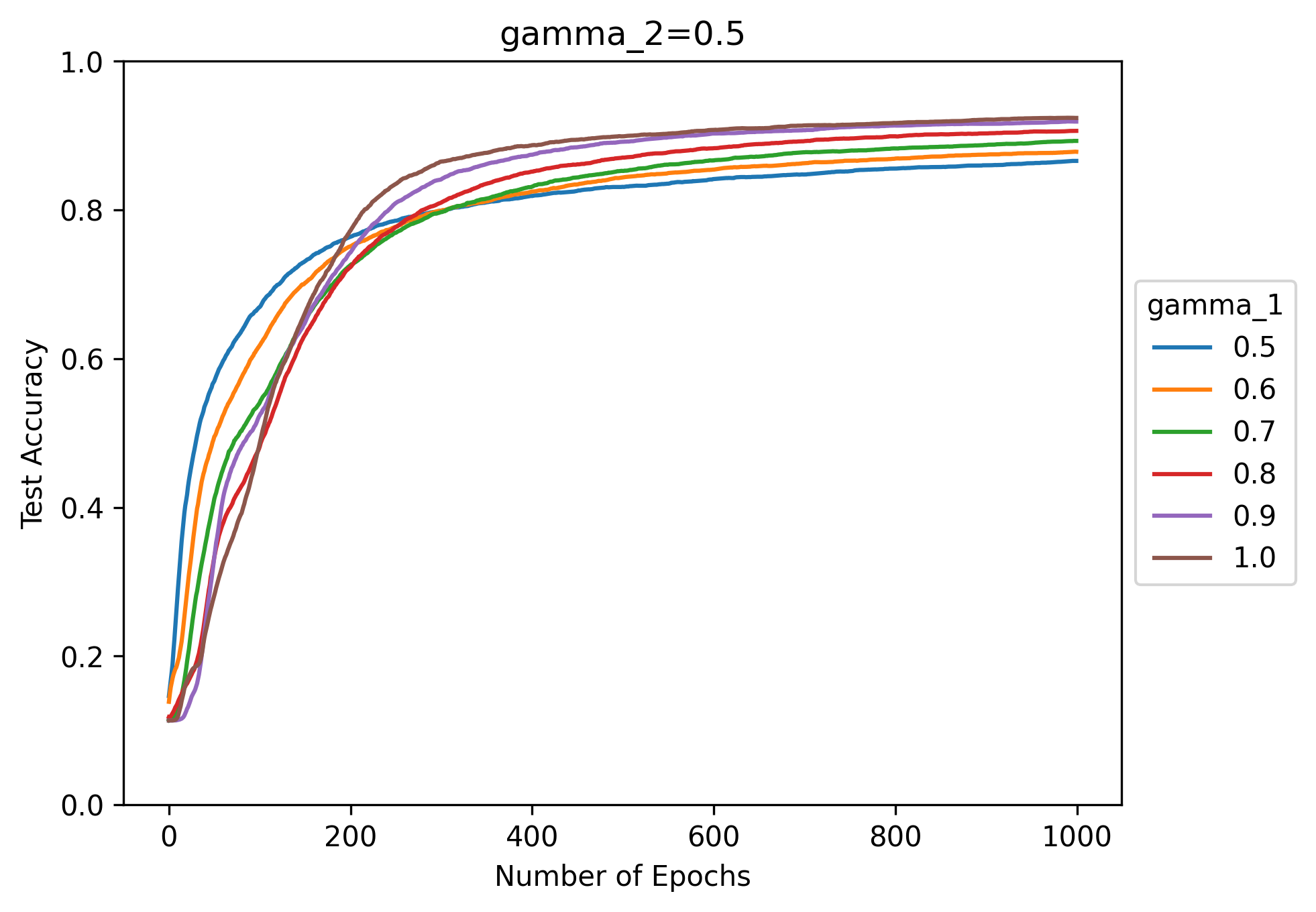}
  \end{subfigure}
  \begin{subfigure}[b]{0.45\linewidth}
    \includegraphics[width=\linewidth]{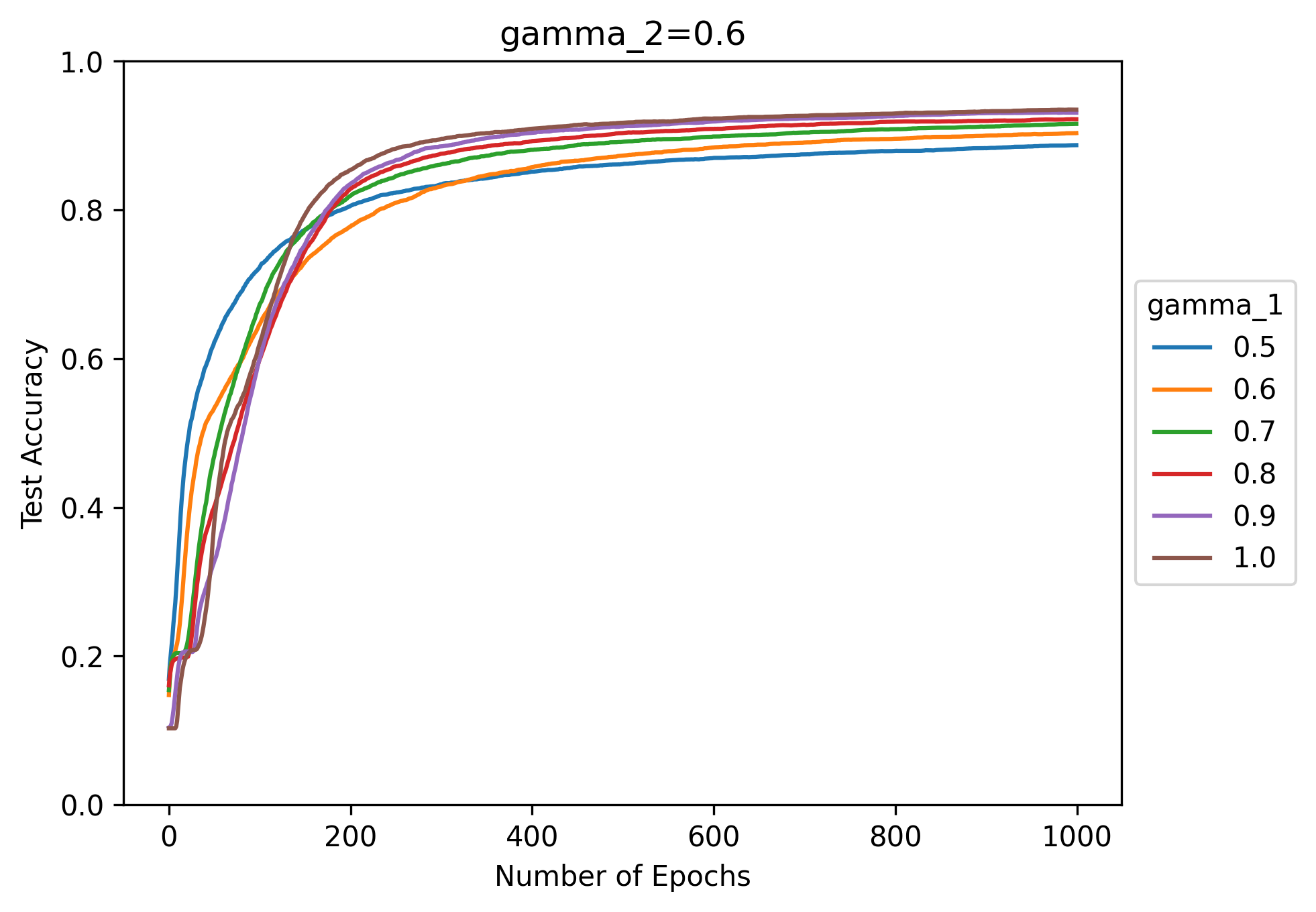}
  \end{subfigure}
  \begin{subfigure}[b]{0.45\linewidth}
    \includegraphics[width=\linewidth]{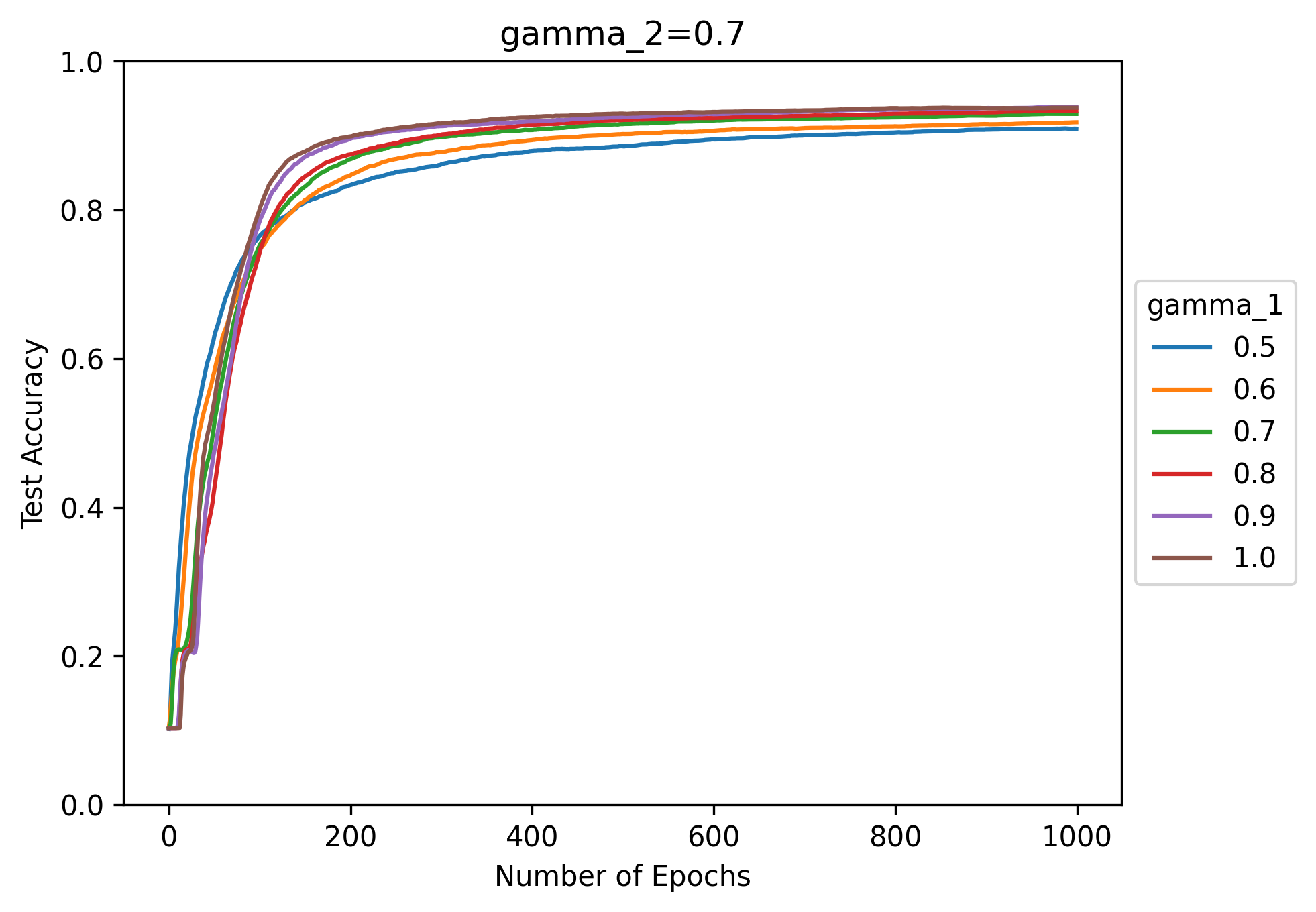}
  \end{subfigure}
  \begin{subfigure}[b]{0.45\linewidth}
    \includegraphics[width=\linewidth]{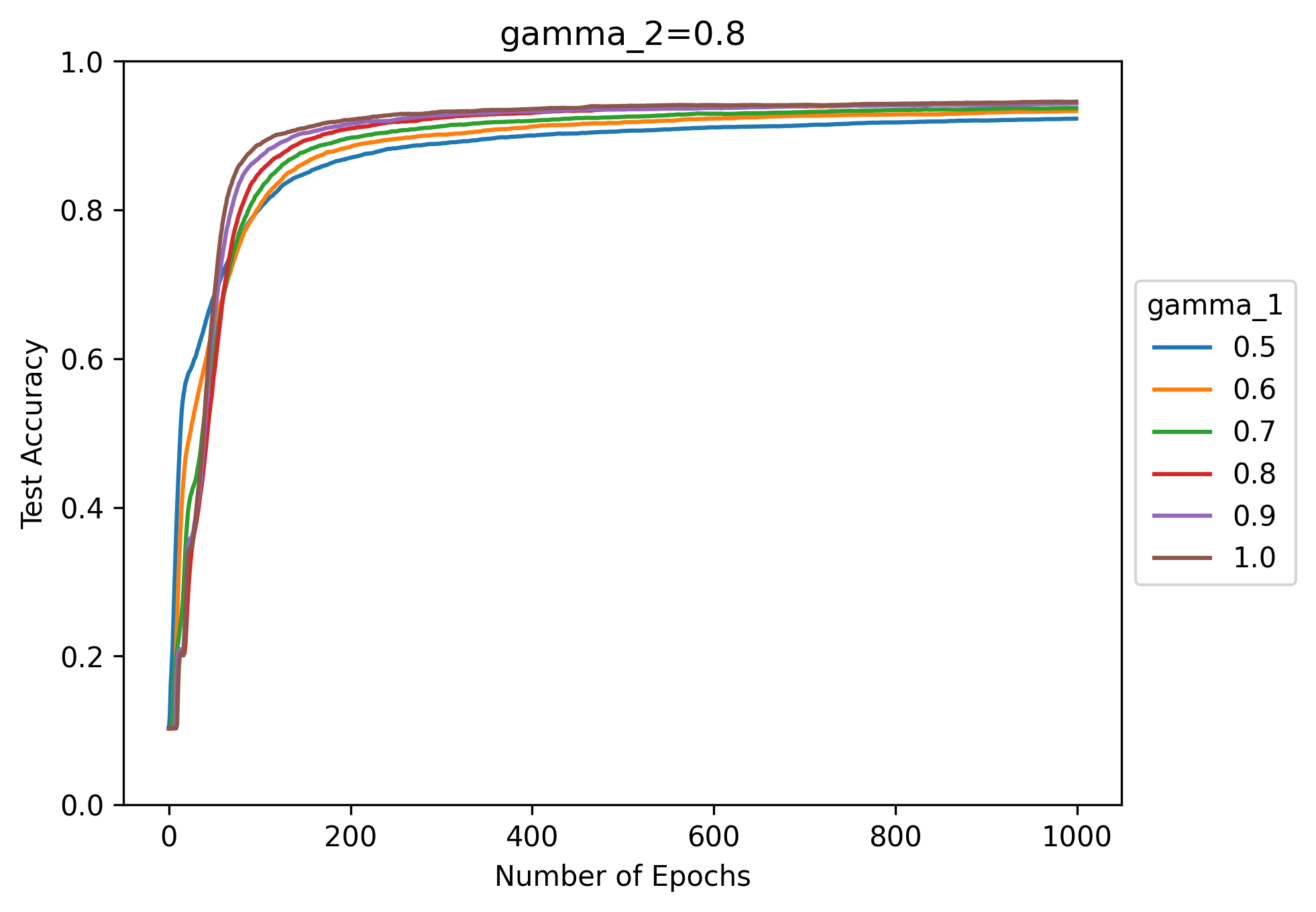}
  \end{subfigure}
    \begin{subfigure}[b]{0.45\linewidth}
    \includegraphics[width=\linewidth]{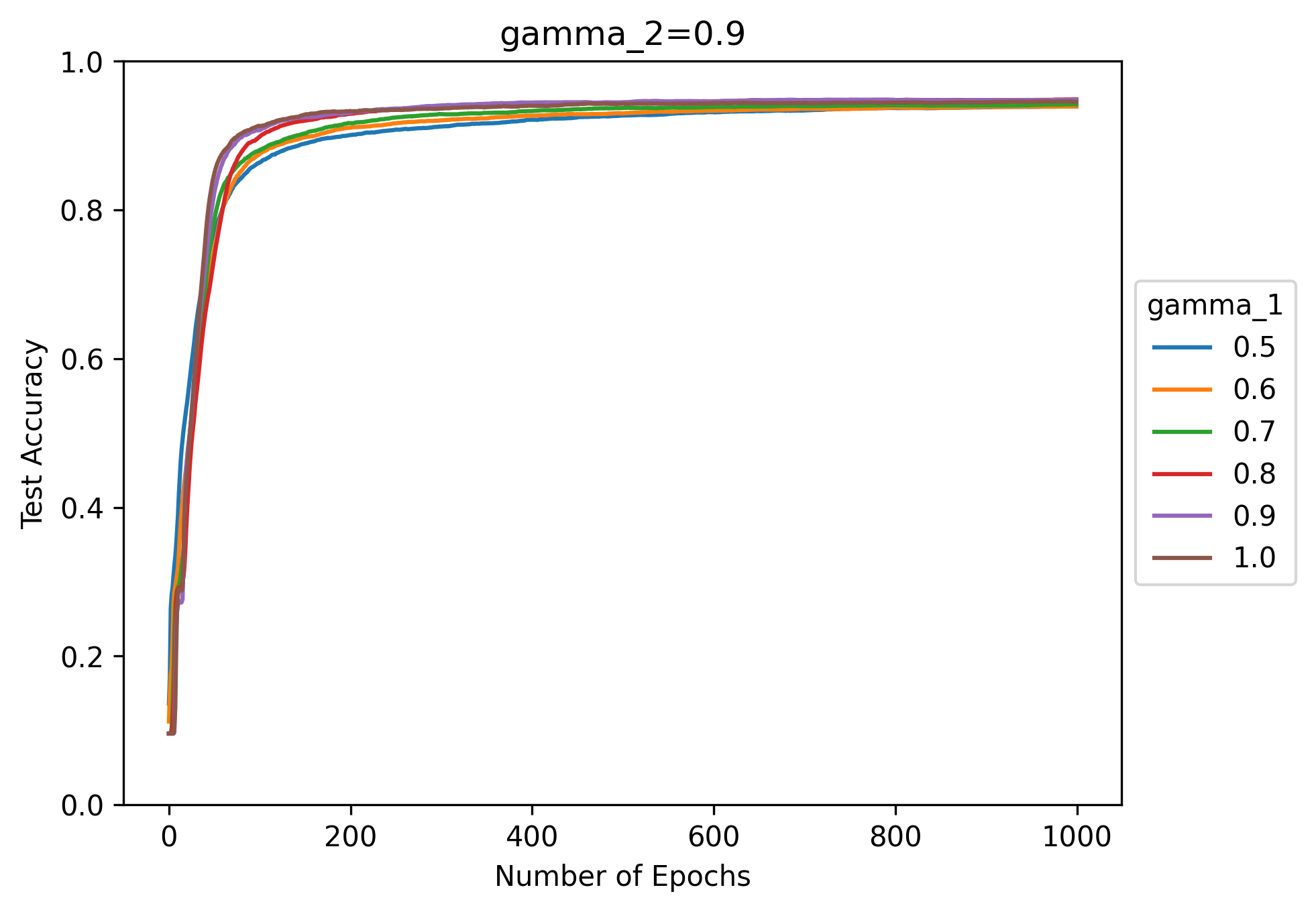}
  \end{subfigure}
    \begin{subfigure}[b]{0.45\linewidth}
    \includegraphics[width=\linewidth]{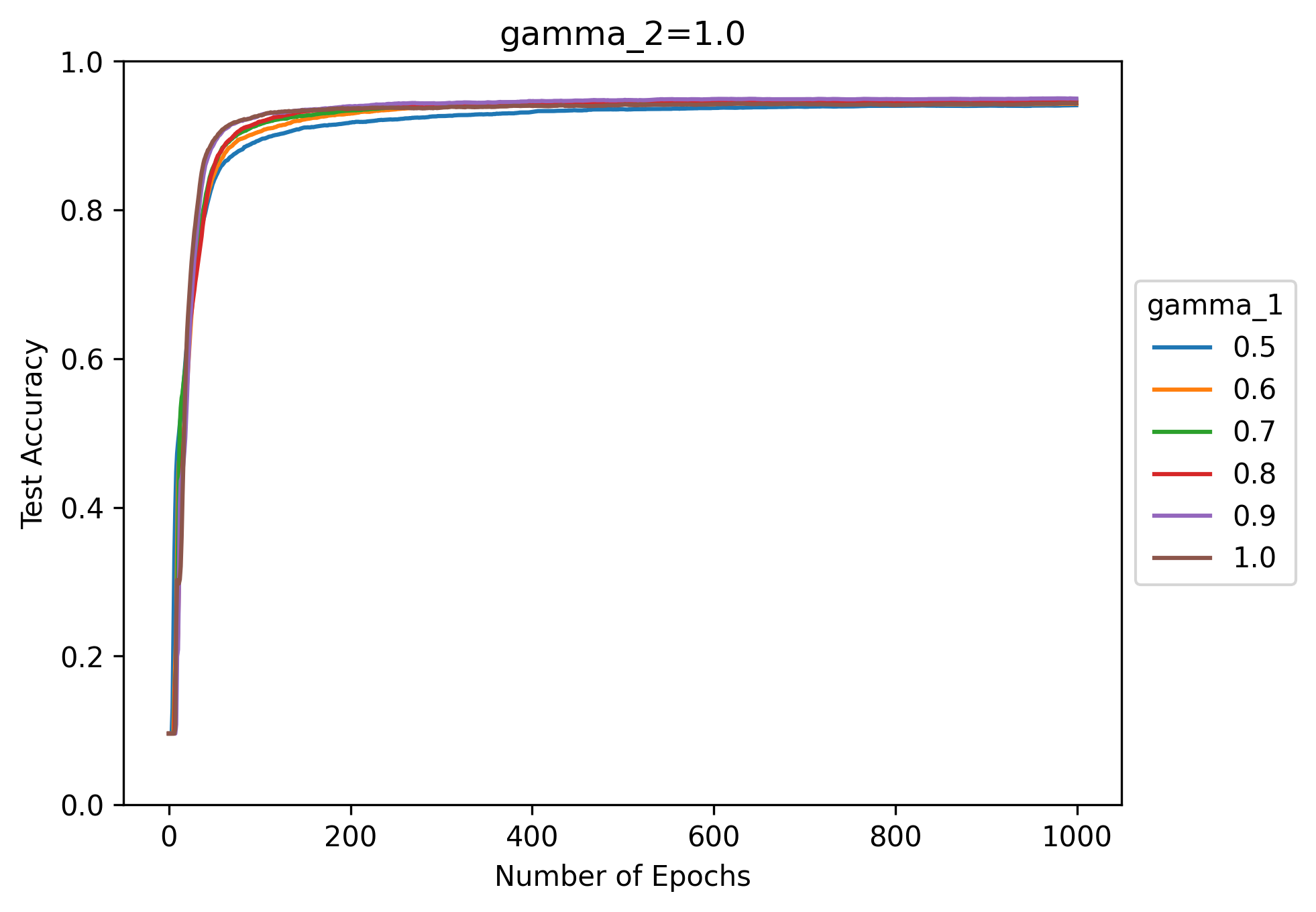}
  \end{subfigure}
  \caption{Performance of scaled neural networks on MNIST test dataset: cross entropy loss, $N_1=N_2=100$, batch size $=20$, Number of Epoch $=1000$. Each subfigure plots various $\gamma_1$ for a fixed $\gamma_2$.}
  \label{Fig:mnist_ce_gII_h100_e1000_b20_test}
  \end{figure}

In Figure \ref{Fig:mnist_ce_gI_h100_e1000_b20_test}, we fix in each sub-figure the value of $\gamma_1$ and plot test accuracy curves with respect to values of $\gamma_2$. We find that for each $\gamma_1$ the test accuracy is clearly monotonic with respect to $\gamma_2$. Independently of the value of $\gamma_1$, the best test accuracy is obtained when $\gamma_2=1$.
  \begin{figure}[H]
  \centering
  \begin{subfigure}[b]{0.45\linewidth}
    \includegraphics[width=\linewidth]{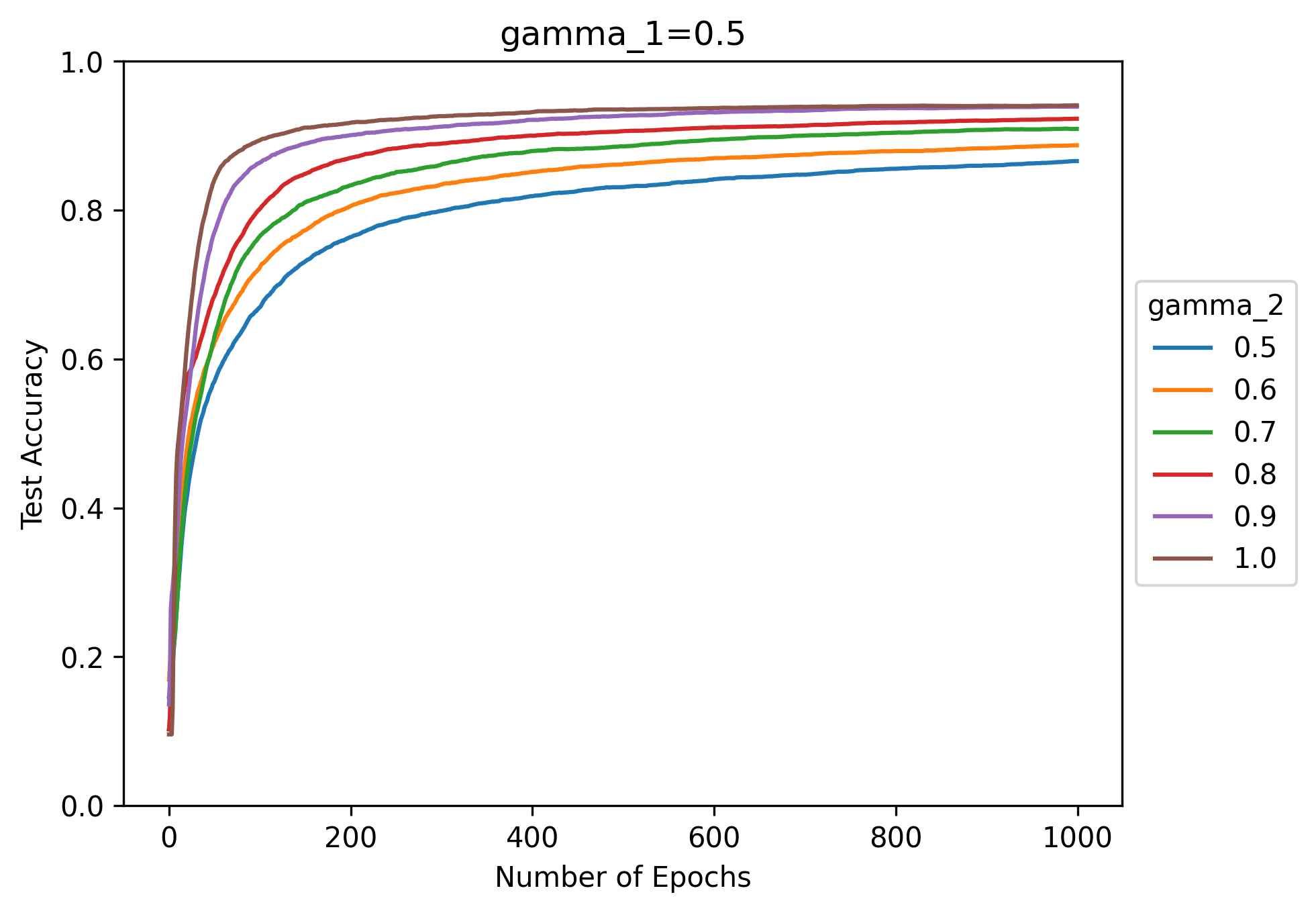}
  \end{subfigure}
  \begin{subfigure}[b]{0.45\linewidth}
    \includegraphics[width=\linewidth]{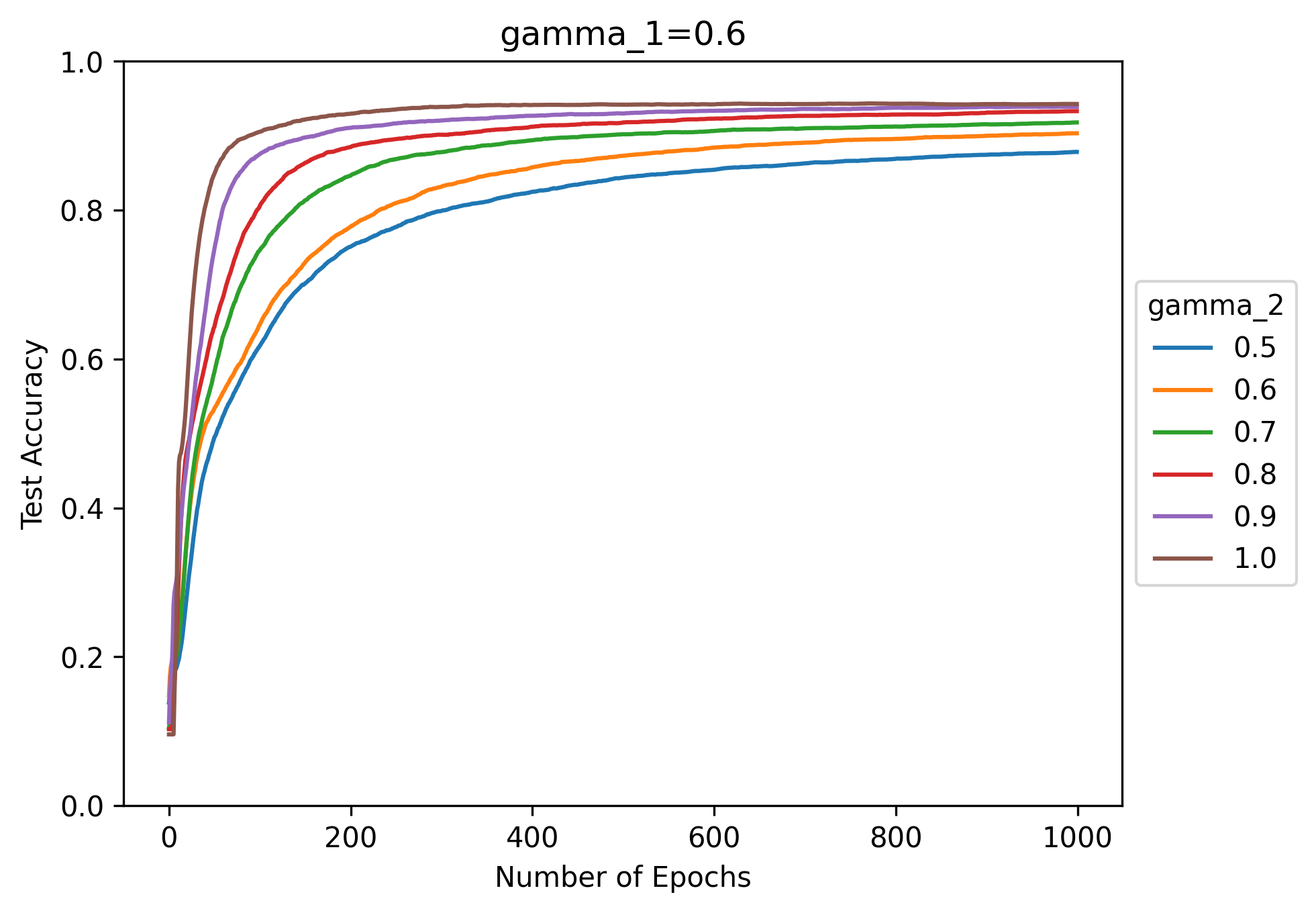}
  \end{subfigure}
  \begin{subfigure}[b]{0.45\linewidth}
    \includegraphics[width=\linewidth]{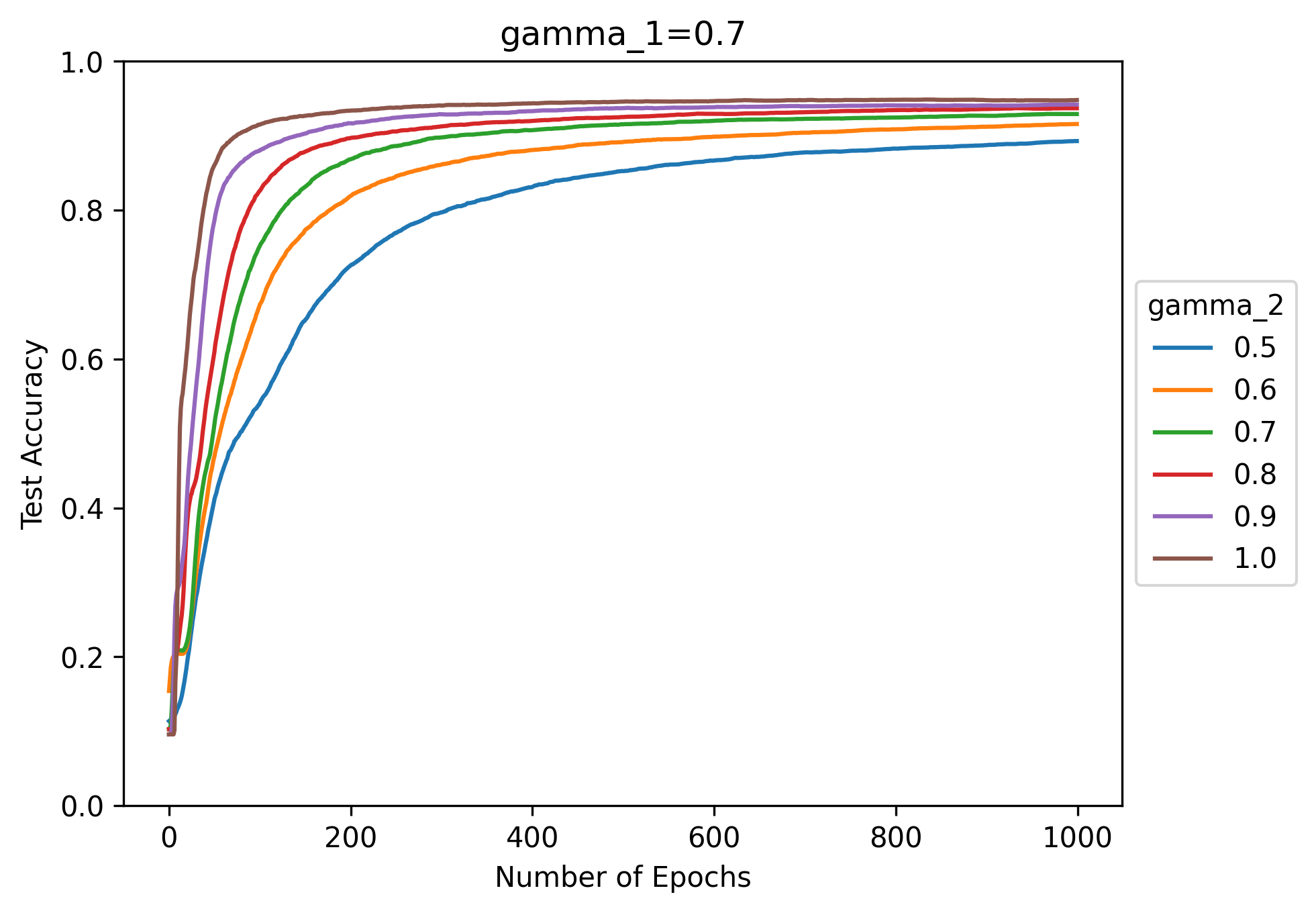}
  \end{subfigure}
  \begin{subfigure}[b]{0.45\linewidth}
    \includegraphics[width=\linewidth]{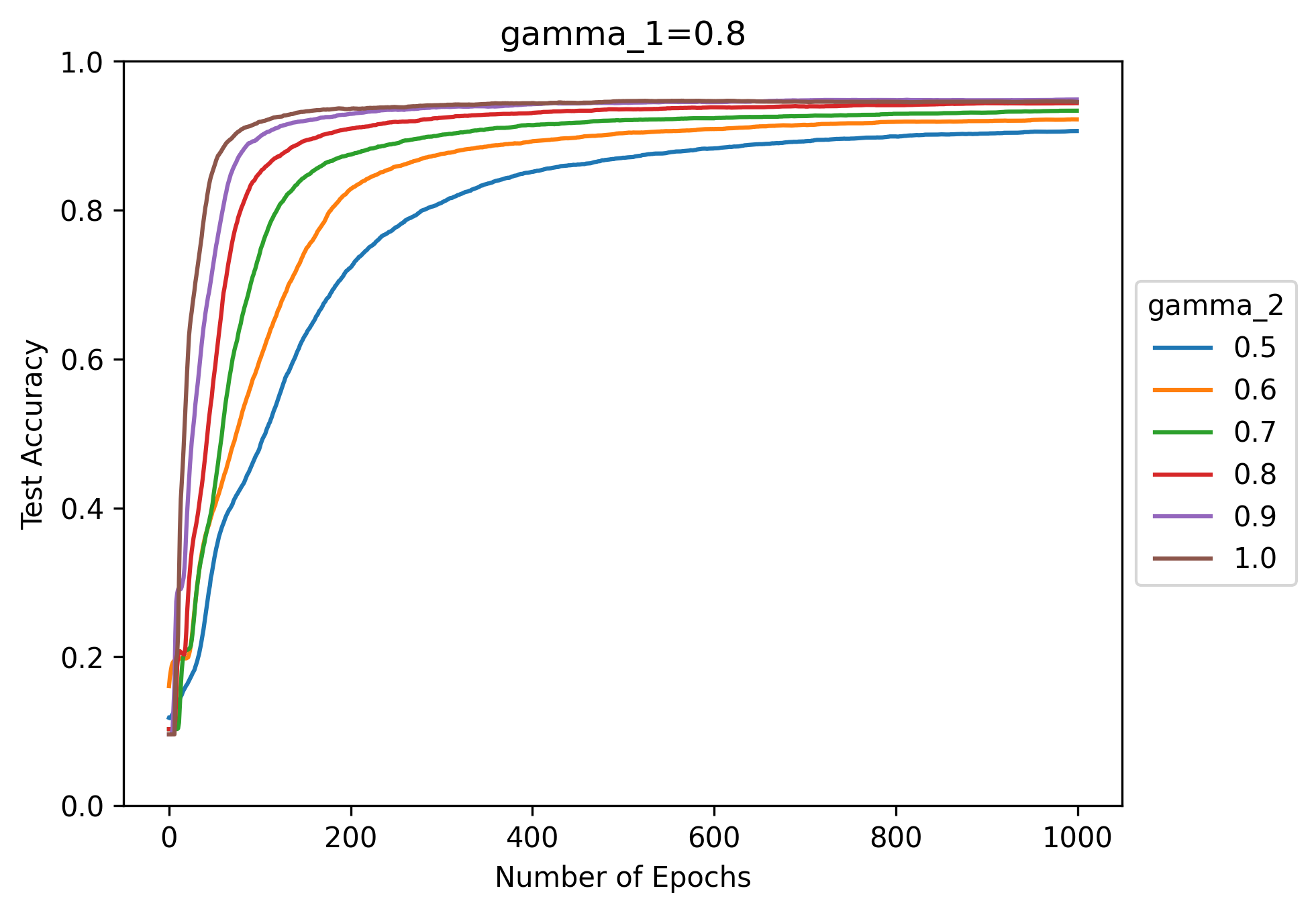}
  \end{subfigure}
    \begin{subfigure}[b]{0.45\linewidth}
    \includegraphics[width=\linewidth]{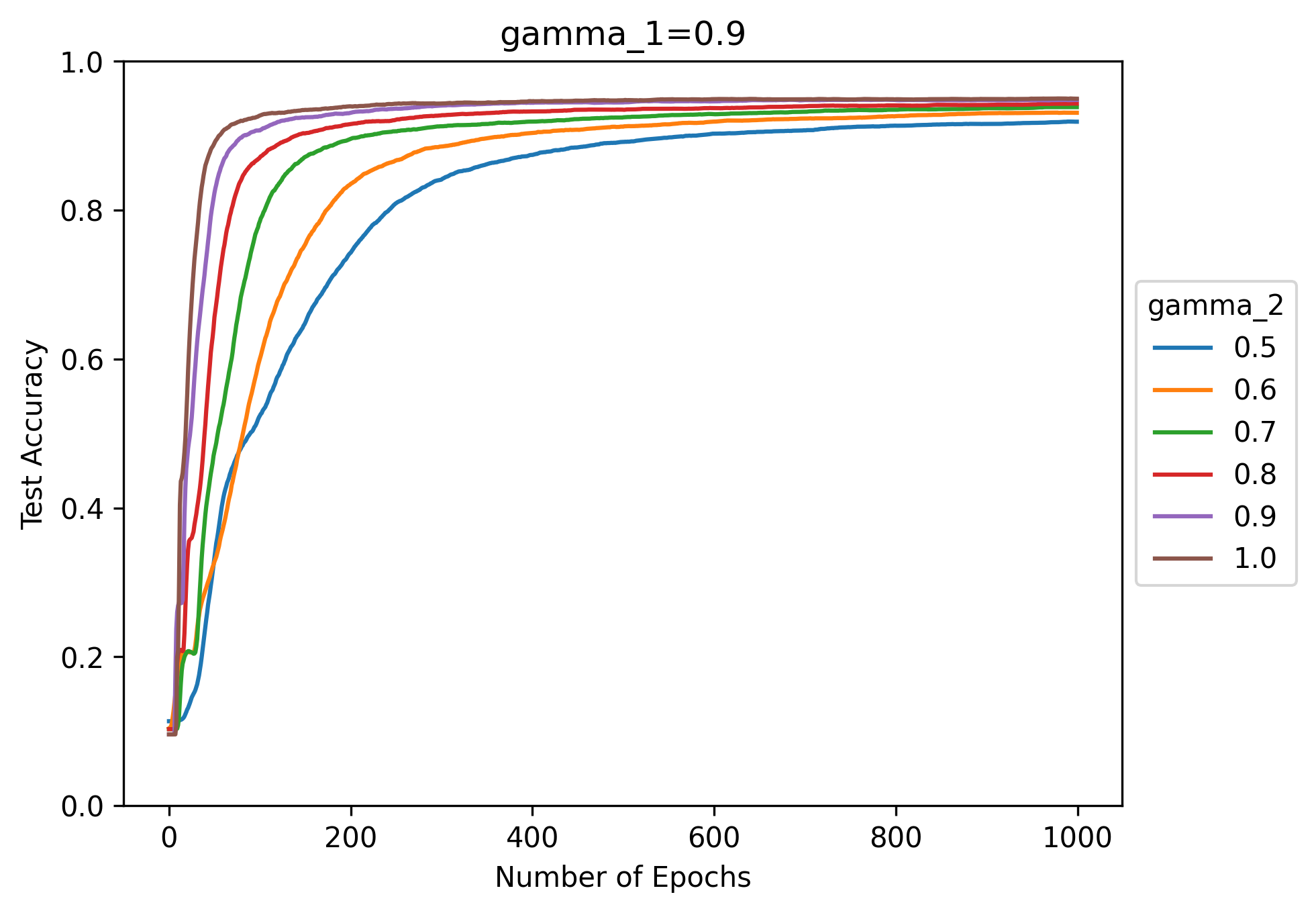}
  \end{subfigure}
    \begin{subfigure}[b]{0.45\linewidth}
    \includegraphics[width=\linewidth]{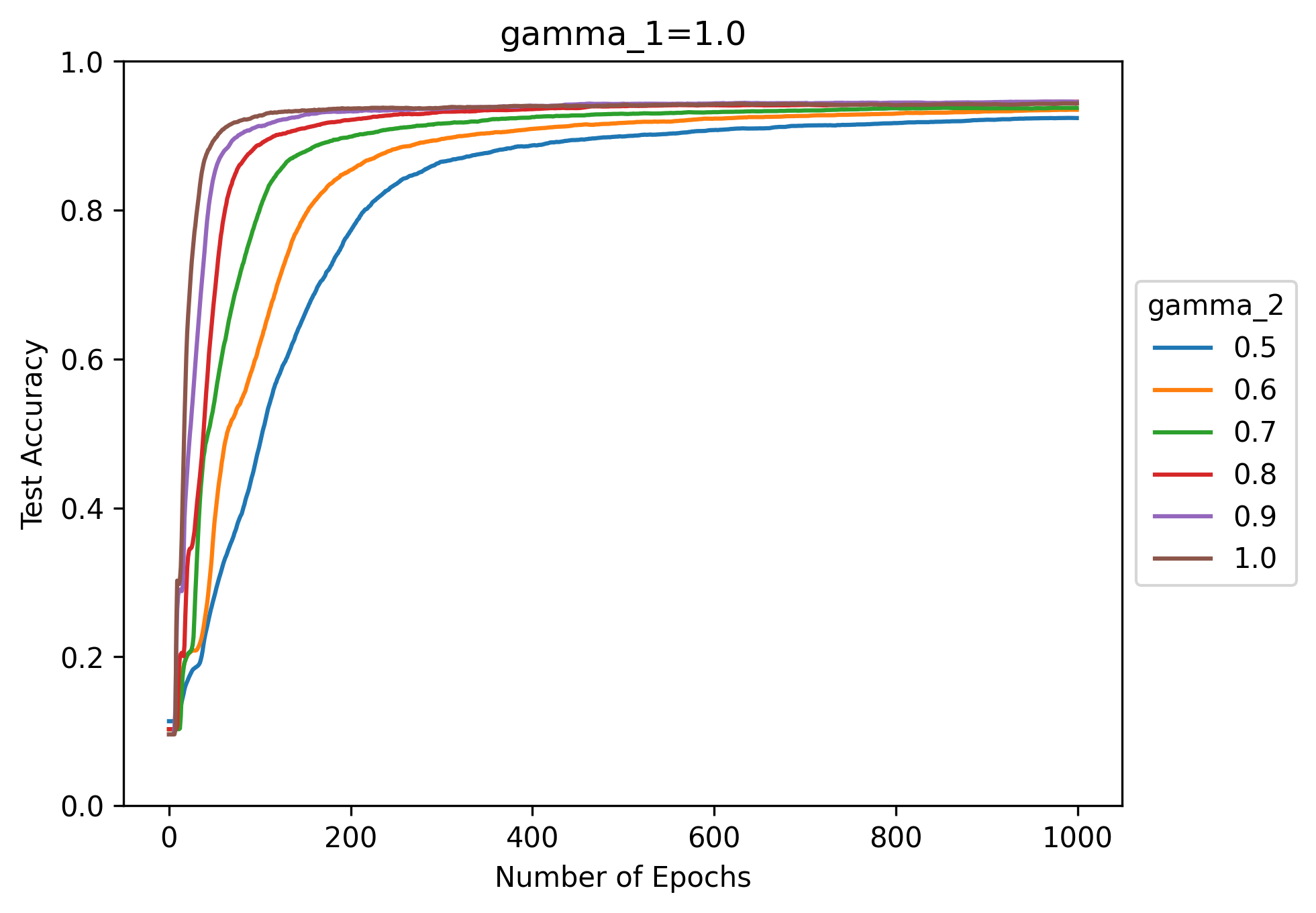}
  \end{subfigure}
  \caption{Performance of scaled neural networks on MNIST test dataset: cross entropy loss, $N_1=N_2=100$, batch size $=20$, Number of Epoch $=1000$. Each subfigure plots various $\gamma_2$ for a fixed $\gamma_1$.}
  \label{Fig:mnist_ce_gI_h100_e1000_b20_test}
\end{figure}

In Figures \ref{Fig:mnist_ce_comparingh1h2_b20_test} and \ref{Fig:mnist_ce_gII10_e1000_b20_test} we illustrate the effect of unequal choices for $N_1$ and $N_2$. We find that the best test accuracy is always when $N_2>N_1$, which also motivates taking first $N_2\rightarrow\infty$ and then $N_1\rightarrow\infty$. Also overall, best test accuracy is also when $\gamma_1=\gamma_2=1$.
\begin{figure}[H]
  \centering
  \begin{subfigure}[b]{0.45\linewidth}
    \includegraphics[width=\linewidth]{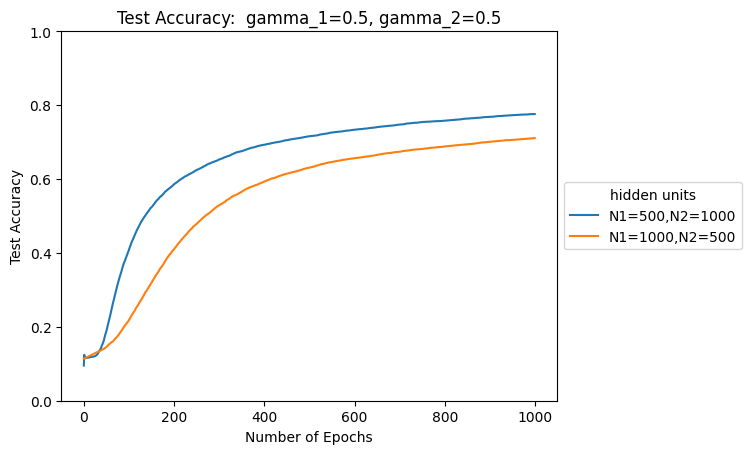}
  \end{subfigure}
  \begin{subfigure}[b]{0.45\linewidth}
    \includegraphics[width=\linewidth]{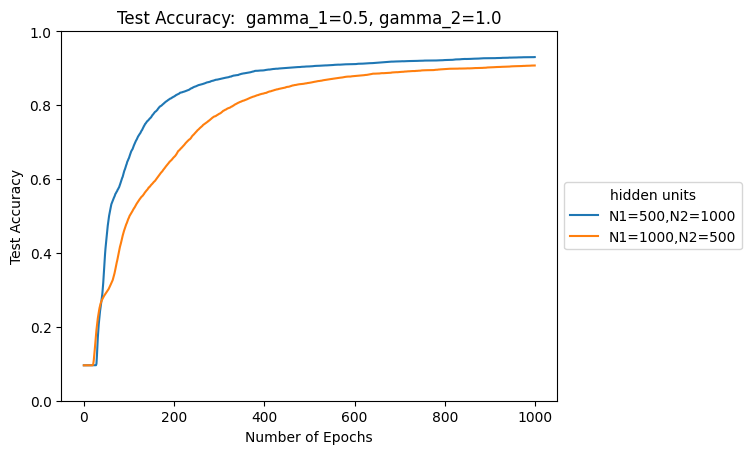}
  \end{subfigure}
  \begin{subfigure}[b]{0.45\linewidth}
    \includegraphics[width=\linewidth]{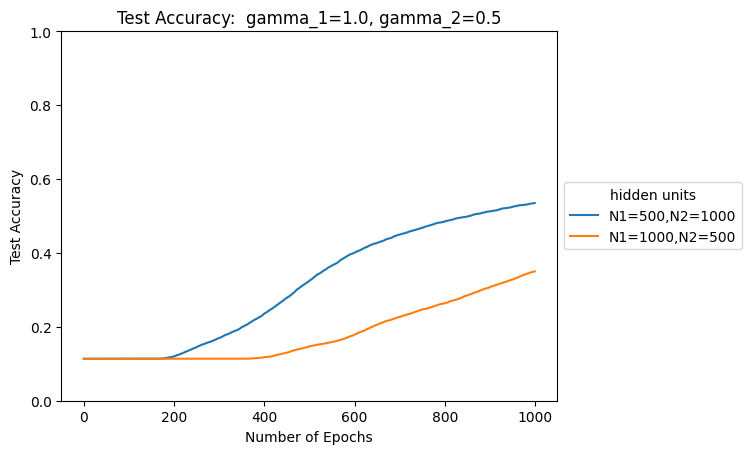}
  \end{subfigure}
  \begin{subfigure}[b]{0.45\linewidth}
    \includegraphics[width=\linewidth]{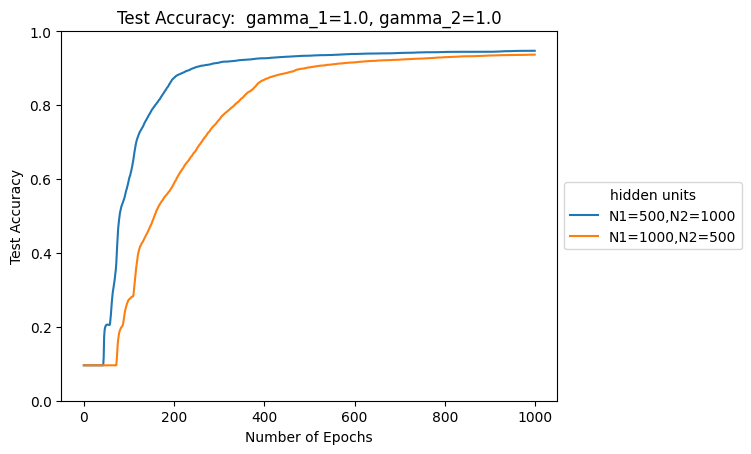}
  \end{subfigure}
  \caption{Performance of scaled neural networks on MNIST test dataset: cross entropy loss, batch size $=20$, Number of Epoch $=1000$. For each fixed sets of $\gamma_1, \gamma_2$, each subfigure compares the performances of models with different $N_1, N_2$. }
  \label{Fig:mnist_ce_comparingh1h2_b20_test}
\end{figure}

\begin{figure}[H]
  \centering
  \begin{subfigure}[b]{0.45\linewidth}
    \includegraphics[width=\linewidth]{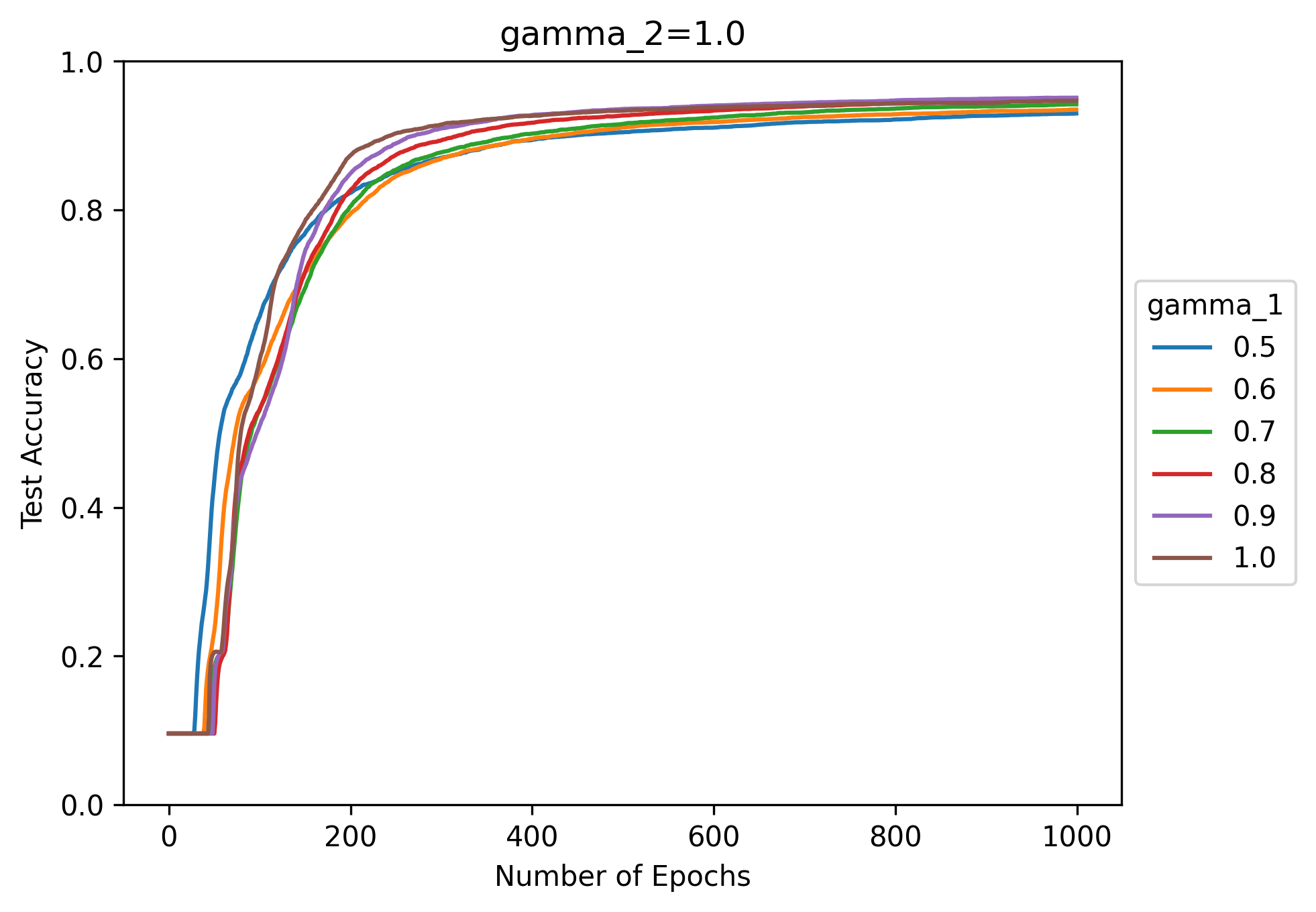}
     \caption{$N_1=500, N_2=1000$}
  \end{subfigure}
  \begin{subfigure}[b]{0.45\linewidth}
    \includegraphics[width=\linewidth]{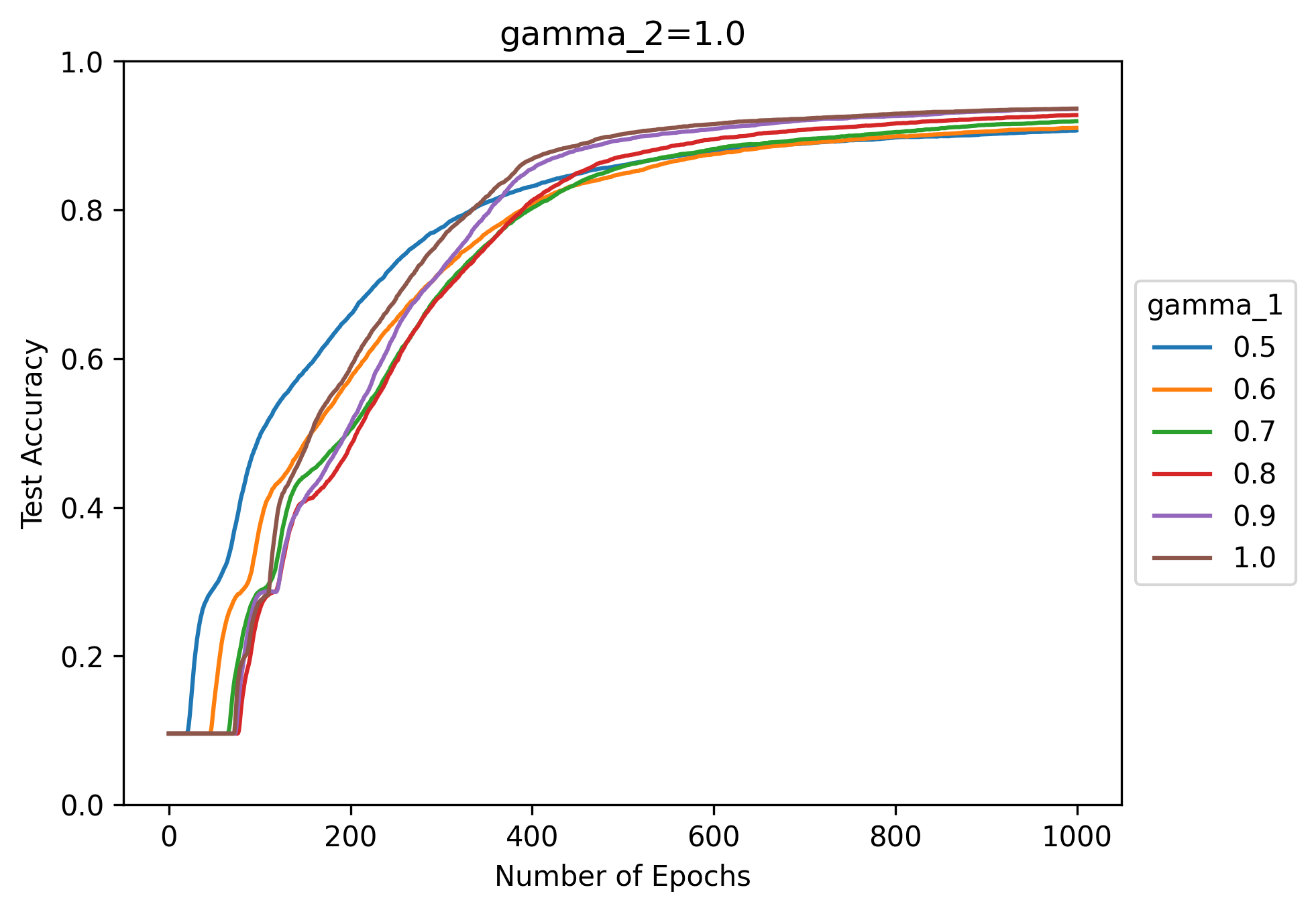}
    \caption{$N_1=1000, N_2=500$}
  \end{subfigure}
  \caption{Performance of scaled neural networks on MNIST test dataset: cross entropy loss, $\gamma_2=1.0$, batch size $=20$, Number of Epoch $=1000$. Each subfigure plots for different sets of hidden units.}
  \label{Fig:mnist_ce_gII10_e1000_b20_test}
\end{figure}

\subsection{The three layer neural network case}\label{SS:Three_layers}

The purpose of this section is to demonstrate that the same qualitative conclusions that hold for the two-layer case also hold for neural networks with more layers. For this purpose, let us consider the following three-layer scaled neural networks:
\begin{equation}\label{three_layer_nn}
g_{\theta}^{N_1,N_2,N_3}(x) = \frac{1}{N_3^{\gamma_3}} \sum_{i=1}^{N_3} C^i \sigma\left(\frac{1}{N_2^{\gamma_2}}\sum_{j=1}^{N_2}W^{3,i,j}\sigma\left(\frac{1}{N_1^{\gamma_1}}\sum_{\nu=1}^{N_1} W^{2,j,\nu}\sigma(W^{1,\nu}x)\right)\right),
\end{equation}
where $C^i, W^{3,i,j}, W^{2,j,\nu} \in \R$, $x,W^{1,\nu} \in \R^d$, and $\gamma_1, \gamma_2 \in [1/2,1)$ are fixed scaling parameters. For convenience, we write $W^{1,\nu}x = \ip{W^{1,\nu},x}_{l^2}$ as the standard $l^2$ inner product for the vectors. The neural network model has parameters
\begin{equation*}
\theta = \left(C^1, \ldots, C^{N_3}, W^{3,1,1}, \ldots, W^{3,N_3,N_2}, W^{2,1,1}, \ldots W^{2,N_2,N_1}, W^{1,1},\ldots W^{1,N_1} \right),
\end{equation*}
which are to be estimated from data $(X,Y) \sim \pi(dx,dy)$. We consider the loss function
\begin{equation*}
L(\theta) = \frac{1}{2} \E_{X,Y}\left[\left(Y-g_{\theta}^{N_1,N_2,N_3}(x)\right)^2\right],
\end{equation*}
and the model parameters $\theta$ are trained by the stochastic gradient descent algorithm, for $k \in \mathbb{N}$, $\nu=1,\ldots,N_1$, $j=1,\ldots, N_2$ and $i=1,\ldots, N_1$,
\begin{equation}\label{SGD}
\begin{aligned}
C^i_{k+1} &= C^i_k + \frac{\alpha_c^{N_1,N_2,N_3}}{N_3^{\gamma_3}} \left(y_k - g_k^{N_1,N_2,N_3}(x_k)\right)H^{3,i}_k(x_k),\\
W^{1,\nu}_{k+1} &= W^{1,\nu}_k + \frac{\alpha_{W,1}^{N_1,N_2,N_3}}{N_1^{\gamma_1}}\left(y_k - g_k^{N_1,N_2,N_3}(x_k)\right)\left(\frac{1}{N_3^{\gamma_3}}\sum_{i=1}^{N_3}C^i_k\sigma'(Z^{3,i}_k(x_k))\left(\frac{1}{N_2^{\gamma_2}}\sum_{j=1}^{N_2} W^{3,i,j}_k\sigma'(Z^{2,j}(x_k))W^{2,j,\nu}_k\right)\right)\\
&\qquad \qquad \times \sigma'(W^{1,\nu}_k x_k)x_k,\\
W^{2,i,\nu}_{k+1} &= W^{2,i,\nu}_k + \frac{\alpha_{W,2}^{N_1,N_2,N_3}}{N_1^{\gamma_1}N_2^{\gamma_2}}\left(y_k - g_k^{N_1,N_2,N_3}(x_k)\right)\frac{1}{N_3^{\gamma_3}}\sum_{i=1}^{N_3}C^i_k \sigma'(Z^{3,i}_k(x_k))W^{3,i,j}_k\sigma'(Z^{2,j}_k(x_k))H^{1,\nu}_k(x_k),\\
W^{3,i,j}_{k+1} &= W^{3,i,j}_k + \frac{\alpha_{W,3}^{N_1,N_2,N_3}}{N_2^{\gamma_2}N_3^{\gamma_3}}\left(y_k - g_k^{N_1,N_2,N_3}(x_k)\right)C^i_k \sigma'(Z^{3,i}_k(x_k))H^{2,j}_k(x_k),
\end{aligned}
\end{equation}
where
\begin{equation*}
\begin{aligned}
H^{1,\nu}_k(x) &= \sigma(W^{1,\nu}_k x),\\
Z^{2,j}_k(x) &= \frac{1}{N_1^{\gamma_1}}\sum_{\nu=1}^{N_1} W^{2,j,\nu}_k H^{1,\nu}_k(x),\\
H^{2,j}_k(x) &= \sigma(Z^{2,j}_k(x)),\\
Z^{3,i}_k(x) &= \frac{1}{N_2^{\gamma_2}} \sum_{j=1}^{N_2} W^{3,i,j}_k H^{2,j}_k(X_k),\\
H^{3,i}_k(x) &= \sigma(Z^{3,i}_k(x)),\\
g_k^{N_1,N_2,N_3}(x) &= g^{N_1,N_2,N_3}_{\theta_k}(x) = \frac{1}{N_3^{\gamma_3}}\sum_{i=1}^{N_3} C^i_k H^{3,i}_k(x).
\end{aligned}
\end{equation*}
We investigate the numerical performance of the neural network \eqref{three_layer_nn} trained by the SGD algorithm \eqref{SGD} with various $\gamma_1, \gamma_2,\gamma_3, N_1, N_2$ and $N_3$. Even though we do not show this here, following the mathematical analysis that led to the choice of the learning rates (\ref{potential_alpha_1}), we get that in the three layer case the learning rates should be given as follows
\begin{equation}\label{potential_alpha_2}
\begin{aligned}
&\alpha_C^{N_1,N_2,N_3} = \frac{1}{N_3^{2-2\gamma_3}}, &&\quad \alpha_{W,1}^{N_1,N_2,N_3} = \frac{1}{N_1^{1-2\gamma_1}N_2^{2-2\gamma_2}N_3^{3-2\gamma_3}},\\
&\alpha_{W,2}^{N_1,N_2,N_3} = \frac{1}{N_1^{1-2\gamma_1}N_2^{1-2\gamma_2}N_3^{3-2\gamma_3}}, &&\quad \alpha_{W,3}^{N_1,N_2,N_3} = \frac{1}{N_2^{1-2\gamma_2}N_3^{2-2\gamma_3}}
\end{aligned}
\end{equation}

Let us now investigate numerically the performance of neural networks scaled by $1/N_1^{\gamma_1}$, $1/N_2^{\gamma_2}$ and $1/N_3^{\gamma_3}$ with $\gamma_1, \gamma_2, \gamma_3 \in[1/2,1]$. The numerical studies are again on the MNIST data set.

In Figure \ref{Fig:mnist_ce_gIII_h100_e1500_b20_test} we fix in each sub-figure the value of $\gamma_3$ and vary the values of $\gamma_1,\gamma_2$. We find that the best results in terms of test accuracy are when $\gamma_i=1$ for all $i$. Importantly, we also find that the neural network's test accuracy is more sensitive on the choice of the outer layer normalization, i.e., on $\gamma_3$.
\begin{figure}[H]
  \centering
  \begin{subfigure}[b]{0.6\linewidth}
    \includegraphics[width=\linewidth]{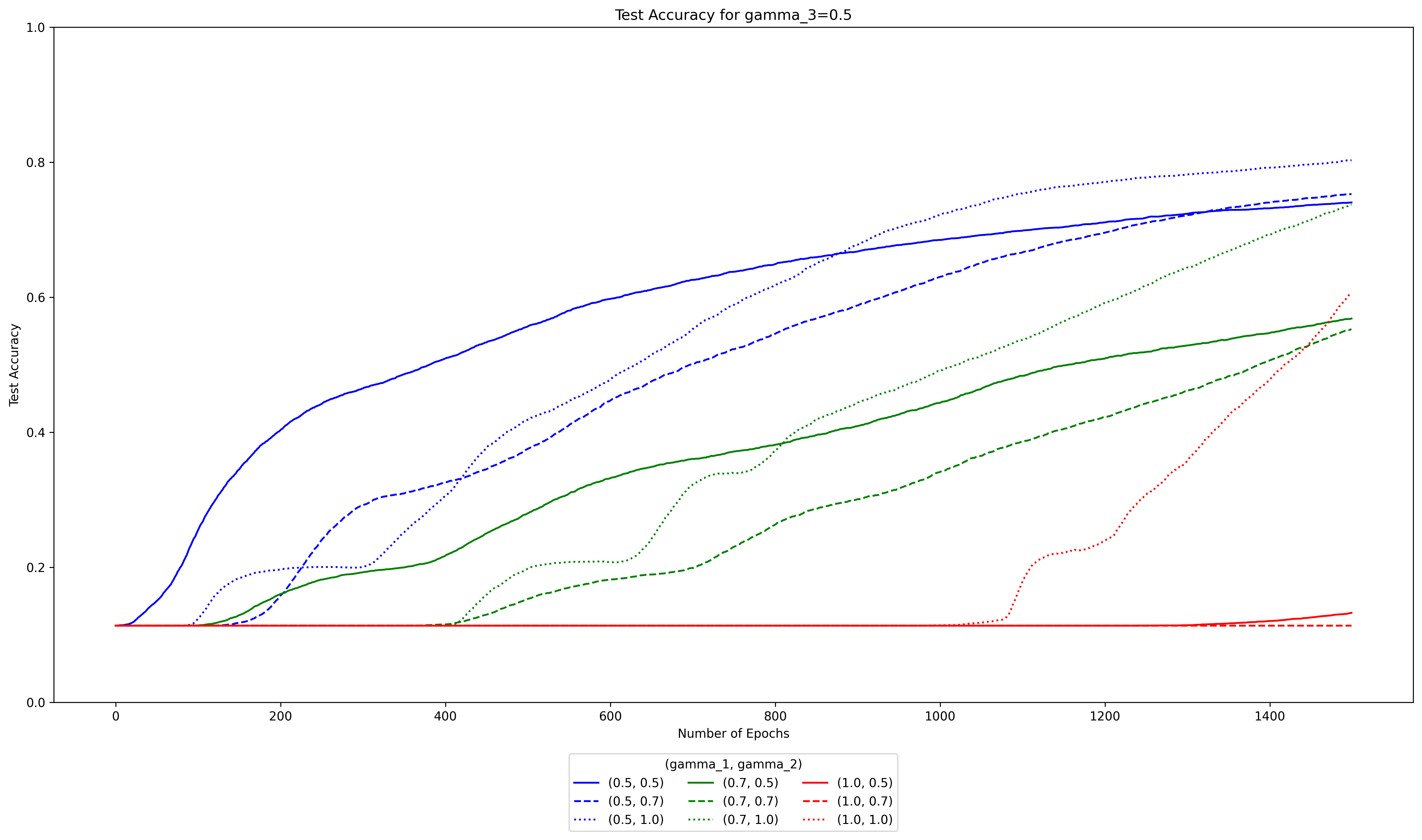}
  \end{subfigure}
  \begin{subfigure}[b]{0.6\linewidth}
    \includegraphics[width=\linewidth]{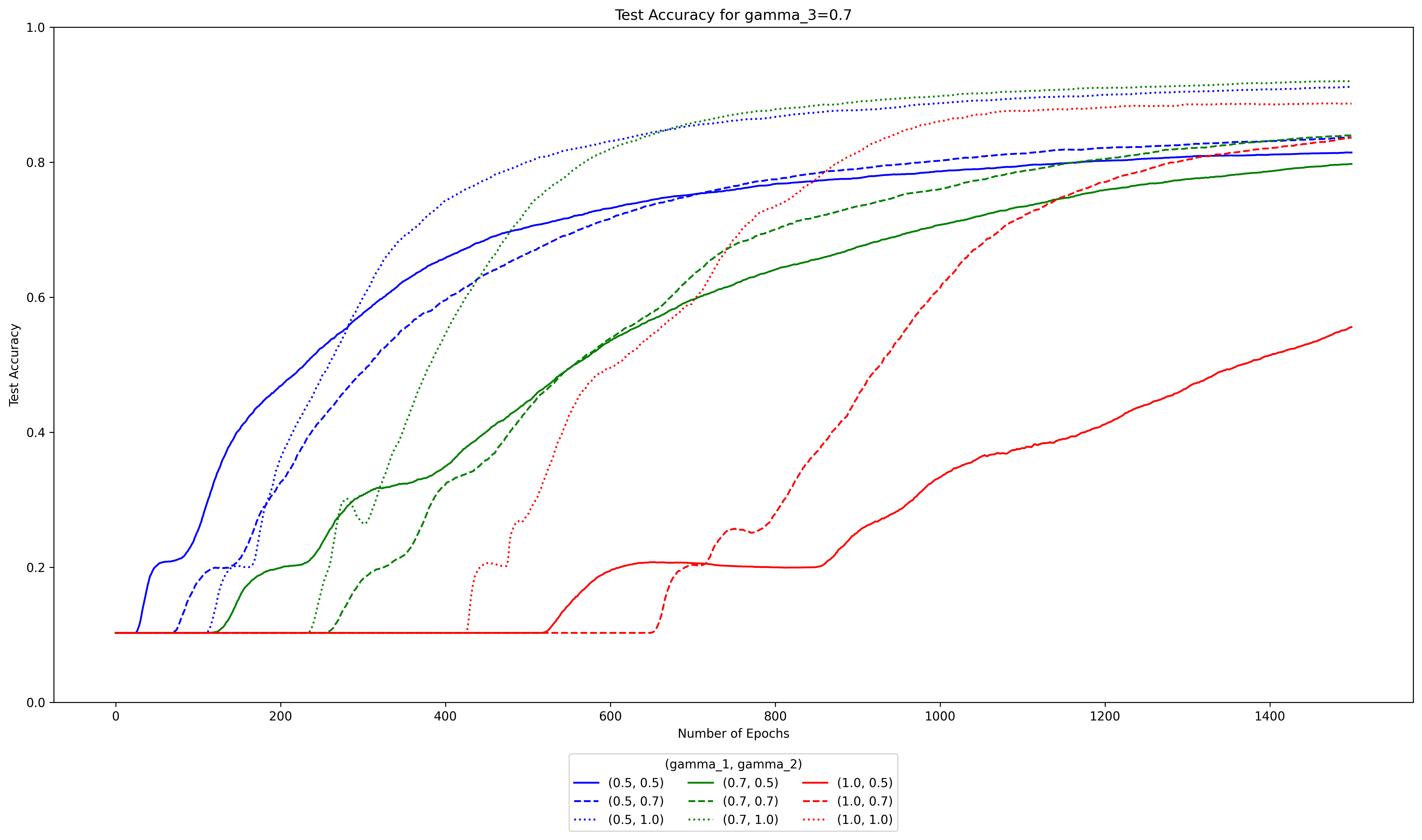}
  \end{subfigure}
  \begin{subfigure}[b]{0.6\linewidth}
    \includegraphics[width=\linewidth]{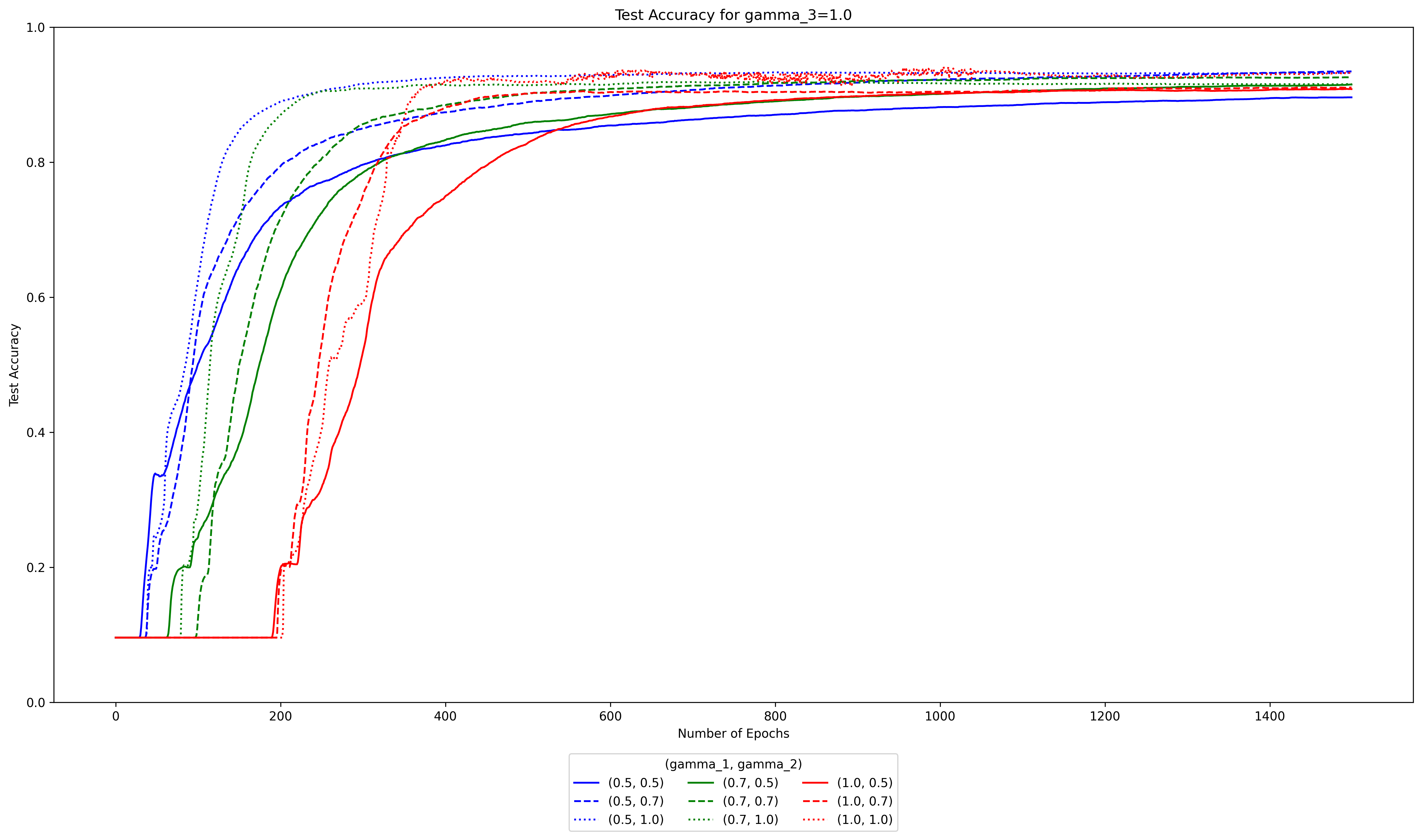}
  \end{subfigure}
  \caption{Performance of scaled neural networks on MNIST test dataset: cross entropy loss, $N_1=N_2=N_3=100$, batch size $=20$, Number of Epoch $=1500$. Each subfigure plots various $\gamma_1, \gamma_2$ for a fixed $\gamma_3$.}
  \label{Fig:mnist_ce_gIII_h100_e1500_b20_test}
\end{figure}

In Figure \ref{Fig:mnist_ce_gII_h100_e1500_b20_test} we fix in each sub-figure the value of $\gamma_2$ and vary the values of $\gamma_1,\gamma_3$. We find that the best results in terms of test accuracy are when $\gamma_i=1$ for all $i$. Again, we find that the neural network is more sensitive on the choice for $\gamma_3$.
\begin{figure}[H]
  \centering
  \begin{subfigure}[b]{0.6\linewidth}
    \includegraphics[width=\linewidth]{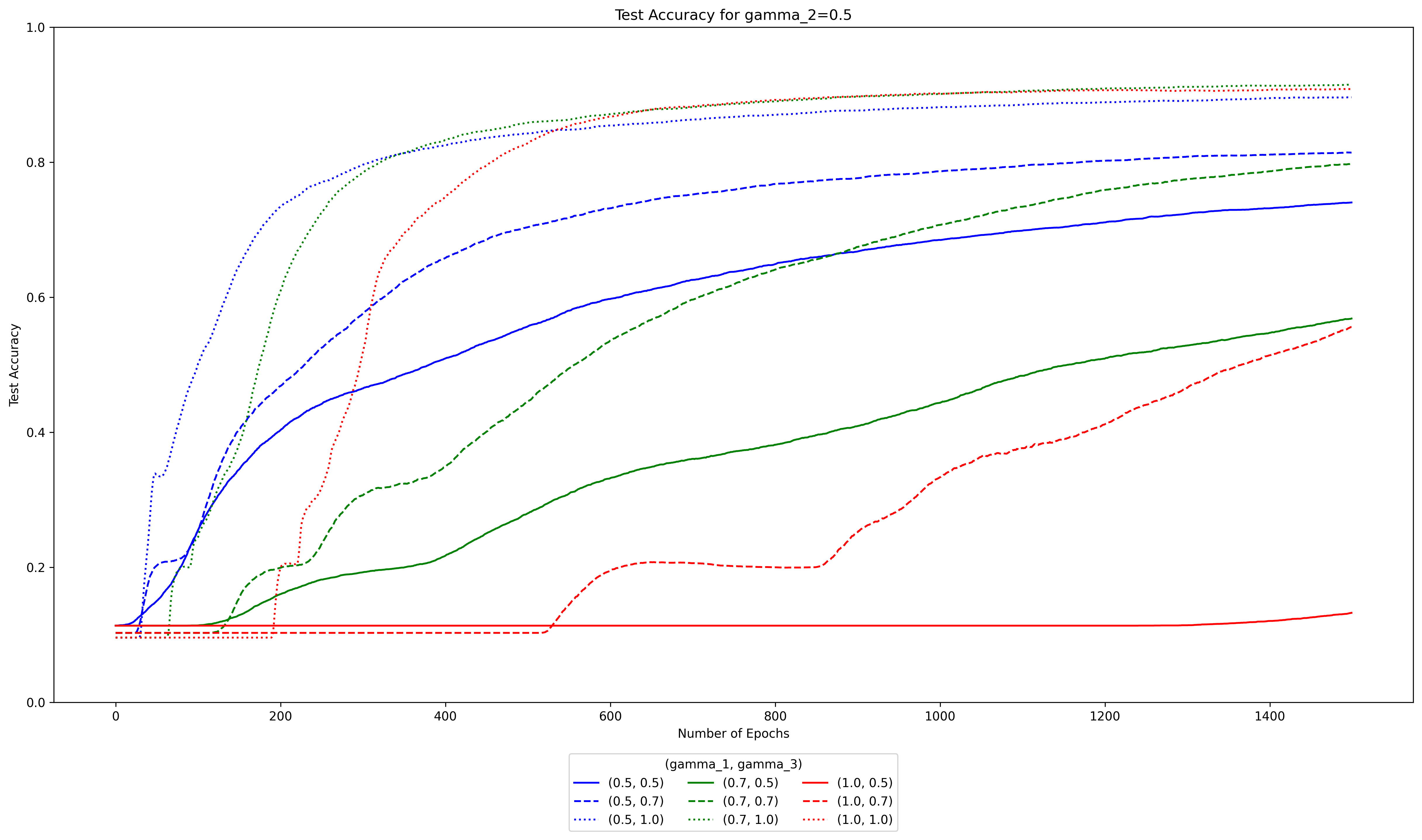}
  \end{subfigure}
  \begin{subfigure}[b]{0.6\linewidth}
    \includegraphics[width=\linewidth]{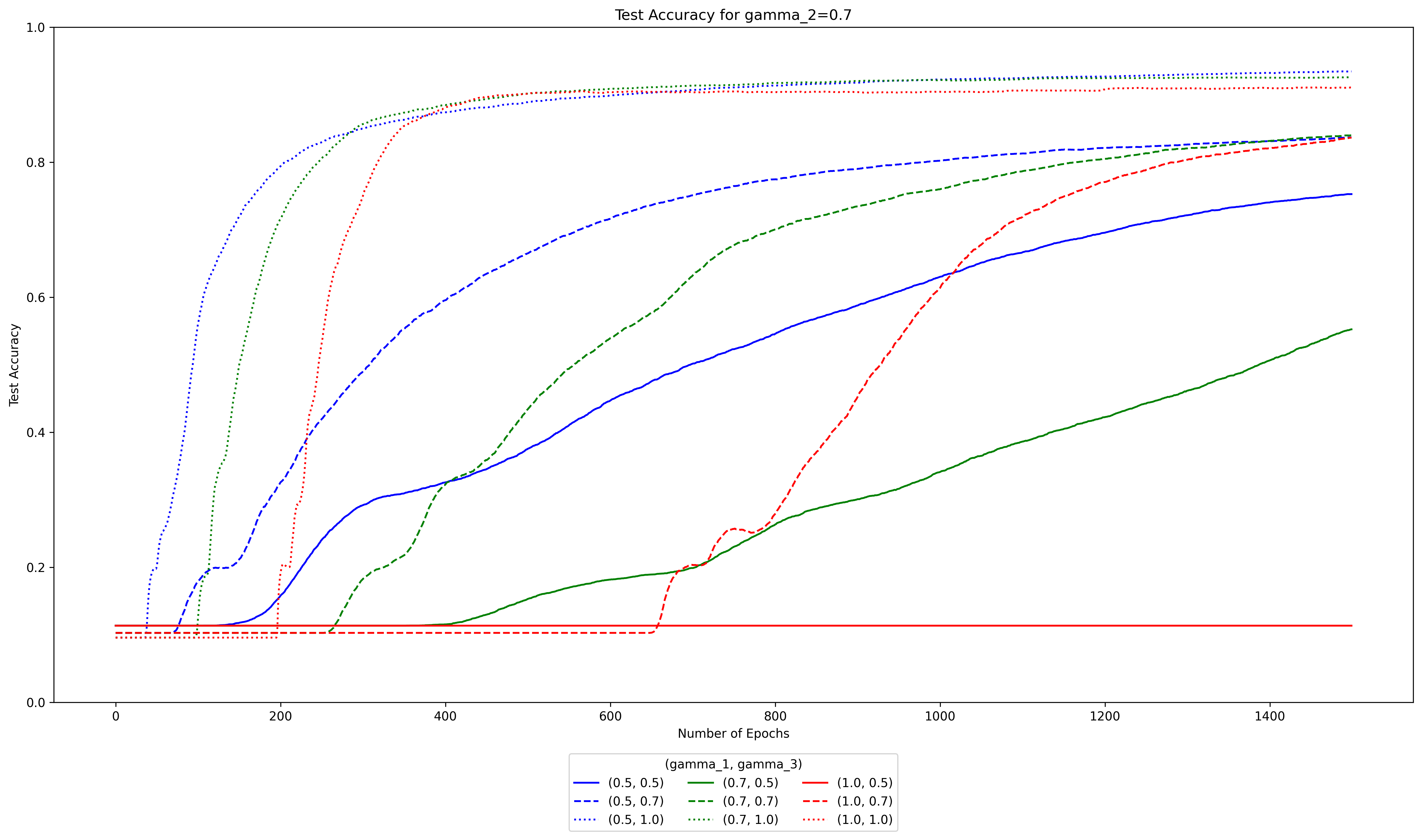}
  \end{subfigure}
  \begin{subfigure}[b]{0.6\linewidth}
    \includegraphics[width=\linewidth]{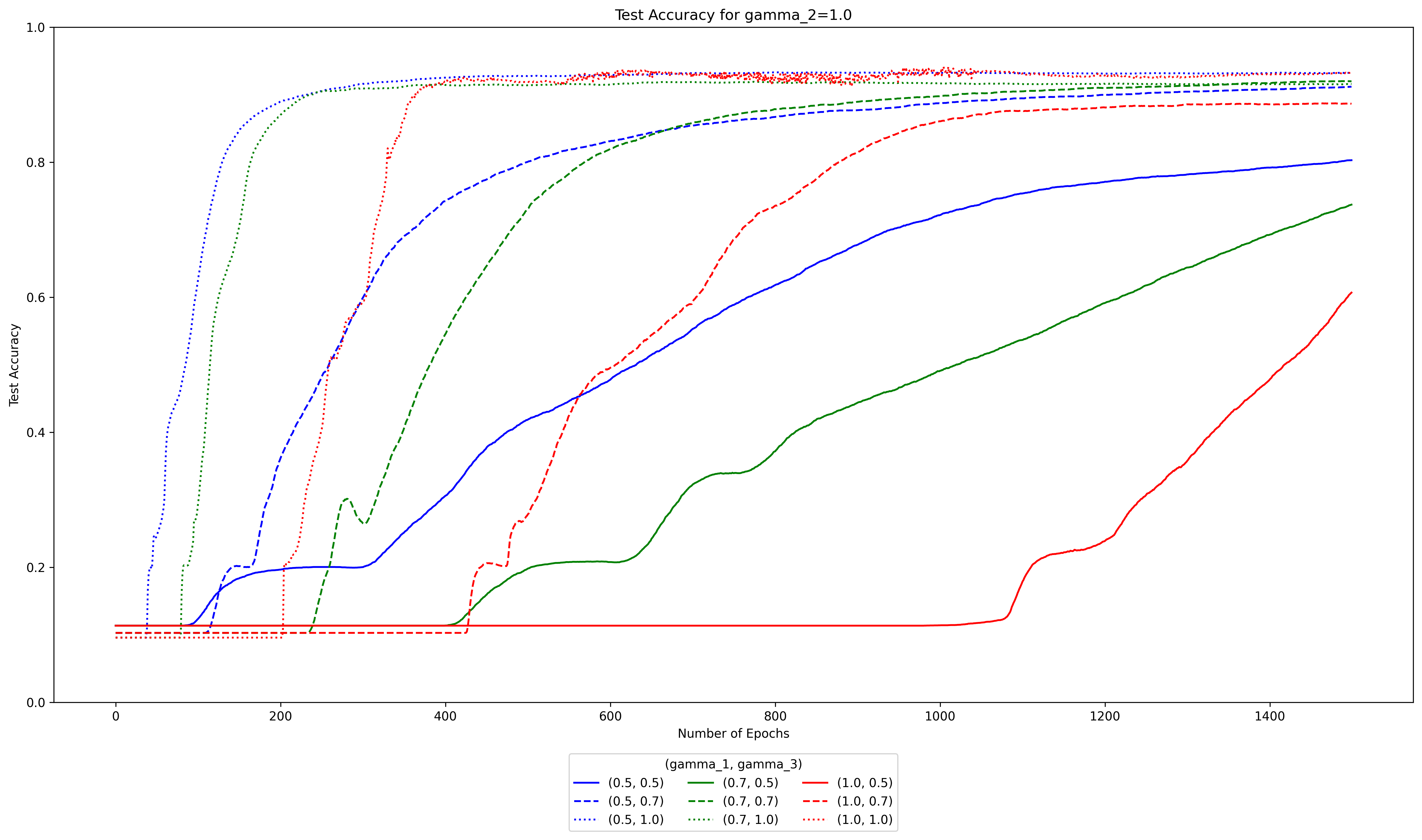}
  \end{subfigure}
  \caption{Performance of scaled neural networks on MNIST test dataset: cross entropy loss, $N_1=N_2=N_3=100$, batch size $=20$, Number of Epoch $=1500$. Each subfigure plots various $\gamma_1, \gamma_3$ for a fixed $\gamma_2$.}
  \label{Fig:mnist_ce_gII_h100_e1500_b20_test}
\end{figure}

In Figure \ref{Fig:mnist_ce_gI_h100_e1500_b20_test} we fix in each sub-figure the value of $\gamma_1$ and vary the values of $\gamma_2,\gamma_3$. The conclusions are the same as before. Namely, the best results in terms of test accuracy are when $\gamma_i=1$ for all $i$. Again, we find that if $\gamma_3=1$, then the neural network behavior is less sensitive on the choice of $\gamma_1,\gamma_2$.
\begin{figure}[H]
  \centering
  \begin{subfigure}[b]{0.6\linewidth}
    \includegraphics[width=\linewidth]{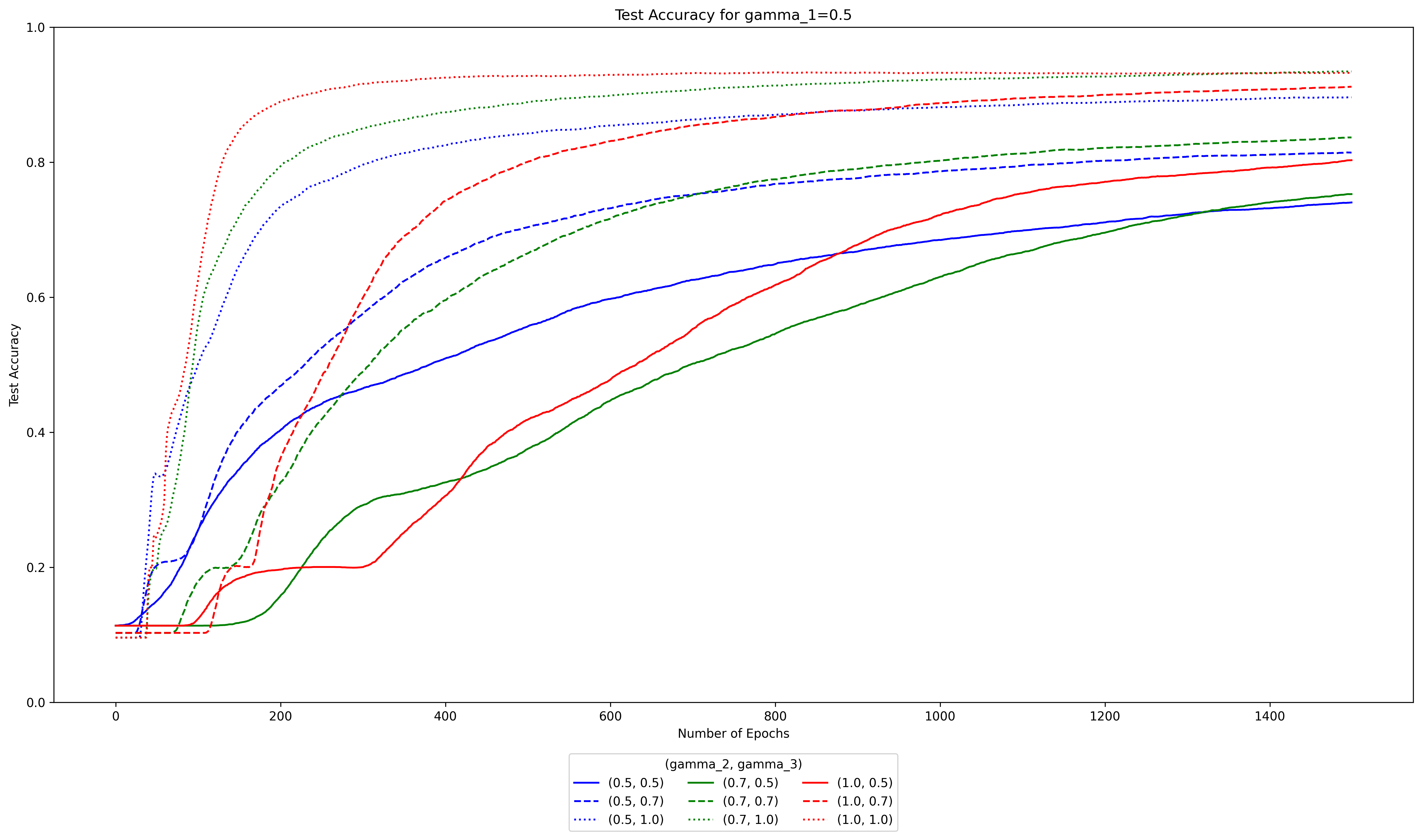}
  \end{subfigure}
  \begin{subfigure}[b]{0.6\linewidth}
    \includegraphics[width=\linewidth]{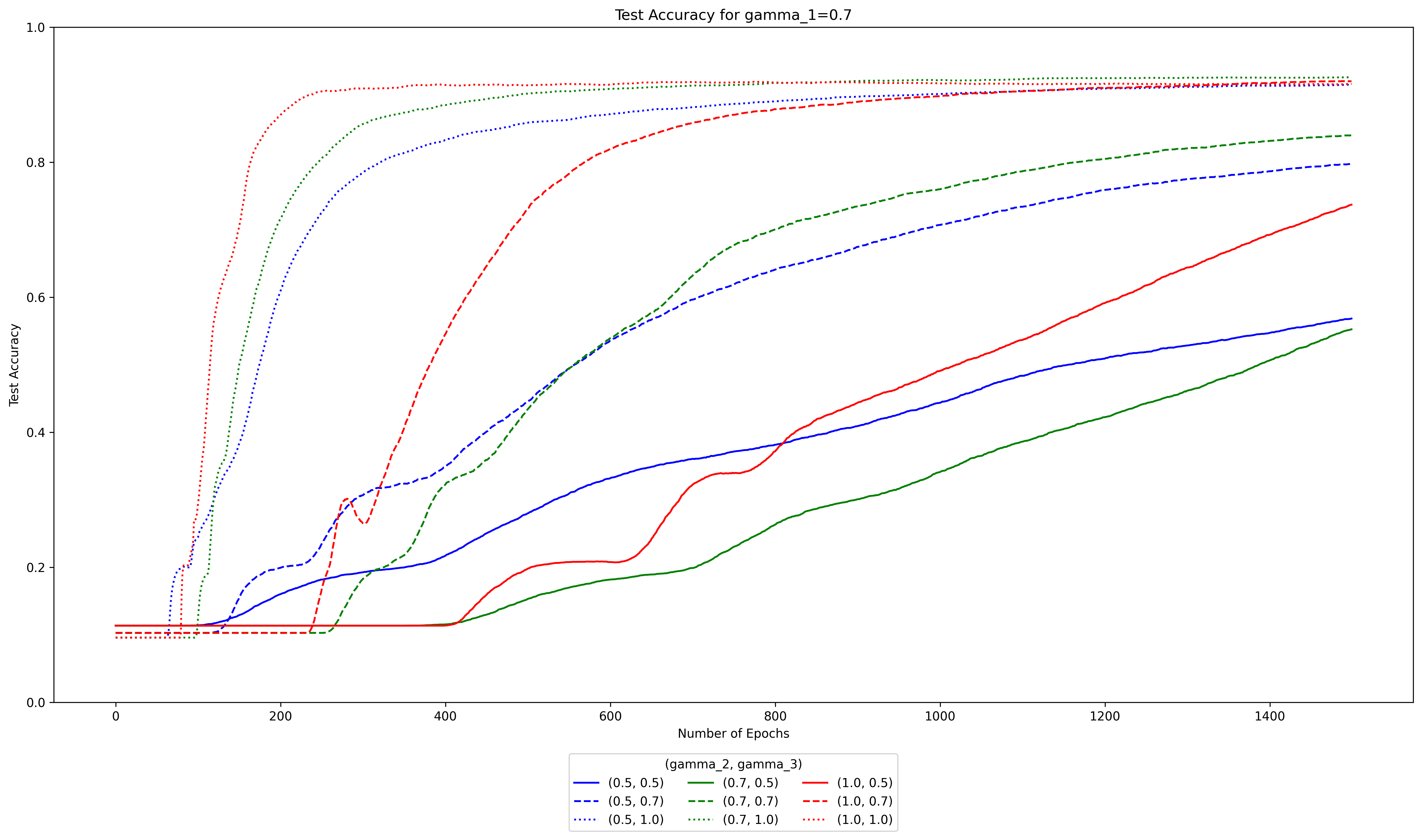}
  \end{subfigure}
  \begin{subfigure}[b]{0.6\linewidth}
    \includegraphics[width=\linewidth]{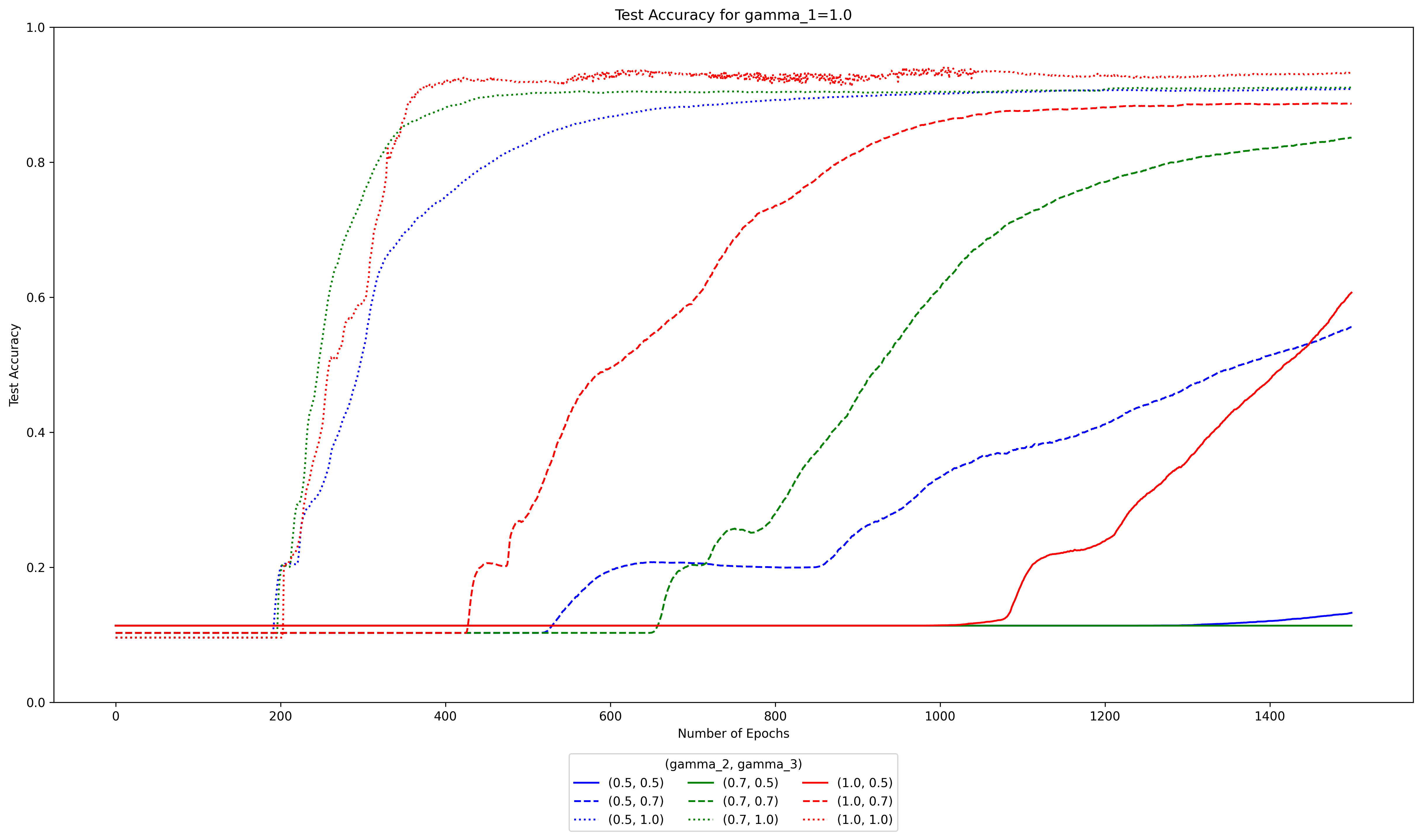}
  \end{subfigure}
  \caption{Performance of scaled neural networks on MNIST test dataset: cross entropy loss, $N_1=N_2=N_3=100$, batch size $=20$, Number of Epoch $=1500$. Each subfigure plots various $\gamma_2, \gamma_3$ for a fixed $\gamma_1$.}
  \label{Fig:mnist_ce_gI_h100_e1500_b20_test}
\end{figure}

\section{Learning rates definitions for deep neural networks of arbitrary depth}\label{S:DNN_LearningRates}

Let us consider a typical deep feed-forward neural network that has depth $m\in\mathbb{N}$ with $\hat{\gamma}=(\gamma_{1},\cdots,\gamma_{m})\in[1/2,1]^{\otimes m}$ scalings  that is defined inductively as follows
\begin{align}
g^{N_1,N_2,\cdots,N_m}_{\theta}(x)&=\frac{1}{N_{m}^{\gamma_{m}}}\sum_{i_m=1}^{N_m}W^{N_m,i_m}\sigma_{i_m}\left(g^{N_1,\cdots,N_{m-1},i_m}_{\theta}(x)\right)\nonumber\\
g^{N_1,\cdots,N_{m-j},i_{m-(j-1)}}_{\theta}(x)&=\frac{1}{N_{m-j}^{\gamma_{m-j}}}\sum_{i_{m-j}=1}^{N_{m-j}}W^{N_{m-j},i_{m-j},i_{m-(j-1)}}\sigma_{i_{m-j}}\left(g^{N_1,\cdots,N_{m-(j+1)},i_{m-j}}_{\theta}(x)\right), j=1,\cdots,m-2\nonumber\\
&\vdots\nonumber\\
g^{N_1,i_{2}}_{\theta}(x)&=\frac{1}{N_{1}^{\gamma_{1}}}\sum_{i_{1}=1}^{N_{1}}W^{N_{1},i_{1},i_{2}}\sigma_{i_{1}}\left(g^{N_0,i_{1}}_{\theta}(x)\right)\nonumber\\
g^{N_0,,i_{1}}_{\theta}(x)&=\sigma_{i_{0}}\left(W^{N_0,i_1}x\right).\nonumber
\end{align}
%
%

Even though $N_0=1$ is redundant, we write it for notational consistency purposes.

The goal of this section is to provide the formulas for the choices of the learning rates as functions of $N_i$ and $\gamma_i$ for $i=1\cdots m$ so that in the end the neural network will be expected to converge to a well defined limit as $N_i\rightarrow\infty$.

We do not repeat the lengthy calculations here, but rather we only provide the formulas for the appropriate choice of the learning rate and leave the rest of the details to the interested reader. In the end, following the exact same procedure as in the two-layer and three-layer case, we obtain that the learning rates should be chosen according to the rules:
\begin{align}
a_{W^{N_m}}&=N_{m}^{2\gamma_{m}-2}\nonumber\\
a_{W^{N_{m-1}}}&=N_{m}^{2\gamma_{m}-2}N_{m-1}^{2\gamma_{m-1}-1}\nonumber\\
a_{W^{N_{m-2}}}&=N_{m}^{2\gamma_{m}-3}N_{m-1}^{2\gamma_{m-1}-1}N_{m-2}^{2\gamma_{m-2}-1}\nonumber\\
a_{W^{N_{m-3}}}&=N_{m}^{2\gamma_{m}-3}N_{m-1}^{2\gamma_{m-1}-2}N_{m-2}^{2\gamma_{m-2}-1}N_{m-3}^{2\gamma_{m-3}-1}\nonumber\\
a_{W^{N_{m-4}}}&=N_{m}^{2\gamma_{m}-3}N_{m-1}^{2\gamma_{m-1}-2}N_{m-2}^{2\gamma_{m-2}-2}N_{m-3}^{2\gamma_{m-3}-1}N_{m-4}^{2\gamma_{m-4}-1}\nonumber\\
&\vdots\nonumber\\
a_{W^{N_{1}}}&=N_{m}^{2\gamma_{m}-3}N_{m-1}^{2\gamma_{m-1}-2}N_{m-2}^{2\gamma_{m-2}-2}N_{m-3}^{2\gamma_{m-3}-2}\cdots N_{3}^{2\gamma_{3}-2}N_{2}^{2\gamma_{2}-1}N_{1}^{2\gamma_{1}-1}\nonumber\\
a_{W^{N_{0}}}&=N_{m}^{2\gamma_{m}-3}N_{m-1}^{2\gamma_{m-1}-2}N_{m-2}^{2\gamma_{m-2}-2}N_{m-3}^{2\gamma_{m-3}-2}\cdots N_{3}^{2\gamma_{3}-2}N_{2}^{2\gamma_{2}-2}N_{1}^{2\gamma_{1}-1}.\nonumber
\end{align}

Such a choice directly generalizes the formulas for the learning rates in the two and three layer case presented before and one can show that lead to formulas of the same type as those obtained in Section \ref{S:MainResults}.

\section{Conclusions}\label{S:Conclusions}
In this work, we have investigated the effect of layer normalization on the statistical behavior and test accuracy of deep neural networks. We have looked at all the scaling regimes between the square root normalization, i.e., the so-called Xavier normalization, see \cite{Xavier}, all the way up to the mean-field normalization \cite{Chizat2018,Montanari,RVE,SirignanoSpiliopoulosNN1,SirignanoSpiliopoulosNN2,SirignanoSpiliopoulosNN3}. Our two key findings are that (a): the mean field normalization leads to lower variance of the neural's network statistical output and better test accuracy, and (b): given that the outer layer's normalization is the mean-field regime, the subsequent choice for the normalization of the inner layers does not affect test accuracy as much (mean field normalization remains the optimal choice, but there is less sensitivity in the inner layers). An important by-product of the mathematical analysis of this paper is a mathematically motivated way to define the learning rates. This is an important conclusion of our work since it gives a principled way to choose the related hyperparameters.

\appendix

\section{A-priori Bound for the Parameters}\label{S:AprioriBounds}
By specifying the learning rates $\alpha_C^{N_1,N_2}, \alpha_{W,1}^{N_1,N_2}, \alpha_{W,2}^{N_1,N_2}$ as in \eqref{potential_alpha_1}, we can establish an important uniform bound for the parameters.
\begin{lemma}\label{lemma_parameterbound}
For $k=0, 1, \ldots, \floor{TN_2}$, $i=1,\ldots,N_1$, and $j=1,\ldots,N_2$, there exist a finite constant $K>0$ such that
\begin{equation*}
\abs{C^i_k} + \norm{W^{1,j}_k} + \abs{W^{2,j,i}_k} < K.
\end{equation*}
Furthermore, as $N_1,N_2$ grow
 \begin{equation*}
 \begin{aligned}
 &\abs{C^i_{k+1} - C^i_k} = O(N_2^{-1}),\\
 &\norm{W^{1,j}_{k+1} - W^{1,j}_k} = O(N_1^{-(1-\gamma_1)}N_2^{-1}),\\
 &\abs{W^{2,j,i}_{k+1} - W^{2,j,i}_k} = O(N_1^{-(1-\gamma_1)}N_2^{-1}).
 \end{aligned}
 \end{equation*}
\end{lemma}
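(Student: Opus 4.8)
The plan is a bootstrap argument: I would first establish the uniform bound, and then read off the three increment estimates by re-substituting that bound into the update rules. The structural observation that breaks the apparent circularity is that the network output enters every update only through $\sigma$, which is bounded by Assumption \ref{assumption}. Since $\abs{H^{2,i}_k(x)} = \abs{\sigma(Z^{2,i}_k(x))} \le \norm{\sigma}_\infty$ uniformly, one has the crude deterministic bound
\[ \abs{g_k^{N_1,N_2}(x_k)} \le \frac{1}{N_2^{\gamma_2}}\sum_{i=1}^{N_2}\abs{C^i_k}\,\norm{\sigma}_\infty \le N_2^{1-\gamma_2}\Big(\max_i \abs{C^i_k}\Big)\norm{\sigma}_\infty, \]
which controls $g_k$ through $M_k := \max_i \abs{C^i_k}$ \emph{alone}, independently of the weights.

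First I would derive a closed recursion for $M_k$. With the learning rate (\ref{potential_alpha_1}) the $C$-update has prefactor $\alpha_C^{N_1,N_2}/N_2^{\gamma_2} = N_2^{-(2-\gamma_2)}$, so combining the display above with $\abs{y_k}\le C_y$ (finite data set) gives
\[ M_{k+1} \le M_k\Big(1 + \frac{\norm{\sigma}_\infty^2}{N_2}\Big) + \frac{C_y\norm{\sigma}_\infty}{N_2^{2-\gamma_2}}. \]
Because there are at most $\floor{TN_2}$ steps, a discrete Gr\"onwall inequality yields $M_k \le e^{\norm{\sigma}_\infty^2 T}M_0 + O(N_2^{-(1-\gamma_2)})$, a bound uniform in $k\le\floor{TN_2}$ and in $N_1,N_2$, with $M_0$ bounded by Assumption \ref{assumption}(iii). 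This is the crucial step, and it is exactly where the exponent $2-2\gamma_2$ in $\alpha_C^{N_1,N_2}$ is used: it makes the per-step multiplicative factor $1+O(1/N_2)$ so that, accumulated over the $O(N_2)$ SGD steps, the growth stays $O(1)$ rather than exploding.

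With $M_k \le K_C$ in hand for all $k\le\floor{TN_2}$, I would bound the two weight layers in cascade. For $W^{2,j,i}$ the prefactor is $N_1^{-(1-\gamma_1)}N_2^{-(2-\gamma_2)}$, and the right-hand side of its update involves the parameters only through $C^i_k$ and the bounded quantities $\sigma'(Z^{2,i}_k)$ and $H^{1,j}_k$; hence the increment does not feed back on $W^{2,j,i}_k$ (it is purely additive) and, using the crude bound on $g_k$, is of order $N_1^{-(1-\gamma_1)}N_2^{-1}$. Summing over $\le\floor{TN_2}$ steps gives a total change of order $N_1^{-(1-\gamma_1)}\le 1$, so $\abs{W^{2,j,i}_k}\le K_W$ uniformly. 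The same reasoning handles $W^{1,j}$: the inner sum $\frac{1}{N_2^{\gamma_2}}\sum_i C^i_k\sigma'(Z^{2,i}_k)W^{2,j,i}_k$ is $O(N_2^{1-\gamma_2})$ by the bounds on $M_k$ and $K_W$, and with the prefactor $N_1^{-(1-\gamma_1)}N_2^{-(3-2\gamma_2)}$ and the $O(N_2^{1-\gamma_2})$ bound on $y_k-g_k$ the per-step increment is again $O(N_1^{-(1-\gamma_1)}N_2^{-1})$; summing gives $\norm{W^{1,j}_k}\le K_P$. Setting $K=K_C+K_W+K_P$ proves the first claim.

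Finally, the increment estimates are just the per-step bounds obtained above with the uniform constant $K$ inserted: the $C$-increment is $O\big(N_2^{-(2-\gamma_2)}\cdot N_2^{1-\gamma_2}\big)=O(N_2^{-1})$, while the $W^{1,j}$ and $W^{2,j,i}$ increments are $O(N_1^{-(1-\gamma_1)}N_2^{-1})$, exactly as stated (the subdominant contribution from $y_k$ is $O(N_2^{-(2-\gamma_2)})$, which since $2-\gamma_2>1$ is itself $O(N_2^{-1})$). I expect the only genuine obstacle to be the circularity between parameter boundedness and increment control; it is resolved precisely by the fact that the $C$-recursion closes on itself through the bounded activation, so $M_k$ can be tamed by Gr\"onwall before anything is known about the weights, after which the weight bounds follow by a one-directional cascade.
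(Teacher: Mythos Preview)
Your proposal is correct and follows essentially the same bootstrap/cascade structure as the paper's proof: close a Gr\"onwall recursion for the $C$-layer first (exploiting that $g_k$ enters only through the bounded $\sigma$), then propagate the bound to $W^{2,j,i}$ and finally to $W^{1,j}$, and read off the increment estimates at the end. The one technical variation is that you run the Gr\"onwall argument on $M_k=\max_i|C^i_k|$, whereas the paper runs it on the average $m_k^{N_2}=\frac{1}{N_2}\sum_i|C^i_k|$ and then recovers the individual bound in a second step; your choice is slightly more direct but otherwise equivalent.
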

\begin{proof}In this proof, we use $K, K_1$ to represent unimportant constants that may change from line to line. We first establish a bound on $C^i_k$. For $k=0, 1, \ldots, \floor{TN_2}$, since $\sigma(\cdot)$ is bounded, $\abs{H^{2,i}_k} <K$, and by \eqref{SGD}, we have
\begin{equation*}
\begin{aligned}
\abs{C^i_{k+1}} &\le \abs{C^i_k} + \frac{\alpha^{N_1,N_2}_C}{N_2^{\gamma_2}}K \left[K_1 + \frac{1}{N_2^{\gamma_2}} \sum_{m=1}^{N_2}\abs{C^m_k}\right]\\
&\le \abs{C^i_k} + K \left[\frac{\alpha^{N_1,N_2}_C}{N_2^{\gamma_2}} + \frac{\alpha^{N_1,N_2}_C}{N_2^{2\gamma_2}} \sum_{m=1}^{N_2}\abs{C^m_k}\right].
\end{aligned}
\end{equation*}
Since also
\begin{equation*}\label{parabound_1}
\begin{aligned}
\abs{C^i_k} &= \abs{C^i_0} + \sum_{j=1}^k \left(\abs{C^i_j} - \abs{C^i_{j-1}}\right)\le \abs{C^i_0} + K \frac{\alpha^{N_1,N_2}_C }{N_2^{\gamma_2-1}} + K \frac{\alpha^{N_1,N_2}_C}{N_2^{2\gamma_2}}\sum_{j=1}^k \sum_{m=1}^{N_2} \abs{C^m_{j-1}},
\end{aligned}
\end{equation*}
we have
\begin{equation*}
m^{N_2}_k \le b^{N_2} + K \frac{\alpha^{N_1,N_2}_C}{N_2^{2\gamma_2-1}}\sum_{j=1}^k m^{N_2}_{j-1} = b^N_2 + K \frac{\alpha^{N_1,N_2}_C}{N_2^{2\gamma_2-1}}\sum_{j=0}^{k-1} m^{N_2}_{j},
\end{equation*}
where
\[b^{N_2} = \frac{1}{N_2}\sum_{i=1}^{N_2} \abs{C^i_0} + K \frac{\alpha^{N_1,N_2}_C N_2}{N_2^{\gamma_2}}, \quad m^{N_2}_k = \frac{1}{N_2} \sum_{i=1}^{N_2} \abs{C^i_k}.\]
By the discrete Gronwall lemma and $k\le \floor{TN_2}$,
\[m^{N_2}_k \le b^{N_2} \exp{\left(K \frac{\alpha^{N_1,N_2}_C}{N_2^{2\gamma_2-2}} \right)}.\]
Thus, since $C^i_0$ has compact support,
\[\abs{C^i_k} \le \abs{C^i_0} + K \frac{\alpha^{N_1,N_2}_C}{N_2^{\gamma_2-1}} + K \frac{\alpha^{N_1,N_2}_C}{N_2^{2\gamma_2-2}} \left[b^{N_2} \exp\left(K \frac{\alpha^{N_1,N_2}_C}{N_2^{2\gamma_2-2}}\right)\right]\]
is bounded if $\alpha^{N_1,N_2}_C \le {1}/{(N_2^{2-2\gamma_2})}$, for $\gamma \in [1/2,1)$.

Next, let's address the parameters $W^{2,j,i}_k$. By equation \eqref{SGD} and the boundedness of $\sigma(\cdot), \sigma'(\cdot), H^{1,j}_k, C^i_k,m^{N_2}_k$, we have
\begin{equation*}
\begin{aligned}
\abs{W^{2,j,i}_{k+1}} &\le \abs{W^{2,j,i}_k} + \frac{\alpha^{N_1,N_2}_{W,2}}{N_1^{\gamma_1}N_2^{\gamma_2}} \left(\abs{y_k} + \frac{K}{N_2^{\gamma_2}}\sum_{m=1}^{N_2} \abs{C^m_k} \right) \abs{C^i_k \sigma'(Z^{2,i}_k)H^{1,j}_k}\\
&\le \abs{W^{2,j,i}_k} + K\frac{\alpha^{N_1,N_2}_{W,2}}{N_1^{\gamma_1}N_2^{\gamma_2}} \left(\abs{y_k} + \frac{1}{N_2^{\gamma_2}}\sum_{m=1}^{N_2} \abs{C^m_k} \right) \\
&\le \abs{W^{2,j,i}_k} + K\frac{\alpha^{N_1,N_2}_{W,2} N_2^{1-\gamma_2}}{N_1^{\gamma_1}N_2^{\gamma_2}} \left(\frac{1}{N_2^{1-\gamma_2}}\abs{y_k} + \frac{1}{N_2}\sum_{m=1}^{N_2} \abs{C^m_k} \right) \\
&\le \abs{W^{2,j,i}_k} + K\frac{\alpha^{N_1,N_2}_{W,2}}{N_1^{\gamma_1}N_2^{2\gamma_2-1}}.
\end{aligned}
\end{equation*}
Since $k \le \floor{TN_2}$ and $W^{2,j,i}_0$ has compact support, we have
\begin{equation*}
\begin{aligned}
\abs{W^{2,j,i}_{k}} &\le \abs{W^{2,j,i}_{0}} + \sum_{m=1}^k \left( \abs{W^{2,j,i}_{m}} - \abs{W^{2,j,i}_{m-1}} \right)\\
&\le \abs{W^{2,j,i}_{0}} + \sum_{m=1}^k K\frac{\alpha^{N_1,N_2}_{W,2}}{N_1^{\gamma_1}N_2^{2\gamma_2-1}}\\
&\le K_1 +  K\frac{\alpha^{N_1,N_2}_{W,2} N_2}{N_1^{\gamma_1}N_2^{2\gamma_2-1}},
\end{aligned}
\end{equation*}
which is bounded if $\alpha^{N_1,N_2}_{W,2} \le N_1^{\gamma_1}/N_2^{2-2\gamma_2}$.

Lastly, for $W^{1,j}_k$, we have
\begin{equation*}
\begin{aligned}
\norm{W^{1,j}_{k+1}} &\le \norm{W^{1,j}_k} + \frac{\alpha^{N_1,N_2}_{W,1}}{N_1^{\gamma_1}}\abs{y_k - \frac{1}{N_2^{\gamma_2}} \sum_{i=1}^{N_2} C^i_k H^{2,i}_k (x_k)} \abs{\frac{1}{N_2^{\gamma_2}} \sum_{i=1}^{N_2} C^i_k \sigma'(Z^{2,i}_k)W^{2,j,i}_k}\norm{\sigma'(W^{1,j}_k x_k)x_k}\\
&\le \norm{W^{1,j}_k} + K \frac{\alpha^{N_1,N_2}_{W,1} N_2^{2-2\gamma_2}}{N_1^{\gamma_1}}\left(\frac{1}{N_2^{1-\gamma_2}} \abs{y_k} + \frac{1}{N_2} \sum_{i=1}^{N_2} \abs{C_k}\right) \left(\frac{1}{N_2} \sum_{i=1}^{N_2}\abs{C^i_k}\right)\\
&\le \norm{W^{1,j}_k} + K \frac{\alpha^{N_1,N_2}_{W,1} N_2^{2-2\gamma_2}}{N_1^{\gamma_1}}.
\end{aligned}
\end{equation*}
Hence, for $k \le \floor{TN_2}$,
\begin{equation*}
\begin{aligned}
\norm{W^{1,j}_{k}} &\le \norm{W^{1,j}_{0}}+\sum_{m=1}^k \left(\norm{W^{1,j}_{m}}-\norm{W^{1,j}_{m-1}}\right)\\
&\le \norm{W^{1,j}_{0}}+\sum_{m=1}^k K \frac{\alpha^{N_1,N_2}_{W,1} N_2^{2-2\gamma_2}}{N_1^{\gamma_1}}\\
&\le \norm{W^{1,j}_{0}}+ K \frac{\alpha^{N_1,N_2}_{W,1} N_2^{3-2\gamma_2}}{N_1^{\gamma_1}},
\end{aligned}
\end{equation*}
which is bounded since $W^{1,j}_0$ has compact support and if $\alpha^{N_1,N_2}_{W,1} \le N_1^{\gamma_1}/N_2^{3-2\gamma_2}$.

Collecting our results, for all $k \le \floor{TN_2}$ and $i=1,\ldots,N_2$, we have the desired uniform bound for the parameters.
\end{proof}

%

\section{Proof of Theorem \ref{LLN:theorem}}\label{sec::network_convergence}
\subsection{Evolution of the Pre-limit Process}\label{sec::prelimit_evolution}
We first analyze the evolution of the network output $g_k^{N_1,N_2}(x)$. Using Taylor expansion, we have
\begin{equation*}
\begin{aligned}
g_{k+1}^{N_1,N_2}(x) - g_k^{N_1,N_2}(x) &= \frac{1}{N_2^{\gamma_2}} \sum_{i=1}^{N_2} C^i_{k+1} \sigma\left(\frac{1}{N_1^{\gamma_1}}\sum_{j=1}^{N_1} W^{2,j,i}_{k+1}\sigma(W^{1,j}_{k+1}x)\right)-\frac{1}{N_2^{\gamma_2}} \sum_{i=1}^{N_2} C^i_{k} \sigma\left(\frac{1}{N_1^{\gamma_1}}\sum_{j=1}^{N_1} W^{2,j,i}_{k}\sigma(W^{1,j}_{k}x)\right)\\
&= \frac{1}{N_2^{\gamma_2}}\sum_{i=1}^{N_2} \left(C^i_{k+1}-C^i_k \right) \left[\sigma\left(\frac{1}{N_1^{\gamma_1}}\sum_{j=1}^{N_1} W^{2,j,i}_{k+1}\sigma(W^{1,j}_{k+1}x)\right)\right]\\
&\quad + \frac{1}{N_2^{\gamma_2}} \sum_{i=1}^{N_2} C^i_k \left[\sigma\left(\frac{1}{N_1^{\gamma_1}}\sum_{j=1}^{N_1} W^{2,j,i}_{k+1}\sigma(W^{1,j}_{k+1}x)\right) - \sigma\left(\frac{1}{N_1^{\gamma_1}}\sum_{j=1}^{N_1} W^{2,j,i}_{k}\sigma(W^{1,j}_{k}x)\right) \right]\\
&=\frac{1}{N_2^{\gamma_2}}\sum_{i=1}^{N_2} \left(C^i_{k+1}-C^i_k \right) \sigma\left(\frac{1}{N_1^{\gamma_1}}\sum_{j=1}^{N_1} W^{2,j,i}_{k}\sigma(W^{1,j}_{k}x)\right) \\
&\quad + \frac{1}{N_2^{\gamma_2}} \sum_{i=1}^{N_2} C^i_k \left[\sigma'\left(\frac{1}{N_1^{\gamma_1}}\sum_{j=1}^{N_1} W^{2,j,i}_{k}\sigma(W^{1,j}_{k}x)\right)\frac{1}{N_1^{\gamma_1}}\sum_{j=1}^{N_1} \sigma({W}^{1,j}_{k}x)\left(W^{2,j,i}_{k+1} - W^{2,j,i}_k \right)\right.\\
&\qquad  \left.+\sigma'\left(\frac{1}{N_1^{\gamma_1}}\sum_{j=1}^{N_1} W^{2,j,i}_{k}\sigma({W}^{1,j}_{k}x)\right)\frac{1}{N_1^{\gamma_1}}\sum_{j=1}^{N_1} {W}^{2,j,i}_k \sigma'({W}^{1,j}_{k}x)\left(W^{1,j}_{k+1} - W^{1,j}_k \right)x \right] + R^{N_1,N_2}
\end{aligned}
\end{equation*}
where $R^{N_1,N_2} = R^{N_1,N_2}_1 + R^{N_1,N_2}_2$, and
\begin{equation*}
\begin{aligned}
R^{N_1,N_2}_1&=\frac{1}{N_2^{\gamma_2}}\sum_{i=1}^{N_2} \left(C^i_{k+1}-C^i_k \right) \sigma'\left(\frac{1}{N_1^{\gamma_1}}\sum_{j=1}^{N_1} \breve{W}^{2,j,i}_{k}\sigma(\breve{W}^{1,j}_{k}x)\right)\frac{1}{N_1^{\gamma_1}}\sum_{j=1}^{N_1} \sigma(\breve{W}^{1,j}_{k}x)\left(W^{2,j,i}_{k+1} - W^{2,j,i}_k \right)\\
& \quad + \frac{1}{N_2^{\gamma_2}}\sum_{i=1}^{N_2} \left(C^i_{k+1}-C^i_k \right) \sigma'\left(\frac{1}{N_1^{\gamma_1}}\sum_{j=1}^{N_1} \breve{W}^{2,j,i}_{k}\sigma(\breve{W}^{1,j}_{k}x)\right)\frac{1}{N_1^{\gamma_1}}\sum_{j=1}^{N_1} \breve{W}^{2,j,i}_k \sigma'(\breve{W}^{1,j}_{k}x)\left(W^{1,j}_{k+1} - W^{1,j}_k \right)x,
\end{aligned}
\end{equation*}
\begin{equation*}
\begin{aligned}
R^{N_1,N_2}_2 &= \frac{1}{N_2^{\gamma_2}} \sum_{i=1}^{N_2} C^i_k \left\lbrace \frac{1}{2}\sigma''\left(\frac{1}{N_1^{\gamma_1}}\sum_{j=1}^{N_1} \tilde{W}^{2,j,i}_{k}\sigma(\tilde{W}^{1,j}_{k}x)\right)\left[\frac{1}{N_1^{\gamma_1}}\sum_{j=1}^{N_1} \sigma(\tilde{W}^{1,j}_{k}x) \left(W^{2,j,i}_{k+1} - W^{2,j,i}_k \right)\right]^2\right.\\
&\qquad  + \frac{1}{2} \sigma''\left(\frac{1}{N_1^{\gamma_1}}\sum_{j=1}^{N_1} \tilde{W}^{2,j,i}_{k}\sigma(\tilde{W}^{1,j}_{k}x)\right)\left[\frac{1}{N_1^{\gamma_1}}\sum_{j=1}^{N_1} \tilde{W}^{2,j,i}_k \sigma'(\tilde{W}^{1,j}_{k}x)\left(W^{1,j}_{k+1} - W^{1,j}_k \right)x\right]^2\\
&\qquad \qquad \qquad + \frac{1}{2}\sigma'\left(\frac{1}{N_1^{\gamma_1}}\sum_{j=1}^{N_1} \tilde{W}^{2,j,i}_{k}\sigma(\tilde{W}^{1,j}_{k}x)\right)\frac{1}{N_1^{\gamma_1}}\sum_{j=1}^{N_1} \tilde{W}^{2,j,i}_k \sigma''(\tilde{W}^{1,j}_{k}x)\left[\left(W^{1,j}_{k+1} - W^{1,j}_k \right)x\right]^2  \\
&\qquad  + \sigma''\left(\frac{1}{N_1^{\gamma_1}}\sum_{j=1}^{N_1} \tilde{W}^{2,j,i}_{k}\sigma(\tilde{W}^{1,j}_{k}x)\right) \left[\frac{1}{N_1^{\gamma_1}}\sum_{j=1}^{N_1} \tilde{W}^{2,j,i}_k \sigma'(\tilde{W}^{1,j}_{k}x)\left(W^{1,j}_{k+1} - W^{1,j}_k \right)x\right]\\
&\qquad \qquad \qquad \qquad \qquad \qquad \cdot \left[\frac{1}{N_1^{\gamma_1}}\sum_{j=1}^{N_1} \sigma(\tilde{W}^{1,j}_{k}x) \left(W^{2,j,i}_{k+1} - W^{2,j,i}_k \right)\right]\\
&\qquad  \left. + \sigma'\left(\frac{1}{N_1^{\gamma_1}}\sum_{j=1}^{N_1} \tilde{W}^{2,j,i}_{k}\sigma(\tilde{W}^{1,j}_{k}x)\right)\left[\frac{1}{N_1^{\gamma_1}}\sum_{j=1}^{N_1} \sigma(\tilde{W}^{1,j}_{k}x) \left(W^{2,j,i}_{k+1} - W^{2,j,i}_k \right)\left(W^{1,j}_{k+1} - W^{1,j}_k \right)x\right]\right\rbrace,
\end{aligned}
\end{equation*}
for some $(\breve{W}^{2,j,i}_k, \breve{W}^{1,j}_k)$, $(\tilde{W}^{2,j,i}_k, \tilde{W}^{1,j}_k)$ in the line segments connecting $({W}^{2,j,i}_{k+1}, {W}^{1,j}_{k+1})$ and $({W}^{2,j,i}_{k}, {W}^{1,j}_{k})$. By Lemma \ref{lemma_parameterbound}, $R^{N_1,N_2} = O(N_2^{-(1+\gamma_2)})$.
 Using equation \eqref{SGD} and definition of the empirical measure, we have
\begin{equation}
\label{g_evolution}
\begin{aligned}
&g_{k+1}^{N_1,N_2}(x) - g_k^{N_1,N_2}(x)= \frac{\alpha^{N_1,N_2}_C}{N_2^{2\gamma_2-1}}  \left(y_k-g^{N_1,N_2}_k(x_k)\right) \ip{\sigma\left(Z^{2,N_1}(x_k)\right)\sigma\left(Z^{2,N_1}(x)\right), \tilde{\gamma}^{N_1,N_2}_k}\\
& +\frac{\alpha^{N_1,N_2}_{W,2}}{N_1^{2\gamma_1}N_2^{2\gamma_2-1}}\sum_{j=1}^{N_1} \left(y_k-g^{N_1,N_2}_k(x_k)\right)\ip{(c)^2 \sigma'\left(Z^{2,N_1}(x_k)\right)\sigma'\left(Z^{2,N_1}(x)\right)\sigma(w^{1,j}x_k)\sigma(w^{1,j}x), \tilde{\gamma}^{N_1,N_2}_k}\\
& +\frac{\alpha^{N_1,N_2}_{W,1}}{N_1^{2\gamma_1}N_2^{2\gamma_2-2}}\sum_{j=1}^{N_1} \left(y_k-g^{N_1,N_2}_k(x_k)\right)xx_k\ip{cw^{2,j}\sigma'(w^{1,j}x)\sigma'\left(Z^{2,N_1}(x)\right), \tilde{\gamma}^{N_1,N_2}_k}\ip{cw^{2,j}\sigma'(w^{1,j}x_k)\sigma'(Z^{2,N_1}(x_k)),\tilde{\gamma}^{N_1,N_2}_k}\\
& +O(N_2^{-(1+\gamma_2)}),
\end{aligned}
\end{equation}
where $Z^{2,N_1}(x) = \frac{1}{N_1^{\gamma_1}} \sum_{j=1}^{N_1} w^{2,j}\sigma(w^{1,j}x)$.
We can then write the evolution of $h^{N_1,N_2}_t(x)$ for $t\in[0,T]$ as
\begin{equation*}
\begin{aligned}
&h^{N_1,N_2}_t(x)-h^{N_1,N_2}_0(x)= \sum_{k=0}^{\floor{N_2t}-1} \left[g^{N_1,N_2}_{k+1}(x) - g^{N_1,N_2}_k(x)\right]\\
&= \frac{\alpha^{N_1,N_2}_C}{N_2^{2\gamma_2-1}} \sum_{k=0}^{\floor{N_2t}-1} \left(y_k-g^{N_1,N_2}_k(x_k)\right) \ip{\sigma\left(Z^{2,N_1}(x_k)\right)\sigma\left(Z^{2,N_1}(x)\right), \tilde{\gamma}^{N_1,N_2}_k}\\
&\quad +\frac{\alpha^{N_1,N_2}_{W,2}}{N_1^{2\gamma_1}N_2^{2\gamma_2-1}} \sum_{k=0}^{\floor{N_2t}-1} \sum_{j=1}^{N_1} \left(y_k-g^{N_1,N_2}_k(x_k)\right)\ip{(c)^2 \sigma'\left(Z^{2,N_1}(x_k)\right)\sigma'\left(Z^{2,N_1}(x)\right)\sigma(w^{1,j}x_k)\sigma(w^{1,j}x), \tilde{\gamma}^{N_1,N_2}_k}\\
&\quad +\frac{\alpha^{N_1,N_2}_{W,1}}{N_1^{2\gamma_1}N_2^{2\gamma_2-2}}\sum_{k=0}^{\floor{N_2t}-1}\sum_{j=1}^{N_1} \left(y_k-g^{N_1,N_2}_k(x_k)\right)xx_k\ip{cw^{2,j}\sigma'(w^{1,j}x)\sigma'\left(Z^{2,N_1}(x)\right), \tilde{\gamma}^{N_1,N_2}_k}\\
&\qquad \qquad \qquad \qquad \qquad \qquad \cdot \ip{cw^{2,j}\sigma'(w^{1,j}x_k)\sigma'(Z^{2,N_1}(x_k)),\tilde{\gamma}^{N_1,N_2}_k}\\
&\quad +O(N_2^{-\gamma_2}),\nonumber
\end{aligned}
\end{equation*}
and, using now the definitions of the learning rates from (\ref{potential_alpha_1}) we continue the last display as
\begin{equation*}
\begin{aligned}
&= \frac{1}{N_2}\sum_{k=0}^{\floor{N_2t}-1}  \int_{\CX\times \CY} \left(y-g^{N_1,N_2}_k(x')\right)\ip{\sigma\left(Z^{2,N_1}(x')\right)\sigma\left(Z^{2,N_1}(x)\right), \tilde{\gamma}^{N_1,N_2}_k}\pi(dx',dy)\\
&\quad +\frac{1}{N_1N_2}\sum_{k=0}^{\floor{N_2t}-1}\sum_{j=1}^{N_1} \int_{\CX\times\CY}\left(y-g^{N_1,N_2}_k(x')\right)\ip{(c)^2 \sigma'\left(Z^{2,N_1}(x')\right)\sigma'\left(Z^{2,N_1}(x)\right)\sigma(w^{1,j}x')\sigma(w^{1,j}x), \tilde{\gamma}^{N_1,N_2}_k} \pi(dx',dy)\\
&\quad +\frac{1}{N_1N_2}\sum_{k=0}^{\floor{N_2t}-1} \sum_{j=1}^{N_1} \int_{\CX\times \CY} \left(y-g^{N_1,N_2}_k(x')\right)xx'\ip{cw^{2,j}\sigma'(w^{1,j}x)\sigma'\left(Z^{2,N_1}(x)\right), \tilde{\gamma}^{N_1,N_2}_k}\\
&\qquad \qquad \qquad \qquad \qquad \qquad \cdot \ip{cw^{2,j}\sigma'(w^{1,j}x')\sigma'(Z^{2,N_1}(x')),\tilde{\gamma}^{N_1,N_2}_k}\pi(dx',dy)\\
&\quad + M^{N_1,N_2}_{t} +O(N_2^{-\gamma_2}),
\end{aligned}
\end{equation*}
where $M^{N_1,N_2}_t = M^{N_1,N_2}_{1,t} + M^{N_1,N_2}_{2,t}+M^{N_1,N_2}_{3,t}$ is a martingale term given by
\begin{equation}\label{M_1}
\begin{aligned}
M^{N_1,N_2}_{1,t} = \frac{1}{N_2} \sum_{k=0}^{\floor{N_2t}-1} &\left\lbrace \left(y_k-g^{N_1,N_2}_k(x_k)\right) \ip{\sigma\left(Z^{2,N_1}(x_k)\right)\sigma\left(Z^{2,N_1}(x)\right), \tilde{\gamma}^{N_1,N_2}_k}\right.\\
& \quad \left. -\int_{\CX\times \CY} \left(y-g^{N_1,N_2}_k(x')\right)\ip{\sigma\left(Z^{2,N_1}(x')\right)\sigma\left(Z^{2,N_1}(x)\right), \tilde{\gamma}^{N_1,N_2}_k}\pi(dx',dy)\right\rbrace,\\
\end{aligned}
\end{equation}

\begin{equation}\label{M_2}
\begin{aligned}
M^{N_1,N_2}_{2,t}= &\frac{1}{N_1N_2} \sum_{k=0}^{\floor{N_2t}-1} \sum_{j=1}^{N_1}\left\lbrace \left(y_k-g^{N_1,N_2}_k(x_k)\right)\ip{(c)^2 \sigma'\left(Z^{2,N_1}(x_k)\right)\sigma'\left(Z^{2,N_1}(x)\right)\sigma(w^{1,j}x_k)\sigma(w^{1,j}x), \tilde{\gamma}^{N_1,N_2}_k}\right.\\
&- \left. \int_{\CX\times\CY}\left(y-g^{N_1,N_2}_k(x')\right)\ip{(c)^2 \sigma'\left(Z^{2,N_1}(x')\right)\sigma'\left(Z^{2,N_1}(x)\right)\sigma(w^{1,j}x')\sigma(w^{1,j}x), \tilde{\gamma}^{N_1,N_2}_k} \pi(dx',dy)\right \rbrace,
\end{aligned}
\end{equation}

\begin{equation}\label{M_3}
\begin{aligned}
M^{N_1,N_2}_{3,t}=\frac{1}{N_1N_2}\sum_{k=0}^{\floor{N_2t}-1}\sum_{j=1}^{N_1}& \left\lbrace\left(y_k-g^{N_1,N_2}_k(x_k)\right)xx_k\ip{cw^{2,j}\sigma'(w^{1,j}x)\sigma'\left(Z^{2,N_1}(x)\right), \tilde{\gamma}^{N_1,N_2}_k}\right.\\
&\qquad \qquad \qquad \qquad \qquad  \cdot \ip{cw^{2,j}\sigma'(w^{1,j}x_k)\sigma'(Z^{2,N_1}(x_k)),\tilde{\gamma}^{N_1,N_2}_k}\\
&-\int_{\CX\times \CY} \left(y-g^{N_1,N_2}_k(x')\right)xx'\ip{cw^{2,j}\sigma'(w^{1,j}x)\sigma'\left(Z^{2,N_1}(x)\right), \tilde{\gamma}^{N_1,N_2}_k}\\
&\qquad \qquad \qquad \qquad \qquad  \left. \cdot \ip{cw^{2,j}\sigma'(w^{1,j}x')\sigma'(Z^{2,N_1}(x')),\tilde{\gamma}^{N_1,N_2}_k}\pi(dx',dy)\right\rbrace
\end{aligned}
\end{equation}

Recall that learning rates are as given in \eqref{potential_alpha_1}.
As $N_2 \to \infty$, $h^{N_1,N_2}_t$ can further be re-written in terms of Riemann integrals and the scaled empirical measure $\gamma^{N_1,N_2}_t$,
\begin{equation}\label{h_N1N2_evolution}
\begin{aligned}
&h^{N_1,N_2}_t(x)-h^{N_1,N_2}_0(x)= \int_0^t \int_{\CX\times \CY} \left(y-h^{N_1,N_2}_s(x')\right)\ip{\sigma\left(Z^{2,N_1}(x')\right)\sigma\left(Z^{2,N_1}(x)\right), \gamma^{N_1,N_2}_s}\pi(dx',dy)ds\\
&\quad +\frac{1}{N_1}\sum_{j=1}^{N_1} \int_0^t \int_{\CX\times\CY}\left(y-h^{N_1,N_2}_s(x')\right)\ip{(c)^2 \sigma'\left(Z^{2,N_1}(x')\right)\sigma'\left(Z^{2,N_1}(x)\right)\sigma(w^{1,j}x')\sigma(w^{1,j}x), \gamma^{N_1,N_2}_s} \pi(dx',dy)ds\\
&\quad +\frac{1}{N_1} \sum_{j=1}^{N_1} \int_0^t \int_{\CX\times \CY} \left(y-h^{N_1,N_2}_s(x')\right)xx'\ip{cw^{2,j}\sigma'(w^{1,j}x)\sigma'\left(Z^{2,N_1}(x)\right), \gamma^{N_1,N_2}_s}\\
&\qquad \qquad \qquad \qquad \qquad \qquad \cdot \ip{cw^{2,j}\sigma'(w^{1,j}x')\sigma'(Z^{2,N_1}(x')),\gamma^{N_1,N_2}_s}\pi(dx',dy)ds\\
&\quad + M^{N_1,N_2}_{t} +O(N_2^{-\gamma_2}).
\end{aligned}
\end{equation}

Finally, we analyze the evolution of the empirical measure $\tilde{\gamma}^{N_1,N_2}_k$ in terms of test functions $f \in C^2_b(\R^{1+N_1(1+d)})$. Denote
$\theta^i_k = (C^i_k,  W^{2,1,i}_k, \ldots, W^{2,N_1,i}_k, W^{1,1}_k, \ldots, W^{1,N_1}_k)$, first order Taylor expansion gives
\begin{equation*}
\begin{aligned}
\ip{f,\tilde{\gamma}^{N_1,N_2}_{k+1}} - \ip{ f,\tilde{\gamma}^{N_1,N_2}_{k}} &= \frac{1}{N_2} \sum_{i=1}^{N_2} \left[ f(\theta^i_{k+1}) - f(\theta^i_k)\right]\\
&= \frac{1}{N_2} \sum_{i=1}^{N_2}  \partial_{c} f(\theta^i_k)\left(C^i_{k+1} - C^i_k\right) + \frac{1}{N_2} \sum_{i=1}^{N_2} \sum_{j=1}^{N_1} \partial_{w^{2,j}} f(\theta^i_k)\left(W^{2,j,i}_{k+1} - W^{2,j,i}_k\right)\\
&\quad + \frac{1}{N_2} \sum_{i=1}^{N_2} \sum_{j=1}^{N_1} \nabla_{w^{1,j}} f(\theta^i_k) \left(W^{1,j}_{k+1} - W^{1,j}_k\right) + O\left(\frac{1}{N_2^2}\right)
\end{aligned}
\end{equation*}
Using \eqref{SGD}, we have
\begin{equation*}
\begin{aligned}
&\ip{f,\tilde{\gamma}^{N_1,N_2}_{k+1}} - \ip{ f,\tilde{\gamma}^{N_1,N_2}_{k}} = \frac{\alpha_C^{N_1,N_2}}{N_2^{\gamma_2}} \left(y_k-g_k^{N_1,N_2}(x_k)\right) \ip{\partial_{c}f(\theta)  \sigma(Z^{2,N_1}(x_k)),\tilde{\gamma}^{N_1,N_2}_k}\\
&\quad +\frac{\alpha^{N_1,N_2}_{W,2}}{N_1^{\gamma_1}N_2^{\gamma_2}}\left(y_k-g_k^{N_1,N_2}(x_k)\right) \ip{c\sigma'(Z^{2,N_1}(x_k))\sigma(w^1x_k)\cdot \partial_{w^2}f(\theta) ,\tilde{\gamma}^{N_1,N_2}_k}\\
&\quad + \frac{\alpha^{N_1,N_2}_{W,1}}{N_1^{\gamma_1}}\left(y_k-g_k^{N_1,N_2}(x_k)\right) \ip{\ip{c\sigma'(Z^{2,N_1}(x_k))\sigma'(w^1x_k)w^{2},N_2^{1-\gamma_2}\tilde{\gamma}_k^{N_1,N_2}}\cdot  \nabla_{w^1}f(\theta)x_k,\tilde{\gamma}^{N_1,N_2}_k}\\
&\quad + O\left(\frac{1}{N_2^2}\right).
\end{aligned}
\end{equation*}

In order to write the evolution in terms of the scaled measure $\gamma^{N_1,N_2}_t$, for $t\in [0,1]$, we have
\begin{equation}\label{measure_evolution}
\begin{aligned}
&\ip{f,\gamma^{N_1,N_2}_{t}} - \ip{ f,\gamma^{N_1,N_2}_{0}}= \sum_{k=0}^{\floor{N_2t}-1}\ip{f,\nu^{N_1,N_2}_{k+1}} - \ip{ f,\nu^{N_1,N_2}_{k}}\\
&=\frac{\alpha_C^{N_1,N_2}}{N_2^{\gamma_2}}\sum_{k=0}^{\floor{N_2t}-1} \left(y_k-g_k^{N_1,N_2}(x_k)\right) \ip{\partial_{c}f(\theta)  \sigma(Z^{2,N_1}(x_k)),\tilde{\gamma}^{N_1,N_2}_k}\\
&\quad +\frac{\alpha^{N_1,N_2}_{W,2}}{N_1^{\gamma_1}N_2^{\gamma_2}}\sum_{k=0}^{\floor{N_2t}-1}\left(y_k-g_k^{N_1,N_2}(x_k)\right) \ip{c\sigma'(Z^{2,N_1}(x_k))\sigma(w^1x_k)\cdot \partial_{w^2}f(\theta) ,\tilde{\gamma}^{N_1,N_2}_k}\\
&\quad + \frac{\alpha^{N_1,N_2}_{W,1}}{N_1^{\gamma_1}}\sum_{k=0}^{\floor{N_2t}-1}\left(y_k-g_k^{N_1,N_2}(x_k)\right)\ip{\ip{c\sigma'(Z^{2,N_1}(x_k))\sigma'(w^1x_k)w^{2},N_2^{1-\gamma_2}\tilde{\gamma}_k^{N_1,N_2}}\cdot  \nabla_{w^1}f(\theta)x_k,\tilde{\gamma}^{N_1,N_2}_k}\\
&\quad + O\left(\frac{1}{N_2}\right)\\
&=\frac{\alpha_C^{N_1,N_2}}{N_2^{\gamma_2-1}}\int_0^t\int_{\CX\times \CY} \left(y-h_s^{N_1,N_2}(x)\right) \ip{\partial_{c}f(\theta)  \sigma(Z^{2,N_1}(x)),\gamma^{N_1,N_2}_s}\pi(dx,dy)ds\\
&\quad +\frac{\alpha^{N_1,N_2}_{W,2}}{N_1^{\gamma_1}N_2^{\gamma_2-1}}\int_0^t\int_{\CX\times \CY}\left(y-h_s^{N_1,N_2}(x)\right) \ip{c\sigma'(Z^{2,N_1}(x))\sigma(w^1x)\cdot \partial_{w^2}f(\theta) ,\gamma^{N_1,N_2}_s}\pi(dx,dy)ds\\
&\quad + \frac{ \alpha^{N_1,N_2}_{W,1}}{N_1^{\gamma_1}N_2^{\gamma_2-2}}\int_0^t\int_{\CX\times \CY}\left(y-h_s^{N_1,N_2}(x)\right)\ip{\ip{c\sigma'(Z^{2,N_1}(x))\sigma'(w^1x)w^{2},\gamma_s^{N_1,N_2}}\cdot  \nabla_{w^1}f(\theta)x,\gamma^{N_1,N_2}_s}\pi(dx,dy)ds\\
&\quad + M_{f,t}^{N_1,N_2}+ O\left(\frac{1}{N_2}\right),\\
\end{aligned}
\end{equation}
where $M_{f,t}^{N_1,N_2} = M^{N_1,N_2}_{f,1,t} +M^{N_1,N_2}_{f,2,t} + M^{N_1,N_2}_{f,3,t} $ is a martingale term, and
\begin{equation*}
\begin{aligned}
M^{N_1,N_2}_{f,1,t} = \frac{\alpha_C^{N_1,N_2}}{N_2^{\gamma_2}}&\sum_{k=0}^{\floor{N_2t}-1} \left\lbrace\left(y_k-g_k^{N_1,N_2}(x_k)\right) \ip{\partial_{c}f(\theta)  \sigma(Z^{2,N_1}(x_k)),\tilde{\gamma}^{N_1,N_2}_k}\right.\\
&\qquad \left.-\int_{\CX\times \CY} \left(y-g_k^{N_1,N_2}(x)\right) \ip{\partial_{c}f(\theta)  \sigma(Z^{2,N_1}(x)),\tilde{\gamma}^{N_1,N_2}_k}\pi(dx,dy)\right\rbrace,
\end{aligned}
\end{equation*}

\begin{equation*}
\begin{aligned}
M^{N_1,N_2}_{f,2,t} = \frac{\alpha^{N_1,N_2}_{W,2}}{N_1^{\gamma_1}N_2^{\gamma_2}}&\sum_{k=0}^{\floor{N_2t}-1}\left\lbrace \left(y_k-g_k^{N_1,N_2}(x_k)\right) \ip{c\sigma'(Z^{2,N_1}(x_k))\sigma(w^1x_k)\cdot \partial_{w^2}f(\theta) ,\tilde{\gamma}^{N_1,N_2}_k}\right.\\
&\qquad \left. -\int_{\CX\times \CY}\left(y-g_k^{N_1,N_2}(x)\right) \ip{c\sigma'(Z^{2,N_1}(x))\sigma(w^1x)\cdot \partial_{w^2}f(\theta) ,\tilde{\gamma}^{N_1,N_2}_k}\pi(dx,dy)\right\rbrace,
\end{aligned}
\end{equation*}

\begin{equation*}
\begin{aligned}
 M^{N_1,N_2}_{f,3,t} &= \frac{\alpha^{N_1,N_2}_{W,1}}{N_1^{\gamma_1}N_2^{\gamma_2-1}}\sum_{k=0}^{\floor{N_2t}-1}\left\lbrace\left(y_k-g_k^{N_1,N_2}(x_k)\right)\ip{\ip{c\sigma'(Z^{2,N_1}(x_k))\sigma'(w^1x_k)w^{2},\tilde{\gamma}_k^{N_1,N_2}}\cdot  \nabla_{w^1}f(\theta)x_k,\tilde{\gamma}^{N_1,N_2}_k}\right.\\
&\qquad -\left.\int_{\CX\times \CY}\left(y-g_k^{N_1,N_2}(x)\right)\ip{\ip{c\sigma'(Z^{2,N_1}(x))\sigma'(w^1x)w^{2},\tilde{\gamma}_k^{N_1,N_2}}\cdot  \nabla_{w^1}f(\theta)x,\tilde{\gamma}^{N_1,N_2}_k}\pi(dx,dy) \right\rbrace.
\end{aligned}
\end{equation*}

Using learning rates as specified in \eqref{potential_alpha_1}, we have
\begin{equation}\label{mu_N1N2_evolution}
\begin{aligned}
&\ip{f,\gamma^{N_1,N_2}_{t}} - \ip{ f,\gamma^{N_1,N_2}_{0}}\\
&=\frac{1}{N_2^{1-\gamma_2}}\int_0^t\int_{\CX\times \CY} \left(y-h_s^{N_1,N_2}(x)\right) \ip{\partial_{c}f(\theta)  \sigma(Z^{2,N_1}(x)),\gamma^{N_1,N_2}_s}\pi(dx,dy)ds\\
&\quad +\frac{1}{N_1^{1-\gamma_1}N_2^{1-\gamma_2}}\int_0^t\int_{\CX\times \CY}\left(y-h_s^{N_1,N_2}(x)\right) \ip{c\sigma'(Z^{2,N_1}(x))\sigma(w^1x)\cdot \partial_{w^2}f(\theta) ,\gamma^{N_1,N_2}_s}\pi(dx,dy)ds\\
&\quad + \frac{ 1}{N_1^{1-\gamma_1}N_2^{1-\gamma_2}}\int_0^t\int_{\CX\times \CY}\left(y-h_s^{N_1,N_2}(x)\right)\ip{\ip{c\sigma'(Z^{2,N_1}(x))\sigma'(w^1x)w^{2},\gamma_s^{N_1,N_2}}\cdot  \nabla_{w^1}f(\theta)x,\gamma^{N_1,N_2}_s}\pi(dx,dy)ds\\
&\quad + M_{f,t}^{N_1,N_2}+ O\left(\frac{1}{N_2}\right)
\end{aligned}
\end{equation}

In the following lemma, we prove a uniform bound for $\E \left(\abs{g^N_k(x)}^4\right)$.

\begin{lemma}\label{lemma_g}
For any $k\le N_2T$ and any $x \in \CX$,
\[\sup_{N_1, N_2 \in \mathbb{N}, k\le \floor{N_2T}} \E \paren{\abs{g^{N_1,N_2}_k(x)}^4} < C,\]
for some finite constant $C<\infty$.
\end{lemma}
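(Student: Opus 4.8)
The plan is to reduce the bound at a general step $k$ to a bound at initialization $k=0$ by a \emph{pathwise} stability (Gronwall) argument, and then to establish the $k=0$ bound by exploiting the mean-zero, conditionally-i.i.d.\ structure of the initial parameters. The naive deterministic estimate $\abs{g^{N_1,N_2}_k(x)}\le \frac{1}{N_2^{\gamma_2}}\sum_{i=1}^{N_2}\abs{C^i_k}\,\abs{\sigma(Z^{2,i}_k(x))}\le K\norm{\sigma}_{\infty}N_2^{1-\gamma_2}$, which follows at once from Lemma \ref{lemma_parameterbound} and boundedness of $\sigma$, is by itself too weak since it grows in $N_2$ for $\gamma_2<1$; its only role below will be to tame a remainder term. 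This mirrors the corresponding moment bound in the shallow case of \cite{SpiliopoulosYu2021}.

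\textbf{Step 1 (pathwise stability).} Set $G_k=\max_{x\in\CX}\abs{g^{N_1,N_2}_k(x)}$ (a maximum over the finite dataset). Starting from the increment identity \eqref{g_evolution} and inserting the learning rates \eqref{potential_alpha_1}, each of the three principal terms equals $\frac{1}{N_2}\paren{y_k-g^{N_1,N_2}_k(x_k)}$ times an average over $i$ (and over $j$) of products of $C^i_k$, $W^{2,j,i}_k$, $\sigma$ and $\sigma'$, all bounded by a deterministic constant via Lemma \ref{lemma_parameterbound} and Assumption \ref{assumption}, uniformly in $x\in\CX$. The second-order Taylor remainder $R^{N_1,N_2}$ carries two parameter increments, each proportional to $\paren{y_k-g^{N_1,N_2}_k(x_k)}$, so $R^{N_1,N_2}=O\!\big(N_2^{-(3-\gamma_2)}\big)\paren{y_k-g^{N_1,N_2}_k(x_k)}^2$. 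Using $\abs{g^{N_1,N_2}_k(x_k)}\le G_k$, $\abs{y_k}\le C_y$, and the crude a priori bound $G_k\le C_3N_2^{1-\gamma_2}$ to control the quadratic term (so that $N_2^{-(3-\gamma_2)}G_k^2\le C_3N_2^{-2}G_k$), one obtains a \emph{linear} recursion $G_{k+1}\le G_k\paren{1+a/N_2}+b/N_2$ with $a,b$ independent of $N_1,N_2$. The discrete Gronwall lemma over $k\le\floor{N_2T}$ then yields $G_k\le C_1G_0+C_2$ pathwise, with $C_1,C_2$ independent of $N_1,N_2$, whence $\E\paren{\abs{g^{N_1,N_2}_k(x)}^4}\le\E(G_k^4)\le C\paren{\E(G_0^4)+1}$.

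\textbf{Step 2 (initialization).} Since the dataset is finite, $\E(G_0^4)\le\sum_{i=1}^{M}\E\paren{\abs{g^{N_1,N_2}_0(x^{(i)})}^4}$, so it suffices to fix $x\in\CX$ and condition on the inner weights $\{W^{1,j}_0\}_j$. Writing $\xi_i=C^i_0\,\sigma(Z^{2,i}_0(x))$ so that $g^{N_1,N_2}_0(x)=N_2^{-\gamma_2}\sum_{i=1}^{N_2}\xi_i$, the variables $\xi_i$ are, conditionally on $\{W^{1,j}_0\}_j$, i.i.d.\ across $i$ (the $C^i_0$ and $W^{2,\cdot,i}_0$ are independent across $i$), uniformly bounded by $K\norm{\sigma}_{\infty}$, and mean zero (since $C^i_0$ is mean zero and independent of $Z^{2,i}_0$). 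The elementary fourth-moment identity for sums of i.i.d.\ centered variables gives $\E\big[\big(\textstyle\sum_{i=1}^{N_2}\xi_i\big)^4\,\big|\,\{W^{1,j}_0\}_j\big]\le CN_2^2$, hence $\E\big[\abs{g^{N_1,N_2}_0(x)}^4\,\big|\,\{W^{1,j}_0\}_j\big]\le CN_2^{2-4\gamma_2}\le C$ for $\gamma_2\ge 1/2$; taking expectations over $\{W^{1,j}_0\}_j$ removes the conditioning. Combined with Step 1 this gives the claim.

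\textbf{Main obstacle.} I expect the delicate point to be Step 1: correctly isolating the quadratic-in-residual Taylor remainder and verifying that, after substituting the crude bound $G_k\le C_3N_2^{1-\gamma_2}$, its contribution to the recursion coefficient is $O\!\big(N_2^{-2}G_k\big)$ and thus negligible relative to the first-order term $O\!\big(N_2^{-1}G_k\big)$, so that the recursion stays linear and the Gronwall constants remain independent of $N_1,N_2$. By contrast, the uniformity in $N_1$ (and independence of the bound from $\gamma_1$) is automatic: boundedness of $\sigma$ makes each $\xi_i$ bounded regardless of $\gamma_1$ and $N_1$, which is exactly what the conditioning on $\{W^{1,j}_0\}_j$ is designed to exploit.
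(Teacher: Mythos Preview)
Your proposal is correct and follows essentially the same strategy as the paper: derive a one-step increment bound from \eqref{g_evolution} and the learning rates \eqref{potential_alpha_1}, close it via the discrete Gronwall lemma, and control the initialization using the i.i.d.\ mean-zero structure of $C^i_0$. The only organizational difference is that you take $G_k=\max_{x\in\CX}|g_k(x)|$ and run a pathwise linear Gronwall before raising to the fourth power and taking expectations, whereas the paper squares the scalar recursion twice (using Young's inequality), takes expectations, sums over $x\in\CX$, and applies Gronwall to $\sum_{x}\E[|g_k(x)|^4]$; both routes are equivalent here.
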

\begin{proof}
By equation \eqref{g_evolution}, we have the following bound
\begin{equation}\label{network_evolution_sq}
\begin{aligned}
\abs{g^{N_1,N_2}_{k+1}(x)}  &\le  \abs{g^{N_1,N_2}_{k}(x)} + \frac{C\alpha^{N_1,N_2}_C}{N_2^{2\gamma_2-1}}  \abs{y_k-g^{N_1,N_2}_k(x_k)}  +\frac{C\alpha^{N_1,N_2}_{W,2}}{N_1^{2\gamma_1}N_2^{2\gamma_2-1}} \sum_{j=1}^{N_1} \abs{y_k-g^{N_1,N_2}_k(x_k)}\\
&\quad  + \frac{C\alpha^{N_1,N_2}_{W,1}}{N_1^{2\gamma_1}N_2^{2\gamma_2-2}}\sum_{j=1}^{N_1} \abs{y_k-g^{N_1,N_2}_k(x_k)} + \frac{C}{N_2^{\gamma_2+1}}\\
&\le \abs{g^{N_1,N_2}_{k}(x)} + \frac{C}{N_2}\abs{g^{N_1,N_2}_k(x_k)} + \frac{C}{N_2}\paren{1+\frac{1}{N_2^{\gamma_2}}}\\
&\le \abs{g^{N_1,N_2}_{k}(x)} + \frac{C}{N_2}\abs{g^{N_1,N_2}_k(x_k)} + \frac{C}{N_2},
\end{aligned}
\end{equation}

where the last inequality holds because $N_2>0$ is large. Squaring both sides of the \eqref{network_evolution_sq} gives
\begin{equation*}
\begin{aligned}
\abs{g^{N_1,N_2}_{k+1}(x)}^2  &\le \abs{g^{N_1,N_2}_{k}(x)}^2 + 2\abs{g^{N_1,N_2}_{k}(x)} \paren{\frac{C}{N_2}\abs{g^{N_1,N_2}_k(x_k)}+\frac{C}{N_2}} + \paren{\frac{C}{N_2}}^2\paren{\abs{g^{N_1,N_2}_k(x_k)}+1}^2\\
&\le  \abs{g^{N_1,N_2}_{k}(x)}^2 + \frac{C}{N_2} \abs{g^{N_1,N_2}_k(x)}^2 + \frac{C}{N_2} \abs{g^{N_1,N_2}_k(x_k)}^2+\frac{C}{N_2},
\end{aligned}
\end{equation*}
where the last inequality follows from the Young's inequality ($ab \le \frac{a^2}{2\epsilon} + \frac{\epsilon b^2}{2}$, for $\epsilon= \frac{1}{N_2}$). Similarly, squaring both sides one more time gives
\begin{equation*}
\begin{aligned}
\abs{g^{N_1,N_2}_{k+1}(x)}^4
&\le  \abs{g^{N_1,N_2}_{k}(x)}^4 + \frac{C}{N_2} \abs{g^{N_1,N_2}_k(x)}^4 + \frac{C}{N_2} \abs{g^{N_1,N_2}_k(x_k)}^4+\frac{C}{N_2}.
\end{aligned}
\end{equation*}
Therefore, for $k\le N_2T$,
\begin{equation*}
\begin{aligned}
\abs{g^{N_1,N_2}_k(x)}^4 &= \abs{g^{N_1,N_2}_0(x)}^4 + \sum_{j=1}^k \paren{\abs{g^{N_1,N_2}_{j}(x)}^4 - \abs{g^{N_1,N_2}_{j-1}(x)}^4}\\
&\le \abs{g^{N_1,N_2}_0(x)}^4 + \sum_{j=1}^k \paren{\frac{C}{N_2} \abs{g^{N_1,N_2}_{j-1}(x)}^4 + \frac{C}{N_2} \abs{g^{N_1,N_2}_{j-1}(x_{j-1})}^4+\frac{C}{N_2}}\\
&\le \abs{g^{N_1,N_2}_0(x)}^4 + C + \frac{C}{N_2} \sum_{j=1}^k  \abs{g^{N_1,N_2}_{j-1}(x)}^4 + \frac{C}{N_2}\sum_{j=1}^k  \abs{g^{N_1,N_2}_{j-1}(x_{j-1})}^4.
\end{aligned}
\end{equation*}
We then take expectation on both sides and get
\begin{equation}\label{expectation_g_1}
\begin{aligned}
\E\paren{\abs{g^{N_1,N_2}_k(x)}^4 }
&\le \E \paren{\abs{g^{N_1,N_2}_0(x)}^4} + C + \frac{C}{N_2} \sum_{j=1}^k  \E \paren{\abs{g^{N_1,N_2}_{j-1}(x)}^4} + \frac{C}{N_2}\sum_{j=1}^k  \E \paren{\abs{g^{N_1,N_2}_{j-1}(x_{j-1})}^4}\\
&\le \E \paren{\abs{g^{N_1,N_2}_0(x)}^4} + C + \frac{C}{N_2} \sum_{j=1}^k  \E \paren{\abs{g^{N_1,N_2}_{j-1}(x)}^4} + \frac{C}{N_2}\sum_{j=1}^k  \sum_{x' \in \CX} \E \paren{\abs{g^{N_1,N_2}_{j-1}(x')}^4},
\end{aligned}
\end{equation}
where the last term in the last inequality holds because $x_j$ are sampled from a fixed data set $\CX$ of size $M$.

Therefore, summing both side of \eqref{expectation_g_1} with respect to $x$ gives
\begin{equation}\label{int_Eg}
\begin{aligned}
\sum_{x\in\CX} \E\paren{\abs{g^{N_1,N_2}_k(x)}^4}
&\le \sum_{x\in \CX}\E \paren{\abs{g^{N_1,N_1}_0(x)}^4} + CM + \frac{C}{N_2} \sum_{j=1}^k \sum_{x\in\CX} \E \paren{\abs{g^{N_1,N_2}_{j-1}(x)}^4}\\
&\quad + \frac{CM}{N_2}\sum_{j=1}^k  \sum_{x'\in \CX}\E \paren{\abs{g^{N_1,N_2}_{j-1}(x')}^4}\\
&\le \sum_{x\in\CX}\E \paren{\abs{g^{N_1,N_2}_0(x)}^4} + C + \frac{C}{N_2} \sum_{j=1}^k \sum_{x\in \CX} \E \paren{\abs{g^{N_1,N_2}_{j-1}(x)}^4}.
\end{aligned}
\end{equation}
Since $(C^i_0,W^{1,j},W^{2,j,i}_0)$ are i.i.d. mean zero random variables, we have
\begin{align*}
\E\paren{\abs{g^N_0(x)}^4} &= \E \left[ \abs{\frac{1}{N_2^{\gamma_2}}\sum_{i=1}^{N_2} C^i_0 \sigma\left(\frac{1}{N_1^{\gamma_1}} \sum_{j=1}^{N_1} W^{2,j,i}_0\sigma(W^{1,j}_0x)\right)}^4\right] \le \frac{C}{N_2^{4\gamma_2}} \sum_{i=1}^{N_2} \E \paren{\abs{C^i_0}^4}\le C.
\end{align*}
Then, by applying the discrete Gr\"onwall lemma to equation \eqref{int_Eg}, for any $0\le k\le \floor{N_2T}$ and $N_2 \in \mathbb{N}$
\[\sum_{x\in\CX} \E\paren{\abs{g^{N_1,N_2}_k(x)}^4 } \le C. \]
The result in the lemma follows.

\end{proof}

Next, using conditional independence of the terms in the series for $M^{N_1,N_2}_t$ and $M^{N_1,N_2}_{f,t}$ as well as the bounds from Lemmas \ref{lemma_parameterbound} and \ref{lemma_g}, we can establish the following $L^2$ bounds for the martingale terms $M^{N_1,N_2}_t$ and $M^{N_1,N_2}_{f,t}$, which implies that  they converge to zero as $N_2 \to \infty$. The proof is similar to that for Lemma 3.1 in \cite{SirignanoSpiliopoulosNN3} and thus it is omitted.
\begin{lemma}\label{LLN:lemma:Mt_bound} For large $N_1,N_2 \in \mathbb{N}$ and some finite constant $C>0$, we have
\begin{equation*}
\begin{aligned}
\E\left[\left(M^{N_1,N_2}_t \right)^2\right] &\le \frac{C}{N_2}, \quad
\E\left[\left(M^{N_1,N_2}_{f,t} \right)^2\right] \le \frac{C}{N_2^{3-2\gamma_2}}.\\
\end{aligned}
\end{equation*}
\end{lemma}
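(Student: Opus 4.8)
The plan is to exploit that each of $M^{N_1,N_2}_t$ and $M^{N_1,N_2}_{f,t}$ is a sum of martingale increments with respect to the discrete filtration $\mathfrak{F}_k$ generated by the initialization together with the samples $(x_0,y_0),\dots,(x_{k-1},y_{k-1})$. With respect to $\mathfrak{F}_k$ the parameters $\theta_k$, and hence $g^{N_1,N_2}_k$, the empirical measure $\tilde{\gamma}^{N_1,N_2}_k$ and every inner product appearing in \eqref{M_1}--\eqref{M_3}, are measurable, while $(x_k,y_k)$ is drawn from $\pi$ independently of $\mathfrak{F}_k$. The crucial observation is that in each of \eqref{M_1}, \eqref{M_2}, \eqref{M_3} the subtracted $\pi$-integral is exactly the $\mathfrak{F}_k$-conditional expectation of the sampled quantity evaluated at $(x_k,y_k)$; hence the step-$k$ summand $\delta_k$ satisfies $\E\paren{\delta_k \mid \mathfrak{F}_k}=0$, so each piece, and their sum, is a discrete-time martingale indexed by $\floor{N_2 t}$.

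First I would record the consequence of orthogonality of martingale increments: for a generic piece written as $\tfrac{1}{N_2^{p}}\sum_{k=0}^{\floor{N_2 t}-1}\delta_k$ with $\E\paren{\delta_k\mid\mathfrak{F}_k}=0$ we have $\E\paren{\abs{\tfrac{1}{N_2^{p}}\sum_k\delta_k}^2}=\tfrac{1}{N_2^{2p}}\sum_k\E\paren{\delta_k^2}$, so the whole estimate reduces to a uniform-in-$(k,N_1,N_2)$ bound $\E\paren{\delta_k^2}\le C$ together with counting the at most $N_2 T$ increments. To obtain $\E\paren{\delta_k^2}\le C$ I would combine three ingredients: (i) the boundedness of $\sigma,\sigma',\sigma''$ from Assumption \ref{assumption}, which controls every factor of the form $\sigma(Z^{2,N_1}),\sigma'(Z^{2,N_1}),\sigma(w^{1,j}x)$; (ii) Lemma \ref{lemma_parameterbound}, which bounds $\abs{C^i_k},\abs{W^{2,j,i}_k},\norm{W^{1,j}_k}$ uniformly and hence bounds each inner product against $\tilde{\gamma}^{N_1,N_2}_k$, these being averages of uniformly bounded quantities; and (iii) Lemma \ref{lemma_g}, which supplies $\E\paren{\abs{g^{N_1,N_2}_k(x)}^4}\le C$ and thereby controls the residual factor $\paren{y_k-g^{N_1,N_2}_k(x_k)}$ in $L^2$ (the labels $y_k$ being bounded since $\CY$ is finite). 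For the pieces carrying the inner average $\tfrac{1}{N_1}\sum_{j=1}^{N_1}$ I would simply use $\abs{\tfrac{1}{N_1}\sum_j\delta_{k,j}}\le\max_j\abs{\delta_{k,j}}$, the latter bounded in $L^2$ uniformly in $N_1$ because the per-$j$ bound is expressed through $j$-independent quantities such as $\abs{y_k-g^{N_1,N_2}_k(x_k)}$.

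Then I would track the prefactors after inserting \eqref{potential_alpha_1} with $\alpha_C=\alpha_{W,1}=\alpha_{W,2}=1$. Each of $M^{N_1,N_2}_{1,t},M^{N_1,N_2}_{2,t},M^{N_1,N_2}_{3,t}$ is then of the form $\tfrac{1}{N_2}\sum_k\delta_k$ (the inner $\tfrac{1}{N_1}\sum_j$ absorbed into $\delta_k$), giving $\E\paren{\abs{M^{N_1,N_2}_{i,t}}^2}\le \tfrac{1}{N_2^2}\cdot N_2 T\cdot C=\tfrac{CT}{N_2}$, and $\paren{a+b+c}^2\le 3\paren{a^2+b^2+c^2}$ yields $\E\paren{\abs{M^{N_1,N_2}_t}^2}\le C/N_2$. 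For $M^{N_1,N_2}_{f,t}$ the prefactor of $M^{N_1,N_2}_{f,1,t}$ equals $\alpha^{N_1,N_2}_C/N_2^{\gamma_2}=N_2^{-(2-\gamma_2)}$, while those of $M^{N_1,N_2}_{f,2,t},M^{N_1,N_2}_{f,3,t}$ equal $N_1^{-(1-\gamma_1)}N_2^{-(2-\gamma_2)}$; in every case the $N_2$-exponent is $2-\gamma_2$ and the $N_1$-factor is at most one for $\gamma_1\le 1$, so $\E\paren{\abs{M^{N_1,N_2}_{f,t}}^2}\le \tfrac{1}{N_2^{2(2-\gamma_2)}}\cdot N_2 T\cdot C=\tfrac{CT}{N_2^{3-2\gamma_2}}$, which is the claimed rate.

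The genuine content, and hence the main thing to be careful about, is the uniform $L^2$ control of the increments $\delta_k$: because the factor $\paren{y_k-g^{N_1,N_2}_k(x_k)}$ is not almost surely bounded, this is exactly where Lemma \ref{lemma_g} is indispensable, and one must verify that all remaining factors are bounded uniformly in $k,N_1,N_2$ so that a single application of Cauchy--Schwarz closes the estimate. Everything else --- the martingale property and the counting of $N_2 T$ increments --- is routine, and indeed once these ingredients are assembled the argument is identical to that of Lemma 3.1 in \cite{SirignanoSpiliopoulosNN3}.
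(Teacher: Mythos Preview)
Your proposal is correct and follows exactly the approach the paper intends: the paper does not give a detailed proof of this lemma but simply states that it is ``similar to that for Lemma 3.1 in \cite{SirignanoSpiliopoulosNN3} and thus it is omitted,'' which is precisely the reference you invoke at the end. Your tracking of the prefactors after inserting \eqref{potential_alpha_1}, the use of martingale-difference orthogonality, and the combination of Assumption \ref{assumption}, Lemma \ref{lemma_parameterbound}, and Lemma \ref{lemma_g} to control $\E\paren{\delta_k^2}$ uniformly are all exactly what is needed and match the standard argument.
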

\subsection{Relative Compactness}\label{sec:relative_compactness}
In this section, we prove the relative compactness of the family $\{\gamma^{N_1,N_2}, h^{N_1,N_2}\}_{N_2\in \mathbb{N}}$ in $D_E([0,T])$, where   $E = \CM(\R^{1+N_1(1+d)}) \times \R^M$, and $N_1\in\mathbb{N}$ is fixed. Using Lemmas \ref{lemma_parameterbound}, \ref{lemma_g}, and Markov's inequality, we get the following lemma which shows compact containment for $\{(\gamma_t^{N_1,N_2}, h_t^{N_1,N_2}), t\in[0,T]\}_{N_2 \in \mathbb{N}}$. The proof is analogous to that for Lemma 3.3 in \cite{SirignanoSpiliopoulosNN4} and thus omitted.
\begin{lemma}\label{LLN:lemma:compact containment}
For each $\eta >0$, there is a compact subset $\mathcal{K}$ of $E$ such that
\[\sup_{N_1 \in \mathbb{N}, t\in[0,T]} \P \left[ \paren{\gamma^{N_1,N_2}_t, h^{N_1,N_2}_t} \notin \mathcal{K}\right] < \eta.\]
\end{lemma}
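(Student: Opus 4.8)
The plan is to exploit the product structure $E = \CM(\R^{1+N_1(1+d)}) \times \R^M$ and to build the compact set as a product $\mathcal{K} = \mathcal{K}_1 \times \mathcal{K}_2$, with $\mathcal{K}_1$ compact in the space of probability measures and $\mathcal{K}_2$ compact (i.e. closed and bounded) in $\R^M$. Since $\P[(\gamma_t^{N_1,N_2}, h_t^{N_1,N_2}) \notin \mathcal{K}_1 \times \mathcal{K}_2] \le \P[\gamma_t^{N_1,N_2} \notin \mathcal{K}_1] + \P[h_t^{N_1,N_2} \notin \mathcal{K}_2]$, it suffices to control each coordinate separately, uniformly in $N_2$ and $t \in [0,T]$. (Here the supremum should be read over $N_2\in\mathbb{N}$, as $N_1$ is fixed throughout this section.)

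First I would handle the measure coordinate using the a priori bound of Lemma \ref{lemma_parameterbound}: the estimate $\abs{C_k^i} + \norm{W_k^{1,j}} + \abs{W_k^{2,j,i}} < K$ holds almost surely for all $k \le \floor{N_2 T}$ and all indices, with $K$ independent of $N_2$ (depending only on $T$, the data, and the compact supports of the initializations from Assumption \ref{assumption}). Hence every atom of $\tilde{\gamma}_k^{N_1,N_2}$, and therefore of $\gamma_t^{N_1,N_2}$, lies almost surely in the fixed compact set $\mathcal{S} = \brac{(c, w^2, w^1) \in \R^{1+N_1(1+d)} : \abs{c} \le K,\ \abs{w^{2,j}} \le K,\ \norm{w^{1,j}} \le K}$. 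Taking $\mathcal{K}_1 = \brac{\mu \in \CM(\R^{1+N_1(1+d)}) : \mu(\mathcal{S}) = 1}$, which is weakly compact by Prokhorov's theorem (all supports sit in one compact set, so the family is tight, and it is weakly closed by the portmanteau theorem since $\mathcal{S}$ is closed), I would conclude $\P[\gamma_t^{N_1,N_2} \notin \mathcal{K}_1] = 0$ for every $t$ and $N_2$.

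Next I would treat the $\R^M$ coordinate by a moment-plus-Markov argument. Because $h_t^{N_1,N_2}(x) = g_{\floor{N_2 t}}^{N_1,N_2}(x)$, Lemma \ref{lemma_g} gives the uniform bound $\sup_{N_2,\, t \in [0,T]} \E\paren{\abs{h_t^{N_1,N_2}(x)}^4} \le C$ for each $x \in \CX$; using $\norm{v}^4 \le M \sum_{x} \abs{v(x)}^4$ and summing over the $M$ data points yields $\sup_{N_2,\, t} \E\paren{\norm{h_t^{N_1,N_2}}^4} \le C M^2 =: C'$. Setting $\mathcal{K}_2 = \brac{v \in \R^M : \norm{v} \le R}$, Markov's inequality gives, uniformly in $N_2$ and $t \in [0,T]$,
\[
\P\left[h_t^{N_1,N_2} \notin \mathcal{K}_2\right] = \P\left[\norm{h_t^{N_1,N_2}} > R\right] \le \frac{\E\paren{\norm{h_t^{N_1,N_2}}^4}}{R^4} \le \frac{C'}{R^4}.
\]
Choosing $R$ so that $C'/R^4 < \eta$ and taking $\mathcal{K} = \mathcal{K}_1 \times \mathcal{K}_2$ then delivers $\sup_{N_2,\, t \in [0,T]} \P[(\gamma_t^{N_1,N_2}, h_t^{N_1,N_2}) \notin \mathcal{K}] < \eta$, as required.

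The only genuinely delicate point is the measure coordinate: one must observe that the parameter bound of Lemma \ref{lemma_parameterbound} is \emph{deterministic} (almost sure) and \emph{uniform} in $k$, so that all empirical measures share a single compact support and Prokhorov's criterion applies, making the containment probability there exactly zero. Everything else is routine, and the argument mirrors Lemma 3.3 of \cite{SirignanoSpiliopoulosNN4}; the uniformity over $t$ and $N_2$ is automatic because both Lemma \ref{lemma_parameterbound} and Lemma \ref{lemma_g} are uniform over $k \le \floor{N_2 T}$.
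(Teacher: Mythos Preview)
Your proposal is correct and follows exactly the approach the paper indicates (Lemmas~\ref{lemma_parameterbound} and~\ref{lemma_g} plus Markov's inequality), with the details filled in properly; the paper itself omits the proof and defers to \cite{SirignanoSpiliopoulosNN4}. Your observation that the supremum should be over $N_2$ rather than $N_1$ is also correct and consistent with the surrounding text, which fixes $N_1$ throughout.
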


We now show the regularity of the process $\gamma^{N_1,N_2}$ in $D_{\CM(\R^{1+N_1(1+d)})}([0,T])$. For $z_1,z_2 \in \R$, define the function $q(z_1,z_2) = \min\{\vert z_1-z_2\vert,1\}$. Let $\CF^{N_1,N_2}_t$ be the $\sigma$-algebra generated by $(C^i_0,W^{2,j,i}_0, W^{1,j}_0)_{i,j}$ and $(x_j,y_j)_{j=0}^{\floor{N_2t}-1}$.
\begin{lemma}\label{LLN:lemma:mu_regularity}
For any $f \in C^2_b(\R^{1+N_1(1+d)})$ and $\delta \in (0,1)$, there is a constant $C<\infty$ such that for $0\le u \le \delta$, $0\le v\le \delta \wedge t$, and $t \in [0,T]$,
\[\E \left[q\paren{\ip{f,\gamma^{N_1,N_2}_{t+u}},\ip{f,\gamma^{N_1,N_2}_{t}}}q\paren{\ip{f,\gamma^{N_1,N_2}_{t}},\ip{f,\gamma^{N_1,N_2}_{t-v}}}\vert \CF^{N_1,N_2}_t\right]\le \frac{C\delta}{N_2^{1-\gamma_2}} + \frac{C}{N_2^{2-\gamma_2}}.\]
\end{lemma}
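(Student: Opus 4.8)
The plan is to verify the Kurtz regularity criterion which, together with the compact containment of Lemma \ref{LLN:lemma:compact containment}, yields the relative compactness of $\{\gamma^{N_1,N_2}\}_{N_2}$ in $D_{\CM(\R^{1+N_1(1+d)})}([0,T])$. Throughout I write $I_u = \ip{f,\gamma^{N_1,N_2}_{t+u}}-\ip{f,\gamma^{N_1,N_2}_{t}}$ for the future increment and $J_v = \ip{f,\gamma^{N_1,N_2}_{t}}-\ip{f,\gamma^{N_1,N_2}_{t-v}}$ for the past increment, and use the elementary bound $q(a,b)\le\abs{a-b}$, so that the conditional expectation to be estimated is controlled by $\E[\abs{I_u}\,\abs{J_v}\mid\CF^{N_1,N_2}_t]$. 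The two structural facts I will lean on are that $J_v$ is $\CF^{N_1,N_2}_t$-measurable and that both $I_u$ and $J_v$ are increments of $\ip{f,\gamma^{N_1,N_2}}$ over time intervals of length at most $\delta$. Keeping the genuine product of the two increments, rather than crudely bounding the past factor by $1$, is what produces the linear-in-$\delta$ martingale contribution and the sharp $N_2^{-(2-\gamma_2)}$ rate.

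First I would read off from the evolution identity (\ref{mu_N1N2_evolution}) that any increment of $\ip{f,\gamma^{N_1,N_2}}$ over an interval $[s_1,s_2]$ splits into a drift term, a martingale increment $M^{N_1,N_2}_{f,s_2}-M^{N_1,N_2}_{f,s_1}$, and an $O(1/N_2)$ remainder. Because of the prefactor $1/N_2^{1-\gamma_2}$ multiplying the three drift integrals in (\ref{mu_N1N2_evolution}), and since $f\in C^2_b$ has bounded derivatives, the parameters are bounded by Lemma \ref{lemma_parameterbound}, and $\E\abs{g^{N_1,N_2}_k}^4$ is uniformly bounded by Lemma \ref{lemma_g}, the drift over $[s_1,s_2]$ is bounded pathwise by $C(s_2-s_1)/N_2^{1-\gamma_2}\le C\delta/N_2^{1-\gamma_2}$. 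For the martingale part I would record two estimates for the increment over $[s_1,s_2]$: a \emph{pathwise} bound obtained from the fact that each summand of $M^{N_1,N_2}_{f}$ is $O(1/N_2)$ for fixed $N_1$ (using $\abs{y_k-g^{N_1,N_2}_k}\le CN_2^{1-\gamma_2}$ from Lemma \ref{lemma_parameterbound} and cancelling it against the learning-rate normalizations), so that summing the $\floor{N_2 s_2}-\floor{N_2 s_1}\le\delta N_2$ relevant terms gives $O(\delta)$; and a \emph{conditional} $L^2$ bound $\E[(M^{N_1,N_2}_{f,s_2}-M^{N_1,N_2}_{f,s_1})^2\mid\CF^{N_1,N_2}_{s_1}]\le C\delta/N_2^{3-2\gamma_2}$, established exactly as in Lemma \ref{LLN:lemma:Mt_bound} but restricted to the relevant summands and using the conditional orthogonality of the martingale differences.

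With these in hand I would assemble the product. Pulling out the $\CF^{N_1,N_2}_t$-measurable factor gives $\E[\abs{I_u}\,\abs{J_v}\mid\CF^{N_1,N_2}_t]=\abs{J_v}\,\E[\abs{I_u}\mid\CF^{N_1,N_2}_t]$. By Jensen's inequality, $(a+b+c)^2\le 3(a^2+b^2+c^2)$, the pathwise drift bound and the conditional $L^2$ martingale bound, $\E[\abs{I_u}\mid\CF^{N_1,N_2}_t]\le C\delta/N_2^{1-\gamma_2}+C\delta^{1/2}/N_2^{(3-2\gamma_2)/2}+C/N_2$, while the pathwise bounds give $\abs{J_v}\le C(\delta+1/N_2)$. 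Multiplying out the six resulting products and using $\delta\le 1$, $N_2\ge 1$, together with the inequalities $2-2\gamma_2\ge 1-\gamma_2$ and $3-2\gamma_2\ge 2-\gamma_2$ (both valid since $\gamma_2<1$), one checks that every term is dominated either by $C\delta/N_2^{1-\gamma_2}$ or by $C/N_2^{2-\gamma_2}$; for instance $\delta\cdot\delta^{1/2}/N_2^{(3-2\gamma_2)/2}=\delta^{3/2}/N_2^{(3-2\gamma_2)/2}\le\delta/N_2^{1-\gamma_2}$ and $(1/N_2)\cdot\delta^{1/2}/N_2^{(3-2\gamma_2)/2}\le 1/N_2^{2-\gamma_2}$. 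Summing the contributions of the three drift integrals and the three martingale pieces $M^{N_1,N_2}_{f,1},M^{N_1,N_2}_{f,2},M^{N_1,N_2}_{f,3}$, each handled identically, yields the claimed estimate.

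The main obstacle is the bookkeeping of the martingale increment: one must simultaneously establish the pathwise $O(1/N_2)$ size of a single martingale difference (which makes $\abs{J_v}$ pathwise bounded by $C(\delta+1/N_2)$, so that the final bound is genuinely deterministic) and the localized conditional $L^2$ bound with the correct linear dependence on the interval length. A secondary point requiring care is that $g^{N_1,N_2}_k$ is only bounded pathwise by $O(N_2^{1-\gamma_2})$, so the cancellation of this growth against the learning-rate normalizations must be tracked term by term; it is the uniform fourth-moment bound of Lemma \ref{lemma_g} that keeps the drift integrals under control and the per-step conditional second moments at the order $N_2^{-(4-2\gamma_2)}$ needed to reproduce the $N_2^{-(3-2\gamma_2)}$ martingale rate. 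Once these estimates are in place, the remaining algebra of collecting the six products into the two displayed terms is routine.
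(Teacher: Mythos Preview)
Your approach is essentially correct and reaches the stated bound, but it is considerably more elaborate than the paper's argument, and one of your motivating claims is mistaken.

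The paper does \emph{not} keep the genuine product of the two increments. It simply uses $q\le 1$ on the past factor and bounds $\E[\,|I_u|\mid\CF^{N_1,N_2}_t]$ directly. Moreover, it does not go through the drift/martingale decomposition in \eqref{mu_N1N2_evolution} at all: it Taylor-expands $f(\theta^i_{\lfloor N_2 t\rfloor})-f(\theta^i_{\lfloor N_2 s\rfloor})$ in the parameters, then feeds in the per-step increment estimates $\E[|C^i_{k+1}-C^i_k|\mid\CF_s]\le C N_2^{-(2-\gamma_2)}$ (and similarly for $W^{1,j},W^{2,j,i}$) coming from the learning-rate scaling and Lemma~\ref{lemma_g}. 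Summing over at most $\delta N_2+1$ steps gives $\E[|I_u|\mid\CF_t]\le C\delta/N_2^{1-\gamma_2}+C/N_2^{2-\gamma_2}$ immediately, which is already the claimed bound. So your assertion that ``crudely bounding the past factor by $1$'' would lose the sharp rate is wrong; the sharpness comes entirely from the $N_2^{-(2-\gamma_2)}$ per-step size, not from any interaction between the two factors.

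A smaller technical slip: the drift in \eqref{mu_N1N2_evolution} is \emph{not} pathwise bounded by $C\delta/N_2^{1-\gamma_2}$, because $|h^{N_1,N_2}_s|$ is only pathwise $O(N_2^{1-\gamma_2})$ (via Lemma~\ref{lemma_parameterbound}), which cancels the $N_2^{-(1-\gamma_2)}$ prefactor and leaves $C\delta$. The $C\delta/N_2^{1-\gamma_2}$ drift estimate holds only in (conditional) expectation, using Lemma~\ref{lemma_g}. This slip does not break your final computation, since you only need the pathwise $C(\delta+1/N_2)$ bound on $|J_v|$ (which survives with the corrected $C\delta$ drift) and the in-expectation bound on $|I_u|$; but you should state clearly which estimates are pathwise and which are in expectation.
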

\begin{proof}For $0\le s< t \le T$,  using a Taylor expansion, we have
\begin{equation}\label{LLN:mu_cont}
\begin{aligned}
&\left\vert \ip{f,\gamma^{N_1,N_2}_t} - \ip{f,\gamma^{N_1,N_2}_s} \right\vert = \left \vert \ip{f,\tilde{\gamma}^{N_1,N_2}_{\floor{N_2t}}} - \ip{f,\tilde{\gamma}^{N_1,N_2}_{\floor{N_2s}}} \right \vert\\
&\quad \le \frac{1}{N_2} \sum_{i=1}^{N_2} \left\vert f(\theta^i_{\floor{N_2t}}) -f(\theta^i_{\floor{N_2s}})\right\vert\\
&\quad\le \frac{1}{N_2} \sum_{i=1}^{N_2}  \abs{\partial_{c}  f(\bar{\theta}^i_{\floor{N_2t}})}\abs{C^i_{\floor{N_2t}} - C^i_{\floor{N_2s}}} + \frac{1}{N_2} \sum_{i=1}^{N_2} \sum_{j=1}^{N_1} \abs{ \partial_{w^{2,j}} f(\bar{\theta}^i_{\floor{N_2t}})} \abs{ W^{2,j,i}_{\floor{N_2t}} - W^{2,j,i}_{\floor{N_2s}}}\\
&\qquad + \frac{1}{N_2} \sum_{i=1}^{N_2} \sum_{j=1}^{N_1} \norm{\nabla_{w^{1,j}} f(\bar{\theta}^i_{\floor{N_2t}})} \norm{ W^{1,j}_{\floor{N_2t}} - W^{1,j}_{\floor{N_2s}}}
\end{aligned}
\end{equation}
for some $\bar{\theta}^i_{\floor{N_2t}}$ in the line segments between $\theta^i_{\floor{N_2s}}$ and $\theta^i_{\floor{N_2t}}$. With $0<t-s \le \delta <1$, by Lemmas \ref{lemma_parameterbound}, \ref{lemma_g}, we have
\begin{align*}
\E \paren{\abs{C^i_{\floor{N_2t}}-C^i_{\floor{N_2s}}} \big\vert \CF^{N_1,N_2}_s} &\le \sum_{k=\floor{N_2s}}^{\floor{N_2t}-1}\E \paren{\abs{ C^i_{k+1}-C^i_{k}} \big\vert \CF^{N_1,N_2}_s}\\
&\le \frac{1}{N_2^{2-\gamma_2}}\sum_{k=\floor{N_2s}}^{\floor{N_2t}-1} C \\
&\le \frac{C\delta}{N_2^{1-\gamma_2}} + \frac{C}{N_2^{2-\gamma_2}}.
\end{align*}
Similar analysis shows
\begin{align*}
\E \paren{\abs{W^{2,j,i}_{\floor{N_2t}}-W^{2,j,i}_{\floor{N_2s}}}\big\vert \CF^{N_1,N_2}_s} &\le \frac{C\delta}{N_1^{1-\gamma_1}N_2^{1-\gamma_2}} + \frac{C}{N_1^{1-\gamma_1}N_2^{2-\gamma_2}},\\
\E \paren{\norm{W^{1,j}_{\floor{N_2t}}-W^{1,j}_{\floor{N_2s}}}\big\vert \CF^{N_1,N_2}_s} &\le \frac{C\delta}{N_1^{1-\gamma_1}N_2^{1-\gamma_2}} + \frac{C}{N_1^{1-\gamma_1}N_2^{2-\gamma_2}}.
\end{align*}

By Lemma \ref{lemma_parameterbound}, $\bar{\theta}^i_{\floor{N_2t}}$ is bounded in expectation for $0<s<t\le T$.
Taking conditional expectation on both sides of \eqref{LLN:mu_cont} and using bounds we derived above yields
\[\E \left[\ip{f,\gamma^{N_1,N_2}_{t}}-\ip{f,\gamma^{N_1,N_2}_{s}}\vert \CF^{N_1,N_2}_s\right]\le \frac{C\delta}{N_2^{1-\gamma_2}} + \frac{C}{N_2^{2-\gamma_2}},\]
for $0<s<t\le T$ with $0<t-s\le \delta <1$, and some unimportant positive constant $C<\infty$. Therefore, the statement of the lemma follows.
\end{proof}

We next establish the regularity of the process $h^{N_1,N_2}$ in $D_{\R^M}([0,T])$ in the following lemma. For the purpose of this lemma, we denote $q(z_1,z_2) = \min\{\norm{ z_1-z_2}_{l^1},1\}$ for $z_1,z_2 \in \R^M$.
\begin{lemma}\label{LLN:lemma:network_regularity}
For any $\delta \in (0,1)$, there is a constant $C<\infty$ such that for $0\le u \le \delta$, $0\le v\le \delta \wedge t$, and $t \in [0,T]$,
\[\E \left[q\paren{h^{N_1,N_2}_{t+u},h^{N_1,N_2}_{t}}q\paren{h^{N_1,N_2}_{t},h^{N_1,N_2}_{t-v}}\vert \CF^{N_1,N_2}_t\right]\le {C\delta}+\frac{C}{N_2}.\]
\end{lemma}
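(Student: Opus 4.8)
The plan is to mimic the proof of Lemma \ref{LLN:lemma:mu_regularity}, exploiting the fact that one of the two factors in the product is $\CF^{N_1,N_2}_t$-measurable. Since $h^{N_1,N_2}_{t}=g^{N_1,N_2}_{\floor{N_2t}}$ and $h^{N_1,N_2}_{t-v}=g^{N_1,N_2}_{\floor{N_2(t-v)}}$ are both determined by the initialized parameters together with the data samples drawn up to step $\floor{N_2t}$, the quantity $q\paren{h^{N_1,N_2}_{t},h^{N_1,N_2}_{t-v}}$ is $\CF^{N_1,N_2}_t$-measurable. Hence I would pull it out of the conditional expectation and bound it by $1$ (using $q\le 1$), reducing the claim to the estimate
\[\E \left[q\paren{h^{N_1,N_2}_{t+u},h^{N_1,N_2}_{t}}\big\vert \CF^{N_1,N_2}_t\right]\le \E \left[\norm{h^{N_1,N_2}_{t+u}-h^{N_1,N_2}_{t}}_{l^1}\big\vert \CF^{N_1,N_2}_t\right]\le C\delta+\frac{C}{N_2}.\]

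Next, I would telescope the increment along the SGD steps, writing for each data point $x\in\CX$
\[h^{N_1,N_2}_{t+u}(x)-h^{N_1,N_2}_{t}(x)=\sum_{k=\floor{N_2t}}^{\floor{N_2(t+u)}-1}\left[g^{N_1,N_2}_{k+1}(x)-g^{N_1,N_2}_{k}(x)\right],\]
and insert the one-step expansion \eqref{g_evolution}. The key bookkeeping step is to substitute the learning rates \eqref{potential_alpha_1}: a direct computation shows that the prefactor of the first term becomes $N_2^{-1}$, while the prefactors of the two inner-layer terms become $(N_1N_2)^{-1}$, so that the sums $\frac{1}{N_1}\sum_{j=1}^{N_1}(\cdots)$ are genuine averages. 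Using Assumption \ref{assumption} (boundedness of $\sigma,\sigma'$), the uniform parameter bounds of Lemma \ref{lemma_parameterbound}, and the moment bound of Lemma \ref{lemma_g} to control $\E[\abs{y_k-g^{N_1,N_2}_k(x_k)}\,\vert\,\CF^{N_1,N_2}_t]$ (recall that $y_k$ comes from a fixed finite dataset and is therefore bounded), each one-step increment satisfies $\E[\abs{g^{N_1,N_2}_{k+1}(x)-g^{N_1,N_2}_{k}(x)}\,\vert\,\CF^{N_1,N_2}_t]\le C/N_2$, while the Taylor remainder contributes only $O(N_2^{-(1+\gamma_2)})$.

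Finally, I would sum over the at most $\floor{N_2(t+u)}-\floor{N_2t}\le N_2\delta+1$ indices $k$ and over the $M$ points of $\CX$. The leading terms produce $M\cdot(N_2\delta+1)\cdot C/N_2 = C\delta+C/N_2$, while the remainder contributes $O\paren{N_2\delta\cdot N_2^{-(1+\gamma_2)}}=O\paren{\delta N_2^{-\gamma_2}}\le C\delta$ since $\gamma_2>0$. Combining this with the reduction of the first paragraph yields the stated bound.

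I expect the only genuinely delicate point to be the uniform control of the conditional expectation of $\abs{y_k-g^{N_1,N_2}_k(x_k)}$ for steps $k\ge\floor{N_2t}$, since these involve data and parameters that are not measurable with respect to $\CF^{N_1,N_2}_t$; this is precisely where the a priori bounds of Lemmas \ref{lemma_parameterbound} and \ref{lemma_g} together with the boundedness of the activation are used. Everything else is the same routine telescoping and learning-rate bookkeeping already carried out for the measure process in Lemma \ref{LLN:lemma:mu_regularity}.
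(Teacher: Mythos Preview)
Your proposal is correct and follows essentially the same route as the paper: reduce to a one-sided conditional increment bound, telescope along SGD steps, and show each step contributes $O(N_2^{-1})$ so that summing over at most $N_2\delta+1$ steps gives $C\delta+C/N_2$. The only cosmetic difference is that the paper re-does the Taylor expansion in terms of the parameter increments $|C^i_{k+1}-C^i_k|$, $|W^{2,j,i}_{k+1}-W^{2,j,i}_k|$, $\|W^{1,j}_{k+1}-W^{1,j}_k\|$ and then invokes the per-step bounds already computed in the proof of Lemma~\ref{LLN:lemma:mu_regularity}, whereas you plug the learning rates \eqref{potential_alpha_1} directly into the one-step formula \eqref{g_evolution}; the two computations are equivalent and yield the same $C/N_2$ per-step estimate.
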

\begin{proof}For $0<s<t\le T$, by the Taylor expansion of the network output $g^{N_1,N_2}_k(x)$, we have

\begin{equation}\label{network_cont}
\begin{aligned}
&\abs{h^{N_1,N_2}_t(x) - h^{N_1,N_2}_s(x)}
 \le \sum_{k=\floor{N_2s}}^{\floor{N_2t}-1}\abs{g^{N_1,N_2}_{k+1}(x) - g^{N_1,N_2}_{k}(x)}\\
& \le \frac{1}{N_2^{\gamma_2}}\sum_{k=\floor{N_2s}}^{\floor{N_2t}-1} \sum_{i=1}^{N_2} \left\vert C^i_{k+1}-C^i_k \right\vert \abs{\sigma\left(\frac{1}{N_1^{\gamma_1}}\sum_{j=1}^{N_1} W^{2,j,i}_{k}\sigma(W^{1,j}_{k}x)\right)} \\
&\quad + \frac{1}{N_1^{\gamma_1}N_2^{\gamma_2}}\sum_{k=\floor{N_2s}}^{\floor{N_2t}-1} \sum_{i=1}^{N_2}\sum_{j=1}^{N_1} \abs{C^i_k \sigma'\left(\frac{1}{N_1^{\gamma_1}}\sum_{j=1}^{N_1} {W}^{2,j,i}_{k}\sigma({W}^{1,j}_{k}x)\right) \sigma({W}^{1,j}_{k}x)\left(W^{2,j,i}_{k+1} - W^{2,j,i}_k \right)}\\
&\quad +\frac{1}{N_1^{\gamma_1}N_2^{\gamma_2}}\sum_{k=\floor{N_2s}}^{\floor{N_2t}-1} \sum_{i=1}^{N_2}\sum_{j=1}^{N_1} \abs{C^i_k\sigma'\left(\frac{1}{N_1^{\gamma_1}}\sum_{j=1}^{N_1} {W}^{2,j,i}_{k}\sigma({W}^{1,j}_{k}x)\right) {W}^{2,j,i}_k \sigma'({W}^{1,j}_{k}x)\left(W^{1,j}_{k+1} - W^{1,j}_k \right)x}\\
& \le \frac{C}{N_2^{\gamma_2}}\sum_{k=\floor{N_2s}}^{\floor{N_2t}-1} \sum_{i=1}^{N_2} \abs{ C^i_{k+1}-C^i_k } + \frac{C}{N_1^{\gamma_1}N_2^{\gamma_2}}\sum_{k=\floor{N_2s}}^{\floor{N_2t}-1} \sum_{i=1}^{N_2}\sum_{j=1}^{N_1} \left(\abs{W^{2,j,i}_{k+1} - W^{2,j,i}_k } + \norm{W^i_{k+1} - W^i_k}\right).\\
\end{aligned}
\end{equation}

By taking conditional expectation on both sides of \eqref{network_cont} and using the bounds we derived in the proof of Lemma \ref{LLN:lemma:mu_regularity},
\begin{align*}
&\E\left[\abs{h^{N_1,N_2}_t(x)-h^{N_1,N_2}_s(x)} \vert \CF^{N_1,N_2}_s\right] \le \frac{C}{N_2^{\gamma_2}}\sum_{k=\floor{N_2s}}^{\floor{N_2t}-1} \sum_{i=1}^{N_2} \E\left[\abs{ C^i_{k+1}-C^i_k } \vert \CF^{N_1,N_2}_s\right]\\
&\qquad + \frac{C}{N_1^{\gamma_1}N_2^{\gamma_2}}\sum_{k=\floor{N_2s}}^{\floor{N_2t}-1} \sum_{i=1}^{N_2}\sum_{j=1}^{N_1} \E\left[\abs{W^{2,j,i}_{k+1} - W^{2,j,i}_k } + \norm{W^i_{k+1} - W^i_k}\vert \CF^{N_1,N_2}_s \right]\\
&\quad \le C\delta + \frac{C}{N_2}.
\end{align*}
Since $x\in \mathcal{X}$ is arbitrary, the bound above implies that
\[\E\left[\norm{h^{N_1,N_2}_t-h^{N_1,N_2}_s}_{l^1} \vert \CF^{N_1,N_2}_s\right] \le C\delta + \frac{C}{N_2}. \]
The statement of the lemma then follows.
\end{proof}

Combining Lemmas \ref{LLN:lemma:compact containment} to \ref{LLN:lemma:network_regularity}, we have the following lemma for the relative compactness of the processes $\{\gamma^{N_1,N_2}, h^{N_1,N_2}\}_{N_2\in\mathbb{N}}$ for fixed $N_1$. The proof is similar to that of Lemma 3.6 in \cite{SirignanoSpiliopoulosNN1}, which is omitted here.
\begin{lemma}\label{LLN:lemma:relative_compact}
The sequence of processes $\{\gamma^{N_1,N_2}, h^{N_1,N_2}\}_{N_2\in\mathbb{N}}$ is relatively compact in $D_E([0,T])$, where   $E = \CM(\R^{1+N_1(1+d)}) \times \R^M$.
\end{lemma}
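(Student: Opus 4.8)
The plan is to invoke a standard relative compactness criterion in the Skorokhod space (of the type in Chapter 3 of Ethier and Kurtz) coordinate-by-coordinate, exploiting that $E=\CM(\R^{1+N_1(1+d)})\times\R^M$ is a product: a family is relatively compact in $D_E([0,T])$ as soon as each coordinate family is relatively compact in its own Skorokhod space, since the natural embedding $D_E([0,T])\hookrightarrow D_{\CM(\R^{1+N_1(1+d)})}([0,T])\times D_{\R^M}([0,T])$ preserves relative compactness. The two ingredients such a criterion requires are (a) compact containment, which is exactly Lemma \ref{LLN:lemma:compact containment}, and (b) a regularity (tightness) estimate controlling the path oscillations, which is supplied by Lemmas \ref{LLN:lemma:mu_regularity} and \ref{LLN:lemma:network_regularity}.

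For the $\R^M$-valued component $h^{N_1,N_2}$ I would apply the criterion coupling compact containment with a bound of the form $\E\left[q\paren{h^{N_1,N_2}_{t+u},h^{N_1,N_2}_{t}}q\paren{h^{N_1,N_2}_{t},h^{N_1,N_2}_{t-v}}\mid \CF^{N_1,N_2}_t\right]\le \rho(\delta,N_2)$ for $0\le u,v\le\delta$, where $q(z_1,z_2)=\min\{\norm{z_1-z_2}_{l^1},1\}$. Lemma \ref{LLN:lemma:network_regularity} gives precisely $\rho(\delta,N_2)=C\delta+C/N_2$, so that $\lim_{\delta\to 0}\limsup_{N_2\to\infty}\rho(\delta,N_2)=0$; together with Lemma \ref{LLN:lemma:compact containment}, this yields relative compactness of $\{h^{N_1,N_2}\}_{N_2}$ in $D_{\R^M}([0,T])$.

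For the measure-valued component $\gamma^{N_1,N_2}$ I would instead use Jakubowski's criterion for relative compactness in $D_{\CM(\R^{1+N_1(1+d)})}([0,T])$: it suffices to verify compact containment (again Lemma \ref{LLN:lemma:compact containment}) together with relative compactness of the real-valued projections $\{\ip{f,\gamma^{N_1,N_2}}\}_{N_2}$ in $D_{\R}([0,T])$ for every $f$ in a convergence-determining class, here $f\in C^2_b(\R^{1+N_1(1+d)})$. For each fixed such $f$, Lemma \ref{LLN:lemma:mu_regularity} provides the companion oscillation bound $\frac{C\delta}{N_2^{1-\gamma_2}}+\frac{C}{N_2^{2-\gamma_2}}$ on the conditional product of $q$'s; since $\gamma_2\in(1/2,1)$ both terms vanish as $N_2\to\infty$, so the iterated limit is again zero. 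Combined with compact containment, this gives relative compactness of each scalar process $\ip{f,\gamma^{N_1,N_2}}$, and hence, through Jakubowski's theorem, of $\gamma^{N_1,N_2}$ itself.

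The main obstacle is the measure-valued coordinate: one must reduce relative compactness on the infinite-dimensional space $D_{\CM(\R^{1+N_1(1+d)})}([0,T])$ to the scalar estimates of Lemma \ref{LLN:lemma:mu_regularity}, which is exactly what Jakubowski's theorem accomplishes, provided the test functions form a convergence-determining family and the oscillation control is uniform enough over the relevant $f$. The remaining work is routine bookkeeping, chaining together the a priori bounds of Lemmas \ref{lemma_parameterbound} and \ref{lemma_g} with the regularity estimates of Lemmas \ref{LLN:lemma:mu_regularity} and \ref{LLN:lemma:network_regularity}, and then transferring relative compactness of the two coordinate families to the pair via the product embedding above. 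This completes the argument along the lines of Lemma 3.6 in \cite{SirignanoSpiliopoulosNN1}.
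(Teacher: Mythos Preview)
Your proposal is correct and follows essentially the same approach as the paper, which itself omits the proof and simply points to combining Lemmas \ref{LLN:lemma:compact containment}--\ref{LLN:lemma:network_regularity} via the standard argument of Lemma 3.6 in \cite{SirignanoSpiliopoulosNN1}. The only minor slip is in your phrasing about the product embedding: relative compactness does not transfer \emph{from} the product $D_{E_1}\times D_{E_2}$ \emph{back to} $D_{E_1\times E_2}$ in general, but this is moot here since the Ethier--Kurtz criterion (compact containment for the pair plus coordinatewise oscillation control) is verified directly in $D_E([0,T])$, exactly as you outline.
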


\subsection{Identification of the Limit}\label{sec::Identification of the Limit}
In this section, we show that for fixed $N_1$ and as $N_2 \to \infty$, the process $(\gamma^{N_1,N_2}_t, h^{N_1,N_2}_t)$ converges in distribution in the space $D_E([0,T])$ to $(\gamma^{N_1}_t,h^{N_1}_t)$, which satisfies the evolution equation
\begin{equation}\label{h_N1_evolution}
\begin{aligned}
h^{N_1}_t(x) &= h^{N_1}_0(x) +  \int^t_0 \int_{\CX \times \CY} \paren{y-h^{N_1}_s(x')} A^{N_1}_{x,x'} \pi(dx',dy) ds,
\end{aligned}
\end{equation}
%
where $\gamma^{N_1}_0$ is given by \eqref{gamma^N1_0}, and if $\gamma_2=1/2$, $h^{N_1}_0(x) = \mathcal{G}^{N_1}(x)$, where $\mathcal{G}^{N_1}$ is Gaussian, and if $\gamma_2>1/2$, $h^{N_1}_0(x) = 0$.

Let $\pi^{N_1,N_2}\in \CM(D_{E}([0,T])$ be the probability measure corresponding to $(\gamma^{N_1,N_2}, h^{N_1,N_2})$. Relative compactness implies that there is a subsequence $\pi^{N_{1},N_{2_k}}$ that converges weakly. We must show that any limit point $\pi^{N_1}$ of a convergent subsequence $\pi^{N_{1},N_{2_k}}$ is a Dirac measure concentrated on $(\gamma^{N_1}, h^{N_1}) \in D_{E}([0,T])$, which satisfies equation \eqref{h_N1_evolution} and  $\langle f,\gamma^{N_1}_t\rangle = \langle f, \gamma^{N_1}_0\rangle$ for any test function $f\in C^2_b(\R^{1+N_1(1+d)})$.

We define the map $F(\gamma^{N_1},h^{N_1}):D_{E}([0,T]) \to \R_+$ for each $t \in [0,T]$, $f\in C^2_b(\R^{1+N_1(1+d)})$, $g_1,\ldots,g_p \in C_b(\R^{1+N_1(1+d)})$, $q_{1,1}, \ldots, q_{1,p}, q_{2,1}, \ldots q_{N_2,p} \in C_b(\R^M)$, $m_1,\ldots, m_p \in C_b(\R^M)$ and $0 \le s_1 < \cdots < s_p \le t$:
\begin{equation}\label{LNN:identify_limit_eq}
\begin{aligned}
&F(\gamma,h)= \left\vert \paren{\ip{f,\gamma^{N_1}_t}-\ip{f,\gamma^{N_1}_0}} \times \ip{g_1,\gamma^{N_1}_{s_1}}\times \cdots \times \ip{g_p,\gamma^{N_1}_{s_p}} \right\vert\\
& \quad + \sum_{x\in \mathcal{X}} \left\vert \left(h^{N_1}_t(x) - h^{N_1}_0(x) - \int_0^t \int_{\CX\times \CY} \left(y-h^{N_1}_s(x')\right)\ip{\sigma\left(Z^{2,N_1}(x')\right)\sigma\left(Z^{2,N_1}(x)\right), \gamma^{N_1}_s}\pi(dx',dy)ds \right.\right.\\
& \quad  - \frac{1}{N_1}\sum_{j=1}^{N_1} \int_0^t \int_{\CX\times\CY}\left(y-h^{N_1}_s(x')\right)\ip{(c)^2 \sigma'\left(Z^{2,N_1}(x')\right)\sigma'\left(Z^{2,N_1}(x)\right)\sigma(w^{1,j}x')\sigma(w^{1,j}x), \gamma^{N_1}_s} \pi(dx',dy)ds\\
& \quad  -\frac{1}{N_1} \sum_{j=1}^{N_1} \int_0^t \int_{\CX\times \CY} \left(y-h^{N_1}_s(x')\right)xx'\ip{cw^{2,j}\sigma'(w^{1,j}x)\sigma'\left(Z^{2,N_1}(x)\right), \gamma^{N_1}_s}\\
& \qquad \qquad \qquad  \left. \cdot \ip{cw^{2,j}\sigma'(w^{1,j}x')\sigma'(Z^{2,N_1}(x')),\gamma^{N_1}_s}\pi(dx',dy)ds \times m_1(h^{N_1}_{s_1}) \times \cdots \times m_p(h^{N_1}_{s_p}) \right \vert.
\end{aligned}
\end{equation}

By equations \eqref{mu_N1N2_evolution}, \eqref{h_N1N2_evolution}, Lemma \ref{LLN:lemma:Mt_bound} and the Cauchy-Schwarz inequality, we have
\begin{align*}
\E_{\pi^{N_1,N_2}} \left[F(\gamma^{N_1},h^{N_1})\right] &= \E\left[F(\gamma^{N_1,N_2},h^{N_1,N_2})\right]\\
&= \E\left[\abs{O\paren{N_2^{-(1-\gamma_2)}} + M^{N_1,N_2}_{f,t} + O\paren{N_2^{-1}} \times \prod_{i=1}^{p}\ip{g_i,\gamma^{N_1,N_2}_{s_i}}}\right]\\
&\quad + \sum_{x \in \CX} \E\left[\abs{\paren{M^{N_1,N_2}_t + O(N_2^{-\gamma_2})} \times \prod_{i=1}^p m_i(h^{N_1,N_2}_{s_i})} \right]\\
&\le C \paren{\E\left[\abs{M^{N_1,N_2}_{f,t}}^2\right]^{\frac{1}{2}}+ \E\left[\abs{M^{N_1,N_2}_{t}}^2\right]^{\frac{1}{2}}} + O\paren{N_2^{-(1-\gamma_2)}}\\
&\le C\paren{\frac{1}{N_2^{1-\gamma_2}}}.
\end{align*}
Therefore, $\lim_{N_2 \to \infty} \E_{\pi^{N_1,N_2}} \left[F(\gamma^{N_1},h^{N_1})\right] = 0$. Since $F(\cdot)$ is continuous and $F(\gamma^{N_1,N_2},h^{N_1,N_2})$ is uniformly bounded, we have $\E_{\pi^{N_1}} \left[F(\gamma^{N_1},h^{N_1})\right] = 0$. Hence, $(\gamma^{N_1},h^{N_1})$ satisfies the evolution equation \eqref{h_N1_evolution} and  $\langle f,\gamma^{N_1}_t\rangle = \langle f, \gamma^{N_1}_0\rangle$ for any test function $f\in C^2_b(\R^{1+N_1(1+d)})$.

Since equation \eqref{h_N1_evolution} is a finite-dimensional, linear equation, it has a unique solution. By Prokhorov's theorem, $\pi^{N_1,N_2}$ converges weakly to $\pi^{N_1}$, which is the distribution of $(\gamma^{N_1},h^{N_1})$, the unique solution of \eqref{h_N1_evolution}. Hence, for fixed $N_1$, $(\gamma^{N_1,N_2},h^{N_1,N_2})$ converges in distribution to $(\gamma^{N_1},h^{N_1})$ as $N_2 \to \infty$.

\section{Proof of Theorem \ref{CLT:theorem}}\label{sec::convergence_of_K}
In this section, we look at the convergence of the first order fluctuation process of the network output for fixed $N_1$ and study its limiting behavior as $N_2 \to \infty$. In particular, consider
\[K^{N_1,N_2}_t = N
_2^{\varphi} (h^{N_1,N_2}_t - h^{N_1}_t),\]
where $\varphi$ is dependent on the scaling parameters $\gamma_1, \gamma_2$. We also denote $\eta^{N_1,N_2}_t = N_2^{\varphi}(\gamma^{N_1,N_2}_t - \gamma^{N_1}_0)$.

For $t\in [0,T]$ and $x\in \CX$, by equations \eqref{h_N1N2_evolution} and \eqref{h_N1_evolution}, the evolution of $K^{N_1,N_2}_t(x)$ can be written as
\begin{equation*}
\begin{aligned}
K^{N_1,N_2}_t(x) &= N_2^{\varphi} \left[ \left( h^{N_1,N_2}_t -h^{N_1,N_2}_0 \right) + h^{N_1,N_2}_0-h^{N_1}_t\right]\\
&= N_2^{\varphi} \left\lbrace \int_0^t \int_{\CX\times \CY} \left(y-h^{N_1,N_2}_s(x')\right)\ip{B^1_{x,x'}(\theta), \gamma^{N_1,N_2}_s}\pi(dx',dy)ds\right.\\
&\qquad +\frac{1}{N_1}\sum_{j=1}^{N_1} \int_0^t \int_{\CX\times\CY}\left(y-h^{N_1,N_2}_s(x')\right)\ip{B^{2,j}_{x,x'}(\theta), \gamma^{N_1,N_2}_s} \pi(dx',dy)ds\\
&\qquad \left. +\frac{1}{N_1} \sum_{j=1}^{N_1} \int_0^t \int_{\CX\times \CY} \left(y-h^{N_1,N_2}_s(x')\right)xx'\ip{B^{3,j}_{x}(\theta), \gamma^{N_1,N_2}_s}\ip{B^{3,j}_{x'}(\theta),\gamma^{N_1,N_2}_s}\pi(dx',dy)ds\right\rbrace\\
&\quad -N_2^{\varphi} \left\lbrace\int_0^t \int_{\CX\times \CY} \left(y-h^{N_1}_s(x')\right)\ip{B^1_{x,x'}(\theta), \gamma^{N_1}_0}\pi(dx',dy)ds\right.\\
&\qquad +\frac{1}{N_1}\sum_{j=1}^{N_1} \int_0^t \int_{\CX\times\CY}\left(y-h^{N_1}_s(x')\right)\ip{B^{2,j}_{x,x'}(\theta), \gamma^{N_1}_0} \pi(dx',dy)ds\\
&\qquad \left.+\frac{1}{N_1} \sum_{j=1}^{N_1} \int_0^t \int_{\CX\times \CY} \left(y-h^{N_1}_s(x')\right)xx'\ip{B^{3,j}_{x}(\theta), \gamma^{N_1}_0} \ip{B^{3,j}_{x'}(\theta),\gamma^{N_1}_0}\pi(dx',dy)ds\right\rbrace\\
&\quad + K^{N_1,N_2}_0 + N_2^{\varphi}M^{N_1,N_2}_{t} +O(N_2^{-\gamma_2+\varphi})\\
\end{aligned}
\end{equation*}
By rearranging terms, we obtain
\begin{equation}\label{K_evolution}
\begin{aligned}
&K^{N_1,N_2}_t(x)=\\
&=\int_0^t \int_{\CX\times \CY} \left(y-h^{N_1}_s(x')\right)\ip{B^1_{x,x'}(\theta), \eta^{N_1,N_2}_s}\pi(dx',dy)ds - \int_0^t \int_{\CX\times \CY} K^{N_1,N_2}_s(x')\ip{B^1_{x,x'}(\theta), \gamma^{N_1}_0}\pi(dx',dy)ds\\
&\quad + \frac{1}{N_1}\sum_{j=1}^{N_1} \left\lbrace\int_0^t \int_{\CX\times \CY} \left(y-h^{N_1}_s(x')\right)\ip{B^{2,j}_{x,x'}(\theta), \eta^{N_1,N_2}_s}\pi(dx',dy)ds\right.\\
&\qquad \left. - \int_0^t \int_{\CX\times \CY} K^{N_1,N_2}_s(x')\ip{B^{2,j}_{x,x'}(\theta), \gamma^{N_1}_0}\pi(dx',dy)ds\right\rbrace\\
&\quad  + \frac{1}{N_1}\sum_{j=1}^{N_1}\left\lbrace\int_0^t \int_{\CX\times \CY} \left(y-h^{N_1}_s(x')\right)xx'\ip{B^{3,j}_{x}(\theta), \eta^{N_1,N_2}_s}\ip{B^{3,j}_{x'}(\theta),\gamma^{N_1}_0}\pi(dx',dy)ds\right.\\
&\qquad + \int_0^t \int_{\CX\times \CY} \left(y-h^{N_1}_s(x')\right)xx'\ip{B^{3,j}_{x}(\theta), \gamma^{N_1}_0}\ip{B^{3,j}_{x'}(\theta),\eta^{N_1,N_2}_s}\pi(dx',dy)ds\\
&\qquad \left.- \int_0^t \int_{\CX\times \CY} K^{N_1,N_2}_s(x')xx'\ip{B^{3,j}_{x}(\theta), \gamma^{N_1}_0}\ip{B^{3,j}_{x'}(\theta),\gamma^{N_1}_0}\pi(dx',dy)ds\right\rbrace\\
&\quad+\Gamma^{N_1,N_2}_t(x)+ K^{N_1,N_2}_0 + N_2^{\varphi}M^{N_1,N_2}_{t} +O(N_2^{-\gamma_2+\varphi})
\end{aligned}
\end{equation}
where
$\Gamma^{N_1,N_2}_t(x) =\Gamma^{N_1,N_2}_{1,t}(x) + \Gamma^{N_1,N_2}_{2,t}(x)+ \Gamma^{N_1,N_2}_{3,t}(x)$, and
\begin{equation*}
\begin{aligned}
\Gamma^{N_1,N_2}_{1,t}(x) &= -\frac{1}{N_2^{\varphi}}\int_0^t \int_{\CX\times \CY} K^{N_1,N_2}_s(x')\ip{B^1_{x,x'}(\theta), \eta^{N_1,N_2}_s}\pi(dx',dy)ds,\\
\Gamma^{N_1,N_2}_{2,t}(x) &= -\frac{1}{N_1N_2^{\varphi}}\sum_{j=1}^{N_1} \int_0^t \int_{\CX\times \CY} K^{N_1,N_2}_s(x')\ip{B^{2,j}_{x,x'}(\theta), \eta^{N_1,N_2}_s}\pi(dx',dy)ds\\
\Gamma^{N_1,N_2}_{3,t}(x) &= -\frac{1}{N_1N_2^{\varphi}} \sum_{j=1}^{N_1} \left\lbrace\int_0^t \int_{\CX\times \CY} K^{N_1,N_2}_s(x')xx'\ip{B^{3,j}_{x}(\theta), \eta^{N_1,N_2}_s}\ip{B^{3,j}_{x'}(\theta),\gamma^{N_1,N_2}_s}\pi(dx',dy)ds \right.\\
&\qquad \qquad \qquad + \int_0^t \int_{\CX\times \CY} K^{N_1,N_2}_s(x')xx'\ip{B^{3,j}_{x}(\theta), \gamma^{N_1}_0}\ip{B^{3,j}_{x'}(\theta),\eta^{N_1,N_2}_s}\pi(dx',dy)ds\\
&\qquad \qquad \qquad \left. - \int_0^t \int_{\CX\times \CY} \left(y-h^{N_1}_s(x')\right)xx'\ip{B^{3,j}_{x}(\theta), \eta^{N_1,N_2}_s}\ip{B^{3,j}_{x'}(\theta),\eta^{N_1,N_2}_s}\pi(dx',dy)ds \right\rbrace.
\end{aligned}
\end{equation*}
Recall that when $\gamma_2>1/2$, $h^{N_1}_0(x) = 0$. Therefore,
\[K^{N_1,N_2}_0(x) = N_2^{\varphi}h^{N_1,N_2}_0(x) = N_2^{-(\gamma_2-\frac{1}{2}-\varphi)} \ip{c\sigma(Z^{2,N_1}(x)), \sqrt{N_2} \tilde{\gamma}_0^{N_1,N_2}},\]
which, by the central limit theorem, converges to the Gaussian random variable $\mathcal{G}^{N_1}(x)$  if $\varphi = \gamma_2-(1/2)$ and to 0 if $\varphi < \gamma_2-(1/2)$.

We also need to consider the evolution of $l^{N_1,N_2}_t(f) = \ip{f,\eta^{N_1,N_2}_t}$ for a fixed function $f \in C^2_b(\R^{1+N_1(1+d)})$. By \eqref{measure_evolution}, for $N_2$ large enough, we have
\begin{equation}\label{eta_evolution}
\begin{aligned}
&\ip{f, \eta^{N_1,N_2}_t} - \ip{f, \eta^{N_1,N_2}_0} = N_2^{\varphi} \left(\ip{f,\gamma^{N_1,N_2}_{t}} - \ip{ f,\gamma^{N_1,N_2}_{0}}\right)\\
&=\frac{1}{N_2^{2-\gamma_2-\varphi}}\sum_{k=0}^{\floor{N_2t}-1} \left(y_k-g_k^{N_1,N_2}(x_k)\right) \ip{\partial_{c}f(\theta)  \sigma(Z^{2,N_1}(x_k)),\tilde{\gamma}^{N_1,N_2}_k}\\
&\quad +\frac{1}{N_1^{1-\gamma_1}N_2^{2-\gamma_2-\varphi}}\sum_{k=0}^{\floor{N_2t}-1}\left(y_k-g_k^{N_1,N_2}(x_k)\right) \ip{c\sigma'(Z^{2,N_1}(x_k))\sigma(w^1x_k)\cdot \partial_{w^2}f(\theta) ,\tilde{\gamma}^{N_1,N_2}_k}\\
&\quad + \frac{1}{N_1^{1-\gamma_1}N_2^{2-\gamma_2-\varphi}}\sum_{k=0}^{\floor{N_2t}-1}\left(y_k-g_k^{N_1,N_2}(x_k)\right)\ip{\ip{c\sigma'(Z^{2,N_1}(x_k))\sigma'(w^1x_k)w^{2},\tilde{\gamma}_k^{N_1,N_2}}\cdot  \nabla_{w^1}f(\theta)x_k,\tilde{\gamma}^{N_1,N_2}_k}\\
&\quad + O\left(\frac{1}{N_2^{1-\varphi}}\right).\\
\end{aligned}
\end{equation}
The evolution equations \eqref{K_evolution} and \eqref{eta_evolution} suggest that we consider the convergence of $K^{N_1,N_2}_t$ and $l^{N_1,N_2}_t(f)$ for $\varphi \le \min\{1-\gamma_2, \gamma_2-(1/2)\}$. If $\gamma_2 < \frac{3}{4}$, we can take $\varphi = \gamma_2 -\frac{1}{2} < 1-\gamma_2$ in order to obtain a limiting Gaussian process for $K^{N_1,N_2}_t$. If $\gamma_2 \ge \frac{3}{4}$, the limiting process for $K^{N_1,N_2}_t$ is Gaussian only if $\gamma_2 = \frac{3}{4}$ and $\varphi = 1-\gamma_2 = \gamma_2 - \frac{1}{2}$.

\subsection{Convergence of $l^{N_1,N_2}_t(f) = \ip{f,\eta^{N_1,N_2}_t}$}\label{CLT:sec:eta_properties}
In this section, we establish the convergence of the process $l^{N_1,N_2}_t(f)$ as $N_2 \to \infty$ in $D_{\R}([0,T])$ for a fixed function $f\in C^2_b(\R^{1+N_1(1+d)})$.

Following the same idea as in Section \ref{sec::network_convergence}, we first show that relative compactness holds.
The following lemma implies compact containment of the process $\{l^{N_1,N_2}_t(f)\}$.
\begin{lemma}\label{CLT:lemma:eta_compact_contatinment}
For any fixed $f\in C^2_b(\R^{1+N_1(1+d)})$, when $\varphi \le 1-\gamma_2$, there exist a constant $C<\infty$, such that
\[\sup_{N_2 \in \mathbb{N}, 0\le t\le T} \E \left[\abs{\ip{f,\eta^{N_1,N_2}_t}}^4 \right] < C.\]
Furthermore, for any $\epsilon>0$, there exist a compact subset $U \subset \R$ such that
\[\sup_{N_2\in \mathbb{N}, 0\le t\le T} \P\paren{\ip{f,\eta^{N_1,N_2}_t} \notin U} < \epsilon. \]
\end{lemma}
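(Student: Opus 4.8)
The plan is to prove the uniform fourth-moment bound first, since the compact-containment statement follows immediately from it by Markov's inequality: taking $U=[-R,R]$, one has
\[
\P\big(|\ip{f,\eta^{N_1,N_2}_t}|>R\big)\le R^{-4}\,\E\big[|\ip{f,\eta^{N_1,N_2}_t}|^4\big]\le C R^{-4},
\]
which is smaller than $\epsilon$ once $R$ is chosen large, uniformly in $N_2\in\mathbb{N}$ and $t\in[0,T]$. So the whole argument reduces to controlling $\E[|l^{N_1,N_2}_t(f)|^4]$ with $l^{N_1,N_2}_t(f)=\ip{f,\eta^{N_1,N_2}_t}$. I would obtain this by scaling the measure-evolution identity (\ref{mu_N1N2_evolution}) by $N_2^{\varphi}$, which writes
\[
l^{N_1,N_2}_t(f)=l^{N_1,N_2}_0(f)+\mathrm{Drift}^{N_1,N_2}_t(f)+N_2^{\varphi}M^{N_1,N_2}_{f,t}+O\big(N_2^{\varphi-1}\big),
\]
where $\mathrm{Drift}^{N_1,N_2}_t(f)=N_2^{\gamma_2+\varphi-1}\int_0^t(\cdots)\,ds$ collects the three integral terms of (\ref{mu_N1N2_evolution}). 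I then bound each of the three genuine contributions in $L^4$ and combine them by Minkowski's inequality; the error term is harmless since $\varphi\le 1-\gamma_2<\tfrac12$ gives $N_2^{\varphi-1}\to 0$.

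For the initial term, $l^{N_1,N_2}_0(f)=N_2^{\varphi}\big(\ip{f,\tilde\gamma^{N_1,N_2}_0}-\ip{f,\gamma^{N_1}_0}\big)$. Conditionally on the shared first-layer weights $\{W^{1,j}_0\}_j$, the summands $f(\theta^i_0)$ are i.i.d.\ with conditional mean $\ip{f,\gamma^{N_1}_0}$ --- this is exactly the product structure of $\gamma^{N_1}_0$ in (\ref{gamma^N1_0}). Since $f\in C^2_b$ is bounded, the standard fourth-moment estimate for an average of $N_2$ conditionally-i.i.d.\ centered bounded variables gives $\E[|\ip{f,\tilde\gamma^{N_1,N_2}_0}-\ip{f,\gamma^{N_1}_0}|^4]\le C N_2^{-2}$, hence $\E[|l^{N_1,N_2}_0(f)|^4]\le C N_2^{4\varphi-2}$, bounded because $\varphi<\tfrac12$; this is the $L^4$ refinement of the central limit statement (\ref{limit_gaussian}). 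For the drift, each integrand in (\ref{mu_N1N2_evolution}) is $(y-h^{N_1,N_2}_s(x'))$ times an average against $\gamma^{N_1,N_2}_s$ of quantities built from $\sigma,\sigma'$ and the parameters. By boundedness of $\sigma,\sigma'$ and of the derivatives of $f$, together with the a.s.\ parameter bounds of Lemma \ref{lemma_parameterbound}, these averages are bounded, while $h^{N_1,N_2}_s$ is controlled in $L^4$ by Lemma \ref{lemma_g}; thus each integrand has bounded $L^4$ norm and Minkowski yields $\|\mathrm{Drift}^{N_1,N_2}_t(f)\|_{L^4}\le C\,T\,N_2^{\gamma_2+\varphi-1}$, which is bounded precisely because $\varphi\le 1-\gamma_2$.

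The main obstacle is the martingale term $N_2^{\varphi}M^{N_1,N_2}_{f,t}$, since Lemma \ref{LLN:lemma:Mt_bound} supplies only an $L^2$ bound and must be upgraded to $L^4$. Writing $M^{N_1,N_2}_{f,t}=\sum_{k=0}^{\floor{N_2t}-1}\Delta_k$ as a sum of martingale differences, the learning-rate choices (\ref{potential_alpha_1}) give $|\Delta_k|\le C N_2^{-(2-\gamma_2)}\big(|y_k|+|g^{N_1,N_2}_k(x_k)|\big)$ (uniformly over the three martingale pieces, with the $N_1$-factors absorbed into constants). The delicate point is that $g^{N_1,N_2}_k$ cannot be bounded pathwise --- the a.s.\ estimate only gives $O(N_2^{1-\gamma_2})$ --- so one must exploit orthogonality of the increments rather than a crude sup bound. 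Conditioning kills every product with a unique maximal index, leaving
\[
\E\big[(M^{N_1,N_2}_{f,t})^4\big]=\sum_k\E[\Delta_k^4]+6\sum_{j<k}\E[\Delta_j^2\Delta_k^2].
\]
Using Lemma \ref{lemma_g} to get $\E[|\Delta_k|^4]\le C N_2^{-4(2-\gamma_2)}$ and Cauchy--Schwarz on the cross terms, the double sum dominates and yields $\E[(M^{N_1,N_2}_{f,t})^4]\le C N_2^{-(6-4\gamma_2)}$. Consequently $\E[|N_2^{\varphi}M^{N_1,N_2}_{f,t}|^4]\le C N_2^{4\varphi+4\gamma_2-6}\le C N_2^{-2}$, again invoking $\varphi\le 1-\gamma_2$. (Equivalently, the Burkholder--Davis--Gundy inequality applied to the predictable quadratic variation gives the same bound.) Collecting the three estimates via Minkowski's inequality produces $\sup_{N_2,\,t\le T}\E[|l^{N_1,N_2}_t(f)|^4]\le C$, and the compact-containment assertion then follows from the Markov bound displayed above.
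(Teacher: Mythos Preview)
Your proof is correct, but it takes a more elaborate route than the paper. The paper does not separate drift and martingale at all: it uses the raw discrete evolution \eqref{eta_evolution} (the sum over $k$ before the Riemann/centering step) and bounds
\[
|\ip{f,\eta^{N_1,N_2}_t}|\le |\ip{f,\eta^{N_1,N_2}_0}|+\frac{C}{N_2^{2-\gamma_2-\varphi}}\sum_{k=0}^{\floor{N_2t}-1}|y_k-g^{N_1,N_2}_k(x_k)|+\frac{C}{N_2^{1-\varphi}}
\]
pathwise, using only boundedness of $\sigma,\sigma',\partial f$ and Lemma~\ref{lemma_parameterbound}. The sum contributes at most $N_2$ terms, each with $L^4$-norm controlled by Lemma~\ref{lemma_g}, so after raising to the fourth power the middle piece is $O(N_2^{-4(1-\gamma_2-\varphi)})$, bounded exactly when $\varphi\le 1-\gamma_2$. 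This bypasses entirely the $L^4$ martingale estimate you carry out via orthogonality/BDG, which is the bulk of your work. What your approach buys is a sharper scale on the fluctuation piece ($N_2^{-2}$ rather than $N_2^{0}$), and your treatment of the initial term via conditional i.i.d.\ structure is actually more careful than the paper's: the paper writes $\E\big[|\ip{f,\gamma^{N_1,N_2}_0-\gamma^{N_1}_0}|^4\big]=\frac{1}{N_2^4}\sum_i\E[\cdots]\le C N_2^{-3}$, which drops the cross terms $\E[X_i^2X_j^2]$; your $C N_2^{-2}$ is the correct bound, though either suffices since $4\varphi<2$.
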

\begin{proof}
By equation \eqref{eta_evolution}, we have
\begin{align*}
\abs{\ip{f,\eta^{N_1,N_2}_t}}  &\le \abs{\ip{f,\eta^{N_1,N_2}_0}} + \frac{C}{N_2^{2- \gamma_2 -\varphi}}\sum_{k=0}^{\floor{N_2t}-1} \abs{y_k - g^{N_1,N_2}_k(x_k)} + \frac{C}{N_2^{1-\varphi}}\\
&\le  \abs{\ip{f,\eta^{N_1,N_2}_0}} + \frac{C}{N_2^{1- \gamma_2 -\varphi}} \abs{g^{N_1,N_2}_k(x_k)} + \frac{C}{N_2^{1- \gamma_2 -\varphi}}.
\end{align*}

Raising to the forth power on both sides, by H\"older's inequality, we have
\begin{equation}\label{eta_compact}
\begin{aligned}
\abs{\ip{f,\eta^{N_1,N_2}_t}}^4  &\le 9 \paren{\abs{\ip{f,\eta^{N_1,N_2}_0}}^4 + \frac{C}{N_2^{4(1- \gamma_2 -\varphi)}} \abs{g^{N_1,N_2}_k(x_k)}^4 + \frac{C}{N_2^{4(1- \gamma_2 -\varphi)}}}.
\end{aligned}
\end{equation}

Since $\ip{f,\eta^{N_1,N_2}_0} = N_2^{\varphi} \ip{f,  \gamma^{N_1,N_2}_0 - \gamma^{N_1}_0}$, and by independence,
\begin{equation*}
\begin{aligned}
\E\left[\abs{\ip{f,  \gamma^{N_1,N_2}_0 - \gamma^{N_1}_0}}^4\right] &= \E\left[\abs{\frac{1}{N_2} \sum_{i=1}^{N_2} f(\theta^{i}_0)-\ip{f,\gamma^{N_1}_0}}^4 \right]\\
&= \frac{1}{N_2^4}\sum_{i=1}^{N_2} \E\left[\abs{f(\theta^{i}_0)-\ip{f,\gamma^{N_1}_0}}^4\right]<\frac{C}{N_2^3},
\end{aligned}
\end{equation*}
we have
$\E\left[\abs{\ip{f,\eta^{N_1,N_2}_0}}^4\right] \le C(N_2^{4\varphi-3})$.
Taking expectation on both sides of equation \eqref{eta_compact}, by Lemma \ref{lemma_g} and $4\varphi-3<0$, we have
\[\sup_{N_2 \in \mathbb{N}, 0\le t\le T} \E \left[\abs{\ip{f,\eta^{N_1,N_2}_t}}^4 \right] < C,\]
for some $C<\infty$. By Markov's inequality, the compact containment condition of $\ip{f,\eta^{N_1,N_2}_t}$ follows.
\end{proof}

Next, we establish the regularity of $\ip{f,\eta^{N_1,N_2}_t}$. For the following lemma, we define the function $q(z_1,z_2) = \min\{\abs{z_1 - z_2},1\}$, where $z_1,z_2 \in \R$.
\begin{lemma}\label{CLT:eta_regularity}
For $f\in C^2_b(\R^{1+N_1(1+d)})$, $\delta \in (0,1)$, there exist a constant $C <\infty$ such that for any $0\le u \le \delta$, $0\le v\le \delta \wedge t$, and $t \in [0,T]$,
\[\E \left[q\paren{\ip{f,\eta^{N_1,N_2}_{t+u}},\ip{f,\eta^{N_1,N_2}_{t}}}q\paren{\ip{f,\eta^{N_1,N_2}_{t}},\ip{f,\eta^{N_1,N_2}_{t-v}}}\vert \CF^{N_1,N_2}_t\right]\le \frac{C\delta}{N_2^{1-\gamma-\varphi}}+\frac{C}{N_2^{2-\gamma-\varphi}},\]
where $\varphi \le  1-\gamma_2$.
\end{lemma}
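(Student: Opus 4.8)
The plan is to mirror the proof of Lemma \ref{LLN:lemma:mu_regularity}, the only new feature being the amplifying factor $N_2^{\varphi}$ built into $\eta^{N_1,N_2}$. First I would exploit the filtration structure: since $t-v\le t$, both $\ip{f,\eta^{N_1,N_2}_{t}}$ and $\ip{f,\eta^{N_1,N_2}_{t-v}}$ are $\CF^{N_1,N_2}_{t}$-measurable, so the backward factor $q\paren{\ip{f,\eta^{N_1,N_2}_{t}},\ip{f,\eta^{N_1,N_2}_{t-v}}}$ pulls out of the conditional expectation and is bounded by $1$ (recall $q\le1$). Using also $q(z_1,z_2)\le\abs{z_1-z_2}$, it therefore suffices to show
\[
\E\left[\,q\paren{\ip{f,\eta^{N_1,N_2}_{t+u}},\ip{f,\eta^{N_1,N_2}_{t}}}\,\big|\,\CF^{N_1,N_2}_t\right]\le \E\left[\,\abs{\ip{f,\eta^{N_1,N_2}_{t+u}}-\ip{f,\eta^{N_1,N_2}_{t}}}\,\big|\,\CF^{N_1,N_2}_t\right]\le \frac{C\delta}{N_2^{1-\gamma_2-\varphi}}+\frac{C}{N_2^{2-\gamma_2-\varphi}}.
\]

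Next I would write the increment of $\ip{f,\eta^{N_1,N_2}}$ over $[t,t+u]$ using the evolution \eqref{eta_evolution}, i.e.\ the same three drift sums plus Taylor remainder, but with $k$ running only from $\floor{N_2 t}$ to $\floor{N_2(t+u)}-1$. Taking absolute values inside, the boundedness of $\sigma$ and $\sigma'$ together with the a priori parameter bounds of Lemma \ref{lemma_parameterbound} bound every inner product $\ip{\cdots,\tilde\gamma^{N_1,N_2}_k}$ by a constant, so each summand is at most $\frac{C}{N_2^{2-\gamma_2-\varphi}}\abs{y_k-g^{N_1,N_2}_k(x_k)}$ (the two terms carrying $N_1^{-(1-\gamma_1)}$ only help, since $N_1^{1-\gamma_1}\ge1$). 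Taking conditional expectation and invoking the uniform fourth-moment bound of Lemma \ref{lemma_g} — whose Gr\"onwall recursion equally yields the corresponding $\CF^{N_1,N_2}_t$-conditional bound — gives $\E[\abs{y_k-g^{N_1,N_2}_k(x_k)}\,|\,\CF^{N_1,N_2}_t]\le C$ uniformly in $k$.

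Since $u\le\delta$, the number of indices is $\floor{N_2(t+u)}-\floor{N_2 t}\le N_2\delta+1$, so summing the per-step bound yields $\frac{C}{N_2^{2-\gamma_2-\varphi}}(N_2\delta+1)=\frac{C\delta}{N_2^{1-\gamma_2-\varphi}}+\frac{C}{N_2^{2-\gamma_2-\varphi}}$, while the Taylor remainder over this interval is of order $N_2^{\varphi}(N_2\delta+1)N_2^{-2}=O(\delta N_2^{-(1-\varphi)})$, which is dominated by the main term because $\gamma_2>0$. This is exactly the claimed estimate. The computation is routine bookkeeping, so I do not anticipate a genuine obstacle; the only point requiring attention — and the sole place where the hypothesis $\varphi\le1-\gamma_2$ is used — is that the exponent $1-\gamma_2-\varphi$ must remain nonnegative, so that $\frac{C\delta}{N_2^{1-\gamma_2-\varphi}}\le C\delta\to0$ as $\delta\to0$ and $\frac{C}{N_2^{2-\gamma_2-\varphi}}\to0$ as $N_2\to\infty$. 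This is precisely what makes the bound usable, together with Lemma \ref{CLT:lemma:eta_compact_contatinment}, in the tightness argument of Section \ref{CLT:sec:eta_properties}.
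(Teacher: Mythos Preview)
Your proposal is correct and follows essentially the same approach as the paper: both reduce the $q$-product to the conditional $L^1$ increment $\E\bigl[\abs{\ip{f,\eta^{N_1,N_2}_t}-\ip{f,\eta^{N_1,N_2}_s}}\,\big|\,\CF^{N_1,N_2}_s\bigr]$ and then invoke the regularity of $\gamma^{N_1,N_2}$ established in Lemma~\ref{LLN:lemma:mu_regularity}. The only difference is cosmetic: the paper observes in one line that $\ip{f,\eta^{N_1,N_2}_t}-\ip{f,\eta^{N_1,N_2}_s}=N_2^{\varphi}\bigl(\ip{f,\gamma^{N_1,N_2}_t}-\ip{f,\gamma^{N_1,N_2}_s}\bigr)$ and then cites Lemma~\ref{LLN:lemma:mu_regularity} directly, whereas you re-run the bookkeeping from \eqref{eta_evolution} with the $N_2^{\varphi}$ factor built in.
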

\begin{proof}Recall that $\ip{f,\gamma^{N_1}_t} = \ip{f,\gamma^{N_1}_0}$ for any $t \in [0,T]$ and $f\in C^2_b(\R^{1+N_1(1+d)})$. For any $0 \le s < t \le T$,
by the regularity result for $\gamma^{N_1,N_2}_t$ proved in Lemma \ref{LLN:lemma:mu_regularity}, we have
\begin{align*}
\E \left[\abs{\ip{f,\eta^{N_1,N_2}_t} - \ip{f,\eta^{N_1,N_2}_s}} \vert \CF^{N_1,N_2}_s\right] = N_2^{\varphi}\E \left[\abs{\ip{f,\gamma^{N_1,N_2}_t} - \ip{f,\gamma^{N_1,N_2}_s}}\big\vert \CF^{N_1,N_2}_s\right] \le \frac{C\delta}{N_2^{1-\gamma_2-\varphi}}+\frac{C}{N_2^{2-\gamma_2-\varphi}},
\end{align*}
for $0 <s <t \le T$ with $0 < t-s \le \delta <1$. If $\varphi \le 1-\gamma_2$, both terms in the last inequality above are bounded as $N_2$ grows. The statement of the lemma follows.
\end{proof}
Using Lemmas \ref{CLT:lemma:eta_compact_contatinment} and \ref{CLT:eta_regularity}, we are now ready to present the proof of the convergence of $l^{N_1,N_2}_t(f)$. We first show the case when $\varphi < 1-\gamma_2$.
For fixed $f\in C^2_b(\R^{1+N_1(1+d)})$, when $\varphi \le 1-\gamma_2 $, the family of processes $\{\langle f,\eta^{N_1,N_2}_t\rangle, t\in[0,T]\}_{N_2\in \mathbb{N}}$ is relatively compact in $D_{\R}([0,T])$ due to Lemmas \ref{CLT:lemma:eta_compact_contatinment}, \ref{CLT:eta_regularity}, and Theorem 8.6 of Chapter 3 of \cite{EthierAndKurtz}.
For simplicity, we denote $l^{N_1,N_2}_t = \langle {f,\eta^{N_1,N_2}_t}\rangle$.  Let $\pi^{N_1,N_2} \in \mathcal{M}\paren{D_{\R}([0,T]}$ be the probability measure corresponding to $l^{N_1,N_2}_t$. Relative compactness implies that there is a subsequence $\pi^{N_1,N_{2_k}}$ that converges weakly to a limit point $\pi^{N_1}$. We show that $\pi^{N_1}$ is a Dirac measure concentrated on zero when $\varphi < 1-\gamma_2$.

For $t \in [0,T]$, $g_1, \ldots, g_p \in C_b(\R)$, and $0 \le s_1< \cdots < s_p \le t$, define a map $F(l):D_{\R}([0,T]) \to \R_+$ as
\[F(l) = \abs{\paren{l_t - 0} \times g_1(l_{s_1})\times \cdots \times g_p(l_{s_p})}.\]
By equation \eqref{eta_evolution} and the fact that $\ip{f,\eta^{N_1,N_2}_0} = N_2^{\varphi - \frac{1}{2}} \ip{f, \sqrt{N_2} \paren{\gamma^{N_1,N_2}_0 - \gamma^{N_1}_0}} = O_p(N_2^{\varphi -\frac{1}{2}})$, we have
\begin{align*}
\E_{\pi^{N_1,N_2}} \left[F(l)\right] &= \E\left[ F(l^{N_1,N_2}) \right] \\
&= \E \left[ \abs{\paren{\ip{f,\eta^{N_1,N_2}_t} - \ip{f,\eta^{N_1,N_2}_0} + \ip{f,\eta^{N_1,N_2}_0}} \times \prod_{i=1}^p g_i(l^{N_1,N_2}_{s_i})}\right]\\
&\le \E \left[ \abs{\paren{\ip{f,\eta^{N_1,N_2}_t} - \ip{f,\eta^{N_1,N_2}_0}} \times \prod_{i=1}^p g_i(l^{N_1,N_2}_{s_i})}\right] + \E \left[ \abs{ \ip{f,\eta^{N_1,N_2}_0} \times \prod_{i=1}^p g_i(l^{N_1,N_2}_{s_i})}\right]\\
&\le C \paren{\frac{1}{N_2^{1-\gamma_2 - \varphi}} + \frac{1}{N_2^{1-\varphi}} + \frac{1}{N_2^{\frac{1}{2}-\varphi}}}.
\end{align*}
Since $F(\cdot)$ is continuous and $F(l^{N_1,N_2})$ is uniformly bounded, we have
\[\lim_{N_2\to \infty} \E_{\pi^{N_1,N_2}} \left[F(l) \right] = \E_{\pi^{N_1}} \left[F(l)\right] = 0,\]
where $\pi^{N_1}$ is the Dirac measure concentrated on 0.
We have shown that the limit point $\pi$ of any convergence subsequence, which exists due to relative compactness, is the Dirac measure concentrated on 0. Therefore, by Prokhorov's theorem, $\pi^{N_1,N_2}$ weakly converges to 0. As $N_2\to \infty$, $l^{N_1,N_2}(f) \xrightarrow{d} 0$ and thus the limit is in probability. This concludes the proof for case 1: $\varphi < 1-\gamma_2$.

The proof for case 2: $\varphi = 1-\gamma_2$ is more subtle and is given in different steps below. We see that the evolution of $l^{N_1,N_2}_t(f)$  becomes
\begin{equation*}
\begin{aligned}
&\ip{f, \eta^{N_1,N_2}_t} - \ip{f, \eta^{N_1,N_2}_0} \\
&=\frac{1}{N_2}\sum_{k=0}^{\floor{N_2t}-1} \int_{\CX\times \CY} \left(y-g_k^{N_1,N_2}(x')\right) \ip{\partial_{c}f(\theta)  \sigma(Z^{2,N_1}(x')),\tilde{\gamma}^{N_1,N_2}_k} \pi(dx',dy)\\
&\quad +\frac{1}{N_1^{1-\gamma_1}N_2}\sum_{k=0}^{\floor{N_2t}-1}\int_{\CX\times \CY}\left(y-g_k^{N_1,N_2}(x')\right) \ip{c\sigma'(Z^{2,N_1}(x'))\sigma(w^1x')\cdot \partial_{w^2}f(\theta) ,\tilde{\gamma}^{N_1,N_2}_k}\pi(dx',dy)\\
&\quad + \frac{1}{N_1^{1-\gamma_1}N_2}\sum_{k=0}^{\floor{N_2t}-1}\int_{\CX\times \CY}\left(y-g_k^{N_1,N_2}(x')\right)\ip{\ip{c\sigma'(Z^{2,N_1}(x'))\sigma'(w^1x')w^{2},\tilde{\gamma}_k^{N_1,N_2}}\cdot  \nabla_{w^1}f(\theta)x',\tilde{\gamma}^{N_1,N_2}_k}\pi(dx',dy)\\
&\quad + M^{N_1,N_2}_{\eta,1,t} + M^{N_1,N_2}_{\eta,2,t} +  M^{N_1,N_2}_{\eta,3,t}+O\left(\frac{1}{N_2^{\gamma_2}}\right).\\
\end{aligned}
\end{equation*}
where
\begin{align*}
M^{N_1,N_2}_{\eta,1,t} &=  \frac{1}{N_2}\left \lbrace \sum_{k=0}^{\floor{N_2t}-1}\left(y_k-g_k^{N_1,N_2}(x_k)\right) \ip{\partial_{c}f(\theta)  \sigma(Z^{2,N_1}(x_k)),\tilde{\gamma}^{N_1,N_2}_k}\right.\\
&\qquad \quad \left. - \int_{\CX\times \CY} \left(y-g_k^{N_1,N_2}(x')\right) \ip{\partial_{c}f(\theta)  \sigma(Z^{2,N_1}(x')),\tilde{\gamma}^{N_1,N_2}_k} \pi(dx',dy) \right \rbrace, \\
M^{N_1,N_2}_{\eta,2,t} &=  \frac{1}{N_1^{1-\gamma_1}N_2} \left \lbrace \sum_{k=0}^{\floor{N_2t}-1} \left(y_k-g_k^{N_1,N_2}(x_k)\right) \ip{c\sigma'(Z^{2,N_1}(x_k))\sigma(w^1x_k)\cdot \partial_{w^2}f(\theta) ,\tilde{\gamma}^{N_1,N_2}_k}\right.\\
&\qquad \qquad \left. -\int_{\CX\times \CY}\left(y-g_k^{N_1,N_2}(x')\right) \ip{c\sigma'(Z^{2,N_1}(x'))\sigma(w^1x')\cdot \partial_{w^2}f(\theta) ,\tilde{\gamma}^{N_1,N_2}_k}\pi(dx',dy)  \right \rbrace,\\
M^{N_1,N_2}_{\eta,3,t} &=  \frac{1}{N_1^{1-\gamma_1}N_2} \left \lbrace \sum_{k=0}^{\floor{N_2t}-1} \left(y_k-g_k^{N_1,N_2}(x_k)\right)\ip{\ip{c\sigma'(Z^{2,N_1}(x_k))\sigma'(w^1x_k)w^{2},\tilde{\gamma}_k^{N_1,N_2}}\cdot  \nabla_{w^1}f(\theta)x_k,\tilde{\gamma}^{N_1,N_2}_k}\right.\\
&\qquad \qquad \left. -\int_{\CX\times \CY}\left(y-g_k^{N_1,N_2}(x')\right)\ip{\ip{c\sigma'(Z^{2,N_1}(x'))\sigma'(w^1x')w^{2},\tilde{\gamma}_k^{N_1,N_2}}\cdot  \nabla_{w^1}f(\theta)x',\tilde{\gamma}^{N_1,N_2}_k}\pi(dx',dy) \right \rbrace.
\end{align*}

As $N_2$ grows, we can rewrite this equation in terms of Riemann integrals and scaled measure $\gamma^{N_1,N_2}_t$,
\begin{equation}\label{CLT:eta_evolution2}
\begin{aligned}
&\ip{f, \eta^{N_1,N_2}_t} - \ip{f, \eta^{N_1,N_2}_0} \\
&=\int_0^t \int_{\CX\times \CY} \left(y-h_s^{N_1,N_2}(x')\right) \ip{\partial_{c}f(\theta)  \sigma(Z^{2,N_1}(x')),{\gamma}^{N_1,N_2}_s} \pi(dx',dy)ds\\
&\quad +\frac{1}{N_1^{1-\gamma_1}}\int_0^t\int_{\CX\times \CY}\left(y-h_s^{N_1,N_2}(x')\right) \ip{c\sigma'(Z^{2,N_1}(x'))\sigma(w^1x')\cdot \partial_{w^2}f(\theta) ,{\gamma}^{N_1,N_2}_s}\pi(dx',dy)ds\\
&\quad + \frac{1}{N_1^{1-\gamma_1}}\int_0^t\int_{\CX\times \CY}\left(y-h_s^{N_1,N_2}(x')\right)\ip{\ip{c\sigma'(Z^{2,N_1}(x'))\sigma'(w^1x')w^{2},{\gamma}_s^{N_1,N_2}}\cdot  \nabla_{w^1}f(\theta)x',{\gamma}^{N_1,N_2}_s}\pi(dx',dy)ds\\
&\quad + M^{N_1,N_2}_{\eta,1,t} + M^{N_1,N_2}_{\eta,2,t} +  M^{N_1,N_2}_{\eta,3,t}+O\left(\frac{1}{N_2^{\gamma_2}}\right).\\
\end{aligned}
\end{equation}

Fir any fixed $f \in C^2_b(\R^{1+N_1(1+d)})$, similar analysis as in Lemma 3.1 in \cite{SirignanoSpiliopoulosNN1}, we have the following bound for terms $ M^{N_1,N_2}_{\eta,i,t}, i=1,2,3$.
\begin{lemma}\label{CLT:lemma:martingale2}
For any $N \in \mathbb{N}$, there is a constant $C<\infty$ such that
\begin{align*}
\E\left[\sup_{t\in [0,T]} \left(\abs{M^{N_1,N_2}_{\eta,1,t}}^2 + \abs{M^{N_1,N_2}_{\eta,2,t}}^2 + \abs{M^{N_1,N_2}_{\eta,3,t}}^2\right) \right] \le \frac{C}{N_2}.
\end{align*}
\end{lemma}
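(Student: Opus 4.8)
The plan is to exploit the fact that each of the three processes $M^{N_1,N_2}_{\eta,i,t}$, $i=1,2,3$, is a discrete-time martingale in the index $k=\floor{N_2 t}$ with respect to the filtration $\CF^{N_1,N_2}_k$, and then to control its terminal $L^2$ norm through the orthogonality of its increments. First I would write each process as $M^{N_1,N_2}_{\eta,i,t}=\frac{\rho_i}{N_2}\sum_{k=0}^{\floor{N_2 t}-1}D^{(i)}_k$, where $\rho_1=1$ and $\rho_2=\rho_3=N_1^{-(1-\gamma_1)}$, and where $D^{(i)}_k$ is exactly the difference between a functional evaluated at the freshly sampled data point $(x_k,y_k)$ and its conditional expectation $\int_{\CX\times\CY}(\cdots)\pi(dx',dy)$. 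Since $(x_k,y_k)$ is sampled from $\pi$ independently of $\CF^{N_1,N_2}_k$, this gives $\E[D^{(i)}_k\mid\CF^{N_1,N_2}_k]=0$, so each $M^{N_1,N_2}_{\eta,i,t}$ is a martingale with orthogonal increments.

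Next I would apply Doob's $L^2$ maximal inequality. Because $t\mapsto M^{N_1,N_2}_{\eta,i,t}$ is piecewise constant with jumps only at the points $t=k/N_2$, the supremum over $t\in[0,T]$ coincides with the discrete maximum over $k\le\floor{N_2 T}$, so Doob yields $\E\!\left[\sup_{t\in[0,T]}\abs{M^{N_1,N_2}_{\eta,i,t}}^2\right]\le 4\,\E\!\left[\abs{M^{N_1,N_2}_{\eta,i,T}}^2\right]$. By the orthogonality of the martingale increments, the terminal second moment factors as
\[
\E\!\left[\abs{M^{N_1,N_2}_{\eta,i,T}}^2\right]=\frac{\rho_i^2}{N_2^2}\sum_{k=0}^{\floor{N_2 T}-1}\E\!\left[(D^{(i)}_k)^2\right].
\]

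The crucial remaining step is a uniform-in-$k$ bound $\sup_k \E[(D^{(i)}_k)^2]\le C$. Here I would observe that in each $D^{(i)}_k$ every factor other than the residual $(y_k-g^{N_1,N_2}_k(x_k))$ is bounded: the activation $\sigma$ and its derivative $\sigma'$ are bounded by Assumption \ref{assumption}; the parameters $c,w^{1,j},w^{2,j}$ appearing inside the brackets are uniformly bounded by Lemma \ref{lemma_parameterbound}; the test function $f$ and its first derivatives are bounded since $f\in C^2_b(\R^{1+N_1(1+d)})$; and the inputs $x,x'$ range over the finite fixed dataset $\CX$. Consequently the bracketed inner products (and their $\pi$-averages) are bounded by a constant, and after using $(a-b)^2\le 2a^2+2b^2$ to separate the sampled term from its conditional mean one obtains $(D^{(i)}_k)^2\le C\,(1+\abs{g^{N_1,N_2}_k(x_k)}^2)$. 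Taking expectations and invoking the moment bound $\sup_{k\le\floor{N_2 T}}\E[\abs{g^{N_1,N_2}_k(x)}^4]<\infty$ from Lemma \ref{lemma_g} (which a fortiori controls the second moment) gives $\E[(D^{(i)}_k)^2]\le C$ uniformly in $k$. Summing the $\floor{N_2 T}$ terms and dividing by $N_2^2$ produces $\E[\abs{M^{N_1,N_2}_{\eta,i,T}}^2]\le CT/N_2$, and combining the three bounds through $\abs{a+b+c}^2\le 3(a^2+b^2+c^2)$ finishes the argument.

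I expect the main obstacle to be the unboundedness of the residual $(y_k-g^{N_1,N_2}_k(x_k))$: one cannot bound the increments pathwise, so the argument genuinely relies on the $L^4$ control of the network output from Lemma \ref{lemma_g} together with the martingale orthogonality to recover the gain of one factor of $N_2$ in the scaling. A secondary bookkeeping point is the handling of the $N_1^{-(1-\gamma_1)}$ prefactors in $M^{N_1,N_2}_{\eta,2,t}$ and $M^{N_1,N_2}_{\eta,3,t}$, which are harmless constants for fixed $N_1$ and do not affect the final $1/N_2$ rate.
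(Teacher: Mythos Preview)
Your proposal is correct and follows exactly the standard martingale argument the paper has in mind: the paper does not spell out a proof for this lemma but simply points to ``similar analysis as in Lemma~3.1 in \cite{SirignanoSpiliopoulosNN1}'', which is precisely the Doob-plus-orthogonality computation you describe, with the increment moments controlled via the boundedness of $\sigma,\sigma',f,\nabla f$, Lemma~\ref{lemma_parameterbound}, and the $L^4$ bound on $g^{N_1,N_2}_k$ from Lemma~\ref{lemma_g}. One cosmetic remark: the final inequality $|a+b+c|^2\le 3(a^2+b^2+c^2)$ is not needed, since the statement concerns $\sup_t\sum_i|M_{\eta,i,t}|^2$ rather than $\sup_t|\sum_i M_{\eta,i,t}|^2$; you can simply use $\E\bigl[\sup_t\sum_i|M_{\eta,i,t}|^2\bigr]\le\sum_i\E\bigl[\sup_t|M_{\eta,i,t}|^2\bigr]$.
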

From equation \eqref{CLT:eta_evolution2}, we see that the evolution of $l^{N_1,N_2}_t(f)$ involves the evolution of $\gamma^{N_1,N_2}_t$ and $h^{N_1,N_2}_t$. In the next lemma, we prove the convergence of the processes $(\gamma^{N_1,N_2}_t, h^{N_1,N_2}_t,l^{N_1,N_2}_t(f))$ in distribution in the space $D_{E'}([0,T])$, where $E' = \CM(\R^{1+N_1(1+d)}) \times \R^M \times \R$.
The convergence of $l^{N_1,N_2}_t(f)$ case 2: $\varphi = 1-\gamma$ then follows from Lemma \ref{CLT:lemma:lt_1-gamma}.
\begin{lemma}\label{CLT:lemma:lt_1-gamma}
For any fixed $f\in C^2_b(\R^{1+N_1(1+d)})$, if $\varphi = 1-\gamma_2$, the processes $(\gamma^{N_1,N_2}_t, h^{N_1,N_2}_t,l^{N_1,N_2}_t(f))$ converges in distribution in $D_{E'}([0,T])$ to $(\gamma^{N_1}_0, h^{N_1}_t,l^{N_1}_t(f))$, where $h^{N_1}_t$ satisfies equation \eqref{h_N1_evolution} and $l^{N_1}_t(f)$ is given by

\begin{equation}\label{CLT:evolution_l}
\begin{aligned}
l^{N_1}_{t}(f)&=\int_0^t \int_{\CX\times \CY} \left(y-h_s^{N_1}(x')\right) \ip{\partial_{c}f(\theta)  \sigma(Z^{2,N_1}(x')),{\gamma}^{N_1}_0} \pi(dx',dy)ds\\
&\quad +\frac{1}{N_1^{1-\gamma_1}}\int_0^t\int_{\CX\times \CY}\left(y-h_s^{N_1}(x')\right) \ip{c\sigma'(Z^{2,N_1}(x'))\sigma(w^1x')\cdot \partial_{w^2}f(\theta) ,{\gamma}^{N_1}_0}\pi(dx',dy)ds\\
&\quad + \frac{1}{N_1^{1-\gamma_1}}\int_0^t\int_{\CX\times \CY}\left(y-h_s^{N_1}(x')\right)\ip{\ip{c\sigma'(Z^{2,N_1}(x'))\sigma'(w^1x')w^{2},{\gamma}_0^{N_1}}\cdot  \nabla_{w^1}f(\theta)x',{\gamma}^{N_1}_0}\pi(dx',dy)ds\\
\end{aligned}
\end{equation}
\end{lemma}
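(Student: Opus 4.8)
The plan is to reproduce, for the enlarged triple $(\gamma^{N_1,N_2}_t,h^{N_1,N_2}_t,l^{N_1,N_2}_t(f))$ taking values in $E'=\CM(\R^{1+N_1(1+d)})\times\R^M\times\R$, the relative-compactness-plus-identification scheme already used in the proof of Theorem \ref{LLN:theorem}. First I would establish relative compactness of $\{(\gamma^{N_1,N_2},h^{N_1,N_2},l^{N_1,N_2}(f))\}_{N_2\in\mathbb{N}}$ in $D_{E'}([0,T])$. Relative compactness of the first two coordinates is Lemma \ref{LLN:lemma:relative_compact}; relative compactness of the third follows from the compact containment of Lemma \ref{CLT:lemma:eta_compact_contatinment} and the regularity estimate of Lemma \ref{CLT:eta_regularity} via Theorem 8.6 in Chapter 3 of \cite{EthierAndKurtz}. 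Relative compactness of the product then holds coordinatewise.

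Second, I would identify the limit with a single continuous test functional $F:D_{E'}([0,T])\to\R_+$ encoding all three limiting relations at once: that $\ip{f,\gamma^{N_1}_t}=\ip{f,\gamma^{N_1}_0}$, that $h^{N_1}_t$ solves \eqref{h_N1_evolution}, and that $l^{N_1}_t(f)$ solves \eqref{CLT:evolution_l}. Concretely $F$ augments the functional \eqref{LNN:identify_limit_eq} by a third block $\abs{(l_t-\int_0^t(\cdots)\,ds)\times n_1(l_{s_1})\times\cdots\times n_p(l_{s_p})}$, where the integrand is the right-hand side of \eqref{CLT:eta_evolution2} stripped of its martingale and remainder terms and $n_1,\dots,n_p\in C_b(\R)$. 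Evaluating $F$ along the prelimit process and using the evolution identities \eqref{mu_N1N2_evolution}, \eqref{h_N1N2_evolution} and \eqref{CLT:eta_evolution2}, each block reduces to a product of uniformly bounded test functions with martingale increments $M^{N_1,N_2}_{f,t}$, $M^{N_1,N_2}_t$, $M^{N_1,N_2}_{\eta,i,t}$ plus explicit $N_2$-remainders.

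Third, the Cauchy--Schwarz inequality together with the $L^2$ martingale bounds of Lemma \ref{LLN:lemma:Mt_bound} and Lemma \ref{CLT:lemma:martingale2}, and the observation that the initial term is negligible because $\ip{f,\eta^{N_1,N_2}_0}=O_p(N_2^{\varphi-1/2})$ with $\varphi=1-\gamma_2<1/2$, yield $\E_{\pi^{N_1,N_2}}[F]\le C\,(N_2^{-(1-\gamma_2)}+N_2^{-(\gamma_2-1/2)}+N_2^{-1/2})\to0$. Since $F$ is continuous and $F(\gamma^{N_1,N_2},h^{N_1,N_2},l^{N_1,N_2})$ is uniformly bounded, passing to the limit along any weakly convergent subsequence gives $\E_{\pi^{N_1}}[F]=0$, so every limit point is supported on triples satisfying the three equations above. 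I would close the argument by uniqueness: the limiting system is linear with a triangular structure, the measure coordinate being frozen at $\gamma^{N_1}_0$, which forces the unique solution $h^{N_1}_t$ of the finite-dimensional linear equation \eqref{h_N1_evolution} and then the unique $l^{N_1}_t(f)$ given explicitly by \eqref{CLT:evolution_l}; Prokhorov's theorem then upgrades subsequential convergence to convergence of the full sequence.

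The step I expect to be the main obstacle is verifying continuity of $F$ on $D_{E'}([0,T])$, in particular that the coupled functionals $s\mapsto\int_{\CX\times\CY}(y-h^{N_1}_s(x'))\ip{B^1_{x,x'}(\theta),\gamma^{N_1}_s}\pi(dx',dy)$, and their analogues built from $B^{2,j}$ and $B^{3,j}$, are jointly continuous in the Skorokhod topology. Here I would rely on the boundedness of $\sigma,\sigma',\sigma''$ from Assumption \ref{assumption} and the finiteness of the data set to bound the integrands uniformly, and use the standard device of choosing the evaluation times $s_1<\cdots<s_p$ to be continuity points of the limit (which, being continuous, poses no restriction); the remaining estimates are routine given the martingale bounds already established.
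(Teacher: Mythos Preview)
Your proposal is correct and follows essentially the same route as the paper: coordinatewise relative compactness via Lemma \ref{LLN:lemma:relative_compact} together with Lemmas \ref{CLT:lemma:eta_compact_contatinment} and \ref{CLT:eta_regularity}, identification of the limit through a single test functional that augments \eqref{LNN:identify_limit_eq} by a block encoding \eqref{CLT:eta_evolution2}, and the same martingale $L^2$ bounds (Lemmas \ref{LLN:lemma:Mt_bound} and \ref{CLT:lemma:martingale2}) plus $\ip{f,\eta^{N_1,N_2}_0}=O_p(N_2^{\varphi-1/2})$ to drive $\E_{\pi^{N_1,N_2}}[F]\to0$, followed by uniqueness and Prokhorov. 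The paper's bound is written as $C(N_2^{-(1-\gamma_2)}+N_2^{-(1/2-\varphi)})$, which is exactly your $C(N_2^{-(1-\gamma_2)}+N_2^{-(\gamma_2-1/2)})$ since $\varphi=1-\gamma_2$.
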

\begin{proof}By Lemmas \ref{CLT:lemma:eta_compact_contatinment} and \ref{CLT:eta_regularity}, $\{l^{N_1,N_2}(f)\}_{N_2\in\mathbb{N}}$ is relatively compact in $D_{\R}([0,T])$. By Lemma \ref{LLN:lemma:relative_compact}, $\{\gamma^{N_1,N_2}, h^{N_1,N_2}\}_{N_2 \in \mathbb{N}}$ is relatively compact in $D_E([0,T])$, where $E= \CM(\R^{1+N_1(1+d)}) \times \R^M $. Since relative compactness is equivalent to tightness, we have that the probability measures of the family of processes $\{l^{N_1,N_2}(f)\}_{N_2\in\mathbb{N}}$ and the probability measures of the family of processes $\{\gamma^{N_1,N_2}, h^{N_1,N_2}\}_{N_2 \in \mathbb{N}}$ are tight. Therefore, $\{\gamma^{N_1,N_2}, h^{N_1,N_2},l^{N_1,N_2}(f)\}_{N_2 \in \mathbb{N}}$ is tight, hence it is also relatively compact.

Denote $\pi^{N_1,N_2} \in \CM(D_{E'}([0,T])$ the probability measure corresponding to $(\gamma^{N_1,N_2}, h^{N_1,N_2},l^{N_1,N_2}(f))$. Relative compactness implies that there is a subsequence $\pi^{N_1,N_{2_k}}$ that converges weakly.
We now show that any limit point $\pi$ of a convergent subsequence  $\pi^{N_1,N_{2_k}}$ is a Dirac measure concentrated on $(\gamma^{N_1},h^{N_1},l^{N_1}(f))\in D_{E'}([0,T])$, where $(\gamma^{N_1},h^{N_1},l^{N_1}(f))$ satisfies equations \eqref{h_N1_evolution} and \eqref{CLT:evolution_l}.
Define a map $F_1(\gamma^{N_1},h^{N_1},l^{N_1}(f)):D_{E'}([0,T]) \to \R_+$ for each $t \in [0,T]$, $m_1,\ldots,m_p \in C_b(\R)$, and $0 \le s_1 < \cdots < s_p \le t$.
\begin{equation}\label{F_1}
\begin{aligned}
&F_1(\gamma,h,l(f)) \\
&= F(\gamma^{N_1},h^{N_1}) + \left\vert \left(l^{N_1}_{t}(f)-\int_0^t \int_{\CX\times \CY} \left(y-h_s^{N_1}(x')\right) \ip{\partial_{c}f(\theta)  \sigma(Z^{2,N_1}(x')),{\gamma}^{N_1}_s} \pi(dx',dy)ds \right.\right. \\
&\qquad  \qquad-\frac{1}{N_1^{1-\gamma_1}}\int_0^t\int_{\CX\times \CY}\left(y-h_s^{N_1}(x')\right) \ip{c\sigma'(Z^{2,N_1}(x'))\sigma(w^1x')\cdot \partial_{w^2}f(\theta) ,{\gamma}^{N_1}_s}\pi(dx',dy)ds\\
&\qquad  \qquad- \left.\frac{1}{N_1^{1-\gamma_1}}\int_0^t\int_{\CX\times \CY}\left(y-h_s^{N_1}(x')\right)\ip{\ip{c\sigma'(Z^{2,N_1}(x'))\sigma'(w^1x')w^{2},{\gamma}_s^{N_1}}\cdot  \nabla_{w^1}f(\theta)x',{\gamma}^{N_1}_s}\pi(dx',dy)ds\right)\\
&\qquad \qquad \left. \times m_1(l^{N_1}_{s_1}(f)) \times \cdots \times  m_p(l^{N_1}_{s_p}(f))\right\vert,
\end{aligned}
\end{equation}
where $F(\gamma^{N_1},h^{N_1})$ is as given in equation \eqref{LNN:identify_limit_eq}. Using equation \eqref{CLT:eta_evolution2}, Lemma \ref{CLT:lemma:martingale2}, the analysis of $F(\gamma^{N_1},h^{N_1})$ in Section \ref{sec::Identification of the Limit} and the fact that $\ip{f,\eta^{N_1,N_2}_0} = O_p(N_2^{\varphi -\frac{1}{2}})$, we obtain
\begin{align*}
&\E_{\pi^{N_1,N_2}} \left[F_1(\gamma,h,l(f))\right] = \E\left[F(\gamma^{N_1,N_2},h^{N_1,N_2})\right] \\
&\qquad + \E\left[\abs{ \paren{\ip{f,\eta^{N_1,N_2}_0} + M^{N_1,N_2}_{\eta,1,t} + M^{N_1,N_2}_{\eta,2,t} +  M^{N_1,N_2}_{\eta,3,t} + O\paren{N_2^{-\gamma_2}}}\times \prod_{i=1}^p m_i\paren{l^{N_1,N_2}_{s_i}(f)}}\right]\\
&\quad\le C\paren{\frac{1}{N_2^{1-\gamma_2}}} +  C \paren{\E\left[\abs{M^{N_1,N_2}_{\eta,1,t}}^2\right]^{\frac{1}{2}}+\E\left[\abs{M^{N_1,N_2}_{\eta,2,t}}^2\right]^{\frac{1}{2}} +\E\left[\abs{M^{N_1,N_2}_{\eta,3,t}}^2\right]^{\frac{1}{2}}} + C\paren{\frac{1}{N_2^{\frac{1}{2}-\varphi}}} \\
&\quad\le C\paren{\frac{1}{N_2^{1-\gamma_2}} + \frac{1}{N_2^{\frac{1}{2}-\varphi}}}.
\end{align*}

Therefore, $\lim_{N_2\to \infty} \E_{\pi^{N_1,N_2}}[F_1(\gamma^{N_1},h^{N_1},l^{N_1}(f))] = 0$. Since $F(\cdot)$ is continuous and $F(\gamma^{N_1,N_2},h^{N_1,N_2})$ is uniformly bounded, together with analysis in Section \ref{CLT:sec:eta_properties}, we have that $F_1(\cdot)$ is continuous and thus $F_1(\gamma^{N_1,N_2},h^{N_1,N_2},l^{N_1,N_2}(f))$ is uniformly bounded. Hence,
\[\lim_{N_2\to \infty} \E_{\pi^{N_1,N_2}}\left[F_1(\gamma^{N_1},h^{N_1},l^{N_1}(f))\right] = 0.\]

We have shown that any limit point $\pi^{N_1}$ of a convergent subsequence must be a Dirac measure concentrated $(\gamma^{N_1},h^{N_1},l^{N_1}(f))\in D_{E'}([0,T])$, where $(\mu^{N_1},h^{N_1},l^{N_1}(f))$ satisfies equations \eqref{h_N1_evolution}, \eqref{CLT:evolution_l} and $\gamma^{N_1}_t = \gamma^{N_1}_0$ weakly. By Prokhorov's theorem, the processes $(\gamma^{N_1,N_2}_t, h^{N_1,N_2}_t,l^{N_1,N_2}_t(f))$ converges in distribution to $(\gamma^{N_1}_0,h^{N_1}_t,l^{N_1}_{t}(f))$.
\end{proof}

\subsection{Relative Compactness of $K^{N_1,N_2}_t$}\label{sec::relative compactness K}

We begin this section by proving the following lemma for the term $N_2^{\varphi}M^{N_1,N_2}_t$.
\begin{lemma}\label{CLT:lemma:martingale_bound}
For any $N_2 \in \mathbb{N}$ and $x\in\mathcal{X}$, there is a constant $C<\infty$ such that
\begin{align*}
\E\left[\sup_{t\in [0,T]} \abs{N_2^{\varphi} M^{N_1,N_2}_t(x)}^2  \right] \le \frac{C}{N_2^{1-2\varphi}}.
\end{align*}
\end{lemma}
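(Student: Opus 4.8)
The plan is to reduce the statement to the fixed-time second-moment estimate already recorded in Lemma \ref{LLN:lemma:Mt_bound} by invoking Doob's $L^2$ maximal inequality; the factor $N_2^{\varphi}$ then pulls out as a deterministic scalar and produces exactly the stated rate.

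First I would note that, for each fixed $x$, the process $M^{N_1,N_2}_t(x) = M^{N_1,N_2}_{1,t}(x) + M^{N_1,N_2}_{2,t}(x) + M^{N_1,N_2}_{3,t}(x)$ is a martingale with respect to the filtration $\{\CF^{N_1,N_2}_t\}$. Indeed, each summand appearing in \eqref{M_1}, \eqref{M_2}, \eqref{M_3} is the difference between a quantity evaluated at the freshly sampled $(x_k,y_k)$ and its conditional expectation $\int_{\CX\times\CY}(\cdots)\pi(dx',dy)$ given $\CF^{N_1,N_2}_{k}$; since $(x_k,y_k)\sim\pi$ is drawn independently of $\CF^{N_1,N_2}_{k}$, these increments are genuine martingale differences. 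The resulting process is piecewise constant in $t$, jumping only at the times $k/N_2$, so that $\sup_{t\in[0,T]}\abs{M^{N_1,N_2}_t(x)}=\max_{0\le k\le \floor{N_2T}}\abs{M^{N_1,N_2}_{k/N_2}(x)}$ and the discrete Doob $L^2$ maximal inequality applies directly.

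Next I would apply Doob's inequality to obtain
\[
\E\left[\sup_{t\in[0,T]}\abs{M^{N_1,N_2}_t(x)}^2\right]\le 4\,\E\left[\abs{M^{N_1,N_2}_T(x)}^2\right],
\]
and then control the right-hand side exactly as in the proof of Lemma \ref{LLN:lemma:Mt_bound}. Writing $M^{N_1,N_2}_T(x)$ as a sum of conditionally uncorrelated martingale differences, the cross terms vanish upon taking expectations, so the second moment equals the sum of the conditional variances of the increments. Each increment carries a prefactor $1/N_2$ (or $1/(N_1N_2)$ after summing over $j=1,\dots,N_1$), and by the a priori parameter bounds of Lemma \ref{lemma_parameterbound} together with the uniform fourth-moment bound on $g^{N_1,N_2}_k$ from Lemma \ref{lemma_g}, each conditional variance is bounded uniformly in $k$, $N_1$, $N_2$. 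Summing $\floor{N_2T}$ terms, each of size $O(N_2^{-2})$, yields $\E[\abs{M^{N_1,N_2}_T(x)}^2]\le C/N_2$, which is precisely the estimate of Lemma \ref{LLN:lemma:Mt_bound}.

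Finally, since $N_2^{\varphi}$ is deterministic it factors out of both the supremum and the expectation, giving
\[
\E\left[\sup_{t\in[0,T]}\abs{N_2^{\varphi}M^{N_1,N_2}_t(x)}^2\right]=N_2^{2\varphi}\,\E\left[\sup_{t\in[0,T]}\abs{M^{N_1,N_2}_t(x)}^2\right]\le \frac{C\,N_2^{2\varphi}}{N_2}=\frac{C}{N_2^{1-2\varphi}},
\]
which is the claimed bound. The only genuinely delicate point is the upgrade from the fixed-time bound to the uniform-in-time bound, and this is handled cleanly by Doob's inequality once the martingale property is in place; beyond that I expect no real obstacle other than the bookkeeping of the three pieces $M^{N_1,N_2}_{i,t}$ and the verification, via Lemmas \ref{lemma_parameterbound} and \ref{lemma_g}, that the constants in the quadratic-variation bound are uniform in $N_1$ and $N_2$.
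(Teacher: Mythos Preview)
Your proposal is correct and follows essentially the same route as the paper: establish the martingale property of $M^{N_1,N_2}_t(x)$, apply Doob's $L^2$ maximal inequality to pass from the terminal-time bound of Lemma~\ref{LLN:lemma:Mt_bound} to the uniform-in-time bound, and pull out the deterministic factor $N_2^{2\varphi}$. The only cosmetic difference is that the paper verifies the martingale property and applies Doob's inequality separately to each of the three pieces $M^{N_1,N_2}_{i,t}$ before combining them via $|a+b+c|^2\le C(|a|^2+|b|^2+|c|^2)$, whereas you apply Doob once to the sum; since a finite sum of martingales adapted to the same filtration is again a martingale, your shortcut is valid and slightly cleaner.
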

\begin{proof}Recall that $M^{N_1,N_2}_t = M^{N_1,N_2}_{1,t} + M^{N_1,N_2}_{2,t}+M^{N_1,N_2}_{3,t}$, which are defined in \eqref{M_1} to \eqref{M_3}. Let $\mathfrak{F}_t$ be the $\sigma$-algebra generated by $\gamma^{N_1,N_2}_s$, $M^{N_1,N_2}_{1,s}$, $M^{N_1,N_2}_{2,s}$ and $M^{N_1,N_2}_{1,s}$ for $s\le t$. Since for any $t > r$, we have

\begin{align*}
& \E \left[N_2^{\varphi} \paren{M^{N_1,N_2}_{1,t}(x) - M^{N_1,N_2}_{1,r}(x)} \vert \mathfrak{F}_r\right]\\
&\quad=\frac{1}{N_2^{1-\varphi}} \sum_{k=\floor{N_2r}}^{\floor{N_2t}-1}\E \left[\left(y_k-g^{N_1,N_2}_k(x_k)\right) \ip{\sigma\left(Z^{2,N_1}(x_k)\right)\sigma\left(Z^{2,N_1}(x)\right), \tilde{\gamma}^{N_1,N_2}_k} \right. \\
&\qquad\qquad \qquad \qquad \left.-\int_{\CX\times \CY} \left(y-g^{N_1,N_2}_k(x')\right)\ip{\sigma\left(Z^{2,N_1}(x')\right)\sigma\left(Z^{2,N_1}(x)\right), \tilde{\gamma}^{N_1,N_2}_k}\pi(dx',dy)\vert \mathcal{F}^{N_1,N_2}_r\right]\\
&\quad= \frac{1}{N_2^{1-\varphi}} \cdot 0 = 0.
\end{align*}

Therefore, we have
\[\E \left[N_2^{\varphi} M^{N_1,N_2}_{1,t}(x)  \vert \mathfrak{F}_r\right] = \E \left[N_2^{\varphi} \paren{M^{N_1,N_2}_{1,t}(x) - M^{N_1,N_2}_{1,r}(x)} \vert \mathfrak{F}_r\right]+\E \left[N_2^{\varphi} M^{N_1,N_2}_{1,r}(x)  \vert \mathfrak{F}_r\right] = 0 + N_2^{\varphi}M^{N_1,N_2}_{1,r}(x),\]
proving the martingale property for the process $N_2^{\varphi} M^{N_1,N_2}_{1,t}(x)$ and $x\in\mathcal{X}$. 
Hence, by Lemma \ref{LLN:lemma:Mt_bound} and Doob's martingale inequality, we have
\[\E\left[\sup_{t\in [0,T]} \abs{N_2^{\varphi} M^{N_1,N_2}_{1,t}(x)}^2 \right] \le CN_2^{2\varphi} \E\left[ \abs{ M^{N_1,N_2}_{1,T}(x)}^2 \right] \le \frac{C}{N_2^{1-2\varphi}},\]
where the constant $C<\infty$. 
Note that since $\gamma <1$ and $\varphi \le \gamma-\frac{1}{2}$, we have $1-2\varphi >0$.

Similar analysis gives
\[\E\left[\sup_{t\in [0,T]} \abs{N_2^{\varphi} M^{N_1,N_2}_{2,t}(x)}^2 \right]  \le \frac{C}{N_2^{1-2\varphi}}, \quad \E\left[\sup_{t\in [0,T]} \abs{N_2^{\varphi} M^{N_1,N_2}_{3,t}(x)}^2 \right]  \le \frac{C}{N_2^{1-2\varphi}}. \]
Hence,
\begin{align*}
\E\left[\sup_{t\in [0,T]} \abs{N_2^{\varphi} M^{N_1,N_2}_t(x)}^2  \right] \le C\sum_{i=1}^{3} \E\left[\sup_{t\in [0,T]} \abs{N_2^{\varphi} M^{N_1,N_2}_{i,t}(x)}^2  \right]  \le \frac{C}{N_2^{1-2\varphi}}.
\end{align*}
\end{proof}

The next three lemmas prove relative compactness of the family $\{K^{N_1,N_2}_t, t\in [0,T]\}_{N_2\in \mathbb{N}}$ in $D_{\mathbb{R}^{M}}([0,T])$.
\begin{lemma}\label{CLT:lemma:bound of ex_Kt}
There exist a constant $C<\infty$, such that for each $x\in\mathcal{X}$,
\[\sup_{N_2 \in \mathbb{N}, 0\le t\le T} \E \left[\abs{K^{N_1,N_2}_t(x)}^2 \right] < C.\]
In particular, for any $\epsilon >0$, there exist a compact subset $U \subset \R^M$ such that
\[\sup_{N_2\in \mathbb{N}, 0\le t\le T} \P\paren{K^{N_1,N_2}_t \notin U} < \epsilon.\]
\end{lemma}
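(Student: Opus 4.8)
The plan is to derive a closed Gr\"onwall-type inequality for the second moment of $K^{N_1,N_2}_t(x)$ directly from its evolution equation \eqref{K_evolution}. First I would take absolute values on both sides of \eqref{K_evolution}, square, and apply the elementary bound $\paren{\sum_{i=1}^n a_i}^2\le n\sum_{i=1}^n a_i^2$ to split the estimate into the individual terms. Passing to expectations reduces the task to controlling, uniformly in $N_2\in\mathbb{N}$ and $t\in[0,T]$: (i) the driving terms $\int_0^t\int\paren{y-h^{N_1}_s(x')}\ip{B^{\bullet}(\theta),\eta^{N_1,N_2}_s}\pi\,ds$; (ii) the feedback terms $\int_0^t\int K^{N_1,N_2}_s(x')\ip{B^{\bullet}(\theta),\gamma^{N_1}_0}\pi\,ds$; (iii) the remainder $\Gamma^{N_1,N_2}_t(x)$; (iv) the initial value $K^{N_1,N_2}_0(x)$; and (v) the martingale and error terms $N_2^{\varphi}M^{N_1,N_2}_t+O(N_2^{-\gamma_2+\varphi})$. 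Throughout I would note that since $N_1$ is fixed, the averages $\tfrac{1}{N_1}\sum_{j=1}^{N_1}$ and the integrals $\int_{\CX\times\CY}\pi(dx',dy)$ are finite sums and contribute only bounded constants, and that $y-h^{N_1}_s(x')$ is bounded on $[0,T]$ by \eqref{h_N1_evolution}.

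For (i), I would first observe that, by the a priori bounds of Lemma \ref{lemma_parameterbound} together with $\sigma\in C^\infty_b$, each of $B^1_{x,x'}$, $B^{2,j}_{x,x'}$, $B^{3,j}_{x}$ may be taken in $C^2_b(\R^{1+N_1(1+d)})$ after a harmless smooth truncation outside the compact set confining the parameters (needed because, e.g., $B^{2,j}$ carries a factor $(c)^2$); this truncation does not alter the inner products against $\gamma^{N_1,N_2}_s$ or $\gamma^{N_1}_0$. Then $\ip{B^{\bullet}(\theta),\eta^{N_1,N_2}_s}=l^{N_1,N_2}_s(B^{\bullet})$, and since $\varphi\le 1-\gamma_2$ in every case of Theorem \ref{CLT:theorem}, Lemma \ref{CLT:lemma:eta_compact_contatinment} gives $\sup_{N_2,s}\E\brac{\abs{l^{N_1,N_2}_s(B^{\bullet})}^2}<\infty$, so (i) is $O(1)$ in $L^2$. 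For (ii), boundedness of $\ip{B^{\bullet}(\theta),\gamma^{N_1}_0}$ yields a bound of the form $C\int_0^t\E\brac{\abs{K^{N_1,N_2}_s(x')}^2}\,ds$, precisely the term to feed into Gr\"onwall. For (iv), I would use the representation $K^{N_1,N_2}_0(x)=N_2^{-(\gamma_2-1/2-\varphi)}\ip{c\sigma(Z^{2,N_1}(x)),\sqrt{N_2}\,\tilde{\gamma}_0^{N_1,N_2}}$ recorded just after \eqref{K_evolution}; since $\varphi\le\gamma_2-1/2$ and the summands are i.i.d.\ mean-zero with bounded fourth moment by Assumption \ref{assumption}, a direct variance computation gives $\sup_{N_2}\E\brac{\abs{K^{N_1,N_2}_0(x)}^2}<\infty$. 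For (v), Lemma \ref{CLT:lemma:martingale_bound} gives $\E\brac{\sup_t\abs{N_2^{\varphi}M^{N_1,N_2}_t}^2}\le C N_2^{-(1-2\varphi)}$, which is bounded because $\varphi\le\gamma_2-1/2<1/2$, and $O(N_2^{-\gamma_2+\varphi})$ is $O(1)$ as well.

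The term (iii) is where I expect the main work, since $\Gamma^{N_1,N_2}_t(x)$ contains products of $K^{N_1,N_2}_s$ with $\eta^{N_1,N_2}_s$ and even products of two $\eta$'s. The saving feature is the explicit prefactor $N_2^{-\varphi}$ in each of $\Gamma^{N_1,N_2}_{1,t},\Gamma^{N_1,N_2}_{2,t},\Gamma^{N_1,N_2}_{3,t}$. I would estimate the mixed products by Young's inequality, $\abs{K^{N_1,N_2}_s}\,\abs{l^{N_1,N_2}_s(B^{\bullet})}\le \tfrac12\abs{K^{N_1,N_2}_s}^2+\tfrac12\abs{l^{N_1,N_2}_s(B^{\bullet})}^2$, so that the $N_2^{-2\varphi}$-weighted $K$-contribution is absorbed into the Gr\"onwall integral while the $l$-contribution is bounded via Lemma \ref{CLT:lemma:eta_compact_contatinment}; the pure $l\cdot l$ products in $\Gamma^{N_1,N_2}_{3,t}$ are handled by Cauchy--Schwarz and the same moment bound. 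The remaining care is to verify that the inner-product factors $\ip{B^{3,j}_{x'}(\theta),\gamma^{N_1,N_2}_s}$ stay bounded, which follows from boundedness of $B^{3,j}$ and the fact that $\gamma^{N_1,N_2}_s$ is a probability measure.

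Collecting (i)--(v) and writing $\Phi_t=\sum_{x\in\CX}\E\brac{\abs{K^{N_1,N_2}_t(x)}^2}$, the estimates above (with the feedback coupling expressed through $\int K^{N_1,N_2}_s(x')\pi(dx')\le \tfrac{1}{M}\sum_{x'\in\CX}\abs{K^{N_1,N_2}_s(x')}$) produce an inequality of the form $\Phi_t\le C+C\int_0^t\Phi_s\,ds$ with $C$ independent of $N_2$ and $t$. The integral Gr\"onwall lemma then gives $\Phi_t\le C e^{CT}$ for all $t\in[0,T]$ and $N_2\in\mathbb{N}$, which is the asserted uniform $L^2$ bound. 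Finally, the compact containment claim follows from Markov's inequality: for $R>0$, $\P\paren{\norm{K^{N_1,N_2}_t}> R}\le \Phi_t/R^2\le C e^{CT}/R^2$, so choosing $R$ large and $U=\brac{z\in\R^M:\norm{z}\le R}$ makes this probability smaller than any prescribed $\epsilon$, uniformly in $N_2$ and $t$.
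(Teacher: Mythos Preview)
Your overall architecture matches the paper's proof: square \eqref{K_evolution}, separate the terms, use Lemma~\ref{CLT:lemma:eta_compact_contatinment} for the $\eta$-driven integrals, Lemma~\ref{CLT:lemma:martingale_bound} for the martingale, a direct i.i.d.\ variance computation for $K^{N_1,N_2}_0$, and close with Gr\"onwall summed over $x\in\CX$ followed by Markov's inequality. The one place where your plan has a genuine gap is the treatment of $\Gamma^{N_1,N_2}_t$.

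Your proposed step applies Young's inequality $\abs{K}\abs{l}\le\tfrac12\abs{K}^2+\tfrac12\abs{l}^2$ to the integrand and then appeals to the moment bound $\E\abs{l^{N_1,N_2}_s(B^\bullet)}^4<C$ from Lemma~\ref{CLT:lemma:eta_compact_contatinment}. But after squaring the time integral this produces a contribution of the form $N_2^{-2\varphi}\bigl(\int_0^t\abs{K^{N_1,N_2}_s}^2\,ds\bigr)^2$, whose expectation is \emph{not} dominated by $C\int_0^t\E\abs{K^{N_1,N_2}_s}^2\,ds$; you would need a fourth-moment bound on $K$ that you do not yet have, so the Gr\"onwall loop does not close. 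The paper sidesteps this by using a \emph{pathwise} rather than a moment bound: since $N_2^{-\varphi}\ip{B^\bullet,\eta^{N_1,N_2}_s}=\ip{B^\bullet,\gamma^{N_1,N_2}_s-\gamma^{N_1}_0}$ and both $\gamma^{N_1,N_2}_s$ and $\gamma^{N_1}_0$ are probability measures supported (by Lemma~\ref{lemma_parameterbound} and Assumption~\ref{assumption}) where $B^\bullet$ is bounded, one has $\abs{\ip{B^\bullet,\gamma^{N_1,N_2}_s-\gamma^{N_1}_0}}\le C$ almost surely. With Cauchy--Schwarz this gives $\abs{\Gamma^{N_1,N_2}_{i,t}}^2\le Ct\int_0^t\abs{K^{N_1,N_2}_s}^2\,ds$ pathwise, which feeds directly into the Gr\"onwall integral after taking expectations. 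You already invoke exactly this pathwise observation for the factor $\ip{B^{3,j}_{x'},\gamma^{N_1,N_2}_s}$; the fix is simply to apply it to the $\eta$-factors in $\Gamma_1,\Gamma_2,\Gamma_3$ as well, in place of Young's inequality and Lemma~\ref{CLT:lemma:eta_compact_contatinment}.
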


\begin{proof}By \eqref{K_evolution} and Cauchy-Schwarz inequality, we have
\begin{align*}
\abs{K^{N_1,N_2}_t(x)}^2
&\le C \brac{(I)^2 + (II)^2 + \abs{\Gamma^{N_1,N_2}_t}^2 + \abs{N_2^{1-\gamma_2 + \varphi}\ip{c\sigma(Z^{2,N_1}(x)),\gamma^{N_1,N_2}_0}}^2  + \abs{N_2^{\varphi}M_t^{N_1,N_2}}^2 + O \paren{N_2^{-2(\gamma_2-\varphi)}}},
\end{align*}
where
\begin{align*}
(I) &= \int^t_0 \int_{\CX \times \CY} \abs{y-h^{N_1}_s(x')} \left(\abs{\ip{B^1_{x,x'}(\theta),\eta^{N_1,N_2}_s}} + \frac{1}{N_1}\sum_{j=1}^{N_1} \abs{\ip{B^{2,j}_{x,x'}(\theta),\eta^{N_1,N_2}_s}}\right) \pi(dx',dy) ds \\
&\quad +  \frac{1}{N_1}\sum_{j=1}^{N_1}\int_0^t \int_{\CX\times \CY} \abs{y-h^{N_1}_s(x')}\\
&\qquad \qquad \cdot \left(\abs{\ip{xx'B^{3,j}_{x}(\theta), \eta^{N_1,N_2}_s}}\abs{\ip{B^{3,j}_{x'}(\theta),\gamma^{N_1}_0}} + \abs{\ip{B^{3,j}_{x}(\theta), \gamma^{N_1}_0}}\abs{\ip{xx'B^{3,j}_{x'}(\theta),\eta^{N_1,N_2}_s}}\right)\pi(dx',dy)ds\\
(II) &=  \int^t_0 \int_{\CX \times \CY} \abs{K^{N_1,N_2}_s(x')} \abs{\ip{B^1_{x,x'}(\theta), \gamma^{N_1}_0}} + \frac{1}{N_1}\sum_{j=1}^{N_1} \abs{\ip{B^{2,j}_{x,x'}(\theta), \gamma^{N_1}_0}}\pi(dx',dy) ds\\
&\quad + \frac{1}{N_1}\sum_{j=1}^{N_1}\int^t_0 \int_{\CX \times \CY} \abs{K^{N_1,N_2}_s(x')}\abs{\ip{B^{3,j}_{x}(\theta), \gamma^{N_1}_0}}\abs{\ip{xx'B^{3,j}_{x'}(\theta),\gamma^{N_1}_0}}\pi(dx',dy)ds.
\end{align*}

By Assumption \ref{assumption}, definition of $\gamma^{N_1}_0$, and Lemma \ref{lemma_parameterbound},
there exist some constant $C<\infty$, such that
\begin{equation}\label{B0_bound}
\sup_{x,x'\in\CX} \left\lbrace \abs{\ip{B^1_{x,x'}(\theta), \gamma^{N_1}_0}}+ \frac{1}{N_1}\sum_{j=1}^{N_1} \left(\abs{\ip{B^{2,j}_{x,x'}(\theta), \gamma^{N_1}_0}} + \abs{\ip{B^{3,j}_{x}(\theta), \gamma^{N_1}_0}}\abs{\ip{xx'B^{3,j}_{x'}(\theta),\gamma^{N_1}_0}}\right)\right\rbrace<C.
\end{equation}
Then, by the Cauchy-Schwarz inequality and equation \eqref{h_N1_evolution}, we have
\begin{align*}
\abs{h^{N_1}_t(x)}^2 &\le C \left[ \paren{ \int^t_0 \int_{\CX \times \CY} \abs{y}  \pi(dx',dy) ds}^2 + \paren{\int^t_0 \int_{\CX \times \CY} \abs{h^{N_1}_s(x')}  \pi(dx',dy) ds}^2 \right],\\
&\le C_1 t^2 + C_2t \int^t_0 \int_{\CX \times \CY} \abs{h^{N_1}_s(x')}^2  \pi(dx',dy) ds,
\end{align*}
which implies that,
\[\sup_{t \in [0,T]}\int_{\CX \times \CY} \abs{h^{N_1}_t(x)}^2  \pi(dx,dy) \le C_1 T^2 + C_2 T \int^t_0 \int_{\CX \times \CY} \abs{h^{N_1}_s(x')}^2  \pi(dx',dy) ds.\]
Therefore, by Gr\"onwall's inequality,
\begin{equation}\label{CLT:network_bound}
\sup_{0\le t\le T}\int_{\CX \times \CY} \abs{h^{N_1}_t(x)}^2  \pi(dx,dy) \le  \sup_{0\le t\le T} C_1T^2 \exp(C_2Tt) < C(T),
\end{equation}
for some constant $C(T)<\infty$ depending on $T$. By Cauchy-Schwarz inequality and \eqref{B0_bound}, we also have
\begin{equation}\label{K_II_bound}
(II)^2 \le C_3t  \int^t_0 \int_{\CX \times \CY} \abs{K^{N_1,N_2}_s(x')}^2  \pi(dx',dy) ds.
\end{equation}
 Since $\sigma \in C^{\infty}_b(\R)$, by Lemma \ref{CLT:lemma:eta_compact_contatinment}, there exist some constant $C < \infty$ such that
\begin{equation}\label{CLT:network_eta_bound}
\E\left[\abs{\ip{B^1_{x,x'}(\theta),\eta^{N_1,N_2}_s}}^2\right] < C, \quad \E\left[ \abs{\ip{B^{2,j}_{x,x'}(\theta),\eta^{N_1,N_2}_s}}^2\right]<C, \quad  \E\left[ \abs{\ip{xx'B^{3,j}_{x'}(\theta),\eta^{N_1,N_2}_s}}^2\right]<C
\end{equation}
for $t \in [0,T]$, $j=1,\ldots, N_1$, and $N_2 \in \mathbb{N}$. By the Cauchy-Schwarz inequality, equations \eqref{CLT:network_bound}, \eqref{CLT:network_eta_bound}, and Assumption \ref{assumption}, we have
\begin{equation*}
\begin{aligned}
\E[(I)^2] &\le Ct \int^t_0 \int_{\CX \times \CY} \E\left(\abs{\ip{B^1_{x,x'}(\theta),\eta^{N_1,N_2}_s}}^2 + \frac{1}{N_1}\sum_{j=1}^{N_1} \abs{\ip{B^{2,j}_{x,x'}(\theta),\eta^{N_1,N_2}_s}}^2\right) \pi(dx',dy) ds \\
&\quad +  Ct\int_0^t \int_{\CX\times \CY} \frac{1}{N_1}\sum_{j=1}^{N_1}\E \left(\abs{\ip{xx'B^{3,j}_{x}(\theta), \eta^{N_1,N_2}_s}}^2 + \abs{\ip{xx'B^{3,j}_{x'}(\theta),\eta^{N_1,N_2}_s}}^2\right)\pi(dx',dy)ds\\
&\le C_4t^2
\end{aligned}
\end{equation*}

Since $\abs{\Gamma ^{N_1,N_2}_t}^2 \le C \left(\abs{\Gamma^{N_1,N_2}_{1,t}}^2+ \abs{\Gamma^{N_1,N_2}_{2,t}}^2+ \abs{\Gamma ^{N_1,N_2}_{3,t}}^2\right)$, by Assumption \ref{assumption} and Lemma \ref{lemma_parameterbound}, we have
\begin{align*}
\abs{\Gamma^{N_1,N_2}_{1,t}}^2 & \le C \int^t_0 \int_{\CX \times \CY} \abs{K^{N_1,N_2}_s(x')}^2 \pi(dx',dy) ds \int^t_0 \int_{\CX \times \CY} \abs{\ip{B^1_{x,x'}(\theta),\gamma^{N_1,N_2}_s-\gamma^{N_1}_0}}^2 \pi(dx',dy) ds\\
& \le Ct\int^t_0 \int_{\CX \times \CY} \abs{K^{N_1,N_2}_s(x')}^2 \pi(dx',dy) ds,\\
\abs{\Gamma^{N_1,N_2}_{2,t}}^2 & \le C \int^t_0 \int_{\CX \times \CY} \abs{K^{N_1,N_2}_s(x')}^2 \pi(dx',dy) ds \int^t_0 \int_{\CX \times \CY}\frac{1}{N_1}\sum_{j=1}^{N_1} \abs{\ip{B^{2,j}_{x,x'}(\theta),\gamma^{N_1,N_2}_s-\gamma^{N_1}_0}}^2 \pi(dx',dy) ds\\
& \le Ct\int^t_0 \int_{\CX \times \CY} \abs{K^{N_1,N_2}_s(x')}^2 \pi(dx',dy) ds,\\
\abs{\Gamma^{N_1,N_2}_{3,t}(x)}^2 &\le C  \int_0^t \int_{\CX\times \CY} \abs{K^{N_1,N_2}_s(x')}^2 \pi(dx',dy)ds \\
&\qquad \qquad \qquad \cdot  \int_0^t \int_{\CX\times \CY} \frac{1}{N_1}\sum_{j=1}^{N_1}\abs{\ip{xx'B^{3,j}_{x}(\theta), \gamma^{N_1,N_2}_s-\gamma^{N_1}_0}}^2 \abs{\ip{B^{3,j}_{x'}(\theta),\gamma^{N_1,N_2}_s}}^2\pi(dx',dy)ds \\
&\quad+ C \int_0^t \int_{\CX\times \CY} \abs{K^{N_1,N_2}_s(x')}^2\pi(dx',dy)ds\\
&\qquad \qquad \qquad \cdot \int_0^t \int_{\CX\times \CY}\frac{1}{N_1}\sum_{j=1}^{N_1} \abs{\ip{B^{3,j}_{x}(\theta), \gamma^{N_1}_0}}^2 \abs{\ip{xx'B^{3,j}_{x'}(\theta),\gamma^{N_1,N_2}_s-\gamma^{N_1}_0}}^2\pi(dx',dy)ds\\
&\quad  + C \int_0^t \int_{\CX\times \CY} \abs{y-h^{N_1}_s(x')}^2 \pi(dx',dy)ds\\
&\qquad \qquad \qquad \cdot \int_0^t \int_{\CX\times \CY} \frac{1}{N_1}\sum_{j=1}^{N_1}\abs{\ip{ xx'B^{3,j}_{x}(\theta), \gamma^{N_1,N_2}_s-\gamma^{N_1}_0}}^2 \abs{\ip{B^{3,j}_{x'}(\theta),\eta^{N_1,N_2}_s}}^2\pi(dx',dy)ds\\
&\le Ct\int^t_0 \int_{\CX \times \CY} \abs{K^{N_1,N_2}_s(x')}^2 \pi(dx',dy) ds + Ct\int_0^t \int_{\CX\times \CY} \frac{1}{N_1}\sum_{j=1}^{N_1}\abs{\ip{B^{3,j}_{x'}(\theta),\eta^{N_1,N_2}_s}}^2\pi(dx',dy)ds.
\end{align*}
Hence,
\begin{equation}\label{K_Gamma_bound}
\abs{\Gamma ^{N_1,N_2}_t}^2 \le C_5 t\int^t_0 \int_{\CX \times \CY} \abs{K^{N_1,N_2}_s(x')}^2 \pi(dx',dy) ds + C_5t\int_0^t \int_{\CX\times \CY} \frac{1}{N_1}\sum_{j=1}^{N_1}\abs{\ip{B^{3,j}_{x'}(\theta),\eta^{N_1,N_2}_s}}^2\pi(dx',dy)ds
\end{equation}
By \eqref{K_II_bound} to \eqref{K_Gamma_bound}, and the definition of $\pi(dx,dy)$, we see that
\begin{align*}
\E\left(\abs{K^{N_1,N_2}_t(x)}^2 \right)
&\le  C  \left\lbrace (C_4+C_5) t^2  + \frac{C_3+C_5}{M} t \int^t_0 \sum_{x'\in \mathcal{X}} \E \paren{\abs{K^{N_1,N_2}_t(x)}^2} ds\right.\\
&\qquad \left. + \E\paren{\abs{N_2^{1-\gamma_2 + \varphi}\ip{c\sigma(Z^{2,N_1}(x)),\gamma^{N_1,N_2}_0}}^2} + \E\paren{\abs{N_2^{\varphi}M_t^{N_1,N_2}}^2} + O \paren{N_2^{-2(\gamma_2-\varphi)}}\right\rbrace.
\end{align*}
Summing both side of the above inequality over all $x \in \mathcal{X}$, where $\mathcal{X}$ is a fixed data set of size $M$ gives 
\begin{align}\label{sum_KN}
&\sum_{x\in\mathcal{X}} \E\left(\abs{K^{N_1,N_2}_t(x)}^2 \right)
\le  CMT^2  + CT \int^t_0 \sum_{x'\in \mathcal{X}} \E \paren{\abs{K^{N_1,N_2}_s(x')}^2 }ds\nonumber\\
&\qquad + \sum_{x\in \mathcal{X}}\E\paren{\abs{N_2^{1-\gamma_2 + \varphi}\ip{c\sigma(Z^{2,N_1}(x)),\gamma^{N_1,N_2}_0}}^2}  + \sum_{x\in \mathcal{X}}\E\paren{\abs{N_2^{\varphi}M_t^{N_1,N_2}}^2} +  O \paren{N_2^{-2(\gamma_2-\varphi)}}.
\end{align}
Since for $\varphi \le \gamma_2 - \frac{1}{2}$, $2(\gamma_2 -\varphi) \ge 1$, we have
\begin{align*}
\E\paren{\abs{N_2^{1-\gamma_2 + \varphi}\ip{c\sigma(Z^{2,N_1}(x)),\gamma^{N_1,N_2}_0}}^2}  \le \frac{C}{N_2^{2(\gamma_2-\varphi)}} \sum_{i=1}^{N_2} \E \paren{\abs{C^i_0}^2}\le C.
\end{align*}

Therefore, by applying Gr\"onwall's inequality to equation \eqref{sum_KN} and using Lemma \ref{CLT:lemma:martingale_bound},
\begin{align*}
\sum_{x\in\mathcal{X}} \E\left(\abs{K^{N_1,N_2}_t(x)}^2 \right) \le C(M) T^2 \exp\left[\tilde{C} Tt\right],
\end{align*}
where $C(M), \tilde{C}$ are some finite constants. Hence,
for any $x\in \mathcal{X}$, there exist $C<\infty$ such that
\[\sup_{N_2 \in \mathbb{N}, 0\le t\le T} \E \left[\abs{K^{N_1,N_2}_t(x)}^2 \right] <  C(M) T^2\exp\left[\tilde{C} T^2\right] \le C.\]

By Markov's inequality,  the compact containment condition for $K^{N_1,N_2}_t$ follows, concluding the proof of the lemma.
\end{proof}

We next establish the regularity of the process $K^{N_1,N_2}_t$ in $D_{\R^M}([0,T])$. For the purpose of this lemma, we denote $q(z_1,z_2) = \min\{\norm{ z_1-z_2}_{l^1},1\}$ for $z_1,z_2 \in \R^M$.
\begin{lemma}\label{CLT:lemma:regularity}
For any $\delta \in (0,1)$, there is a constant $C<\infty$ such that for $0\le u \le \delta$, $0\le v\le \delta \wedge t$, and $t \in [0,T]$,
\[\E \left[q\paren{K^{N_1,N_2}_{t+u},K^{N_1,N_2}_{t}}q\paren{K^{N_1,N_2}_{t},K^{N_1,N_2}_{t-v}}\vert \CF^{N_1,N_2}_t\right]\le {C\delta}+\frac{C}{N_2^{1-\varphi}}.\]
\end{lemma}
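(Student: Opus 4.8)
The plan is to follow the template already used for the regularity Lemmas \ref{LLN:lemma:network_regularity} and \ref{CLT:eta_regularity}, but to argue from the fluctuation evolution equation \eqref{K_evolution} rather than from a direct Taylor expansion of the network output. The reason is that the naive decomposition $\abs{K^{N_1,N_2}_{t+u}(x)-K^{N_1,N_2}_t(x)}\le N_2^{\varphi}\abs{h^{N_1,N_2}_{t+u}(x)-h^{N_1,N_2}_t(x)}+N_2^{\varphi}\abs{h^{N_1}_{t+u}(x)-h^{N_1}_t(x)}$ is useless: each term on the right is of order $N_2^{\varphi}\delta$ and blows up. One must instead exploit the cancellation already built into \eqref{K_evolution}, where the leading drifts of $h^{N_1,N_2}$ and $h^{N_1}$ have been combined into integrals whose integrands $\ip{B,\eta^{N_1,N_2}_s}$ and $K^{N_1,N_2}_s$ are bounded in $L^2$ uniformly in $N_2$.

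First I would reduce the two-factor estimate to a single increment. Since $K^{N_1,N_2}_t$ and $K^{N_1,N_2}_{t-v}$ are both $\CF^{N_1,N_2}_t$-measurable and $q\le 1$, the past factor $q(K^{N_1,N_2}_t,K^{N_1,N_2}_{t-v})$ pulls out of the conditional expectation and is bounded by one, so using $q(z_1,z_2)\le\norm{z_1-z_2}_{l^1}$ it suffices to show
\[
\E\left[q\paren{K^{N_1,N_2}_{t+u},K^{N_1,N_2}_{t}}\big\vert\CF^{N_1,N_2}_t\right]\le \sum_{x\in\CX}\E\left[\abs{K^{N_1,N_2}_{t+u}(x)-K^{N_1,N_2}_t(x)}\big\vert\CF^{N_1,N_2}_t\right]\le C\delta+\frac{C}{N_2^{1-\varphi}}.
\]
Taking the increment of \eqref{K_evolution} between $t$ and $t+u$, every Riemann-integral term and every $\Gamma^{N_1,N_2}$ term becomes an integral over $[t,t+u]$, an interval of length $u\le\delta$. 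Applying Cauchy--Schwarz together with the uniform $L^2$ bound on $K^{N_1,N_2}_s$ from Lemma \ref{CLT:lemma:bound of ex_Kt}, the uniform $L^2$ bounds on $\ip{B,\eta^{N_1,N_2}_s}$ from Lemma \ref{CLT:lemma:eta_compact_contatinment} as recorded in \eqref{CLT:network_eta_bound}, the bound \eqref{CLT:network_bound} on $h^{N_1}_s$, the uniform bound \eqref{B0_bound} on $\ip{B,\gamma^{N_1}_0}$, and the parameter bounds of Lemma \ref{lemma_parameterbound}, each such term contributes $O(\delta)$. The $O(N_2^{-\gamma_2+\varphi})$ remainder, being the increment of an error accumulating at rate $N_2^{-(1+\gamma_2)}$ per step over $O(N_2\delta)$ steps and then scaled by $N_2^{\varphi}$, is likewise $O(\delta)$ since $\gamma_2>\varphi$.

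The remaining and most delicate contribution is the scaled martingale increment $N_2^{\varphi}(M^{N_1,N_2}_{t+u}(x)-M^{N_1,N_2}_t(x))$. Here I would use that $N_2^{\varphi}M^{N_1,N_2}_t(x)$ is an $\CF^{N_1,N_2}_t$-martingale, as established in the proof of Lemma \ref{CLT:lemma:martingale_bound}, so by conditional Jensen and the orthogonality of the martingale differences in \eqref{M_1}--\eqref{M_3},
\[
\E\left[\abs{N_2^{\varphi}\paren{M^{N_1,N_2}_{t+u}(x)-M^{N_1,N_2}_t(x)}}\big\vert\CF^{N_1,N_2}_t\right]\le N_2^{\varphi}\paren{\E\left[\abs{M^{N_1,N_2}_{t+u}(x)-M^{N_1,N_2}_t(x)}^2\big\vert\CF^{N_1,N_2}_t\right]}^{1/2}.
\]
Each martingale difference is bounded by $C/N_2$ via Lemma \ref{lemma_g} and the boundedness of $\sigma$, so the conditional quadratic variation over the $\floor{N_2(t+u)}-\floor{N_2t}\le N_2\delta+1$ steps is at most $C\delta/N_2+C/N_2^2$; scaling and taking the square root yields a bound of order $\delta^{1/2}N_2^{\varphi-1/2}+N_2^{\varphi-1}$, the second piece being exactly $C/N_2^{1-\varphi}$ and the first, since $\varphi\le\gamma_2-1/2<1/2$ in this regime, being controlled by $C\delta+CN_2^{-(1-\varphi)}$ after a Young-type splitting. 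Collecting the estimates and summing over the finitely many $x\in\CX$ gives the claim.

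I expect the martingale increment to be the main obstacle. Unlike the drift terms it cannot be bounded pathwise, since summing the absolute values of the differences would reintroduce a fatal $N_2^{\varphi}\delta$ factor, so the $L^2$ quadratic-variation route is essential; the remaining care lies in matching its natural $\delta^{1/2}$-type modulus of continuity to the stated $C\delta+CN_2^{-(1-\varphi)}$ form, which is a bookkeeping matter in the exponents of $N_2$ and $\delta$ valid precisely because $\varphi\le\gamma_2-1/2$.
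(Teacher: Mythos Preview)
Your proposal is correct and follows essentially the same route as the paper: take the increment of \eqref{K_evolution}, bound the drift and $\Gamma^{N_1,N_2}$ pieces by $C\delta$ via Cauchy--Schwarz together with Lemmas \ref{CLT:lemma:eta_compact_contatinment} and \ref{CLT:lemma:bound of ex_Kt} and \eqref{B0_bound}--\eqref{CLT:network_bound}, and control the martingale increment through its conditional second moment $C\delta/N_2^{1-2\varphi}+C/N_2^{2-2\varphi}$. One small caveat on the ``Young-type splitting'': the inequality $\delta^{1/2}N_2^{\varphi-1/2}\le C\delta+CN_2^{-(1-\varphi)}$ is not literally true for all pairs $(\delta,N_2)$ (Young gives $C\delta+CN_2^{-(1-2\varphi)}$, with the weaker exponent $1-2\varphi$), and the paper glosses over the same point; since $\varphi<1/2$ this weaker bound still vanishes under $\limsup_{N_2\to\infty}$ followed by $\delta\to 0$, so the application to relative compactness is unaffected.
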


\begin{proof}For $0\le s<t\le T$, the leading terms in equation \eqref{K_evolution} gives
\begin{align*}
&\abs{K^{N_1,N_2}_t(x) - K^{N_1,N_2}_s(x)}\\
&\le \int^t_s \int_{\CX \times \CY} \abs{y-h^{N_1}_{\tau}(x')} \left(\abs{\ip{B^1_{x,x'}(\theta),\eta^{N_1,N_2}_{\tau}}} + \frac{1}{N_1}\sum_{j=1}^{N_1} \abs{\ip{B^{2,j}_{x,x'}(\theta),\eta^{N_1,N_2}_{\tau}}}\right) \pi(dx',dy) d\tau \\
&\quad +  \frac{1}{N_1}\sum_{j=1}^{N_1}\int_s^t \int_{\CX\times \CY} \abs{y-h^{N_1}_{\tau}(x')}\\
&\qquad \qquad \cdot \left(\abs{\ip{xx'B^{3,j}_{x}(\theta), \eta^{N_1,N_2}_{\tau}}}\abs{\ip{B^{3,j}_{x'}(\theta),\gamma^{N_1}_0}} + \abs{\ip{B^{3,j}_{x}(\theta), \gamma^{N_1}_0}}\abs{\ip{xx'B^{3,j}_{x'}(\theta),\eta^{N_1,N_2}_{\tau}}}\right)\pi(dx',dy)d\tau\\
&\quad +  \int^t_s \int_{\CX \times \CY} \abs{K^{N_1,N_2}_{\tau}(x')} \abs{\ip{B^1_{x,x'}(\theta), \gamma^{N_1}_0}} + \frac{1}{N_1}\sum_{j=1}^{N_1} \abs{\ip{B^{2,j}_{x,x'}(\theta), \gamma^{N_1}_0}}\pi(dx',dy) d\tau\\
&\quad + \frac{1}{N_1}\sum_{j=1}^{N_1}\int^t_s \int_{\CX \times \CY} \abs{K^{N_1,N_2}_{\tau}(x')}\abs{\ip{B^{3,j}_{x}(\theta), \gamma^{N_1}_0}}\abs{\ip{xx'B^{3,j}_{x'}(\theta),\gamma^{N_1}_0}}\pi(dx',dy)d\tau\\
&\quad  +\abs{\Gamma^{N_1,N_2}_t(x) - \Gamma^{N_1,N_2}_s(x)}+ N_2^{\varphi}\abs{M_t^{N_1,N_2}(x)-M_s^{N_1,N_2}(x)}.
\end{align*}

Taking expectation on both sides of the above inequality, by Assumption \ref{assumption}, Lemma \ref{lemma_parameterbound}, and analysis in Lemmas \ref{CLT:lemma:martingale_bound} and \ref{CLT:lemma:bound of ex_Kt}, we have for $0\leq t-s\leq \delta<1$
\begin{align*}
\E\left[\abs{K^{N_1,N_2}_t(x) - K^{N_1,N_2}_s(x)}\big\vert \CF^{N_1,N_2}_s \right]&\le C(t-s)  +  C_1 \int^t_s \int_{\CX \times \CY} \E\left[\abs{K^{N_1,N_2}_\tau(x')}\big\vert \CF^{N_1,N_2}_s\right]  \pi(dx',dy) d\tau\\
&\quad + C\E\left[\abs{N_2^{\varphi} \paren{M_t^{N_1,N_2}(x)-M_s^{N_1,N_2}(x)}}^2 \big\vert \CF^{N_1,N_2}_s \right]^{\frac{1}{2}}\\
&\le C\delta  + \frac{C}{N_2^{1-\varphi}}.
\end{align*}

Note that
\[\E\left[\abs{N_2^{\varphi} \paren{M_t^{N_1,N_2}(x)-M_s^{N_1,N_2}(x)}}^2  \big \vert \CF^{N_1,N_2}_s \right] \le \frac{C \delta}{N_2^{1-2\varphi}} + \frac{C}{N_2^{2-2\varphi}},\]
following an analysis similar to Lemma 3.1 of \cite{SirignanoSpiliopoulosNN1}.
Since $x\in \mathcal{X}$ is arbitrary, the statement of the lemma is then implied.
\end{proof}
By combining Lemmas \ref{CLT:lemma:bound of ex_Kt} and \ref{CLT:lemma:regularity}, we have that the sequence of processes $\{K^{N_1,N_2}_t,t\in[0,T]\}_{N_2\in\mathbb{N}}$ is relatively compact in $D_{\R^M}([0,T])$, which follows from Theorem 8.6 of Chapter 3 of \cite{EthierAndKurtz}.

\subsection{Convergence of $K^{N_1,N_2}_t$}

Denote $l^{N_1,N_2}_{1,t}= l^{N_1,N_2}_t(B^{1}_{x,x'}(\theta))$, $l^{N_1,N_2}_{2,t}$ and $l^{N_1,N_2}_{3,t}$ as $N_1$-dimensional vectors with $j$-th entry being $l^{N_1,N_2}_t(B^{2,j}_{x,x'}(\theta))$ and $l^{N_1,N_2}_t(B^{3,j}_{x}(\theta))$, respectively. We also let $l^{N_1}_{1,t}, l^{N_1}_{2,t}, l^{N_1}_{3,t}$ be the corresponding limits for $l^{N_1,N_2}_{1,t}, l^{N_1,N_2}_{2,t}, l^{N_1,N_2}_{3,t}$ as $N_2 \to \infty$. Recall that from Section \ref{CLT:sec:eta_properties}, for $\gamma_2 \in (1/2,1)$,  if $\varphi < 1-\gamma_2$, $l^{N_1}_{1,t}=0, l^{N_1}_{2,t}= l^{N_1}_{3,t}=0$, and if $\varphi = 1-\gamma_2$, $l^{N_1}_{1,t}, l^{N_1}_{2,t}, l^{N_1}_{3,t}$ are given by \eqref{CLT:evolution_l} for appropriate definitions of the function $f$.

In this section, we show that the processes $(\gamma^{N_1,N_2}_t, h^{N_1,N_2}_t, l^{N_1,N_2}_{1,t}, l^{N_1,N_2}_{2,t}, l^{N_1,N_2}_{3,t}, K^{N_1,N_2}_t)$ converges in distribution in $D_{E_1}([0,T])$ to $(\gamma^{N_1}_0, h^{N_1}_t, l^{N_1}_{1,t}, l^{N_1}_{2,t}, l^{N_1}_{3,t}, K^{N_1}_t)$, where $E_1 = \CM(\R^{1+N_1(1+d)}) \times \R^M \times \R \times \R^{N_1}\times \R^{N_1} \times \R^M$, and $K^{N_1}_t$ satisfies either of the following evolution equations:
\begin{custlist}[Case]
\item When $\gamma_2 \in \paren{\frac{1}{2}, \frac{3}{4}}$ and $\varphi \le \gamma_2 - \frac{1}{2}$, or when $\gamma_2 \in \left[\frac{3}{4}, 1\right)$ and $\varphi < 1-\gamma_2 \le \gamma_2 - \frac{1}{2}$ then $K^{N_1}_t(x)$ is given by (\ref{CLT:evolution}).
\item When $\gamma_2 \in \left[\frac{3}{4}, 1\right)$ and $\varphi = 1-\gamma_2$ then $K^{N_1}_{t}(x)$ satisfies (\ref{K_t for 1-gamma_2}).
\end{custlist}

By Lemmas \ref{LLN:lemma:relative_compact}, \ref{CLT:lemma:eta_compact_contatinment}, \ref{CLT:eta_regularity}, and Section \ref{sec::relative compactness K}, $\{\gamma^{N_1,N_2},h^{N_1,N_2},l^{N_1,N_2}_{1},l^{N_1,N_2}_{2},l^{N_1,N_2}_{3},K^{N_1,N_2}\}_{N_2 \in \mathbb{N}}$ is relatively compact in $D_{E_1}([0,T])$.
Denote $\pi^{N_1,N_2} \in \CM(D_{E_1}([0,T])$ the probability measure corresponding to $(\gamma^{N_1,N_2}, h^{N_1,N_2}, l^{N_1,N_2}_{1}, l^{N_1,N_2}_{2}, l^{N_1,N_2}_{3}, K^{N_1,N_2})$. 
We now show that any limit point $\pi^{N_1}$ of a convergence subsequence $\pi^{N_1,N_{2_k}}$ is a Dirac measure concentrated on $(\gamma^{N_1}, h^{N_1}, l^{N_1}_{1}, l^{N_1}_{2}, l^{N_1}_{3}, K^{N_1})$, where $(\gamma^{N_1},h^{N_1})$ satisfies equation \eqref{h_N1_evolution} and $(l^{N_1}_{1}, l^{N_1}_{2}, l^{N_1}_{3}, K^{N_1})$ satisfies Lemma \ref{CLT:lemma:lt_1-gamma}, equations \eqref{CLT:evolution}, or \eqref{K_t for 1-gamma_2} for different values of $\gamma_2$ and $\varphi$.


\begin{custlist}[Case]
\item When $\gamma_2 \in \paren{\frac{1}{2}, \frac{3}{4}}$ and $\varphi_2 \le \gamma - \frac{1}{2}$, or when $\gamma \in \left[\frac{3}{4}, 1\right)$ and $\varphi < 1-\gamma_2 \le \gamma_2 - \frac{1}{2}$, for any $t \in [0,T]$, $m^1_1,\ldots,m^1_p \in C_b(\R)$, $m^{i,j}_1,\ldots,m^{i,j}_p \in C_b(\R)$ for $i=2,3, j=1,\ldots, N_1$, $z_1, \ldots, z_p \in C_b(\R^{M})$, and $0 \le s_1 < \cdots < s_p \le t$, we define $F_2: D_{E_1}([0,T]) \to \R_+$ as
\begin{align*}
&F_2(\gamma, h, l_1, l_2, l_3, K)\\
 &= F(\gamma^{N_1},h^{N_1}) + \abs{\paren{l^{N_1}_{1,t} - 0} \times m^1_1(l^{N_1}_{1,s_1})\times \cdots \times m^1_p(l^{N_1}_{1,s_p})} + \sum_{i=2}^{3} \sum_{j=1}^{N_1} \abs{\paren{l^{N_1,j}_{i,t} - 0} \times m^{i,j}_1(l^{N_1,j}_{i,s_1})\times \cdots \times m^{i,j}_p(l^{N_1,j}_{i,s_p})}\\
& \quad + \sum_{x\in \mathcal{X}} \left\vert \left\lbrace K^{N_1}_t(x) - K^{N_1}_0(x) - \int_0^t \int_{\CX\times \CY} \left(y-h^{N_1}_s(x')\right)\left[l^{N_1}_t\left(B^1_{x,x'}(\theta)\right)+\frac{1}{N_1}\sum_{j=1}^{N_1}l^{N_1}_t\left(B^{2,j}_{x,x'}(\theta)\right)\right]\pi(dx',dy)ds \right.\right.\\
&\qquad -\frac{1}{N_1}\sum_{j=1}^{N_1}\int_0^t \int_{\CX\times \CY} \left(y-h^{N_1}_s(x')\right)xx'l^{N_1}_t\left(B^{3,j}_{x}(\theta)\right)\ip{B^{3,j}_{x'}(\theta),\gamma^{N_1}_0}\pi(dx',dy)ds\\
&\qquad -\frac{1}{N_1}\sum_{j=1}^{N_1}\int_0^t \int_{\CX\times \CY} \left(y-h^{N_1}_s(x')\right)xx'\ip{B^{3,j}_{x}(\theta), \gamma^{N_1}_0}l^{N_1}_t\left(B^{3,j}_{x'}(\theta)\right)\pi(dx',dy)ds\\
&\qquad + \int^t_0 \int_{\CX \times \CY}K^{N_1}_s(x')\ip{B^1_{x,x'}(\theta)+\frac{1}{N_1}\sum_{j=1}^{N_1}B^{2,j}_{x,x'}(\theta), \gamma^{N_1}_0}\pi(dx',dy) ds\\
&\qquad \left.\left. +\frac{1}{N_1}\sum_{j=1}^{N_1}\int_0^t \int_{\CX\times \CY} K^{N_1}_s(x')xx'\ip{B^{3,j}_{x}(\theta), \gamma^{N_1}_0}\ip{B^{3,j}_{x'}(\theta),\gamma^{N_1}_0}\pi(dx',dy)ds \right\rbrace \times z_1(K^{N_1}_{s_1}) \times \cdots \times z_p(K^{N_1}_{s_p}) \right \vert \\
\end{align*}
where $F(\gamma^{N_1},h^{N_1})$ is as given in equation \eqref{LNN:identify_limit_eq} and $l^{N_1,j}_{i,t}$ is the $j$-th element of the $N_1$-dimensional vector $l^{N_1}_{i,t}$ for $i=2,3$. We now note that for any $x \in \mathcal{X}$, by equation \eqref{K_evolution},
\begin{align*}
&K^{N_1,N_2}_t(x) - K^{N_1,N_2}_0(x) - \int_0^t \int_{\CX\times \CY} \left(y-h^{N_1,N_2}_s(x')\right)\left[l^{N_1,N_2}_t\left(B^1_{x,x'}(\theta)\right)+\frac{1}{N_1}\sum_{j=1}^{N_1}l^{N_1,N_2}_t\left(B^{2,j}_{x,x'}(\theta)\right)\right]\pi(dx',dy)ds \\
&\qquad -\frac{1}{N_1}\sum_{j=1}^{N_1}\int_0^t \int_{\CX\times \CY} \left(y-h^{N_1,N_2}_s(x')\right)xx'l^{N_1,N_2}_t\left(B^{3,j}_{x}(\theta)\right)\ip{B^{3,j}_{x'}(\theta),\gamma^{N_1,N_2}_0}\pi(dx',dy)ds\\
&\qquad -\frac{1}{N_1}\sum_{j=1}^{N_1}\int_0^t \int_{\CX\times \CY} \left(y-h^{N_1,N_2}_s(x')\right)xx'\ip{B^{3,j}_{x}(\theta), \gamma^{N_1,N_2}_0}l^{N_1,N_2}_t\left(B^{3,j}_{x'}(\theta)\right)\pi(dx',dy)ds\\
&\qquad + \int^t_0 \int_{\CX \times \CY}K^{N_1,N_2}_s(x')\ip{B^1_{x,x'}(\theta)+\frac{1}{N_1}\sum_{j=1}^{N_1}B^{2,j}_{x,x'}(\theta), \gamma^{N_1,N_2}_0}\pi(dx',dy) ds\\
&\qquad +\frac{1}{N_1}\sum_{j=1}^{N_1}\int_0^t \int_{\CX\times \CY} K^{N_1,N_2}_s(x')xx'\ip{B^{3,j}_{x}(\theta), \gamma^{N_1,N_2}_0}\ip{B^{3,j}_{x'}(\theta),\gamma^{N_1,N_2}_0}\pi(dx',dy)ds\\
&= (1) + (2) + (3) + (4) + (5) {+ \Gamma^{N_1,N_2}_{t}(x)}+ N_2^{\varphi}M_t^{N_1,N_2}(x) + O (N_2^{-\gamma_2+\varphi}),
\end{align*}
where terms $(1),(2),(3), (4), (5)$ will be specified and analyzed as follows. We see that term $(1)$ satisfies
\begin{equation}\label{eq_temp_Kfirst}
\begin{aligned}
(1) &= \int_0^t \int_{\CX\times \CY} \left[\left(y-h^{N_1}_s(x')\right)- \left(y-h^{N_1,N_2}_s(x')\right)\right]\left[l^{N_1,N_2}_t\left(B^1_{x,x'}(\theta)\right)+\frac{1}{N_1}\sum_{j=1}^{N_1}l^{N_1,N_2}_t\left(B^{2,j}_{x,x'}(\theta)\right)\right]\pi(dx',dy)ds\\
&= \frac{1}{N_1^{\varphi}}\int_0^t \int_{\CX\times \CY} K^{N_1,N_2}_s(x')\left[l^{N_1,N_2}_t\left(B^1_{x,x'}(\theta)\right)+\frac{1}{N_1}\sum_{j=1}^{N_1}l^{N_1,N_2}_t\left(B^{2,j}_{x,x'}(\theta)\right)\right]\pi(dx',dy)ds\\
&= - \left(\Gamma^{N_1,N_2}_{1,t}(x) + \Gamma^{N_1,N_2}_{2,t}(x)\right).
\end{aligned}
\end{equation}
Term $(2)$ can be rearranged into
\begin{equation*}
\begin{aligned}
(2) &= \int^t_0 \int_{\CX \times \CY}K^{N_1,N_2}_s(x')\ip{B^1_{x,x'}(\theta)+\frac{1}{N_1}\sum_{j=1}^{N_1}B^{2,j}_{x,x'}(\theta), \gamma^{N_1,N_2}_0 -\gamma^{N_1}_0}\pi(dx',dy) ds\\
&= \frac{1}{N_2^{\varphi}} \int^t_0 \int_{\CX \times \CY}K^{N_1,N_2}_s(x')\ip{B^1_{x,x'}(\theta)+\frac{1}{N_1}\sum_{j=1}^{N_1}B^{2,j}_{x,x'}(\theta), \eta^{N_1}_0}\pi(dx',dy) ds,
\end{aligned}
\end{equation*}
and by the Cauchy-Schwarz inequality, Lemmas \ref{CLT:lemma:eta_compact_contatinment} and \ref{CLT:lemma:bound of ex_Kt}, for any $t \in [0,T]$,
\begin{equation}\label{eq:K^N:temp}
\begin{aligned}
& \E \paren{\abs{\frac{1}{N_2^{\varphi}} \int^t_0 \int_{\CX \times \CY}K^{N_1,N_2}_s(x')\ip{B^1_{x,x'}(\theta)+\frac{1}{N_1}\sum_{j=1}^{N_1}B^{2,j}_{x,x'}(\theta), \eta^{N_1}_0}\pi(dx',dy) ds}}\\
&\le \frac{C}{N_2^{\varphi}} \int_0^t  \int_{\CX \times \CY}\E \paren{\abs{ K^{N_1,N_2}_s(x')}\abs{\ip{B^1_{x,x'}(\theta)+\frac{1}{N_1}\sum_{j=1}^{N_1}B^{2,j}_{x,x'}(\theta), \eta^{N_1}_0}}} \pi(dx',dy) ds\\
&\le \frac{C}{N_2^{\varphi}} \int_0^t  \int_{\CX \times \CY}\E \paren{\abs{ K^{N_1,N_2}_s(x')}^2 }^{\frac{1}{2}} \E \paren{\abs{\ip{B^1_{x,x'}(\theta)+\frac{1}{N_1}\sum_{j=1}^{N_1}B^{2,j}_{x,x'}(\theta), \eta^{N_1}_0}}^2}^{\frac{1}{2}} \pi(dx',dy) ds\\
& \le \frac{C(T)}{N_2^{\varphi}},
\end{aligned}
\end{equation}
where $C(T)<\infty$ is some finite constant depending on $T$.

We discuss terms (3) and (4) together. Since
\begin{equation*}
\begin{aligned}
(3)&=\frac{1}{N_1}\sum_{j=1}^{N_1}\int_0^t \int_{\CX\times \CY} \left(y-h^{N_1}_s(x')\right)xx'l^{N_1,N_2}_t\left(B^{3,j}_{x}(\theta)\right)\ip{B^{3,j}_{x'}(\theta),\gamma^{N_1}_0}\pi(dx',dy)ds\\
&\quad -\frac{1}{N_1}\sum_{j=1}^{N_1}\int_0^t \int_{\CX\times \CY} \left(y-h^{N_1,N_2}_s(x')\right)xx'l^{N_1,N_2}_t\left(B^{3,j}_{x}(\theta)\right)\ip{B^{3,j}_{x'}(\theta),\gamma^{N_1,N_2}_0}\pi(dx',dy)ds\\
&=-\frac{1}{N_1N_2^{\varphi}}\sum_{j=1}^{N_1}\int_0^t \int_{\CX\times \CY} \left(y-h^{N_1}_s(x')\right)xx'l^{N_1,N_2}_t\left(B^{3,j}_{x}(\theta)\right)l^{N_1,N_2}_t\left(B^{3,j}_{x'}(\theta)\right)\pi(dx',dy)ds\\
&\quad +\frac{1}{N_1N_2^{\varphi}}\sum_{j=1}^{N_1}\int_0^t \int_{\CX\times \CY} K^{N_1,N_2}_s(x')xx'l^{N_1,N_2}_t\left(B^{3,j}_{x}(\theta)\right)\ip{B^{3,j}_{x'}(\theta),\gamma^{N_1,N_2}_s}\pi(dx',dy)ds\\
&\quad +\frac{1}{N_1}\sum_{j=1}^{N_1}\int_0^t \int_{\CX\times \CY} \left(y-h^{N_1}_s(x')\right)xx'l^{N_1,N_2}_t\left(B^{3,j}_{x}(\theta)\right)\ip{B^{3,j}_{x'}(\theta),\gamma^{N_1,N_2}_s-\gamma^{N_1,N_2}_0}\pi(dx',dy)ds,
\end{aligned}
\end{equation*}
and
\begin{equation*}
\begin{aligned}
(4)&= \frac{1}{N_1}\sum_{j=1}^{N_1}\int_0^t \int_{\CX\times \CY} \left(y-h^{N_1}_s(x')\right)xx'\ip{B^{3,j}_{x}(\theta), \gamma^{N_1}_0}l^{N_1,N_2}_t\left(B^{3,j}_{x'}(\theta)\right)\pi(dx',dy)ds\\
&\quad  -\frac{1}{N_1}\sum_{j=1}^{N_1}\int_0^t \int_{\CX\times \CY} \left(y-h^{N_1,N_2}_s(x')\right)xx'\ip{B^{3,j}_{x}(\theta), \gamma^{N_1,N_2}_0}l^{N_1,N_2}_t\left(B^{3,j}_{x'}(\theta)\right)\pi(dx',dy)ds\\
&= \frac{1}{N_1N_2^{\varphi}}\sum_{j=1}^{N_1}\int_0^t \int_{\CX\times \CY} K^{N_1,N_2}_s(x')xx'\ip{B^{3,j}_{x}(\theta),\gamma^{N_1,N_2}_0}l^{N_1,N_2}_t\left(B^{3,j}_{x'}(\theta)\right)\pi(dx',dy)ds\\
&\quad +\frac{1}{N_1}\sum_{j=1}^{N_1}\int_0^t \int_{\CX\times \CY} \left(y-h^{N_1,N_2}_s(x')\right)xx'\ip{B^{3,j}_{x}(\theta),\gamma^{N_1,N_2}_0-\gamma^{N_1}_0}l^{N_1,N_2}_t\left(B^{3,j}_{x'}(\theta)\right)\pi(dx',dy)ds,
\end{aligned}
\end{equation*}
one has
\begin{equation}\label{temp_3_4}
\begin{aligned}
&(3)+(4) =\\
&= -\Gamma^{N_1,N_2}_{3,t}(x) +\frac{1}{N_1}\sum_{j=1}^{N_1}\int_0^t \int_{\CX\times \CY} \left(y-h^{N_1}_s(x')\right)xx'l^{N_1,N_2}_t\left(B^{3,j}_{x}(\theta)\right)\ip{B^{3,j}_{x'}(\theta),\gamma^{N_1,N_2}_s-\gamma^{N_1,N_2}_0}\pi(dx',dy)ds\\
&\quad +\frac{1}{N_1}\sum_{j=1}^{N_1}\int_0^t \int_{\CX\times \CY} \left(y-h^{N_1,N_2}_s(x')\right)xx'\ip{B^{3,j}_{x}(\theta),\gamma^{N_1,N_2}_0-\gamma^{N_1}_0}l^{N_1,N_2}_t\left(B^{3,j}_{x'}(\theta)\right)\pi(dx',dy)ds\\
&=-\Gamma^{N_1,N_2}_{3,t}(x)+\frac{1}{N_1N_2^{\varphi}}\sum_{j=1}^{N_1}\int_0^t \int_{\CX\times \CY} \left(y-h^{N_1}_s(x')\right)xx'l^{N_1,N_2}_t\left(B^{3,j}_{x}(\theta)\right)l^{N_1,N_2}_t\left(B^{3,j}_{x'}(\theta)\right)\pi(dx',dy)ds\\
&\quad - \frac{1}{N_1N_2^{\varphi}}\sum_{j=1}^{N_1}\int_0^t \int_{\CX\times \CY} \left(y-h^{N_1}_s(x')\right)xx'l^{N_1,N_2}_t\left(B^{3,j}_{x}(\theta)\right)l^{N_1,N_2}_0\left(B^{3,j}_{x'}(\theta)\right)\pi(dx',dy)ds\\
&\quad +\frac{1}{N_1N_2^{\varphi}}\sum_{j=1}^{N_1}\int_0^t \int_{\CX\times \CY} \left(y-h^{N_1,N_2}_s(x')\right)xx'l^{N_1,N_2}_0\left(B^{3,j}_{x}(\theta)\right)l^{N_1,N_2}_t\left(B^{3,j}_{x'}(\theta)\right)\pi(dx',dy)ds.
\end{aligned}
\end{equation}
Since by Lemmas \ref{lemma_g} and \ref{CLT:lemma:eta_compact_contatinment},
\begin{equation*}
\begin{aligned}
&\E\left(\abs{\int_0^t \int_{\CX\times \CY} \left(y-h^{N_1}_s(x')\right)xx'l^{N_1,N_2}_t\left(B^{3,j}_{x}(\theta)\right)l^{N_1,N_2}_t\left(B^{3,j}_{x'}(\theta)\right)\pi(dx',dy)ds}\right)\\
&\le\E\left(\abs{\int_0^t \int_{\CX\times \CY} \left(y-h^{N_1}_s(x')\right)xx'l^{N_1,N_2}_t\left(B^{3,j}_{x}(\theta)\right)l^{N_1,N_2}_t\left(B^{3,j}_{x'}(\theta)\right)\pi(dx',dy)ds}^2\right)\\
&\le C(T)\int_0^t \int_{\CX\times \CY}\E\left[\abs{l^{N_1,N_2}_t\left(B^{3,j}_{x}(\theta)\right)l^{N_1,N_2}_t\left(B^{3,j}_{x'}(\theta)\right)}^2\right]\pi(dx',dy)ds\\
&\le C(T)\int_0^t \int_{\CX\times \CY}\E\left[\abs{l^{N_1,N_2}_t\left(B^{3,j}_{x}(\theta)\right)}^4\right]^{\frac{1}{2}} \E\left[ \abs{l^{N_1,N_2}_t\left(B^{3,j}_{x'}(\theta)\right)}^4\right]^{\frac{1}{2}}\pi(dx',dy)ds \\
&\le C(T),
\end{aligned}
\end{equation*}
\begin{equation*}
\begin{aligned}
&\E\left[\abs{\int_0^t \int_{\CX\times \CY} \left(y-h^{N_1,N_2}_s(x')\right)xx'l^{N_1,N_2}_0\left(B^{3,j}_{x}(\theta)\right)l^{N_1,N_2}_t\left(B^{3,j}_{x'}(\theta)\right)\pi(dx',dy)ds}\right]\\
&\le \int_0^t \int_{\CX\times \CY} \E\left[\abs{\left(y-h^{N_1,N_2}_s(x')\right)xx'}\abs{l^{N_1,N_2}_0\left(B^{3,j}_{x}(\theta)\right)}\abs{l^{N_1,N_2}_t\left(B^{3,j}_{x'}(\theta)\right)}\right]\pi(dx',dy)ds\\
&\le \int_0^t \int_{\CX\times \CY} \E\left[\abs{\left(y-h^{N_1,N_2}_s(x')\right)xx'}^2\right]^{\frac{1}{2}}\E\left[\abs{l^{N_1,N_2}_0\left(B^{3,j}_{x}(\theta)\right)}^4\right]^{\frac{1}{4}}\E\left[\abs{l^{N_1,N_2}_t\left(B^{3,j}_{x'}(\theta)\right)}^4\right]^{\frac{1}{4}}\pi(dx',dy)ds\\
&\le C(T),
\end{aligned}
\end{equation*}
the expectation of the last three terms in \eqref{temp_3_4} is bounded by $O(N_2^{-\varphi})$. Lastly, for term (5), we have
\begin{equation*}
\begin{aligned}
(5)&=\frac{1}{N_1}\sum_{j=1}^{N_1}\int_0^t \int_{\CX\times \CY} K^{N_1,N_2}_s(x')xx'\ip{B^{3,j}_{x}(\theta), \gamma^{N_1,N_2}_0}\ip{B^{3,j}_{x'}(\theta),\gamma^{N_1,N_2}_0}\pi(dx',dy)ds\\
&\quad - \frac{1}{N_1}\sum_{j=1}^{N_1}\int_0^t \int_{\CX\times \CY} K^{N_1,N_2}_s(x')xx'\ip{B^{3,j}_{x}(\theta), \gamma^{N_1}_0}\ip{B^{3,j}_{x'}(\theta),\gamma^{N_1}_0}\pi(dx',dy)ds\\
&=\frac{1}{N_1N_2^{2\varphi}}\sum_{j=1}^{N_1}\int_0^t \int_{\CX\times \CY} K^{N_1,N_2}_s(x')xx'l^{N_1,N_2}_0\left(B^{3,j}_{x}(\theta)\right)l^{N_1,N_2}_0\left(B^{3,j}_{x'}(\theta)\right)\pi(dx',dy)ds\\
&\quad + \frac{1}{N_1N_2^{\varphi}}\sum_{j=1}^{N_1}\int_0^t \int_{\CX\times \CY} K^{N_1,N_2}_s(x')xx'l^{N_1,N_2}_0\left(B^{3,j}_{x}(\theta)\right)\ip{B^{3,j}_{x'}(\theta),\gamma^{N_1}_0}\pi(dx',dy)ds\\
&\quad + \frac{1}{N_1N_2^{\varphi}}\sum_{j=1}^{N_1}\int_0^t \int_{\CX\times \CY} K^{N_1,N_2}_s(x')xx'\ip{B^{3,j}_{x}(\theta),\gamma^{N_1}_0}l^{N_1,N_2}_0\left(B^{3,j}_{x'}(\theta)\right)\pi(dx',dy)ds.
\end{aligned}
\end{equation*}
By the Cauchy-Schwarz inequality, Lemmas \ref{CLT:lemma:eta_compact_contatinment} and \ref{CLT:lemma:bound of ex_Kt}, for any $t \in [0,T]$,
\begin{equation*}
\begin{aligned}
&\E\left(\abs{\frac{1}{N_1N_2^{2\varphi}}\sum_{j=1}^{N_1}\int_0^t \int_{\CX\times \CY} K^{N_1,N_2}_s(x')xx'l^{N_1,N_2}_0\left(B^{3,j}_{x}(\theta)\right)l^{N_1,N_2}_0\left(B^{3,j}_{x'}(\theta)\right)\pi(dx',dy)ds}\right)\\
&\le\frac{1}{N_1N_2^{2\varphi}}\sum_{j=1}^{N_1}\int_0^t \int_{\CX\times \CY} \E\left[\abs{K^{N_1,N_2}_s(x')l^{N_1,N_2}_0\left(xx'B^{3,j}_{x}(\theta)\right)l^{N_1,N_2}_0\left(B^{3,j}_{x'}(\theta)\right)}\right] \pi(dx',dy)ds\\
&\le \frac{1}{N_1N_2^{2\varphi}}\sum_{j=1}^{N_1}\int_0^t \int_{\CX\times \CY} \E\left[\abs{K^{N_1,N_2}_s(x')}^2\right]^{\frac{1}{2}}\E\left[\abs{l^{N_1,N_2}_0\left(xx'B^{3,j}_{x}(\theta)\right)}^4\right]^{\frac{1}{4}}\E\left[\abs{l^{N_1,N_2}_0\left(B^{3,j}_{x'}(\theta)\right)}^4\right]^{\frac{1}{4}} \pi(dx',dy)ds\\
&\le \frac{C(T)}{N_2^{2\varphi}},
\end{aligned}
\end{equation*}
\begin{equation*}
\begin{aligned}
&\E\left(\abs{\frac{1}{N_1N_2^{\varphi}}\sum_{j=1}^{N_1}\int_0^t \int_{\CX\times \CY} K^{N_1,N_2}_s(x')xx'l^{N_1,N_2}_0\left(B^{3,j}_{x}(\theta)\right)\ip{B^{3,j}_{x'}(\theta),\gamma^{N_1}_0}\pi(dx',dy)ds}\right)\\
&\le \E\left(\frac{1}{N_1N_2^{\varphi}}\sum_{j=1}^{N_1}\sup_{x'}\abs{\ip{xx'B^{3,j}_{x'}(\theta),\gamma^{N_1}_0}}\int_0^t \int_{\CX\times \CY}\abs{ K^{N_1,N_2}_s(x')l^{N_1,N_2}_0\left(B^{3,j}_{x}(\theta)\right)}\pi(dx',dy)ds\right)\\
&\le \frac{C}{N_1N_2^{\varphi}}\sum_{j=1}^{N_1}\int_0^t \int_{\CX\times \CY}\E\left[\abs{ K^{N_1,N_2}_s(x')}^2\right]^{\frac{1}{2}} \E\left[\abs{l^{N_1,N_2}_0\left(B^{3,j}_{x}(\theta)\right)}^2\right]^{\frac{1}{2}}\pi(dx',dy)ds\\
&\le \frac{C(T)}{N_2^{\varphi}},
\end{aligned}
\end{equation*}
and similarly,
\begin{equation}\label{eq_temp_Klast}
\begin{aligned}
\E\left(\abs{\frac{1}{N_1N_2^{\varphi}}\sum_{j=1}^{N_1}\int_0^t \int_{\CX\times \CY} K^{N_1,N_2}_s(x')xx'\ip{B^{3,j}_{x}(\theta),\gamma^{N_1}_0}l^{N_1,N_2}_0\left(B^{3,j}_{x'}(\theta)\right)\pi(dx',dy)ds}\right)\le \frac{C(T)}{N_2^{\varphi}}.
\end{aligned}
\end{equation}


By equations \eqref{eq_temp_Kfirst} to \eqref{eq_temp_Klast}, the analysis in Sections \ref{sec::Identification of the Limit} and  \ref{CLT:sec:eta_properties}, and Lemma \ref{CLT:lemma:martingale_bound}, we have
\begin{equation*}
\begin{aligned}
&\E_{\pi^{N_1,N_2}} \left[ F_2(\gamma^{N_1}, h^{N_1}, l^{N_1}_1, l^{N_1}_2, l^{N_1}_3, K^{N_1})\right]\\
&= \E_{\pi^{N_1,N_2}} \left[ F(\gamma^{N_1}, h^{N_1})\right] + \E\left[\abs{\paren{l^{N_1,N_2}_{1,t} - 0} \times\prod_{n=1}^p m^1_n(l^{N_1,N_2}_{1,s_n})}\right] + \sum_{i=2}^{3} \sum_{j=1}^{N_1}\E \left[ \abs{\paren{l^{N_1,N_2,j}_{i,t} - 0} \times \prod_{n=1}^p m^{i,j}_n(l^{N_1,N_2,j}_{i,s_n})}\right]\\
&  + \sum_{x\in \mathcal{X}}\E\left\lbrace \left\vert \left( K^{N_1,N_2}_t(x) - K^{N_1,N_2}_0(x) - \int_0^t \int_{\CX\times \CY} \left(y-h^{N_1,N_2}_s(x')\right)l^{N_1,N_2}_t\left(B^1_{x,x'}(\theta)\right)\pi(dx',dy)ds \right.\right.\right.\\
&\qquad -\frac{1}{N_1}\sum_{j=1}^{N_1}\int_0^t \int_{\CX\times \CY} \left(y-h^{N_1,N_2}_s(x')\right)l^{N_1,N_2}_t\left(B^{2,j}_{x,x'}(\theta)\right)\pi(dx',dy)ds\\
&\qquad -\frac{1}{N_1}\sum_{j=1}^{N_1}\int_0^t \int_{\CX\times \CY} \left(y-h^{N_1,N_2}_s(x')\right)xx'l^{N_1,N_2}_t\left(B^{3,j}_{x}(\theta)\right)\ip{B^{3,j}_{x'}(\theta),\gamma^{N_1,N_2}_0}\pi(dx',dy)ds\\
&\qquad -\frac{1}{N_1}\sum_{j=1}^{N_1}\int_0^t \int_{\CX\times \CY} \left(y-h^{N_1,N_2}_s(x')\right)xx'\ip{B^{3,j}_{x}(\theta), \gamma^{N_1,N_2}_0}l^{N_1,N_2}_t\left(B^{3,j}_{x'}(\theta)\right)\pi(dx',dy)ds\\
&\qquad + \int^t_0 \int_{\CX \times \CY}K^{N_1,N_2}_s(x')\ip{B^1_{x,x'}(\theta)+\frac{1}{N_1}\sum_{j=1}^{N_1}B^{2,j}_{x,x'}(\theta), \gamma^{N_1,N_2}_0}\pi(dx',dy) ds\\
&\qquad \left.\left.\left. +\frac{1}{N_1}\sum_{j=1}^{N_1}\int_0^t \int_{\CX\times \CY} K^{N_1,N_2}_s(x')xx'\ip{B^{3,j}_{x}(\theta), \gamma^{N_1,N_2}_0}\ip{B^{3,j}_{x'}(\theta),\gamma^{N_1,N_2}_0}\pi(dx',dy)ds \right) \times  \prod_{i=1}^p z_i(K^{N_1,N_2}_{s_i})\right \vert \right\rbrace\\
&\le C\paren{\frac{1}{N_2^{1-\gamma_2}}}  + C \paren{\frac{1}{N_2^{1-\gamma_2 - \varphi}} + \frac{1}{N_2^{1-\varphi}} + \frac{1}{N_2^{\frac{1}{2}-\varphi}}}  + C\E\left[\abs{N_2^{\varphi}M_t^{N_1,N_2}}^2\right]^{\frac{1}{2}} +C\paren{\frac{1}{N_2^{\gamma_2-\varphi}} + \frac{1}{N_2^{\varphi}}}\\
&\le C \paren{\frac{1}{N_2^{1-\gamma_2 - \varphi}} + \frac{1}{N_2^{\frac{1}{2}-\varphi}} +\frac{1}{N_2^{\varphi}}}.
\end{aligned}
\end{equation*}

Therefore, $\lim_{N_2 \to \infty} \E_{\pi^{N_1,N_2}} [ F_2(\gamma, h, l_1, l_2, l_3, K)] = 0$. Since $F(\cdot)$ is continuous and $F(\gamma^{N_1,N_2},h^{N_1,N_2})$ is uniformly bounded, together with analysis in Sections \ref{CLT:sec:eta_properties} and \ref{sec::relative compactness K}, we have that $F_2(\cdot)$ is continuous and $F_2(\gamma^{N_1,N_2}, h^{N_1,N_2}, l_1^{N_1,N_2}, l_2^{N_1,N_2}, l_3^{N_1,N_2}, K^{N_1,N_2})$ is uniformly bounded. Hence, by weak convergence we have
\[\lim_{N_2 \to \infty} \E_{\pi^{N_1,N_2}} [ F_2(\gamma, h, l_1, l_2, l_3, K)] = \E_{\pi^{N_1}} [ F_2(\gamma, h, l_1, l_2, l_3, K)] = 0.\]

We have shown that any limit point $\pi^{N_1}$ of a convergence sequence must be a Dirac measure concentrated $(\gamma^{N_1}, h^{N_1}, l^{N_1}_{1}, l^{N_1}_{2}, l^{N_1}_{3}, K^{N_1})$, which  satisfies equation \eqref{h_N1_evolution}, $l^{N_1}_{i} = 0$ for $i=1,2,3$, and equation \eqref{CLT:evolution}. Since the solutions to equations \eqref{h_N1_evolution} and \eqref{CLT:evolution} are unique, the processes in consideration converges in distribution to$(\gamma^{N_1}_0, h^{N_1}, 0, 0, 0, K^{N_1})$ by Prokhorov's theorem.

\item  When $\gamma_2 \in \left[\frac{3}{4}, 1\right)$ and $\varphi = 1-\gamma_2$, for any $t \in [0,T]$, $m^1_1,\ldots,m^1_p \in C_b(\R)$, $m^{i,j}_1,\ldots,m^{i,j}_p \in C_b(\R)$ for $i=2,3, j=1,\ldots, N_1$, $z_1, \ldots, z_p \in C_b(\R^{M})$, and $0 \le s_1 < \cdots < s_p \le t$, we define $F_3(\gamma, h, l_1, l_2, l_3, K): D_{E_1}([0,T]) \to \R_+$ as
\begin{equation}\label{F_3}
\begin{aligned}
&F_3(\gamma, h, l_1, l_2, l_3, K)\\
&= F(\gamma^{N_1},h^{N_1}) + \left\vert F_{\eta}(l^{N_1}_{1,t}) \times m^1_1(l^{N_1}_{1,s_1})\times \cdots \times m^1_p(l^{N_1}_{1,s_p})\right\vert + \sum_{i=2}^{3} \sum_{j=1}^{N_1} \abs{F_{\eta}(l^{N_1,j}_{i,t}) \times m^{i,j}_1(l^{N_1,j}_{i,s_1})\times \cdots \times m^{i,j}_p(l^{N_1,j}_{i,s_p})}\\
& \quad + \sum_{x\in \mathcal{X}} \left\vert \left\lbrace K^{N_1}_t(x) - K^{N_1}_0(x) - \int_0^t \int_{\CX\times \CY} \left(y-h^{N_1}_s(x')\right)\left[l^{N_1}_t\left(B^1_{x,x'}(\theta)\right)+\frac{1}{N_1}\sum_{j=1}^{N_1}l^{N_1}_t\left(B^{2,j}_{x,x'}(\theta)\right)\right]\pi(dx',dy)ds \right.\right.\\
&\qquad -\frac{1}{N_1}\sum_{j=1}^{N_1}\int_0^t \int_{\CX\times \CY} \left(y-h^{N_1}_s(x')\right)xx'l^{N_1}_t\left(B^{3,j}_{x}(\theta)\right)\ip{B^{3,j}_{x'}(\theta),\gamma^{N_1}_0}\pi(dx',dy)ds\\
&\qquad -\frac{1}{N_1}\sum_{j=1}^{N_1}\int_0^t \int_{\CX\times \CY} \left(y-h^{N_1}_s(x')\right)xx'\ip{B^{3,j}_{x}(\theta), \gamma^{N_1}_0}l^{N_1}_t\left(B^{3,j}_{x'}(\theta)\right)\pi(dx',dy)ds\\
&\qquad + \int^t_0 \int_{\CX \times \CY}K^{N_1}_s(x')\ip{B^1_{x,x'}(\theta)+\frac{1}{N_1}\sum_{j=1}^{N_1}B^{2,j}_{x,x'}(\theta), \gamma^{N_1}_0}\pi(dx',dy) ds\\
&\qquad \left.\left. +\frac{1}{N_1}\sum_{j=1}^{N_1}\int_0^t \int_{\CX\times \CY} K^{N_1}_s(x')xx'\ip{B^{3,j}_{x}(\theta), \gamma^{N_1}_0}\ip{B^{3,j}_{x'}(\theta),\gamma^{N_1}_0}\pi(dx',dy)ds \right\rbrace \times z_1(K^{N_1}_{s_1}) \times \cdots \times z_p(K^{N_1}_{s_p}) \right \vert \\
\end{aligned}
\end{equation}
where $F(\gamma^{N_1},h^{N_1})$ is as given in equation \eqref{LNN:identify_limit_eq}, $l^{N_1,j}_{i,t}$ is the $j$-th element of the $N_1$-dimensional vector $l^{N_1}_{i,t}$ for $i=2,3$, and
\begin{equation*}
\begin{aligned}
&F_{\eta}(l^{N_1}_{t}(f)) =  l^{N_1}_{t}(f)-\int_0^t \int_{\CX\times \CY} \left(y-h_s^{N_1}(x')\right) \ip{\partial_{c}f(\theta)  \sigma(Z^{2,N_1}(x')),{\gamma}^{N_1}_s} \pi(dx',dy)ds  \\
&\qquad \qquad -\frac{1}{N_1^{1-\gamma_1}}\int_0^t\int_{\CX\times \CY}\left(y-h_s^{N_1}(x')\right) \ip{c\sigma'(Z^{2,N_1}(x'))\sigma(w^1x')\cdot \partial_{w^2}f(\theta) ,{\gamma}^{N_1}_s}\pi(dx',dy)ds\\
&\qquad \qquad - \frac{1}{N_1^{1-\gamma_1}}\int_0^t\int_{\CX\times \CY}\left(y-h_s^{N_1}(x')\right)\ip{\ip{c\sigma'(Z^{2,N_1}(x'))\sigma'(w^1x')w^{2},{\gamma}_s^{N_1}}\cdot  \nabla_{w^1}f(\theta)x',{\gamma}^{N_1}_s}\pi(dx',dy)ds.\\
\end{aligned}
\end{equation*}

By equations \eqref{eq_temp_Kfirst} to \eqref{eq_temp_Klast}, Lemmas \ref{CLT:lemma:lt_1-gamma} and \ref{CLT:lemma:martingale_bound}, and the analysis in Section \ref{sec::Identification of the Limit}, we obtain
\begin{align*}
\E_{\pi^{N_1,N_2}}\left[F_3(\gamma, h, l_1, l_2, l_3, K)\right]&\le C\paren{\frac{1}{N_2^{1-\gamma_2}}+ \frac{1}{N_2^{\frac{1}{2}-\varphi}}+\frac{1}{N_2^{1-\varphi}}+ \frac{1}{N_2^{\gamma_2-\varphi}}+ \frac{1}{N_2^{\varphi}}}.
\end{align*}

Therefore, $\lim_{N_2\to \infty} \E_{\pi^{N_1,N_2}}[F_3(\gamma, h, l_1, l_2, l_3, K)] = 0$. By analysis in Sections \ref{sec:relative_compactness}, \ref{CLT:sec:eta_properties} and \ref{sec::relative compactness K}, we have that $F_3(\cdot)$ is continuous and $F_3(\gamma^{N_1,N_2}, h^{N_1,N_2}, l_1^{N_1,N_2}, l_2^{N_1,N_2}, l_3^{N_1,N_2}, K^{N_1,N_2})$ is uniformly bounded. Hence,
\[\lim_{N_2\to \infty} \E_{\pi^{N_1,N_2}}\left[F_3(\gamma, h, l_1, l_2, l_3, K)\right] = \E_{\pi^{N_1}}\left[F_3(\gamma, h, l_1, l_2, l_3, K)\right] = 0.\]

We have shown that any limit point $\pi^{N_1}$ of a convergence sequence must be a Dirac measure concentrated $(\gamma^{N_1}, h^{N_1}, l_1^{N_1}, l_2^{N_1}, l_3^{N_1}, K^{N_1})\in D_{E_1}([0,T])$, which satisfies equation \eqref{h_N1_evolution}, \eqref{CLT:evolution_l}, and \eqref{K_t for 1-gamma_2}. Since the solutions to equations \eqref{h_N1_evolution} and \eqref{K_t for 1-gamma_2} are unique, by Prokhorov's theorem, the processes

$(\gamma^{N_1,N_2}, h^{N_1,N_2}, l_1^{N_1,N_2}, l_2^{N_1,N_2}, l_3^{N_1,N_2}, K^{N_1,N_2})$ converges in distribution to $(\gamma^{N_1}, h^{N_1}, l_1^{N_1}, l_2^{N_1}, l_3^{N_1}, K^{N_1})$.
\end{custlist}

\section{Proof of Theorem \ref{thm::Psi}}\label{sec::Psi}
For $\gamma_2 \in \paren{\frac{3}{4},1}, \varphi = 1-\gamma_2$, we can further look at the fluctuation process $\Psi^{N_1,N_2}_t = N_2^{\zeta - \varphi} (K^{N_1,N_2}_t - K^{N_1}_t)$, for $\zeta>\varphi$. The evolution of $\Psi^{N_1,N_2}_t(x)$ can be written as
\begin{equation}\label{Psi^N_t}
\begin{aligned}
&\Psi^{N_1,N_2}_t(x)
= \int^t_0 \int_{\CX \times \CY} \paren{y-h^{N_1}_s(x')} N_2^{\zeta - \varphi}\left[ l^{N_1,N_2}_s(B^1_{x,x'}(\theta))-l^{N_1}_s(B^1_{x,x'}(\theta))\right] \pi(dx',dy) ds\\
&\quad + \frac{1}{N_1}\sum_{j=1}^{N_1} \int^t_0 \int_{\CX \times \CY} \paren{y-h^{N_1}_s(x')} N_2^{\zeta - \varphi}\left[ l^{N_1,N_2}_s(B^{2,j}_{x,x'}(\theta))-l^{N_1}_s(B^{2,j}_{x,x'}(\theta))\right] \pi(dx',dy) ds\\
&\quad + \frac{1}{N_1}\sum_{j=1}^{N_1}\int_0^t \int_{\CX\times \CY} \left(y-h^{N_1}_s(x')\right)N_2^{\zeta-\varphi}\left[l^{N_1,N_2}_s\left(B^{3,j}_{x}(\theta)\right)-l^{N_1}_s\left(B^{3,j}_{x}(\theta)\right)\right]\ip{xx'B^{3,j}_{x'}(\theta),\gamma^{N_1}_0}\pi(dx',dy)ds\\
&\quad + \frac{1}{N_1}\sum_{j=1}^{N_1}\int_0^t \int_{\CX\times \CY} \left(y-h^{N_1}_s(x')\right)\ip{xx'B^{3,j}_{x}(\theta), \gamma^{N_1}_0}N_2^{\zeta-\varphi}\left[l^{N_1,N_2}_s\left(B^{3,j}_{x'}(\theta)\right)-l^{N_1}_s\left(B^{3,j}_{x'}(\theta)\right)\right]\pi(dx',dy)ds\\
&\quad - \int^t_0 \int_{\CX \times \CY}\Psi^{N_1,N_2}_s(x')\ip{B^1_{x,x'}(\theta)+\frac{1}{N_1}\sum_{j=1}^{N_1}B^{2,j}_{x,x'}(\theta), \gamma^{N_1}_0}\pi(dx',dy) ds\\
&\quad -\frac{1}{N_1}\sum_{j=1}^{N_1}\int_0^t \int_{\CX\times \CY} \Psi^{N_1,N_2}_s(x')xx'\ip{B^{3,j}_{x}(\theta), \gamma^{N_1}_0}\ip{B^{3,j}_{x'}(\theta),\gamma^{N_1}_0}\pi(dx',dy)ds\\
&\quad +N_2^{\zeta - \varphi}\Gamma^{N_1,N_2}_t(x)+ \Psi^{N_1,N_2}_0(x) + N_2^{\zeta}M^{N_1,N_2}_{t} +O(N_2^{-\gamma_2+\zeta}),
\end{aligned}
\end{equation}
where $\Psi^{N_1,N_2}_0(x)= N_2^{1-\gamma_2 + \zeta}\ip{c\sigma(Z^{2,N_1}(x)),\tilde{\gamma}^{N_1,N_2}_0}$, and $\Gamma^{N_1,N_2}_t$ and $M^{N_1,N_2}_{t}$ are as given in Sections \ref{sec::convergence_of_K} and \ref{sec::prelimit_evolution}.
%
We see that if $\zeta \le \gamma_2 - \frac{1}{2}$, the last two remainder terms in equation \eqref{Psi^N_t} converge to zero as $N_2\to \infty$ by the similar analysis in Lemma \ref{CLT:lemma:martingale_bound}. In addition, if $\zeta = \gamma_2 -\frac{1}{2}$, $\Psi^{N_1,N_2}_0(x)= \ip{c\sigma(Z^{2,N_1}(x)),\sqrt{N_2}\tilde{\gamma}^{N_1,N_2}_0}\xrightarrow{d} \mathcal{G}^{N_1}(x)$
where $\mathcal{G}^{N_1}(x)$ is the Gaussian random variable defined in (\ref{limit_gaussian}).
For any fixed $f\in C^3_b(\R^{1+N_1(1+d)})$, let $L^
{N_1,N_2}_t(f) = N_2^{\zeta - \varphi}\left[ l^{N_1,N_2}_t(f)-l^{N_1}_t(f)\right]$. Its the evolution can be written as
\begin{equation}\label{L^N_fluc}
\begin{aligned}
&L^
{N_1,N_2}_t(f)
= N_2^{\zeta - \varphi}\left[ l^{N_1,N_2}_t(f)-l^{N_1,N_2}_0(f)-l^{N_1}_t(f)+l^{N_1,N_2}_0(f)\right]\\
&=N_2^{\zeta - \varphi}\int_0^t \int_{\CX \times \CY} \paren{y-h^{N_1,N_2}_s(x')}  \ip{\partial_{c}f(\theta)  \sigma(Z^{2,N_1}(x')),{\gamma}^{N_1,N_2}_s} \pi(dx',dy) ds\\
&\quad - N_2^{\zeta - \varphi}\int_0^t \int_{\CX\times \CY} \left(y-h_s^{N_1}(x')\right) \ip{\partial_{c}f(\theta)  \sigma(Z^{2,N_1}(x')),{\gamma}^{N_1}_0} \pi(dx',dy)ds\\
&\quad + \frac{N_2^{\zeta - \varphi}}{N_1^{1-\gamma_1}}\int_0^t \int_{\CX \times \CY} \left(y-h_s^{N_1,N_2}(x')\right) \ip{c\sigma'(Z^{2,N_1}(x'))\sigma(w^1x')\cdot \partial_{w^2}f(\theta) ,{\gamma}^{N_1,N_2}_s} \pi(dx',dy) ds\\
&\quad - \frac{N_2^{\zeta - \varphi}}{N_1^{1-\gamma_1}}\int_0^t\int_{\CX\times \CY}\left(y-h_s^{N_1}(x')\right) \ip{c\sigma'(Z^{2,N_1}(x'))\sigma(w^1x')\cdot \partial_{w^2}f(\theta) ,{\gamma}^{N_1}_0}\pi(dx',dy)ds\\
&\quad + \frac{N_2^{\zeta - \varphi}}{N_1^{1-\gamma_1}}\int_0^t\int_{\CX\times \CY}\left(y-h_s^{N_1,N_2}(x')\right)\ip{\ip{c\sigma'(Z^{2,N_1}(x'))\sigma'(w^1x')w^{2},{\gamma}_s^{N_1,N_2}}\cdot  \nabla_{w^1}f(\theta)x',{\gamma}^{N_1,N_2}_s}\pi(dx',dy)ds\\
&\quad - \frac{N_2^{\zeta - \varphi}}{N_1^{1-\gamma_1}}\int_0^t\int_{\CX\times \CY}\left(y-h_s^{N_1}(x')\right)\ip{\ip{c\sigma'(Z^{2,N_1}(x'))\sigma'(w^1x')w^{2},{\gamma}_0^{N_1}}\cdot  \nabla_{w^1}f(\theta)x',{\gamma}^{N_1}_0}\pi(dx',dy)ds\\
&\quad + N_2^{\zeta}\ip{f,\gamma^{N_1,N_2}_0 - \gamma^{N_1}_0}+ N_2^{\zeta - \varphi}M^{N_1,N_2}_{\eta,1,t} + N_2^{\zeta - \varphi}M^{N_1,N_2}_{\eta,2,t} +  N_2^{\zeta - \varphi}M^{N_1,N_2}_{\eta,3,t}+ O\paren{N_2^{-\gamma_2+\zeta-\varphi}}\\
&=(I)_L + (II)_L + (III)_L + \Gamma^{N_1,N_2}_{L,t}+ N_2^{\zeta}\ip{f,\gamma^{N_1,N_2}_0 - \gamma^{N_1}_0}\nonumber\\
&\quad+ N_2^{\zeta - \varphi}\left(M^{N_1,N_2}_{\eta,1,t}+M^{N_1,N_2}_{\eta,2,t}+M^{N_1,N_2}_{\eta,3,t}\right)+ O\paren{N_2^{-\gamma_2+\zeta-\varphi}},
\end{aligned}
\end{equation}
where
\begin{equation*}
\begin{aligned}
(I)_L &= N_2^{\zeta - 2\varphi}\int_0^t \int_{\CX \times \CY} \paren{y-h^{N_1}_s(x')}  \ip{\partial_{c}f(\theta)  \sigma(Z^{2,N_1}(x')),{\eta}^{N_1,N_2}_s} \pi(dx',dy) ds\\
&\quad - N_2^{\zeta - 2\varphi}\int_0^t \int_{\CX\times \CY} K^{N_1,N_2}_s(x') \ip{\partial_{c}f(\theta)  \sigma(Z^{2,N_1}(x')),{\gamma}^{N_1}_0} \pi(dx',dy)ds,
\end{aligned}
\end{equation*}

\begin{equation*}
\begin{aligned}
(II)_L &=\frac{N_2^{\zeta - 2\varphi}}{N_1^{1-\gamma_1}}\int_0^t \int_{\CX \times \CY} \left(y-h_s^{N_1}(x')\right) \ip{c\sigma'(Z^{2,N_1}(x'))\sigma(w^1x')\cdot \partial_{w^2}f(\theta) ,{\eta}^{N_1,N_2}_s} \pi(dx',dy) ds\\
&\quad - \frac{N_2^{\zeta - 2\varphi}}{N_1^{1-\gamma_1}}\int_0^t\int_{\CX\times \CY}K^{N_1,N_2}_s(x')\ip{c\sigma'(Z^{2,N_1}(x'))\sigma(w^1x')\cdot \partial_{w^2}f(\theta) ,{\gamma}^{N_1}_0}\pi(dx',dy)ds,
\end{aligned}
\end{equation*}

\begin{equation*}
\begin{aligned}
(III)_L &= \frac{N_2^{\zeta - 2\varphi}}{N_1^{1-\gamma_1}}\int_0^t\int_{\CX\times \CY}\left(y-h_s^{N_1}(x')\right)\ip{\ip{c\sigma'(Z^{2,N_1}(x'))\sigma'(w^1x')w^{2},{\gamma}_0^{N_1}}\cdot  \nabla_{w^1}f(\theta)x',{\eta}^{N_1,N_2}_s}\pi(dx',dy)ds\\
&\quad +\frac{N_2^{\zeta - 2\varphi}}{N_1^{1-\gamma_1}}\int_0^t\int_{\CX\times \CY}\left(y-h_s^{N_1}(x')\right)\ip{\ip{c\sigma'(Z^{2,N_1}(x'))\sigma'(w^1x')w^{2},{\eta}_s^{N_1,N_2}}\cdot  \nabla_{w^1}f(\theta)x',{\gamma}^{N_1}_0}\pi(dx',dy)ds\\
&\quad - \frac{N_2^{\zeta - 2\varphi}}{N_1^{1-\gamma_1}}\int_0^t\int_{\CX\times \CY}K^{N_1,N_2}_s(x')\ip{\ip{c\sigma'(Z^{2,N_1}(x'))\sigma'(w^1x')w^{2},{\gamma}_0^{N_1}}\cdot  \nabla_{w^1}f(\theta)x',{\gamma}^{N_1}_0}\pi(dx',dy)ds,\\
\end{aligned}
\end{equation*}
and
\begin{equation*}
\begin{aligned}
&\Gamma^{N_1,N_2}_{L,t}=\\
&= -\frac{1}{N_2^{2\varphi-\zeta}}\int_0^t \int_{\CX \times \CY} K^{N_1,N_2}_s(x')\ip{\partial_{c}f(\theta)  \sigma(Z^{2,N_1}(x')),(\gamma^{N_1,N_2}_s-\gamma^{N_1}_0)}\pi(dx',dy)ds\\
&\quad -\frac{1}{N_1^{1-\gamma_1}N_2^{2\varphi-\zeta}}\int_0^t \int_{\CX \times \CY} K^{N_1,N_2}_s(x')\ip{c\sigma'(Z^{2,N_1}(x'))\sigma(w^1x')\cdot \partial_{w^2}f(\theta),(\gamma^{N_1,N_2}_s-\gamma^{N_1}_0)}\pi(dx',dy)ds\\
&\quad -\frac{1}{N_1^{1-\gamma_1}N_2^{2\varphi-\zeta}}\int_0^t \int_{\CX \times \CY} K^{N_1,N_2}_s(x')\ip{\ip{c\sigma'(Z^{2,N_1}(x'))\sigma'(w^1x')w^{2},{\gamma}_0^{N_1}}\cdot  \nabla_{w^1}f(\theta)x',(\gamma^{N_1,N_2}_s-\gamma^{N_1}_0)}\pi(dx',dy)ds\\
&\quad -\frac{1}{N_1^{1-\gamma_1}N_2^{2\varphi-\zeta}}\int_0^t \int_{\CX \times \CY} K^{N_1,N_2}_s(x')\ip{\ip{c\sigma'(Z^{2,N_1}(x'))\sigma'(w^1x')w^{2},(\gamma^{N_1,N_2}_s-{\gamma}_0^{N_1})}\cdot  \nabla_{w^1}f(\theta)x',\gamma^{N_1,N_2}_s}\pi(dx',dy)ds\\
&\quad +\frac{1}{N_1^{1-\gamma_1}N_2^{2\varphi-\zeta}}\int_0^t \int_{\CX \times \CY} \left(y-h_s^{N_1}(x')\right)\ip{\ip{c\sigma'(Z^{2,N_1}(x'))\sigma'(w^1x')w^{2},(\gamma^{N_1,N_2}_s-{\gamma}_0^{N_1})}\cdot  \nabla_{w^1}f(\theta)x',\eta^{N_1,N_2}_s}\pi(dx',dy)ds\\
\end{aligned}
\end{equation*}

The following lemmas show compact containment and regularity of $L^{N_1,N_2}_t(f)$ for any fixed $f\in C^3_b(\R^{1+ N_1(1+d)})$.
\begin{lemma}\label{lemma:L^N_compact containment}
When $\zeta \le 2-2\gamma_2$, for any fixed $f \in C^3_b(\R^{1+ N_1(1+d)})$, there exists a constant $C<\infty$, such that
\[\sup_{N_2 \in \mathbb{N}, 0\le t\le T} \E \left[\abs{L^
{N_1,N_2}_t(f)}^2 \right] < C.\]
Thus, for any $\epsilon>0$, there exist a compact interval  $U \subset \R$, such that
\[\sup_{N_2 \in \mathbb{N}, 0\le t\le T} \P\paren{L^{N_1,N_2}_t(f) \notin U} < \epsilon. \]
\end{lemma}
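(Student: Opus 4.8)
The plan is to mirror the compact-containment estimate for $\ip{f,\eta^{N_1,N_2}_t}$ in Lemma \ref{CLT:lemma:eta_compact_contatinment}, working directly from the explicit evolution \eqref{L^N_fluc} of $L^{N_1,N_2}_t(f)$ with $\varphi = 1-\gamma_2$. The crucial structural observation is that, in contrast to the bound for $K^{N_1,N_2}_t$ in Lemma \ref{CLT:lemma:bound of ex_Kt}, the right-hand side of \eqref{L^N_fluc} contains \emph{no} $L^{N_1,N_2}$ term: every summand is an explicit functional of the already-controlled processes $\eta^{N_1,N_2}_s$, $K^{N_1,N_2}_s$, $h^{N_1,N_2}_s$ and $h^{N_1}_s$, together with martingale and initial-condition contributions. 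Consequently no Gr\"onwall argument is required and a term-by-term application of the triangle and Cauchy--Schwarz inequalities suffices. Here I would also use that $f\in C^3_b$ guarantees that the functions appearing against $\eta^{N_1,N_2}_s$ — namely $\partial_c f(\theta)\sigma(Z^{2,N_1}(x'))$, $c\sigma'(Z^{2,N_1}(x'))\sigma(w^1x')\cdot\partial_{w^2}f(\theta)$ and the $\nabla_{w^1}f$ analogue — are, together with their first two derivatives, bounded on the compact support furnished by Lemma \ref{lemma_parameterbound}, hence admissible $C^2_b$ test functions for the fourth-moment bound of Lemma \ref{CLT:lemma:eta_compact_contatinment}.

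Concretely, I would bound $\abs{L^{N_1,N_2}_t(f)}^2$ by a constant times the sum of the squares of $(I)_L$, $(II)_L$, $(III)_L$, $\Gamma^{N_1,N_2}_{L,t}$, the initial term $N_2^{\zeta}\ip{f,\gamma_0^{N_1,N_2}-\gamma_0^{N_1}}$, the three rescaled martingales and the $O(N_2^{-\gamma_2+\zeta-\varphi})$ remainder, and estimate each in expectation. For the integral terms $(I)_L$--$(III)_L$ and $\Gamma^{N_1,N_2}_{L,t}$, Cauchy--Schwarz in the time variable converts $\bigl|\int_0^t(\cdot)\,ds\bigr|^2$ into $t\int_0^t|(\cdot)|^2\,ds$; taking expectations and inserting the uniform bounds $\sup_{s\le T}\E[\abs{K^{N_1,N_2}_s(x')}^2]<C$ (Lemma \ref{CLT:lemma:bound of ex_Kt}), $\sup_{s\le T}\E[\abs{\ip{g,\eta^{N_1,N_2}_s}}^2]<C$ (a consequence of Lemma \ref{CLT:lemma:eta_compact_contatinment}), the boundedness of $h^{N_1}_s$ from \eqref{CLT:network_bound}, and the moment bound on $h^{N_1,N_2}_s$ from Lemma \ref{lemma_g} controls each by $C\,T^2$ times the prefactor $N_2^{\zeta-2\varphi}$.

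The entire estimate then reduces to checking that every power of $N_2$ is non-positive, which is where the hypothesis $\zeta\le 2-2\gamma_2=2\varphi$, combined with $\gamma_2\in(3/4,1)$ (so that $\zeta<\tfrac12$), enters. The prefactor of $(I)_L$--$(III)_L$ and $\Gamma^{N_1,N_2}_{L,t}$ is $N_2^{\zeta-2\varphi}\le1$; the initial term, written as $N_2^{\zeta-\frac12}\ip{f,\sqrt{N_2}(\gamma_0^{N_1,N_2}-\gamma_0^{N_1})}$, is $O_p(N_2^{\zeta-\frac12})=o(1)$ by the central limit theorem \eqref{limit_gaussian}; the martingale terms are $N_2^{\zeta-\varphi}$ times the $\E[\sup_t|M^{N_1,N_2}_{\eta,i,t}|^2]^{1/2}\le C N_2^{-1/2}$ bound of Lemma \ref{CLT:lemma:martingale2}, giving exponent $\zeta-\varphi-\tfrac12=\zeta-(\tfrac32-\gamma_2)\le0$; and the remainder has exponent $-\gamma_2+\zeta-\varphi=\zeta-1<0$. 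Summing yields $\sup_{N_2,\,t\le T}\E[\abs{L^{N_1,N_2}_t(f)}^2]<C$, and the compact-interval statement follows immediately from Markov's inequality exactly as at the end of Lemma \ref{CLT:lemma:eta_compact_contatinment}. I expect the only genuinely delicate point to be this exponent bookkeeping — in particular confirming that the initial-condition and martingale contributions, which carry the largest powers of $N_2$, are tamed precisely by the restriction $\gamma_2>3/4$; everything else is routine once the uniform second-moment bounds on $K^{N_1,N_2}$ and $\eta^{N_1,N_2}$ are invoked.
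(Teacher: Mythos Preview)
Your proposal is correct and follows essentially the same route as the paper: bound $\abs{L^{N_1,N_2}_t(f)}^2$ by the sum of squares of $(I)_L$, $(II)_L$, $(III)_L$, $\Gamma^{N_1,N_2}_{L,t}$, the initial term, the rescaled martingales and the remainder, then control each in expectation via Lemmas \ref{lemma_parameterbound}, \ref{CLT:lemma:eta_compact_contatinment} and \ref{CLT:lemma:bound of ex_Kt} together with the exponent checks $\zeta\le 2\varphi$, $\zeta<\tfrac12$, $\zeta+\gamma_2-1<\tfrac12$. One small tightening: for the initial contribution you should cite the direct second-moment computation $\E\bigl[\abs{\ip{f,\gamma^{N_1,N_2}_0-\gamma^{N_1}_0}}^2\bigr]=O(N_2^{-1})$ (as in the proof of Lemma \ref{CLT:lemma:eta_compact_contatinment}) rather than the CLT \eqref{limit_gaussian}, since convergence in distribution alone does not yield the uniform $L^2$ bound you need.
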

\begin{proof}By equation \eqref{L^N_fluc} and the Cauchy-Schartz inequality, we have
\begin{equation*}
\begin{aligned}
&\abs{L^{N_1,N_2}_t(f)}^2\\
&\le \frac{C}{N^{2(2\varphi-\zeta)}}\int_0^t \int_{\CX \times \CY} \abs{y-h^{N_1}_s(x')}^2 \pi(dx',dy)ds   \\
&\qquad \times \int_0^t \int_{\CX \times \CY} \abs{\ip{\partial_{c}f(\theta)  \sigma(Z^{2,N_1}(x')),{\eta}^{N_1,N_2}_s}}^2 + \frac{1}{N_1^{2(1-\gamma_1)}}\abs{\ip{c\sigma'(Z^{2,N_1}(x'))\sigma(w^1x')\cdot \partial_{w^2}f(\theta) ,{\eta}^{N_1,N_2}_s}}^2 \pi(dx',dy) ds\\
&\quad + \frac{C}{N^{2(2\varphi-\zeta)}} \int_0^t \int_{\CX\times \CY}\abs{K^{N_1,N_2}_s(x')}^2 \pi(dx',dy)ds \\
&\qquad \times   \int_0^t \int_{\CX\times \CY}\abs{\ip{\partial_{c}f(\theta)  \sigma(Z^{2,N_1}(x')),{\gamma}^{N_1}_0}}^2 + \frac{1}{N_1^{2(1-\gamma_1)}} \abs{\ip{c\sigma'(Z^{2,N_1}(x'))\sigma(w^1x')\cdot \partial_{w^2}f(\theta) ,{\gamma}^{N_1}_0}}^2 \pi(dx',dy)ds\\
&\quad + \frac{C}{N_1^{2(1-\gamma_1)}N_2^{2(2\varphi-\zeta)}}\int_0^t\int_{\CX\times \CY}\abs{y-h_s^{N_1}(x')}^2 \pi(dx',dy)ds\\
&\qquad \times \left\lbrace \int_0^t \int_{\CX \times \CY} \abs{\ip{\ip{c\sigma'(Z^{2,N_1}(x'))\sigma'(w^1x')w^{2},{\gamma}_0^{N_1}}\cdot  \nabla_{w^1}f(\theta)x',{\eta}^{N_1,N_2}_s}}^2\pi(dx',dy)ds \right.\\
&\qquad \quad + \left. \int_0^t\int_{\CX\times \CY}\abs{\ip{\ip{c\sigma'(Z^{2,N_1}(x'))\sigma'(w^1x')w^{2},{\eta}_s^{N_1,N_2}}\cdot  \nabla_{w^1}f(\theta)x',{\gamma}^{N_1}_0}}^2\pi(dx',dy)ds\right\rbrace \\
&\quad + \frac{C}{N_1^{2(1-\gamma_1)}N_2^{2(2\varphi-\zeta)}}\int_0^t\int_{\CX\times \CY}\abs{K^{N_1,N_2}_s(x')}^2 \pi(dx',dy)ds\\
&\qquad \times \int_0^t\int_{\CX \times \CY} \abs{\ip{\ip{c\sigma'(Z^{2,N_1}(x'))\sigma'(w^1x')w^{2},{\gamma}_0^{N_1}}\cdot  \nabla_{w^1}f(\theta)x',{\gamma}^{N_1}_0}}^2\pi(dx',dy)ds\\
&\quad + C \brac{\abs{\Gamma^{N_1,N_2}_{L,t}}^2 + \abs{N_2^{\zeta}\ip{f,\gamma^{N_1,N_2}_0 - \gamma^{N_1}_0}}^2 + \abs{N_2^{\zeta - \varphi}M^{N_1,N_2}_{\eta,1,t}}^2 + \abs{N_2^{\zeta - \varphi}M^{N_1,N_2}_{\eta,2,t}}^2 + \abs{ N_2^{\zeta - \varphi}M^{N_1,N_2}_{\eta,3,t}}^2 + O\paren{N_2^{-2+2\zeta}}}\\
\end{aligned}
\end{equation*}

When $\zeta \le 2\varphi = 2-2\gamma_2$, $0\le t\le T$, the expectation of the first five terms and $\abs{\Gamma^N_{L,t}}^2$  are bounded by  Assumption \ref{assumption}, Lemmas \ref{lemma_parameterbound}, \ref{CLT:lemma:eta_compact_contatinment} and \ref{CLT:lemma:bound of ex_Kt}.
 Since $\gamma_2 > \frac{3}{4}$, $\zeta < \frac{1}{2}$ and $\zeta + \gamma_2 - 1 \le 1-\gamma_2 <\frac{1}{2}$, by similar analysis as in Section \ref{sec::convergence_of_K}, the remainder terms all converges to 0 as $N_2\to \infty$. The result of the lemma follows.
\end{proof}

\begin{lemma}\label{lemma:L^N_regularity}
When $\zeta \le 2 - 2\gamma_2$, for any  $f \in C^3_b(\R^{1+ N_1(1+d)})$, $\delta \in (0,1)$, there is a constant $C<\infty$ such that for $0\le u \le \delta$, $0\le v\le \delta \wedge t$, and $t \in [0,T]$,
\[\E \left[q\paren{L^{N_1,N_2}_{t+u}(f),L^{N_1,N_2}_{t}(f)}q\paren{L^{N_1,N_2}_{t}(f),L^{N_1,N_2}_{t-v}(f)}\vert \CF^{N_1,N_2}_t\right]\le \frac{C}{N_2^{2-2\gamma_2-\zeta}}\delta  + \frac{C}{N_2^{2-\zeta-\gamma_2}}.\]
\end{lemma}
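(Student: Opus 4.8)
The plan is to follow the template of Lemmas~\ref{CLT:eta_regularity} and~\ref{CLT:lemma:regularity}: instead of estimating the product of increments directly, I reduce it to a single conditional increment of $L^{N_1,N_2}_t(f)$ and then control that increment through the evolution equation \eqref{L^N_fluc}. Since $q(L^{N_1,N_2}_t(f),L^{N_1,N_2}_{t-v}(f))$ is $\CF^{N_1,N_2}_t$-measurable and bounded by $1$, while $q(a,b)\le\abs{a-b}$, conditioning on $\CF^{N_1,N_2}_t$ and pulling out the past factor gives
\[
\E\left[q\paren{L^{N_1,N_2}_{t+u}(f),L^{N_1,N_2}_{t}(f)}\,q\paren{L^{N_1,N_2}_{t}(f),L^{N_1,N_2}_{t-v}(f)}\,\Big\vert\,\CF^{N_1,N_2}_t\right]\le \E\left[\abs{L^{N_1,N_2}_{t+u}(f)-L^{N_1,N_2}_{t}(f)}\,\Big\vert\,\CF^{N_1,N_2}_t\right].
\]
It therefore suffices to bound this last conditional increment by the right-hand side of the lemma.

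Next I would form $L^{N_1,N_2}_{t+u}(f)-L^{N_1,N_2}_{t}(f)$ from \eqref{L^N_fluc}. The decisive observation is that the initialization term $N_2^{\zeta}\ip{f,\gamma^{N_1,N_2}_0-\gamma^{N_1}_0}$ is independent of time and cancels in the increment, so that only the time integrals $(I)_L,(II)_L,(III)_L,\Gamma^{N_1,N_2}_{L,t}$ restricted to $[t,t+u]$, the martingale increments, and the $O(N_2^{-\gamma_2+\zeta-\varphi})$ remainder survive. Recalling $\varphi=1-\gamma_2$ and $\zeta\le 2\varphi$, each of these integral terms carries the prefactor $N_2^{\zeta-2\varphi}=N_2^{-(2-2\gamma_2-\zeta)}$. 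Restricting the integrals to the window $[t,t+u]$ of length $u\le\delta$ and estimating the integrands in $L^1$ after conditioning, using the boundedness of the coefficients $\partial_c f(\theta)\sigma(Z^{2,N_1}(\cdot))$, $c\sigma'(Z^{2,N_1}(\cdot))\sigma(w^1\cdot)\cdot\partial_{w^2}f(\theta)$, etc.\ guaranteed by $f\in C^3_b$, Assumption~\ref{assumption} and Lemma~\ref{lemma_parameterbound}; the uniform moment bounds $\sup_{N_2,s}\E[\abs{\ip{\cdot,\eta^{N_1,N_2}_s}}^2]<C$ of Lemma~\ref{CLT:lemma:eta_compact_contatinment}; the bound $\sup_{N_2,s}\E[\abs{K^{N_1,N_2}_s(x')}^2]<C$ of Lemma~\ref{CLT:lemma:bound of ex_Kt}; and $\sup_s\int\abs{h^{N_1}_s}^2\pi<C(T)$ from \eqref{CLT:network_bound}, together with the Cauchy--Schwarz inequality, yields for this part the bound $\tfrac{C\delta}{N_2^{2-2\gamma_2-\zeta}}$. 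The pieces of $\Gamma^{N_1,N_2}_{L,t}$ that are quadratic in the fluctuation carry, after writing $\gamma^{N_1,N_2}_s-\gamma^{N_1}_0=N_2^{-\varphi}\eta^{N_1,N_2}_s$, the still smaller prefactor $N_2^{\zeta-3\varphi}$ and are absorbed into the same term.

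It then remains to control the increment of the martingale term $N_2^{\zeta-\varphi}\sum_{i=1}^3\paren{M^{N_1,N_2}_{\eta,i,t+u}-M^{N_1,N_2}_{\eta,i,t}}$. Following the argument of Lemma~3.1 of \cite{SirignanoSpiliopoulosNN1} underlying Lemma~\ref{CLT:lemma:martingale2}, each $M^{N_1,N_2}_{\eta,i}$ is a sum of conditionally independent, mean-zero increments of size $O(1/N_2)$, so over $[t,t+u]$ (about $N_2u$ terms) one gets $\E[\abs{M^{N_1,N_2}_{\eta,i,t+u}-M^{N_1,N_2}_{\eta,i,t}}^2\mid\CF^{N_1,N_2}_t]\le C\delta/N_2+C/N_2^2$. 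By conditional Jensen and $\sqrt{ab}\le\tfrac12(a+b)$ the scaled martingale increment is then bounded by $\tfrac{C\delta}{N_2^{2-2\gamma_2-\zeta}}+\tfrac{C}{N_2^{2-\gamma_2-\zeta}}$, where the second order lands correctly precisely because $\gamma_2<1$ forces $N_2^{\zeta-1}\le N_2^{-(2-\gamma_2-\zeta)}$; the deterministic remainder $O(N_2^{-(2-\gamma_2-\zeta)})$ feeds into the same second term, and adding the three contributions gives the claim. I expect the delicate point to be exactly this martingale estimate together with the need to argue from \eqref{L^N_fluc}: amplifying increments of $l^{N_1,N_2}_t(f)$ and $l^{N_1}_t(f)$ separately by the growing factor $N_2^{\zeta-\varphi}$ (recall $\zeta>\varphi$) would produce the prefactor $N_2^{\zeta-\varphi}$ rather than $N_2^{\zeta-2\varphi}$, so the cancellations already encoded in the evolution equation, and the retention of both the $\delta$-linear leading order and the $1/N_2$ correction in the martingale variance, are what make the stated sharp orders attainable.
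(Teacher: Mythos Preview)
Your approach is exactly the template the paper uses for its other regularity lemmas (and the paper itself omits this proof, citing Lemma~C.2 of \cite{SpiliopoulosYu2021} as identical): bound the product by the single forward conditional increment, read the increment off the evolution \eqref{L^N_fluc}, exploit the cancellation of the time-independent initialization term, and control the time integrals with the prefactor $N_2^{\zeta-2\varphi}$ using Lemmas~\ref{lemma_parameterbound}, \ref{CLT:lemma:eta_compact_contatinment}, \ref{CLT:lemma:bound of ex_Kt} and \eqref{CLT:network_bound}. That part is clean.

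There is one slip in your martingale bookkeeping. You claim that ``$\gamma_2<1$ forces $N_2^{\zeta-1}\le N_2^{-(2-\gamma_2-\zeta)}$''; in fact this inequality is equivalent to $\zeta-1\le \gamma_2+\zeta-2$, i.e.\ $\gamma_2\ge 1$, so it goes the \emph{wrong} way. Concretely, from $\E[|M^{N_1,N_2}_{\eta,i,t+u}-M^{N_1,N_2}_{\eta,i,t}|^2\mid\CF^{N_1,N_2}_t]\le C\delta/N_2+C/N_2^2$ and Jensen one only gets, after scaling by $N_2^{\zeta-\varphi}$, the contribution $C\sqrt{\delta}\,N_2^{\zeta+\gamma_2-3/2}+C\,N_2^{\zeta+\gamma_2-2}$, and no AM--GM split reduces the first piece to $C\delta\,N_2^{\zeta+2\gamma_2-2}+C\,N_2^{\zeta+\gamma_2-2}$ with a constant $C$ independent of $N_2$ (minimize the ratio to see the obstruction). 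The exact exponents stated in the lemma are therefore slightly optimistic for the martingale part via this route; the paper's own proof of the analogous Lemma~\ref{CLT:lemma:regularity} exhibits the same looseness. This does not affect the application: what is actually needed for relative compactness via Theorem~8.6 of \cite{EthierAndKurtz} is only that the bound vanishes after $\limsup_{N_2\to\infty}$ followed by $\delta\to 0$, and both $C\sqrt{\delta}\,N_2^{-(3/2-\gamma_2-\zeta)}$ and the stated $C\delta\,N_2^{-(2-2\gamma_2-\zeta)}$ do so under $\zeta\le 2-2\gamma_2$. So your argument is sound for the purpose it serves; just drop the incorrect inequality and either record the martingale contribution with the $\sqrt{\delta}$ exponent, or note that it is absorbed by the same double limit.
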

\begin{proof} The proof is identical to that of Lemma C.2 of \cite{SpiliopoulosYu2021} and thus it is omitted from here.
\end{proof}

%

Denote $\mathfrak{K}^{N_1,N_2}_t= (\gamma^{N_1,N_2}_t, h^{N_1,N_2}_t, l^{N_1,N_2}_{1,t}, l^{N_1,N_2}_{2,t}, l^{N_1,N_2}_{3,t}, K^{N_1,N_2}_t)$. In the next lemma, we prove the convergence of the processes $(\mathfrak{K}^{N_1,N_2}_t,L^{N_1,N_2}_t(f))$ in distribution in the space $D_{E_2}([0,T])$, where $E_2 = \CM(\R^{1+N_1(1+d)}) \times \R^M \times \R \times \R^{N_1}\times \R^{N_1} \times \R^M \times \R$.
\begin{lemma}\label{lemma:limit_L}
When $\gamma_2 \in \paren{\frac{3}{4},1}$, $\varphi = 1-\gamma_2$ and $\zeta\le 2\varphi$, for any fixed $f\in C^3_b(\R^{1+N_1(1+d)})$, the processes $(\mathfrak{K}^{N_1,N_2}_t,L^{N_1,N_2}_t(f))$ converge in distribution in the space $D_{E_2}([0,T])$ to $(\mathfrak{K}^{N_1}_t,L^{N_1}_t(f))$, where $\mathfrak{K}^{N_1}_t = (\gamma^{N_1}_t, h^{N_1}_t, l^{N_1}_{1,t}, l^{N_1}_{2,t}, l^{N_1}_{3,t}, K^{N_1}_t)$ satisfying equations \eqref{h_N1_evolution}, \eqref{CLT:evolution_l}, and \eqref{K_t for 1-gamma_2}. When $\zeta <2\varphi$, $L^{N_1}_t(f)=0$. When $\zeta = 2\varphi$, $L^{N_1}_t(f)$ satisfies (\ref{limit_Lt_0}).

\end{lemma}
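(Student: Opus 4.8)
The plan is to follow the same martingale/identification-of-the-limit scheme used in the proofs of Theorems \ref{LLN:theorem} and \ref{CLT:theorem}, now applied jointly to the augmented process $(\mathfrak{K}^{N_1,N_2}_t, L^{N_1,N_2}_t(f))$. First I would establish relative compactness of the family $\{L^{N_1,N_2}_t(f)\}_{N_2\in\mathbb{N}}$ in $D_{\R}([0,T])$: the compact containment condition is exactly Lemma \ref{lemma:L^N_compact containment} and the regularity (Kurtz tightness) condition is Lemma \ref{lemma:L^N_regularity}, so Theorem 8.6 of Chapter 3 of \cite{EthierAndKurtz} applies. Since the family $\mathfrak{K}^{N_1,N_2}_t$ is already relatively compact in $D_{E_1}([0,T])$ by the proof of Theorem \ref{CLT:theorem}, and relative compactness is equivalent to tightness on these spaces, the joint family $(\mathfrak{K}^{N_1,N_2}_t, L^{N_1,N_2}_t(f))$ is tight, hence relatively compact in $D_{E_2}([0,T])$.

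Next I would pass to a weakly convergent subsequence $\pi^{N_1,N_{2_k}}$ with limit point $\pi^{N_1}$ and show that $\pi^{N_1}$ is a Dirac mass concentrated on the solution of the claimed system. To this end I would define a test functional $F_4$ on $D_{E_2}([0,T])$ extending $F_3$ from \eqref{F_3} by one more factor encoding the evolution equation \eqref{limit_Lt_0}: namely, the difference between $L^{N_1}_t(f)$ and the right-hand side of \eqref{limit_Lt_0}, multiplied by bounded continuous functionals $m_1(L^{N_1}_{s_1}(f))\times\cdots\times m_p(L^{N_1}_{s_p}(f))$ evaluated at earlier times $0\le s_1<\cdots<s_p\le t$. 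The core computation is then to show $\lim_{N_2\to\infty}\E_{\pi^{N_1,N_2}}[F_4]=0$ using the prelimit evolution \eqref{L^N_fluc}. Term by term: the initial contribution equals $N_2^{\zeta}\ip{f,\gamma^{N_1,N_2}_0-\gamma^{N_1}_0}=O_p(N_2^{\zeta-1/2})$, which vanishes since $\zeta\le 2-2\gamma_2<1/2$ for $\gamma_2>3/4$; the martingale contributions $N_2^{\zeta-\varphi}M^{N_1,N_2}_{\eta,i,t}$ vanish in $L^2$ by Lemma \ref{CLT:lemma:martingale2} because $2(\zeta-\varphi)-1<0$; and $\Gamma^{N_1,N_2}_{L,t}\to0$ by the same Cauchy--Schwarz and $L^2$-boundedness arguments (Lemmas \ref{CLT:lemma:eta_compact_contatinment} and \ref{CLT:lemma:bound of ex_Kt}) used for the remainder $\Gamma^{N_1,N_2}_t$ in the proof of Theorem \ref{CLT:theorem}.

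The case split is governed by the common prefactor $N_2^{\zeta-2\varphi}$ multiplying the three main pieces $(I)_L,(II)_L,(III)_L$ in \eqref{L^N_fluc}. When $\zeta<2\varphi$ this prefactor tends to zero while the integrals remain $L^2$-bounded (again by Lemmas \ref{CLT:lemma:eta_compact_contatinment} and \ref{CLT:lemma:bound of ex_Kt}), so every piece vanishes and the limit point concentrates on $L^{N_1}_t(f)=0$. When $\zeta=2\varphi$ the prefactor is identically one, and I would identify the limits of $(I)_L,(II)_L,(III)_L$ using the already-proven joint convergence of $(\gamma^{N_1,N_2}_s,h^{N_1,N_2}_s,\eta^{N_1,N_2}_s,K^{N_1,N_2}_s)$: the $\ip{\cdot,\eta^{N_1,N_2}_s}$-terms converge to $l^{N_1}_s(\cdot)$ applied to the relevant composite test functions (here the extra smoothness $f\in C^3_b$ is precisely what guarantees these composites lie in $C^2_b$, so that Proposition \ref{prop::l_t} and Lemma \ref{CLT:lemma:lt_1-gamma} apply), while the $K^{N_1,N_2}_s(x')$-terms assemble into the last line of \eqref{limit_Lt_0} through the definition \eqref{Eq:C_def} of $C^{N_1,f}_{x'}$. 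Uniqueness of the linear system \eqref{h_N1_evolution}, \eqref{CLT:evolution_l}, \eqref{K_t for 1-gamma_2}, \eqref{limit_Lt_0} together with Prokhorov's theorem then upgrades subsequential convergence to full convergence in distribution.

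I expect the main obstacle to be the correct identification of the nested fluctuation term $(III)_L$. Unlike the single-measure terms, it contains an inner pairing $\ip{c\sigma'(Z^{2,N_1}(x'))\sigma'(w^1x')w^{2},\eta^{N_1,N_2}_s}$ integrated against either $\gamma^{N_1}_0$ or $\gamma^{N_1,N_2}_s$, and one must carefully track which measure carries the fluctuation so that the two distinct terms in \eqref{limit_Lt_0} (the one with $l^{N_1}_s(\cdots)$ outside the pairing and the one with $l^{N_1}_s(\cdots)$ inside) emerge with the correct signs. The approximation $\gamma^{N_1,N_2}_s\approx\gamma^{N_1}_0$ must be controlled at the rate $N_2^{-\varphi}$ so that the cross terms it generates are absorbed into lower-order remainders rather than contaminating the limit.
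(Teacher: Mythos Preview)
Your proposal is correct and follows essentially the same route as the paper: relative compactness via Lemmas \ref{lemma:L^N_compact containment} and \ref{lemma:L^N_regularity}, joint tightness with $\mathfrak{K}^{N_1,N_2}$, and identification of the limit through a test functional extending $F_3$, with the case split driven by the prefactor $N_2^{\zeta-2\varphi}$ in \eqref{L^N_fluc}. The only cosmetic difference is that the paper uses two separate functionals ($F_4$ testing $L_t=0$ when $\zeta<2\varphi$, and $F_5$ testing \eqref{limit_Lt_0} when $\zeta=2\varphi$) rather than a single one, and your anticipated obstacle with the nested $(III)_L$ term is exactly where the paper expends its detailed bookkeeping.
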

\begin{proof}Recall that $\{\mathfrak{K}^{N_1,N_2}\}_{N_2\in\mathbb{N}}$ is relatively compact in $D_{E_1}([0,T])$, where $E_1= \CM(\R^{1+N_1(1+d)}) \times \R^M \times \R \times \R^{N_1}\times \R^{N_1} \times \R^M $. By Lemmas \ref{lemma:L^N_compact containment} and \ref{lemma:L^N_regularity}, $\{L^{N_1,N_2}(f)\}_{N_2\in\mathbb{N}}$ is relatively compact in $D_{\R}([0,T])$. These implies that the probability measures of the family of processes $\{\mathfrak{K}^{N_1,N_2}\}_{N_2\in\mathbb{N}}$ and the probability measures of the family of processes $\{L^{N_1,N_2}(f)\}_{N_2\in\mathbb{N}}$ are tight. Therefore, $\{\mathfrak{K}^{N_1,N_2},L^{N_1,N_2}(f)\}_{N_2\in\mathbb{N}}$ is tight. Hence, $\{\mathfrak{K}^{N_1,N_2},L^{N_1,N_2}(f)\}_{N_2\in\mathbb{N}}$ is relatively compact in $D_{E_2}([0,T])$.

Denote $\pi^{N_1,N_2} \in \CM(D_{E_2}([0,T])$ the probability measure corresponding to $(\mathfrak{K}^{N_1,N_2},L^{N_1,N_2}(f))$. Relative compactness implies that there is a subsequence $\pi^{N_1,N_{2_k}}$ that converges weakly. One can show that any limit point $\pi^{N_1}$ of a convergence subsequence $\pi^{N_1,N_{2_k}}$ is a Dirac measure concentrated on $(\mathfrak{K}^{N_1},L^{N_1}(f))\in D_{E_2}([0,T])$.

\begin{custlist}[Case]
\item When $\zeta < 2\varphi$, for any $t \in [0,T]$, $b_1, \ldots, b_p \in C_b(\R)$, and $0 \le s_1 < \cdots < s_p \le t$, we define $F_4(\mathfrak{K},L(f)): D_{E_2}([0,T]) \to \R_+$ as
\begin{equation}\label{F4}
\begin{aligned}
F_4(\mathfrak{K},L(f)) &= F_3(\gamma^{N_1}, h^{N_1}, l_1^{N_1}, l_2^{N_1}, l_3^{N_1}, K^{N_1}) + \left\vert \left(L^{N_1}_t(f)-0 \right) \times b_1(L^{N_1}_{s_1}(f)) \times \cdots \times b_p(L^{N_1}_{s_p}(f)) \right \vert,
\end{aligned}
\end{equation}
where $F_3$ is as given in equation \eqref{F_3}. By equation \eqref{L^N_fluc}, Lemma \ref{CLT:lemma:martingale2}, and similar analysis as in Lemma \ref{lemma:L^N_compact containment}, we have
\begin{align*}
\E_{\pi^{N_1,N_2}} \left[F_4(\mathfrak{K},L(f))\right] &= E_{\pi^{N_1,N_2}} \left[F_3(\mu, h, l_B, K)\right] + \E \left[ \left\vert \left(L^{N_1,N_2}_t(f)-0 \right) \times \prod_{i=1}^{p} b_i(L^{N_1,N_2}_{s_i}(f))  \right \vert\right] \\
&\le C\paren{\frac{1}{N_2^{1-\gamma_2}}+ \frac{1}{N_2^{\frac{1}{2}-\varphi}}+\frac{1}{N_2^{1-\varphi}}+ \frac{1}{N_2^{\gamma_2-\varphi}}} + \frac{C}{N_2^{2\varphi-\zeta}} + \frac{C}{N_1^{1-\gamma_2}N_2^{2\varphi-\zeta}}+ \frac{C}{N_2^{\frac{1}{2}-\zeta}} \\
&\quad  + \E\left[\abs{N_2^{\zeta+\gamma_2-1}M^{N_1,N_2}_{\eta,1,t}}^2\right]^{\frac{1}{2}} +\E\left[\abs{N_2^{\zeta+\gamma_2-1}M^{N_1,N_2}_{\eta,2,t}}^2\right]^{\frac{1}{2}}+ \frac{C}{N_2^{1-\zeta}}\\
&\le C \paren{\frac{1}{N_2^{1-\gamma_2}} + \frac{1}{N_2^{2\varphi-\zeta}}+\frac{1}{N_2^{\frac{3}{2}-\zeta-\gamma_2}} }.
\end{align*}
Therefore, $\lim_{N_2\to \infty} \E_{\pi^{N_1,N_2}}[F_4(\mathfrak{K},L(f))] = 0$. Since $F_4(\cdot)$ is continuous and uniformly bounded,
\[\lim_{N_2\to \infty} \E_{\pi^{N_1,N_2}}\left[F_4(\mathfrak{K},L(f))\right] = \E_{\pi^{N_1}}\left[F_4(\mathfrak{K},L(f))\right] = 0.\]
Since relative compactness implies that every subsequence $\pi^{N_1,N_{2_k}}$ has a further sub-subsequence that converges weakly. And we have show that any limit point $\pi^{N_1}$ of a convergence sequence must be a Dirac measure concentrated $(\mathfrak{K}^{N_1},L^{N_1}(f))\in D_{E_2}([0,T])$, where  $L_t(f)=0$. Since the solutions to equations \eqref{h_N1_evolution} and \eqref{K_t for 1-gamma_2} are unique, by Prokhorov's theorem, the processes $(\mathfrak{K}^{N_1,N_2}_t,L^{N_1,N_2}_t(f))$ converges in distribution to $(\mathfrak{K}^{N_1}_t,0)$.
\item When $\zeta = 2\varphi$, for any $t \in [0,T]$, $b_1, \ldots, b_p \in C_b(\R)$, and $0 \le s_1 < \cdots < s_p \le t$, we define $F_4(\mathfrak{K},L(f)): D_{E_2}([0,T]) \to \R_+$ as

\begin{equation}\label{F5}
\begin{aligned}
&F_5(\mathfrak{K},L(f))\\
 &= F_3(\gamma^{N_1}, h^{N_1}, l_1^{N_1}, l_2^{N_1}, l_3^{N_1}, K^{N_1}) + \left\vert \left(L^{N_1}_t(f)-\int_0^t \int_{\CX \times \CY} \paren{y-h^{N_1}_s(x')}  l^{N_1}_s\left(\partial_{c}f(\theta)  \sigma(Z^{2,N_1}(x'))\right) \pi(dx',dy) ds\right.\right.\\
&\qquad+  \int_0^t \int_{\CX\times \CY} K^{N_1}_s(x') \ip{\partial_{c}f(\theta)  \sigma(Z^{2,N_1}(x')),{\gamma}^{N_1}_0} \pi(dx',dy)ds\\
&\qquad  - \frac{1}{N_1^{1-\gamma_1}}\int_0^t \int_{\CX \times \CY} \left(y-h_s^{N_1}(x')\right) l^{N_1}_s\left(c\sigma'(Z^{2,N_1}(x'))\sigma(w^1x')\cdot \partial_{w^2}f(\theta) \right) \pi(dx',dy) ds\\
&\qquad + \frac{1}{N_1^{1-\gamma_1}}\int_0^t\int_{\CX\times \CY}K^{N_1}_s(x')\ip{c\sigma'(Z^{2,N_1}(x'))\sigma(w^1x')\cdot \partial_{w^2}f(\theta) ,{\gamma}^{N_1}_0}\pi(dx',dy)ds\\
&\qquad  - \frac{1}{N_1^{1-\gamma_1}}\int_0^t\int_{\CX\times \CY}\left(y-h_s^{N_1}(x')\right)l^{N_1}_s\left(\ip{c\sigma'(Z^{2,N_1}(x'))\sigma'(w^1x')w^{2},{\gamma}_0^{N_1}}\cdot  \nabla_{w^1}f(\theta)x'\right)\pi(dx',dy)ds\\
&\qquad -\frac{1}{N_1^{1-\gamma_1}}\int_0^t\int_{\CX\times \CY}\left(y-h_s^{N_1}(x')\right)\ip{l^{N_1}_s\left(c\sigma'(Z^{2,N_1}(x'))\sigma'(w^1x')w^{2}\right) \cdot  \nabla_{w^1}f(\theta)x',{\gamma}^{N_1}_0}\pi(dx',dy)ds\\
&\qquad \left. + \frac{1}{N_1^{1-\gamma_1}}\int_0^t\int_{\CX\times \CY}K^{N_1}_s(x')\ip{\ip{c\sigma'(Z^{2,N_1}(x'))\sigma'(w^1x')w^{2},{\gamma}_0^{N_1}}\cdot  \nabla_{w^1}f(\theta)x',{\gamma}^{N_1}_0}\pi(dx',dy)ds\right) \\
&\qquad \left. \times b_1(L_{s_1}(f)) \times \cdots \times b_p(L_{s_p}(f)) \right \vert,
\end{aligned}
\end{equation}
where $F_3$ is as given in equation \eqref{F_3}. We first note that by equation \eqref{L^N_fluc}
\begin{equation*}
\begin{aligned}
&L^{N_1,N_2}_t(f)-\int_0^t \int_{\CX \times \CY} \paren{y-h^{N_1,N_2}_s(x')}  l^{N_1,N_2}_s\left(\partial_{c}f(\theta)  \sigma(Z^{2,N_1}(x'))\right) \pi(dx',dy) ds\\
&\qquad+  \int_0^t \int_{\CX\times \CY} K^{N_1,N_2}_s(x') \ip{\partial_{c}f(\theta)  \sigma(Z^{2,N_1}(x')),{\gamma}^{N_1,N_2}_0} \pi(dx',dy)ds\\
&\qquad  - \frac{1}{N_1^{1-\gamma_1}}\int_0^t \int_{\CX \times \CY} \left(y-h_s^{N_1,N_2}(x')\right) l^{N_1,N_2}_s\left(c\sigma'(Z^{2,N_1}(x'))\sigma(w^1x')\cdot \partial_{w^2}f(\theta) \right) \pi(dx',dy) ds\\
&\qquad + \frac{1}{N_1^{1-\gamma_1}}\int_0^t\int_{\CX\times \CY}K^{N_1,N_2}_s(x')\ip{c\sigma'(Z^{2,N_1}(x'))\sigma(w^1x')\cdot \partial_{w^2}f(\theta) ,{\gamma}^{N_1,N_2}_0}\pi(dx',dy)ds\\
&\qquad  - \frac{1}{N_1^{1-\gamma_1}}\int_0^t\int_{\CX\times \CY}\left(y-h_s^{N_1,N_2}(x')\right)l^{N_1,N_2}_s\left(\ip{c\sigma'(Z^{2,N_1}(x'))\sigma'(w^1x')w^{2},{\gamma}_0^{N_1,N_2}}\cdot  \nabla_{w^1}f(\theta)x'\right)\pi(dx',dy)ds\\
&\qquad -\frac{1}{N_1^{1-\gamma_1}}\int_0^t\int_{\CX\times \CY}\left(y-h_s^{N_1,N_2}(x')\right)\ip{l^{N_1,N_2}_s\left(c\sigma'(Z^{2,N_1}(x'))\sigma'(w^1x')w^{2}\right) \cdot  \nabla_{w^1}f(\theta)x',{\gamma}^{N_1,N_2}_0}\pi(dx',dy)ds\\
&\qquad  + \frac{1}{N_1^{1-\gamma_1}}\int_0^t\int_{\CX\times \CY}K^{N_1,N_2}_s(x')\ip{\ip{c\sigma'(Z^{2,N_1}(x'))\sigma'(w^1x')w^{2},{\gamma}_0^{N_1,N_2}}\cdot  \nabla_{w^1}f(\theta)x',{\gamma}^{N_1,N_2}_0}\pi(dx',dy)ds \nonumber
\end{aligned}
\end{equation*}
\begin{equation*}
\begin{aligned}
&= \frac{1}{N_2^{\varphi}} \int_0^t \int_{\CX \times \CY}   K^{N_1,N_2}_s(x') \ip{\partial_{c}f(\theta)  \sigma(Z^{2,N_1}(x'))+ \frac{1}{N_1^{1-\gamma_1}}c\sigma'(Z^{2,N_1}(x'))\sigma(w^1x')\cdot \partial_{w^2}f(\theta), \eta^{N_1,N_2}_0} \pi(dx',dy) ds\\
&\quad + \frac{1}{N_2^{2\varphi}N_1^{1-\gamma_1}}\int_0^t\int_{\CX\times \CY}K^{N_1,N_2}_s(x')\ip{\ip{c\sigma'(Z^{2,N_1}(x'))\sigma'(w^1x')w^{2},{\eta}_0^{N_1,N_2}}\cdot  \nabla_{w^1}f(\theta)x',{\eta}^{N_1,N_2}_s}\pi(dx',dy)ds \\
&\quad - \frac{1}{N_2^{2\varphi}N_1^{1-\gamma_1}}\int_0^t\int_{\CX\times \CY}K^{N_1,N_2}_s(x')\ip{\ip{c\sigma'(Z^{2,N_1}(x'))\sigma'(w^1x')w^{2},{\eta}_s^{N_1,N_2}}\cdot  \nabla_{w^1}f(\theta)x',{\eta}^{N_1,N_2}_s}\pi(dx',dy)ds \\
&\quad + \frac{1}{N_2^{2\varphi}N_1^{1-\gamma_1}}\int_0^t\int_{\CX\times \CY}K^{N_1,N_2}_s(x')\ip{\ip{c\sigma'(Z^{2,N_1}(x'))\sigma'(w^1x')w^{2},{\eta}_s^{N_1,N_2}}\cdot  \nabla_{w^1}f(\theta)x',{\eta}^{N_1,N_2}_0}\pi(dx',dy)ds\\
&\quad + \frac{1}{N_2^{\varphi}N_1^{1-\gamma_1}}\int_0^t\int_{\CX\times \CY}K^{N_1,N_2}_s(x')\ip{\ip{c\sigma'(Z^{2,N_1}(x'))\sigma'(w^1x')w^{2},{\gamma}_0^{N_1}}\cdot  \nabla_{w^1}f(\theta)x',{\eta}^{N_1,N_2}_0}\pi(dx',dy)ds\\
&\quad + \frac{1}{N_2^{\varphi}N_1^{1-\gamma_1}}\int_0^t\int_{\CX\times \CY}K^{N_1,N_2}_s(x')\ip{\ip{c\sigma'(Z^{2,N_1}(x'))\sigma'(w^1x')w^{2},{\eta}_0^{N_1}}\cdot  \nabla_{w^1}f(\theta)x',{\gamma}^{N_1,N_2}_0}\pi(dx',dy)ds\\
&\quad - \frac{1}{N_2^{\varphi}N_1^{1-\gamma_1}}\int_0^t\int_{\CX\times \CY}\left(y-h_s^{N_1}(x')\right)\ip{\ip{c\sigma'(Z^{2,N_1}(x'))\sigma'(w^1x')w^{2},{\eta}_0^{N_1,N_2}}\cdot  \nabla_{w^1}f(\theta)x',\eta^{N_1,N_2}_s}\pi(dx',dy)ds\\
&\quad + \frac{1}{N_2^{\varphi}N_1^{1-\gamma_1}}\int_0^t\int_{\CX\times \CY}\left(y-h_s^{N_1}(x')\right)\ip{\ip{c\sigma'(Z^{2,N_1}(x'))\sigma'(w^1x')w^{2},{\eta}_s^{N_1,N_2}}\cdot  \nabla_{w^1}f(\theta)x',\eta^{N_1,N_2}_s}\pi(dx',dy)ds\\
&\quad - \frac{1}{N_2^{\varphi}N_1^{1-\gamma_1}}\int_0^t\int_{\CX\times \CY}\left(y-h_s^{N_1}(x')\right)\ip{\ip{c\sigma'(Z^{2,N_1}(x'))\sigma'(w^1x')w^{2},{\eta}_s^{N_1,N_2}}\cdot  \nabla_{w^1}f(\theta)x',\eta^{N_1,N_2}_0}\pi(dx',dy)ds\\
&\quad + N_2^{\zeta-\frac{1}{2}}\ip{f,\sqrt{N_2}(\gamma^{N_1,N_2}_0 - \gamma^{N_1}_0)}+ N_2^{\zeta +\gamma_2 -1}M^{N_1,N_2}_{\eta,1,t} + N_2^{\zeta +\gamma_2 -1}M^{N_1,N_2}_{\eta,2,t}+ O\paren{N_2^{-1+\zeta}}
\end{aligned}
\end{equation*}
By similar analysis as for equations \eqref{eq:K^N:temp} to \eqref{eq_temp_Klast}, the expectation of the absolute value of the first nine terms above are bounded by $O(N_2^{-\varphi})$.
Then by Lemma \ref{CLT:lemma:martingale2}, we have
\begin{align*}
\E_{\pi^{N_1,N_2}} \left[F_5(\mathfrak{K},L(f))\right]
&\le  C\paren{\frac{1}{N_2^{1-\gamma_2}}+ \frac{1}{N_2^{\frac{1}{2}-\varphi}}+\frac{1}{N_2^{1-\varphi}}+ \frac{1}{N_2^{\gamma_2-\varphi}}}+ \frac{C}{N_2^{\varphi}}+ \frac{C}{N_2^{\frac{1}{2}-\zeta}}\\
&\quad   + \E\left[\abs{N_2^{\zeta+\gamma_2-1}M^{N_1,N_2}_{\eta,1,t}}^2\right]^{\frac{1}{2}} +\E\left[\abs{N_2^{\zeta+\gamma_2-1}M^{N_1,N_2}_{\eta,2,t}}^2\right]^{\frac{1}{2}}+ \frac{C}{N_2^{1-\zeta}}\\
&\le C\paren{\frac{1}{N_2^{1-\gamma_2}}+ \frac{1}{N_2^{\frac{1}{2}-\varphi}}+\frac{1}{N_2^{1-\varphi}} + \frac{1}{N_2^{\frac{1}{2}-\zeta}} + \frac{1}{N_2^{\frac{3}{2}-\zeta-\gamma_2}}+ \frac{1}{N_2^{1-\zeta}}}\\
&\le C\paren{\frac{1}{N_2^{1-\gamma_2}}+ \frac{1}{N_2^{\frac{1}{2}-\zeta}}}.
\end{align*}
Therefore, $\lim_{N_2\to \infty} \E_{\pi^{N_1,N_2}}[F_5(\mathfrak{K},L(f))] = 0$. Since $F_5(\cdot)$ is continuous and uniformly bounded,
\[\lim_{N_2\to \infty} \E_{\pi^{N_1,N_2}}\left[F_5(\mathfrak{K},L(f))\right] = \E_{\pi^{N_1}}\left[F_5(\mathfrak{K},L(f))\right] = 0.\]
The result then follows.
\end{custlist}
\end{proof}

Moving back to the analysis of $\Psi^{N_1,N_2}_t$, we first show compact containment of $\Psi^N_t$ in the next lemma.
\begin{lemma}\label{Psi_compact_containment}When $\zeta \le \min\{\gamma_2 -\frac{1}{2}, 2-2\gamma_2\}$,
there exit a constant $C<\infty$, such that
\[\sup_{N_2 \in \mathbb{N}, 0\le t\le T} \E \left[\abs{\Psi^{N_1,N_2}_t(x)}^2 \right] < C.\]
Thus, for any $\epsilon>0$, there exist a compact subset $U \subset \R^M$, such that
\[\sup_{N_2\in \mathbb{N}, 0\le t\le T} \P\paren{\Psi^{N_1,N_2}_t \notin U} < \epsilon. \]
\end{lemma}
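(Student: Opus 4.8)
The plan is to run the same argument that proved Lemma~\ref{CLT:lemma:bound of ex_Kt} for the first-order fluctuation $K^{N_1,N_2}_t$, now with $\Psi^{N_1,N_2}_t$ in the role of the unknown and with the second-order objects $L^{N_1,N_2}_t(f)$ replacing the first-order ones. Starting from the evolution equation~\eqref{Psi^N_t}, I would apply the Cauchy--Schwarz inequality to $\abs{\Psi^{N_1,N_2}_t(x)}^2$ and group the right-hand side into: (a) the four ``driving'' integrals whose integrands are the differences $N_2^{\zeta-\varphi}[l^{N_1,N_2}_s(B^{\cdot})-l^{N_1}_s(B^{\cdot})]=L^{N_1,N_2}_s(B^{\cdot})$ for $B^1,B^{2,j},B^{3,j}$; (b) the two ``self-referential'' integrals carrying the factor $\Psi^{N_1,N_2}_s(x')$ against the kernels $\ip{B^1+\tfrac1{N_1}\sum_j B^{2,j},\gamma^{N_1}_0}$ and $\ip{B^{3,j}_{x},\gamma^{N_1}_0}\ip{B^{3,j}_{x'},\gamma^{N_1}_0}$; (c) the correction $N_2^{\zeta-\varphi}\Gamma^{N_1,N_2}_t(x)$; and (d) the initial value $\Psi^{N_1,N_2}_0(x)$, the martingale $N_2^{\zeta}M^{N_1,N_2}_t(x)$, and the $O(N_2^{-\gamma_2+\zeta})$ remainder.

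For group (a), I would use that $B^1,B^{2,j},B^{3,j}\in C^\infty_b$, that $\sup_{t\le T}\int\abs{h^{N_1}_s}^2\,d\pi<\infty$ by~\eqref{CLT:network_bound}, and that $\abs{\ip{xx'B^{3,j}_{x'},\gamma^{N_1}_0}}\le C$ by~\eqref{B0_bound}, to reduce these terms to time-integrals of $\E[\abs{L^{N_1,N_2}_s(B^{\cdot})}^2]$; these are uniformly bounded by Lemma~\ref{lemma:L^N_compact containment}, which is precisely where the hypothesis $\zeta\le 2-2\gamma_2$ enters. For group (b), the boundedness of the kernels (again~\eqref{B0_bound}) lets me bound the contribution by $Ct\int_0^t\int_{\CX\times\CY}\abs{\Psi^{N_1,N_2}_s(x')}^2\pi(dx',dy)\,ds$, which is exactly the term that will close the Gr\"onwall loop.

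For group (c), I would observe that $\Gamma^{N_1,N_2}_t$ already carries an internal factor $N_2^{-\varphi}$ in its definition (Section~\ref{sec::convergence_of_K}) and is built from $L^2$-bounded products of $K^{N_1,N_2}_s$ and $\ip{B^{\cdot},\eta^{N_1,N_2}_s}$; hence $N_2^{\zeta-\varphi}\Gamma^{N_1,N_2}_t$ carries the net prefactor $N_2^{\zeta-2\varphi}\le 1$ (since $\zeta\le 2-2\gamma_2=2\varphi$) and is controlled through Lemmas~\ref{CLT:lemma:bound of ex_Kt} and~\ref{CLT:lemma:eta_compact_contatinment}. For group (d), I would verify the three scaling checks that the hypotheses guarantee: since $\Psi^{N_1,N_2}_0(x)=N_2^{1-\gamma_2+\zeta}\ip{c\sigma(Z^{2,N_1}(x)),\tilde\gamma^{N_1,N_2}_0}$ and the i.i.d.\ mean-zero initialization of Assumption~\ref{assumption} gives $\E[\abs{\ip{c\sigma(Z^{2,N_1}(x)),\tilde\gamma^{N_1,N_2}_0}}^2]\le C/N_2$ (the same computation underlying~\eqref{limit_gaussian}), the initial value is $L^2$-bounded exactly when $\zeta\le\gamma_2-\tfrac12$; the martingale satisfies $\E[\sup_{t}\abs{N_2^{\zeta}M^{N_1,N_2}_t}^2]\le CN_2^{2\zeta-1}$ by Lemma~\ref{CLT:lemma:martingale_bound} (equivalently Lemma~\ref{LLN:lemma:Mt_bound}), which is bounded because $\zeta\le\gamma_2-\tfrac12<\tfrac12$; and the remainder is bounded since $\zeta<\gamma_2$.

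Combining these estimates, summing over the finite data set $x\in\CX$ and using that $\pi$ is the empirical measure of $M$ points, I would arrive at an integral inequality of the form
\[
\sum_{x\in\CX}\E\bigl[\abs{\Psi^{N_1,N_2}_t(x)}^2\bigr]\le C(M,T)+\tilde{C}\,T\int_0^t\sum_{x'\in\CX}\E\bigl[\abs{\Psi^{N_1,N_2}_s(x')}^2\bigr]\,ds,
\]
to which Gr\"onwall's inequality applies, yielding $\sup_{N_2\in\mathbb{N},\,t\le T}\E[\abs{\Psi^{N_1,N_2}_t(x)}^2]<C$; the compact containment claim then follows from Markov's inequality exactly as at the end of Lemma~\ref{CLT:lemma:bound of ex_Kt}. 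I expect the only genuine obstacle to be the bookkeeping of exponents: one must confirm that every prefactor generated by the outer scaling $N_2^{\zeta-\varphi}$ together with the internal $N_2^{-\varphi}$ of $\Gamma$ and the $N_2^{1-\gamma_2+\zeta}$ of $\Psi^{N_1,N_2}_0$ remains nonpositive under the joint constraint $\zeta\le\min\{\gamma_2-\tfrac12,\,2-2\gamma_2\}$, and to track which of the two constraints is binding across the subranges of $\gamma_2$ (the first governs group (d), the second governs groups (a) and (c)). The analytic content is otherwise inherited directly from the first- and second-order estimates established earlier.
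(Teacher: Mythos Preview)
Your overall strategy is correct and is the one the paper follows: bound each piece of \eqref{Psi^N_t}, sum over the finite data set, and close with Gr\"onwall. Groups (a), (b), and (d) are handled exactly as you describe, invoking Lemma~\ref{lemma:L^N_compact containment} for the $L^{N_1,N_2}$-terms, the kernel bound~\eqref{B0_bound} for the self-referential terms, and the scaling checks $\zeta\le\gamma_2-\tfrac12$ for $\Psi^{N_1,N_2}_0$ and $N_2^{\zeta}M^{N_1,N_2}_t$.

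The gap is in your treatment of group (c). Writing $N_2^{\zeta-\varphi}\Gamma^{N_1,N_2}_{1,t}=-N_2^{\zeta-2\varphi}\int K^{N_1,N_2}_s\,\ip{B^1,\eta^{N_1,N_2}_s}$ and asserting that the integral is ``$L^2$-bounded'' does not close: to bound $\E\bigl[|K^{N_1,N_2}_s\ip{B^1,\eta^{N_1,N_2}_s}|^2\bigr]$ via Cauchy--Schwarz you would need $\E[|K^{N_1,N_2}_s|^4]$, which Lemma~\ref{CLT:lemma:bound of ex_Kt} does not supply (it only gives the second moment). If instead you revert to the pathwise bound $|\ip{B^1,\gamma^{N_1,N_2}_s-\gamma^{N_1}_0}|\le C$ used in the proof of Lemma~\ref{CLT:lemma:bound of ex_Kt}, the $N_2^{\varphi}$ inside $\eta$ cancels your internal $N_2^{-\varphi}$ and you are left with a growing prefactor $N_2^{\zeta-\varphi}$.

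The paper resolves this by first substituting $K^{N_1,N_2}_s=K^{N_1}_s+N_2^{-(\zeta-\varphi)}\Psi^{N_1,N_2}_s$ inside each $\Gamma^{N_1,N_2}_{i,t}$. The $\Psi^{N_1,N_2}_s$-pieces carry a net prefactor $N_2^{-\varphi}$ against $\ip{B,\eta^{N_1,N_2}_s}$, which after rewriting $\eta=N_2^{\varphi}(\gamma^{N_1,N_2}-\gamma^{N_1}_0)$ and using the pathwise bound gives an additional contribution $Ct\int\E[|\Psi^{N_1,N_2}_s|^2]$ to the Gr\"onwall integrand. The $K^{N_1}_s$-pieces carry prefactor $N_2^{-(2\varphi-\zeta)}\le 1$; since $K^{N_1}_s$ is \emph{deterministic} (the paper first proves $\sup_{t\le T}|K^{N_1}_t(x)|^2<C$ by its own Gr\"onwall argument on \eqref{K_t for 1-gamma_2}), these products are bounded in $L^2$ using only the $L^2$ bound on $\ip{B,\eta}$ from Lemma~\ref{CLT:lemma:eta_compact_contatinment}. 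This decomposition is the missing step in your outline; once inserted, the rest of your argument goes through unchanged.
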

\begin{proof}In the proof below, $C<\infty$ represents some positive constant, which may be different from line to line. We first rewrite the term $N_2^{\zeta - \varphi}\Gamma^{N_1,N_2}_t(x) = N_2^{\zeta - \varphi}\Gamma^{N_1,N_2}_{1,t}(x) + N_2^{\zeta - \varphi}\Gamma^{N_1,N_2}_{2,t}(x)+N_2^{\zeta - \varphi}\Gamma^{N_1,N_2}_{3,t}(x)$ as
\begin{equation*}
\begin{aligned}
N_2^{\zeta - \varphi}\Gamma^{N_1,N_2}_{1,t}(x) &= -\frac{1}{N_2^{\varphi}}\int_0^t \int_{\CX\times \CY} \Psi^{N_1,N_2}_s(x')\ip{B^1_{x,x'}(\theta), \eta^{N_1,N_2}_s}\pi(dx',dy)ds\\
&\quad - \frac{1}{N_2^{2\varphi-\zeta}}\int_0^t \int_{\CX\times \CY} K^{N_1}_s(x')\ip{B^1_{x,x'}(\theta), \eta^{N_1,N_2}_s}\pi(dx',dy)ds,
\end{aligned}
\end{equation*}
\begin{equation*}
\begin{aligned}
N_2^{\zeta - \varphi}\Gamma^{N_1,N_2}_{2,t}(x) &= -\frac{1}{N_1N_2^{\varphi}}\sum_{j=1}^{N_1} \int_0^t \int_{\CX\times \CY} \Psi^{N_1,N_2}_s(x')\ip{B^{2,j}_{x,x'}(\theta), \eta^{N_1,N_2}_s}\pi(dx',dy)ds\\
&\quad -\frac{1}{N_1N_2^{2\varphi-\zeta}}\sum_{j=1}^{N_1} \int_0^t \int_{\CX\times \CY} K^{N_1}_s(x')\ip{B^{2,j}_{x,x'}(\theta), \eta^{N_1,N_2}_s}\pi(dx',dy)ds,
\end{aligned}
\end{equation*}
\begin{equation*}
\begin{aligned}
N_2^{\zeta - \varphi}\Gamma^{N_1,N_2}_{3,t}(x) &= -\frac{1}{N_1N_2^{\varphi}} \sum_{j=1}^{N_1} \int_0^t \int_{\CX\times \CY} \Psi^{N_1,N_2}_s(x')xx'\ip{B^{3,j}_{x}(\theta), \eta^{N_1,N_2}_s}\ip{B^{3,j}_{x'}(\theta),\gamma^{N_1,N_2}_s}\pi(dx',dy)ds \\
&\quad  -\frac{1}{N_1N_2^{2\varphi-\zeta}} \sum_{j=1}^{N_1} \int_0^t \int_{\CX\times \CY} K^{N_1}_s(x')xx'\ip{B^{3,j}_{x}(\theta), \eta^{N_1,N_2}_s}\ip{B^{3,j}_{x'}(\theta),\gamma^{N_1,N_2}_s}\pi(dx',dy)ds \\
&\quad -\frac{1}{N_1N_2^{\varphi}} \sum_{j=1}^{N_1} \int_0^t \int_{\CX\times \CY} \Psi^{N_1,N_2}_s(x')xx'\ip{B^{3,j}_{x}(\theta), \gamma^{N_1}_0}\ip{B^{3,j}_{x'}(\theta),\eta^{N_1,N_2}_s}\pi(dx',dy)ds\\
&\quad -\frac{1}{N_1N_2^{2\varphi-\zeta}} \sum_{j=1}^{N_1} \int_0^t \int_{\CX\times \CY} K^{N_1}_s(x')xx'\ip{B^{3,j}_{x}(\theta), \gamma^{N_1}_0}\ip{B^{3,j}_{x'}(\theta),\eta^{N_1,N_2}_s}\pi(dx',dy)ds\\
&\quad + \frac{1}{N_1N_2^{\varphi}} \sum_{j=1}^{N_1}\int_0^t \int_{\CX\times \CY} \left(y-h^{N_1}_s(x')\right)xx'\ip{B^{3,j}_{x}(\theta), \eta^{N_1,N_2}_s}\ip{B^{3,j}_{x'}(\theta),\eta^{N_1,N_2}_s}\pi(dx',dy)ds.
\end{aligned}
\end{equation*}

Since the above terms involves the term $K^{N_1}_t(x)$, we first look at the bound for $K^{N_1}_t(x)$. By the Cauchy-Schwarz inequality, equations \eqref{h_N1_evolution}, \eqref{CLT:evolution_l}, \eqref{K_t for 1-gamma_2} and the analysis in Lemma \ref{CLT:lemma:bound of ex_Kt}, for any $t \in [0,T]$, we have
\begin{equation*}
\abs{K^{N_1}_t(x)}^2 \le Ct^2 + Ct \int_0^t \int_{\CX \times \CY} \abs{K^{N_1}_s(x')}^2 \pi(dx',dy)ds \le CT^2 + \frac{CT}{M} \int_0^t  \sum_{x'\in\mathcal{X}}\abs{K^{N_1}_s(x')}^2 \pi(dx',dy)ds.
\end{equation*}
Summing over $x\in \mathcal{X}$ on both sides gives
\begin{equation*}
\sum_{x\in \mathcal{X}} \abs{K^{N_1}_t(x)}^2  \le CT^2M + CT \int_0^t  \sum_{x'\in\mathcal{X}}\abs{K^{N_1}_s(x')}^2 \pi(dx',dy)ds.
\end{equation*}
By applying Gr\"onwall's inequality, we have
\begin{equation*}
\sup_{0\le t\le T} \sum_{x\in \mathcal{X}} \abs{K^{N_1}_t(x)}^2  \le \sup_{0\le t\le T} CT^2M \exp\paren{CTt} < C,
\end{equation*}
which implies that $\sup_{0\le t\le T}\abs{K^{N_1}_t(x)}^2<C$ for any $x\in \mathcal{X}$. Using this uniform bound for $K^{N_1}_t(x)$ together with Lemma \ref{CLT:lemma:eta_compact_contatinment}, similar analysis as for equation \eqref{K_Gamma_bound} gives
\begin{equation*}
\begin{aligned}
&\E\left[\abs{N_2^{\zeta - \varphi}\Gamma^{N_1,N_2}_t(x)}^2\right]\le Ct\int_0^t \int_{\CX \times \CY} \E \left[\abs{\Psi^{N_1,N_2}_s(x')}^2\right] \pi(dx',dy)ds +Ct^2 + \frac{Ct^2}{N_2^{2(2\varphi-\zeta)}}
\end{aligned}
\end{equation*}

By equations \eqref{B0_bound}, \eqref{CLT:network_bound}, \eqref{Psi^N_t} and Lemmas \ref{CLT:lemma:martingale_bound} and  \ref{lemma:L^N_compact containment}, we have
\begin{equation*}
\begin{aligned}
&\E \left[\abs{\Psi^{N_1,N_2}_t(x)}^2\right]
\le Ct \int_0^t \int_{\CX \times \CY} \E\left[\abs{L^{N_1,N_1}_s(B^1_{x,x'}(\theta))}^2 + \frac{1}{N_1}\sum_{j=1}^{N_1}\abs{L^{N_1,N_1}_s(B^{2,j}_{x,x'}(\theta))}^2 \right]\pi(dx',dy)ds\\
&\quad + Ct\int_0^t \int_{\CX \times \CY} \E \left[ \abs{L^{N_1,N_1}_s(B^{3,j}_{x}(\theta))}^2+\abs{L^{N_1,N_1}_s(B^{3,j}_{x'}(\theta))}^2+\abs{\Psi^{N_1,N_2}_s(x')}^2\right] \pi(dx',dy)ds\\
&\quad + \E\left[\abs{N_2^{\zeta - \varphi}\Gamma^{N_1,N_2}_t(x)}^2\right] + C\E \left[\abs{\Psi^{N_1,N_2}_0(x)}^2\right]+ C\E \left[\abs{N_2^{\zeta}M^{N_1,N_2}_t(x)}^2\right]\\
&\le Ct^2 + Ct\int_0^t \int_{\CX \times \CY} \E \left[\abs{\Psi^{N_1,N_2}_s(x')}^2\right] \pi(dx',dy)ds + \frac{Ct^2}{N_2^{2(2\varphi-\zeta)}}+ C + \frac{C}{N_2^{1-2\zeta}} \\
\end{aligned}
\end{equation*}
Summing over $x\in \mathcal{X}$ on both sides gives
\begin{equation*}
\begin{aligned}
\sum_{x\in \mathcal{X}}\E \left[\abs{\Psi^{N_1,N_2}_s(x)}^2\right] &\le CMT^2 + CT\int_0^t \sum_{x'\in\mathcal{X}} \E \left[\abs{\Psi^{N_1,N_2}_s(x')}^2\right] \pi(dx',dy)ds.
\end{aligned}
\end{equation*}
By Gr\"onwall's inequality, we get
\begin{equation*}
\sup_{0\le t\le T} \sum_{x\in \mathcal{X}} \E\left[\abs{\Psi^{N_1,N_2}_t(x)}^2\right]  \le \sup_{0\le t\le T} CT^2M \exp\paren{CTt} < C,
\end{equation*}
which implies that $\sup_{0\le t\le T}\E\left[\abs{\Psi^{N_1,N_2}_t(x)}^2\right] < C$ for any $x\in \mathcal{X}$. The result of the lemma then follows.
\end{proof}

The next lemma establishes the regularity of the process $\Psi^{N_1,N_2}_t$ in $D_{\R^M}([0,T])$. For the purpose of this lemma, we denote $q(z_1,z_2) = \min\{\norm{ z_1-z_2}_{l^1},1\}$ for $z_1,z_2 \in \R^M$. The proof of the lemma is similar to that for Lemma \ref{CLT:lemma:regularity}, which we omit here.
\begin{lemma}\label{Psi_regularity}
For any   $\delta \in (0,1)$, there is a constant $C<\infty$ such that for $0\le u \le \delta$, $0\le v\le \delta \wedge t$, and $t \in [0,T]$,
\[\E \left[q\paren{\Psi^{N_1,N_2}_{t+u},\Psi^N_{t}}q\paren{\Psi^{N_1,N_2}_{t},\Psi^{N_1,N_2}_{t-v}}\vert \CF^{N_1,N_2}_t\right]\le {C\delta}+\frac{C}{N_2^{1-\zeta}}.\]
\end{lemma}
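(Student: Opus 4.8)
The plan is to follow the template of the proof of Lemma \ref{CLT:lemma:regularity} essentially verbatim, replacing $K^{N_1,N_2}_t$ by $\Psi^{N_1,N_2}_t$ and the scaling exponent $\varphi$ by $\zeta$ throughout, and to close the estimate using the uniform second-moment bounds already established for $\Psi^{N_1,N_2}_t$, $L^{N_1,N_2}_t$, $K^{N_1,N_2}_t$ and $\eta^{N_1,N_2}_t$. First, for $0\le s<t\le T$ with $0<t-s\le\delta$, I would read off the increment $\Psi^{N_1,N_2}_t(x)-\Psi^{N_1,N_2}_s(x)$ directly from the evolution equation \eqref{Psi^N_t}: since the integrands are common to times $s$ and $t$, the initial term $\Psi^{N_1,N_2}_0(x)$ and the $O(N_2^{-\gamma_2+\zeta})$ remainder cancel in the difference, leaving integrals over $[s,t]$ of the four $L^{N_1,N_2}_\tau$-terms and the two $\Psi^{N_1,N_2}_\tau$-terms, the remainder increment $N_2^{\zeta-\varphi}\paren{\Gamma^{N_1,N_2}_t(x)-\Gamma^{N_1,N_2}_s(x)}$, and the martingale increment $N_2^{\zeta}\paren{M^{N_1,N_2}_t(x)-M^{N_1,N_2}_s(x)}$.

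Second, I would take conditional expectation given $\CF^{N_1,N_2}_s$ and bound each block. The $L^{N_1,N_2}_\tau$- and $\Psi^{N_1,N_2}_\tau$-integrals are controlled because the $B$-functions and the inner products $\ip{B^{\cdot}_{x,x'}(\theta),\gamma^{N_1}_0}$ are uniformly bounded by Assumption \ref{assumption} and Lemma \ref{lemma_parameterbound}, while $\E\paren{\abs{L^{N_1,N_2}_\tau(\cdot)}^2}$ and $\E\paren{\abs{\Psi^{N_1,N_2}_\tau(x')}^2}$ are uniformly bounded by Lemmas \ref{lemma:L^N_compact containment} and \ref{Psi_compact_containment}; integrating bounded quantities over $[s,t]$ yields a factor $t-s\le\delta$. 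The $\Gamma$-increment is handled exactly as in the proof of Lemma \ref{Psi_compact_containment}, where $N_2^{\zeta-\varphi}\Gamma^{N_1,N_2}_t$ is rewritten into pieces carrying prefactors $N_2^{-\varphi}$ or $N_2^{-(2\varphi-\zeta)}$ multiplying $\Psi^{N_1,N_2}$, $K^{N_1}$ and $\eta^{N_1,N_2}$; the Cauchy--Schwarz inequality together with Lemma \ref{CLT:lemma:eta_compact_contatinment}, the deterministic bound on $K^{N_1}$, and the uniform $L^2$ bound of Lemma \ref{CLT:lemma:bound of ex_Kt} again produces a $C\delta$ contribution plus lower-order-in-$N_2$ terms. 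For the martingale increment I would reproduce the quadratic-variation estimate of Lemma \ref{CLT:lemma:martingale_bound} (itself modeled on Lemma 3.1 of \cite{SirignanoSpiliopoulosNN1}), giving $\E\brac{\abs{N_2^{\zeta}\paren{M^{N_1,N_2}_t(x)-M^{N_1,N_2}_s(x)}}^2\mid\CF^{N_1,N_2}_s}\le C\delta N_2^{-(1-2\zeta)}+CN_2^{-(2-2\zeta)}$, whose square root is absorbed into $C\delta+CN_2^{-(1-\zeta)}$ once one uses $\zeta<\tfrac12$ (which holds throughout the admissible range $\gamma_2\in(3/4,1)$, $\zeta\le 2-2\gamma_2$).

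Third, summing over the finitely many $x\in\CX$ converts these pointwise conditional bounds into $\E\brac{\norm{\Psi^{N_1,N_2}_t-\Psi^{N_1,N_2}_s}_{l^1}\mid\CF^{N_1,N_2}_s}\le C\delta+CN_2^{-(1-\zeta)}$ for $0<t-s\le\delta$. To pass to the stated product form I would exploit that $q\paren{\Psi^{N_1,N_2}_t,\Psi^{N_1,N_2}_{t-v}}$ is $\CF^{N_1,N_2}_t$-measurable and bounded by $1$, so the conditional expectation of the product is dominated by $\E\brac{q\paren{\Psi^{N_1,N_2}_{t+u},\Psi^{N_1,N_2}_t}\mid\CF^{N_1,N_2}_t}$; since $q(z_1,z_2)\le\norm{z_1-z_2}_{l^1}$, the $l^1$ increment bound applied on $[t,t+u]$ with $u\le\delta$ gives exactly $C\delta+CN_2^{-(1-\zeta)}$, which is the claim. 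This regularity estimate, combined with the compact-containment bound of Lemma \ref{Psi_compact_containment}, then feeds into relative compactness of $\{\Psi^{N_1,N_2}_t\}_{N_2}$ via Theorem 8.6 of Chapter 3 of \cite{EthierAndKurtz}.

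The main obstacle I anticipate is the $\Gamma^{N_1,N_2}$-increment rather than the martingale term: one must verify that every prefactor $N_2^{-\varphi}$, $N_2^{-(2\varphi-\zeta)}$ and the delicate triple-product piece $\tfrac{1}{N_1N_2^{\varphi}}\int\int(y-h^{N_1}_s)\,xx'\,\ip{B^{3,j}_{x}(\theta),\eta^{N_1,N_2}_s}\ip{B^{3,j}_{x'}(\theta),\eta^{N_1,N_2}_s}$ stay bounded under the standing constraints $\gamma_2\in(3/4,1)$, $\varphi=1-\gamma_2$ and $\zeta\le\min\{\gamma_2-\tfrac12,\,2-2\gamma_2\}$; this is essentially the only place where the admissible range of $\zeta$ is genuinely used. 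Closing it cleanly requires the a priori $L^2$ bound on $\Psi^{N_1,N_2}$ from Lemma \ref{Psi_compact_containment} to be already in hand, so that the $\Psi$-dependent part of the $\Gamma$-increment can be absorbed into a drift term of the form $C\delta\int_s^t\E\paren{\abs{\Psi^{N_1,N_2}_\tau(x')}^2}\,d\tau$ and controlled by the same Grönwall-type reasoning used for $K^{N_1,N_2}$ in Lemma \ref{CLT:lemma:bound of ex_Kt}.
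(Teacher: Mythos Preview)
Your proposal is correct and follows exactly the approach the paper indicates: the paper's own proof is omitted with the remark that it is similar to that of Lemma \ref{CLT:lemma:regularity}, and your plan is precisely that adaptation. The only minor over-complication is your last paragraph: since Lemma \ref{Psi_compact_containment} already gives a uniform $L^2$ bound on $\Psi^{N_1,N_2}_\tau$, the $\Psi$-dependent piece of the $\Gamma$-increment is bounded directly by $C\int_s^t\E\paren{\abs{\Psi^{N_1,N_2}_\tau(x')}^2}^{1/2}d\tau\le C\delta$ without any further Gr\"onwall argument.
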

Combining these with our analysis of $L^{N_1,N_2}_t(f)$, we can now identify the limit for $\Psi^{N_1,N_2}_t$.
We denote
$\mathfrak{L}^{N_1,N_2}_t=(\mathfrak{K}^{N_1,N_2}_t,L^{N_1,N_2}_{1,t},L^{N_1,N_2}_{2,t},L^{N_1,N_2}_{3,t})$,
where $L^{N_1,N_2}_{1,t}= L^{N_1,N_2}_t(B^{1}_{x,x'}(\theta))$, $L^{N_1,N_2}_{2,t}$ and $L^{N_1,N_2}_{3,t}$ are $N_1$-dimensional vectors with $j$-th entry being $L^{N_1,N_2}_t(B^{2,j}_{x,x'}(\theta))$ and $L^{N_1,N_2}_t(B^{3,j}_{x}(\theta))$, respectively.
 In the next lemma, we prove the convergence of the processes $(\mathfrak{L}^{N_1,N_2}_t,\Psi^{N_1,N_2}_t)$ in distribution in the space $D_{E_3}([0,T])$, where $E_3 = \CM(\R^{1+N_1(1+d)}) \times \R^M \times \R \times \R^{N_1} \times \R^{N_1}\times \R^M \times \R \times \R^{N_1} \times \R^{N_1} \times \R^M$.
\begin{lemma}When $\gamma_2 \in \paren{\frac{3}{4},1}$, $\varphi = 1-\gamma_2$ and $\zeta\le \gamma_2 -\frac{1}{2}$, the processes $(\mathfrak{L}^{N_1,N_2}_t,\Psi^{N_1,N_2}_t)$ in distribution in the space $D_{E_3}([0,T])$ to $(\mathfrak{L}^{N_1}_t,\Psi^{N_1}_t)$. In particular, $\mathfrak{L}^{N_1}_t = (\gamma^{N_1}_0, h^{N_1}_t, l^{N_1}_{1,t}, l^{N_1}_{2,t}, l^{N_1}_{3,t}, K^{N_1}_t, L^{N_1}_{1,t}, L^{N_1}_{2,t}, L^{N_1}_{3,t})$ satisfies equations \eqref{h_N1_evolution}, \eqref{CLT:evolution_l}, and \eqref{K_t for 1-gamma_2}, $L^{N_1}_{1,t}, L^{N_1}_{2,t}, L^{N_1}_{3,t}, \Psi^{N_1}_t$ satisfy either of the following case:
\begin{custlist}[Case]
\item When $\gamma_2 \in \paren{\frac{3}{4},\frac{5}{6}}$ and $\zeta \le \gamma_2 - \frac{1}{2}$, or when $\gamma_2 \in \left[\frac{5}{6},1\right)$ and $\zeta < 2-2\gamma_2 \le \gamma_2-\frac{1}{2}$,  one has $L^{N_1}_{1,t}=0$, $L^{N_1}_{2,t}= L^{N_1}_{3,t}=0$ and $\Psi^{N_1}_t$ satisfies (\ref{limit_Psi}).
\item When $\gamma_2 \in \left[\frac{5}{6},1\right)$ and $\zeta = 2-2\gamma_2 \le \gamma_2-\frac{1}{2}$, $L^{N_1}_{1,t}, L^{N_1}_{2,t}, L^{N_1}_{3,t}$ satisfy equation \eqref{limit_Lt_0} and $\Psi_t$ satisfies (\ref{Psi_2_0}).
\end{custlist}
\end{lemma}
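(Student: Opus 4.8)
The plan is to identify the limit of $(\mathfrak{L}^{N_1,N_2}_t,\Psi^{N_1,N_2}_t)$ by the same weak-convergence and functional-identification scheme used in the proofs of Theorem \ref{CLT:theorem} and Lemma \ref{lemma:limit_L}, now carried out at the second-order fluctuation level. First I would assemble relative compactness of the joint family $\{\mathfrak{L}^{N_1,N_2},\Psi^{N_1,N_2}\}_{N_2\in\mathbb{N}}$ in $D_{E_3}([0,T])$. Relative compactness of $\mathfrak{K}^{N_1,N_2}$ is already available from the proof of Theorem \ref{CLT:theorem}; compact containment and regularity of the vector $(L^{N_1,N_2}_{1},L^{N_1,N_2}_{2},L^{N_1,N_2}_{3})$ follow componentwise from Lemmas \ref{lemma:L^N_compact containment} and \ref{lemma:L^N_regularity} applied to $f=B^1_{x,x'}(\theta),B^{2,j}_{x,x'}(\theta),B^{3,j}_{x}(\theta)$, all of which lie in $C^3_b$ under Assumption \ref{assumption}; and compact containment and regularity of $\Psi^{N_1,N_2}$ are exactly Lemmas \ref{Psi_compact_containment} and \ref{Psi_regularity}. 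Since each marginal family is tight and a finite product of tight families is tight, the joint family is tight, hence relatively compact in $D_{E_3}([0,T])$.

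Next I would identify any subsequential weak limit $\pi^{N_1}$ of the laws of $(\mathfrak{L}^{N_1,N_2},\Psi^{N_1,N_2})$ as a Dirac mass on the solution of the claimed limit system, through a continuous bounded test functional $G$ on $D_{E_3}([0,T])$ built by appending to $F_4$ (in Case 1) or to $F_5$ (in Case 2) an extra factor encoding the evolution equation \eqref{limit_Psi} or \eqref{Psi_2_0} for $\Psi^{N_1}$ against test functions $z_i(\Psi^{N_1}_{s_i})$. To show $\E_{\pi^{N_1,N_2}}[G]\to 0$, I would feed in the pre-limit evolution \eqref{Psi^N_t}: the drift $-\int_0^t\int \Psi^{N_1,N_2}_s(x')A^{N_1}_{x,x'}\pi\,ds$ appears directly from its last two lines, while the remaining source terms are the $L$-difference factors $N_2^{\zeta-\varphi}[l^{N_1,N_2}_s(B)-l^{N_1}_s(B)]$, which converge to $L^{N_1}_s(B)$ by Lemma \ref{lemma:limit_L}. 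The terms $N_2^{\zeta-\varphi}\Gamma^{N_1,N_2}_t(x)$ would be handled exactly as in the proof of Lemma \ref{Psi_compact_containment}, after rewriting them in terms of $\Psi^{N_1,N_2}_s$ and the uniformly bounded $K^{N_1}_s$; combined with the fourth-moment bound of Lemma \ref{CLT:lemma:eta_compact_contatinment} and repeated Cauchy--Schwarz as in \eqref{eq:K^N:temp}--\eqref{eq_temp_Klast}, the extraneous pieces are $O(N_2^{-\varphi})$. The martingale term obeys $\E[\sup_t |N_2^{\zeta}M^{N_1,N_2}_t|^2]\le C N_2^{-(1-2\zeta)}$, by the argument of Lemma \ref{CLT:lemma:martingale_bound} with $\varphi$ replaced by $\zeta$, which vanishes since $\zeta\le\gamma_2-1/2<1/2$. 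Finally the initial datum $\Psi^{N_1,N_2}_0(x)=N_2^{\zeta-(\gamma_2-1/2)}\ip{c\sigma(Z^{2,N_1}(x)),\sqrt{N_2}\tilde{\gamma}^{N_1,N_2}_0}$ converges by the central limit theorem \eqref{limit_gaussian} to $\mathcal{G}^{N_1}(x)$ when $\zeta=\gamma_2-1/2$ and to $0$ when $\zeta<\gamma_2-1/2$, producing the stated initial conditions for $\Psi^{N_1}$.

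The bifurcation into the two cases comes purely from the threshold $\zeta=2\varphi=2-2\gamma_2$ in Lemma \ref{lemma:limit_L}: for $\zeta<2-2\gamma_2$ the limits $L^{N_1}_{i,t}$ vanish, so the source terms drop and $\Psi^{N_1}$ satisfies the homogeneous equation \eqref{limit_Psi}; for $\zeta=2-2\gamma_2$, which is admissible only when $\gamma_2\ge 5/6$ since one needs $2-2\gamma_2\le\gamma_2-1/2$, the $L^{N_1}_{i,t}$ are the nonzero quantities \eqref{limit_Lt_0} and feed into \eqref{Psi_2_0}. Having established $\E_{\pi^{N_1}}[G]=0$, uniqueness closes the argument: equations \eqref{limit_Psi} and \eqref{Psi_2_0} are finite-dimensional linear integral equations for the $M$-vector $\Psi^{N_1}_t$ with bounded coefficients and prescribed inhomogeneities (deterministic once $K^{N_1}$ and $L^{N_1}$ are fixed), hence admit unique solutions, and Prokhorov's theorem then upgrades subsequential convergence to convergence of the full sequence.

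The main obstacle I anticipate is the careful bookkeeping of powers of $N_2$ across the many cross terms in $\Gamma^{N_1,N_2}_t$ and in the decomposition of the $L$-source terms, specifically verifying that at the critical scaling $\zeta=2-2\gamma_2$ the $L^{N_1}$ contributions survive with exactly the coefficients of \eqref{Psi_2_0}, while every quadratic-in-$\eta^{N_1,N_2}$ term and every martingale remainder is strictly lower order. This is precisely where the two-sided constraint $\zeta\le\min\{\gamma_2-1/2,\,2-2\gamma_2\}$ is used, and where the simultaneous uniform control of $K^{N_1}_t$, of the fourth moments of $l^{N_1,N_2}_0(B^{3,j})$, and of the regularity of $\eta^{N_1,N_2}$ must all be invoked together.
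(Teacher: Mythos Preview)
Your proposal is correct and follows essentially the same route as the paper: joint tightness assembled from the component lemmas, identification of the limit through a test functional built on top of $F_4$/$F_5$ (the paper calls these $F_6$ and $F_7$), control of the residue via the pre-limit evolution \eqref{Psi^N_t} together with Lemmas \ref{CLT:lemma:eta_compact_contatinment}, \ref{CLT:lemma:bound of ex_Kt}, \ref{lemma:L^N_compact containment}, \ref{Psi_compact_containment}, and closure by uniqueness plus Prokhorov. One refinement on the bookkeeping you flag as the obstacle: in Case~2 the $K^{N_1}_s\,l^{N_1}_s(B)$ source terms appearing in \eqref{Psi_2_0} arise precisely from the $N_2^{-(2\varphi-\zeta)}\int K^{N_1}_s\langle B,\eta^{N_1,N_2}_s\rangle$ pieces of $N_2^{\zeta-\varphi}\Gamma^{N_1,N_2}_t$, which are $O(1)$ rather than $O(N_2^{-\varphi})$ at $\zeta=2\varphi$ and must therefore be moved into the target equation before the remaining (\,$\Psi^{N_1,N_2}$-involving and quadratic-in-$\eta$\,) pieces of $\Gamma$ are discarded as lower order.
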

\begin{proof}
By analysis in Lemma \ref{lemma:limit_L}, $\{\mathfrak{L}^{N_1,N_2}\}_{N_2\in\mathbb{N}}$ is relatively compact in $D_{E_4}([0,T])$, where $E_4= \CM(\R^{1+N_1(1+d)}) \times \R^M \times \R \times \R^{N_1} \times \R^{N_1} \times \R^M \times \R \times \R^{N_1}\times \R^{N_1}$. By Lemmas \ref{Psi_compact_containment} and \ref{Psi_regularity}, $\{\Psi^{N_1,N_2}\}_{N_2\in\mathbb{N}}$ is relatively compact in $D_{\R^M}([0,T])$. These implies that the probability measures of the family of processes $\{\mathfrak{L}^{N_1,N_2}\}_{N_2\in\mathbb{N}}$ and the probability measures of the family of processes $\{\Psi^{N_1,N_2}\}_{N_2\in\mathbb{N}}$ are tight. Therefore, $\{\mathfrak{L}^{N_1,N_2},\Psi^{N_1,N_2}\}_{N_2\in\mathbb{N}}$ is tight. Hence, $\{\mathfrak{L}^{N_1,N_2},\Psi^{N_1,N_2}\}_{N_2\in\mathbb{N}}$ is relatively compact in $D_{E_3}([0,T])$.

Denote $\pi^{N_1,N_2} \in \CM(D_{E_3}([0,T])$ the probability measure corresponding to $(\mathfrak{L}^{N_1,N_2},\Psi^{N_1,N_2})$. Relative compactness implies that there is a subsequence $\pi^{N_1,N_{2_k}}$ that converges weakly. We now show that any limit point $\pi^{N_1}$ of a convergence subsequence $\pi^{N_1,N_{2_k}}$ is a Dirac measure concentrated on $(\mathfrak{L}^{N_1},\Psi^{N_1})\in D_{E_3}([0,T])$.

\begin{custlist}[Case]
\item When $\gamma_2 \in \paren{ \frac{3}{4}, \frac{5}{6}}$ and $\zeta \le \gamma_2 - \frac{1}{2} < 2\varphi$, or when $\gamma_2 \in \left[\frac{5}{6}, 1\right)$ and $\zeta < 2\varphi \le \gamma_2 - \frac{1}{2}$, for any $t \in [0,T]$, $b^{i,j}_1,\ldots, b^{i,j}_p \in C_b(\R)$, $d_1, \ldots, d_p \in C_b(\R^M)$, and $0 \le s_1 < \cdots < s_p \le t$, we define $F_6(\mathfrak{L},\Psi): D_{E_3}([0,T]) \to \R_+$ as
\begin{equation*}
\begin{aligned}
&F_6(\mathfrak{L},\Psi) = F_4(\mathfrak{K}^{N_1},L^{N_1}(B^1_{x,x'}(\theta))) + \sum_{i=2}^{3} \sum_{j=1}^{N_1} \abs{\paren{L^{N_1,j}_{i,t} - 0} \times b^{i,j}_1(L^{N_1,j}_{i,s_1})\times \cdots \times b^{i,j}_p(L^{N_1,j}_{i,s_p})}\\
&\quad + \sum_{x\in\mathcal{X}} \left\vert \left(\Psi^{N_1}_t(x)-\Psi^{N_1}_0(x) + \int^t_0 \int_{\CX \times \CY}\Psi^{N_1}_s(x')\ip{B^1_{x,x'}(\theta)+\frac{1}{N_1}\sum_{j=1}^{N_1}B^{2,j}_{x,x'}(\theta), \gamma^{N_1}_0}\pi(dx',dy) ds\right.\right.\\
& \qquad \qquad  + \frac{1}{N_1}\sum_{j=1}^{N_1}\int_0^t \int_{\CX\times \CY} \Psi^{N_1}_s(x')xx'\ip{B^{3,j}_{x}(\theta), \gamma^{N_1}_0}\ip{B^{3,j}_{x'}(\theta),\gamma^{N_1}_0}\pi(dx',dy)ds \\
&\qquad \qquad -\int^t_0 \int_{\CX \times \CY} \paren{y-h^{N_1}_s(x')}\left[ L^{N_1}_{s}(B^1_{x,x'}(\theta)) + \frac{1}{N_1}\sum_{j=1}^{N_1}   L^{N_1}_{s}((B^{2,j}_{x,x'}(\theta))\right] \pi(dx',dy) ds\\
&\qquad \qquad - \frac{1}{N_1}\sum_{j=1}^{N_1}\int_0^t \int_{\CX\times \CY} \left(y-h^{N_1}_s(x')\right)L^{N_1}_{s}(B^{3,j}_{x}(\theta))\ip{xx'B^{3,j}_{x'}(\theta),\gamma^{N_1}_0}\pi(dx',dy)ds\\
&\qquad \qquad\left.\left. - \frac{1}{N_1}\sum_{j=1}^{N_1}\int_0^t \int_{\CX\times \CY} \left(y-h^{N_1}_s(x')\right)\ip{xx'B^{3,j}_{x}(\theta), \gamma^{N_1}_0}L^{N_1}_{s}(B^{3,j}_{x'}(\theta))\pi(dx',dy)ds \right)\times \cdots \times d_p(\Psi_{s_p}) \right \vert,
\end{aligned}
\end{equation*}
where  $F_4(\mathfrak{K},L(B_{x,x'}(c,w)))$ is as given in equation \eqref{F4} and $L^{N_1,j}_{i,t}$ is the $j$-th element of the $N_1$-dimensional vector $L^{N_1}_{i,t}$ for $i=2,3$. Note that by equation \eqref{Psi^N_t}, we have
\begin{equation*}
\begin{aligned}
&\Psi^{N_1,N_2}_t(x) - \Psi^{N_1,N_2}_0(x) + \int^t_0 \int_{\CX \times \CY}\Psi^{N_1,N_2}_s(x')\ip{B^1_{x,x'}(\theta)+\frac{1}{N_1}\sum_{j=1}^{N_1}B^{2,j}_{x,x'}(\theta), \gamma^{N_1,N_2}_0}\pi(dx',dy) ds\\
&\quad +\frac{1}{N_1}\sum_{j=1}^{N_1}\int_0^t \int_{\CX\times \CY} \Psi^{N_1,N_2}_s(x')xx'\ip{B^{3,j}_{x}(\theta), \gamma^{N_1,N_2}_0}\ip{B^{3,j}_{x'}(\theta),\gamma^{N_1,N_2}_0}\pi(dx',dy)ds\\
&\quad - \int^t_0 \int_{\CX \times \CY} \paren{y-h^{N_1,N_2}_s(x')}\left[ L^{N_1,N_2}_{s}(B^1_{x,x'}(\theta)) + \frac{1}{N_1}\sum_{j=1}^{N_1} L^{N_1,N_2}_{s}((B^{2,j}_{x,x'}(\theta))\right] \pi(dx',dy) ds\\
&\quad - \frac{1}{N_1}\sum_{j=1}^{N_1}\int_0^t \int_{\CX\times \CY} \left(y-h^{N_1,N_2}_s(x')\right)L^{N_1,N_2}_{s}(B^{3,j}_{x}(\theta))\ip{xx'B^{3,j}_{x'}(\theta),\gamma^{N_1,N_2}_0}\pi(dx',dy)ds\\
&\quad - \frac{1}{N_1}\sum_{j=1}^{N_1}\int_0^t \int_{\CX\times \CY} \left(y-h^{N_1,N_2}_s(x')\right)\ip{xx'B^{3,j}_{x}(\theta), \gamma^{N_1,N_2}_0}L^{N_1,N_2}_{s}(B^{3,j}_{x'}(\theta))\pi(dx',dy)ds\\
&= (1)_{\Psi} + (2)_{\Psi} + (3)_{\Psi}+(4)_{\Psi}+(5)_{\Psi} + N_2^{\zeta-\varphi} \Gamma^{N_1,N_2}_{t}(x) + N_2^{\zeta}M_t^{N_1,N_2}(x) + O(N_2^{-\gamma_2+\zeta}),\\
\end{aligned}
\end{equation*}
where
\begin{equation}\label{begin_psi}
\begin{aligned}
(1)_{\Psi}= \frac{1}{N_2^{\varphi}} \int^t_0 \int_{\CX \times \CY}\Psi^{N_1,N_2}_s(x')\ip{B^1_{x,x'}(\theta)+\frac{1}{N_1}\sum_{j=1}^{N_1}B^{2,j}_{x,x'}(\theta), \eta^{N_1,N_2}_0}\pi(dx',dy) ds,
\end{aligned}
\end{equation}

\begin{equation*}
\begin{aligned}
(2)_{\Psi}&= \frac{1}{N_1N_2^{\varphi}}\sum_{j=1}^{N_1}\int_0^t \int_{\CX\times \CY} \Psi^{N_1,N_2}_s(x')xx'\ip{B^{3,j}_{x}(\theta), \eta^{N_1,N_2}_0}\ip{B^{3,j}_{x'}(\theta),\gamma^{N_1}_0}\pi(dx',dy)ds\\
&\quad + \frac{1}{N_1N_2^{\varphi}}\sum_{j=1}^{N_1}\int_0^t \int_{\CX\times \CY} \Psi^{N_1,N_2}_s(x')xx'\ip{B^{3,j}_{x}(\theta), \gamma^{N_1,N_2}_0}\ip{B^{3,j}_{x'}(\theta),\eta^{N_1,N_2}_0}\pi(dx',dy)ds,
\end{aligned}
\end{equation*}

\begin{equation*}
\begin{aligned}
(3)_{\Psi} = \frac{1}{N_2^{\varphi}} \int^t_0 \int_{\CX \times \CY} K^{N_1,N_2}_s(x')\left[ L^{N_1,N_2}_{s}(B^1_{x,x'}(\theta)) + \frac{1}{N_1}\sum_{j=1}^{N_1} L^{N_1,N_2}_{s}((B^{2,j}_{x,x'}(\theta))\right] \pi(dx',dy) ds,
\end{aligned}
\end{equation*}

\begin{equation*}
\begin{aligned}
(4)_{\Psi}&=\frac{1}{N_1N_2^{\varphi}}\sum_{j=1}^{N_1}\int_0^t \int_{\CX\times \CY} K^{N_1,N_2}_s(x')L^{N_1,N_2}_{s}(B^{3,j}_{x}(\theta))\ip{xx'B^{3,j}_{x'}(\theta),\gamma^{N_1}_0}\pi(dx',dy)ds\\
&\quad - \frac{1}{N_1N_2^{\varphi}}\sum_{j=1}^{N_1}\int_0^t \int_{\CX\times \CY} \left(y-h^{N_1,N_2}_s(x')\right)L^{N_1,N_2}_{s}(B^{3,j}_{x}(\theta))\ip{xx'B^{3,j}_{x'}(\theta),\eta^{N_1,N_2}_0}\pi(dx',dy)ds,
\end{aligned}
\end{equation*}

\begin{equation}\label{end_psi}
\begin{aligned}
(5)_{\Psi}&=\frac{1}{N_1N_2^{\varphi}}\sum_{j=1}^{N_1}\int_0^t \int_{\CX\times \CY} K^{N_1,N_2}_s(x')\ip{xx'B^{3,j}_{x}(\theta), \gamma^{N_1}_0}L^{N_1,N_2}_{s}(B^{3,j}_{x'}(\theta))\pi(dx',dy)ds\\
&\quad - \frac{1}{N_1N_2^{\varphi}}\sum_{j=1}^{N_1}\int_0^t \int_{\CX\times \CY} \left(y-h^{N_1,N_2}_s(x')\right)\ip{xx'B^{3,j}_{x}(\theta), \eta^{N_1,N_2}_0}L^{N_1,N_2}_{s}(B^{3,j}_{x'}(\theta))\pi(dx',dy)ds.
\end{aligned}
\end{equation}
We now analyze each of these five terms. By the Cauchy-Schwartz inequality, Lemmas \ref{CLT:lemma:eta_compact_contatinment}, \ref{CLT:lemma:bound of ex_Kt}, \ref{lemma:L^N_compact containment} and \ref{Psi_compact_containment}, we have
\begin{equation}\label{bound_(1)_Psi}
\E\left[\abs{(1)_{\Psi} + (3)_{\Psi}}\right] \le \frac{C}{N_2^{\varphi}}.
\end{equation}
For term $(2)_{\Psi}$, since
\begin{equation*}
\begin{aligned}
&\frac{1}{N_1N_2^{\varphi}}\sum_{j=1}^{N_1}\E\left[\abs{\int_0^t \int_{\CX\times \CY} \Psi^{N_1,N_2}_s(x')xx'\ip{B^{3,j}_{x}(\theta), \eta^{N_1,N_2}_0}\ip{B^{3,j}_{x'}(\theta),\gamma^{N_1}_0}\pi(dx',dy)ds} \right]\\
&\le \frac{1}{N_1N_2^{\varphi}}\sum_{j=1}^{N_1} \E\left[\sup_{x'} \abs{\ip{xx'B^{3,j}_{x'}(\theta),\gamma^{N_1}_0}} \int_0^t \int_{\CX\times \CY} \abs{\Psi^{N_1,N_2}_s(x')\ip{B^{3,j}_{x}(\theta), \eta^{N_1,N_2}_0}}\pi(dx',dy)ds \right]\\
&\le \frac{C}{N_1N_2^{\varphi}}\sum_{j=1}^{N_1}\int_0^t \int_{\CX\times \CY} \E\left[\abs{\Psi^{N_1,N_2}_s(x')}^2\right]^{\frac{1}{2}} \E\left[\abs{\ip{B^{3,j}_{x}(\theta), \eta^{N_1,N_2}_0}}^2\right]^{\frac{1}{2}} \pi(dx',dy)ds \\
&\le \frac{C}{N_2^{\varphi}}
\end{aligned}
\end{equation*}
and similar bound can be obtained for the second term in $(2)_{\Psi}$, we have $\E\left[\abs{(2)_{\Psi} }\right] \le {C}/{N_2^{\varphi}}$. For term $(4)_{\Psi}$, we see that
\begin{equation*}
\begin{aligned}
&\E\left[\abs{(4)_{\Psi} }\right] \le \frac{C}{N_1N_2^{\varphi}}\sum_{j=1}^{N_1}\int_0^t \int_{\CX\times \CY} \E\left[\abs{K^{N_1,N_2}_s(x')}^2 \right]^{\frac{1}{2}} \E\left[ \abs{L^{N_1,N_2}_{s}(B^{3,j}_{x}(\theta))}^2\right]^{\frac{1}{2}}\pi(dx',dy)ds\\
& + \frac{1}{N_1N_2^{\varphi}}\sum_{j=1}^{N_1}\int_0^t \int_{\CX\times \CY} \E\left[\abs{L^{N_1,N_2}_{s}(B^{3,j}_{x}(\theta))}^2\right]^{\frac{1}{2}}\E\left[\abs{y-h^{N_1,N_2}_s(x')}^4\right]^{\frac{1}{4}} \E\left[\abs{\ip{xx'B^{3,j}_{x'}(\theta),\eta^{N_1,N_2}_0}}^4\right]^{\frac{1}{4}}\pi(dx',dy)ds \\
&\le \frac{C(T)}{N_2^{\varphi}}.
\end{aligned}
\end{equation*}

Similarly, $\E\left[\abs{(5)_{\Psi} }\right]\le {C(T)}/{N_2^{\varphi}}$. By the Cauchy-Schwartz inequality, Lemmas \ref{lemma_parameterbound}, \ref{CLT:lemma:eta_compact_contatinment},  \ref{lemma:L^N_compact containment}, and \ref{Psi_compact_containment}, we have
\begin{equation}\label{bound_N^zeta-varphi_Gamma}
\begin{aligned}
&\E\left[\abs{N_2^{\zeta-\varphi} \Gamma^{N_1,N_2}_t}\right]\\
&\le \frac{C}{N_2^{\varphi}}\int_0^t \int_{\CX \times \CY} \E\left[\abs{\Psi_s^{N_1,N_2}(x')}^2\right]^{\frac{1}{2}}  \E\left[\abs{l^{N_1,N_2}_s(B^1_{x,x'}(\theta))+\frac{1}{N_1}\sum_{j=1}^{N_1}l^{N_1,N_2}_s(B^{2,j}_{x,x'}(\theta))}^2\right]^{\frac{1}{2}}\pi(dx',dy) ds \\
&\quad + \frac{C}{N_2^{2\varphi-\zeta}}\int_0^t \int_{\CX \times \CY}\E\left[\abs{ l^N_s(B_{x,x'}(c,w))+\frac{1}{N_1}\sum_{j=1}^{N_1}l^{N_1,N_2}_s(B^{2,j}_{x,x'}(\theta))}\right]  \pi(dx',dy) ds\\
& \quad +\frac{C}{N_1N_2^{\varphi}} \sum_{j=1}^{N_1} \int_0^t \int_{\CX\times \CY} xx'\E\left[\abs{\Psi^{N_1,N_2}_s(x')}^2\right]^{\frac{1}{2}} \E\left[\abs{\ip{B^{3,j}_{x}(\theta), \eta^{N_1,N_2}_s}}^4\right]^{\frac{1}{4}} \E\left[\abs{\ip{B^{3,j}_{x'}(\theta),\gamma^{N_1,N_2}_s}}^2\right]^{\frac{1}{4}}\pi(dx',dy)ds \\
&\quad  +\frac{C}{N_1N_2^{2\varphi-\zeta}} \sum_{j=1}^{N_1} \int_0^t \int_{\CX\times \CY} xx'\E\left[\abs{\ip{B^{3,j}_{x}(\theta), \eta^{N_1,N_2}_s}}^2\right]^{\frac{1}{2}} \E\left[\abs{\ip{B^{3,j}_{x'}(\theta),\gamma^{N_1,N_2}_s}}^2\right]^{\frac{1}{2}}\pi(dx',dy)ds \\
&\quad + \frac{C}{N_1N_2^{\varphi}} \sum_{j=1}^{N_1} \int_0^t \int_{\CX\times \CY}xx' \E\left[\abs{\Psi^{N_1,N_2}_s(x')}^2\right]^{\frac{1}{2}} \E\left[ \abs{\ip{B^{3,j}_{x'}(\theta),\eta^{N_1,N_2}_s}}^2\right]^{\frac{1}{2}}\pi(dx',dy)ds\\
&\quad +\frac{C}{N_1N_2^{2\varphi-\zeta}} \sum_{j=1}^{N_1} \int_0^t \int_{\CX\times \CY}xx' \E\left[\abs{\ip{B^{3,j}_{x}(\theta), \gamma^{N_1}_0}}^2\right]^{\frac{1}{2}} \E\left[\abs{\ip{B^{3,j}_{x'}(\theta),\eta^{N_1,N_2}_s}}^2\right]^{\frac{1}{2}}\pi(dx',dy)ds\\
&\quad + \frac{C}{N_1N_2^{\varphi}} \sum_{j=1}^{N_1}\left(\int_0^t \int_{\CX\times \CY}xx' \E\left[\abs{\ip{B^{3,j}_{x}(\theta), \eta^{N_1,N_2}_s}}^4\right]^{\frac{1}{2}} \E\left[\abs{\ip{B^{3,j}_{x'}(\theta),\eta^{N_1,N_2}_s}}^4\right]^{\frac{1}{2}}\pi(dx',dy)ds\right)^{\frac{1}{2}}\\
&\le C \paren{\frac{1}{N_2^{\varphi}} + \frac{1}{N_2^{2\varphi-\zeta}}}.
\end{aligned}
\end{equation}

Putting everything together, by equation \eqref{Psi^N_t}, Lemmas Lemmas \ref{CLT:lemma:eta_compact_contatinment}, \ref{CLT:lemma:bound of ex_Kt}, \ref{lemma:L^N_compact containment}, \ref{lemma:limit_L}, \ref{Psi_compact_containment}, and the analysis in Section \ref{sec::convergence_of_K}, we have
\begin{align*}
&\E_{\pi^{N_1,N_2}}  \left[F_6(\mathfrak{L},\Psi)\right] \\
&= \E_{\pi^{N_1,N_2}} \left[F_4(\mathfrak{K}^{N_1},L^{N_1}(B^1_{x,x'}(\theta)))\right] + \sum_{i=2}^{3} \sum_{j=1}^{N_1} \E\left[\abs{\paren{L^{N_1,N_2, j}_{i,t} - 0} \times b^{i,j}_1(L^{N_1,N_2,j}_{i,s_1})\times \cdots \times b^{i,j}_p(L^{N_1,N_2,j}_{i,s_p})}\right]\\
&\quad + \sum_{x\in\mathcal{X}} \E\left[\left\vert \left(\Psi^{N_1,N_2}_t(x)-\Psi^{N_1,N_2}_0(x) + \int^t_0 \int_{\CX \times \CY}\Psi^{N_1,N_2}_s(x')\ip{B^1_{x,x'}(\theta)+\frac{1}{N_1}\sum_{j=1}^{N_1}B^{2,j}_{x,x'}(\theta), \gamma^{N_1,N_2}_0}\pi(dx',dy) ds\right.\right.\right.\\
& \qquad \qquad  + \frac{1}{N_1}\sum_{j=1}^{N_1}\int_0^t \int_{\CX\times \CY} \Psi^{N_1,N_2}_s(x')xx'\ip{B^{3,j}_{x}(\theta), \gamma^{N_1,N_2}_0}\ip{B^{3,j}_{x'}(\theta),\gamma^{N_1,N_2}_0}\pi(dx',dy)ds \\
&\qquad \qquad -\int^t_0 \int_{\CX \times \CY} \paren{y-h^{N_1,N_2}_s(x')}\left[ L^{N_1,N_2}_{s}(B^1_{x,x'}(\theta)) + \frac{1}{N_1}\sum_{j=1}^{N_1}   L^{N_1,N_2}_{s}((B^{2,j}_{x,x'}(\theta))\right] \pi(dx',dy) ds\\
&\qquad \qquad - \frac{1}{N_1}\sum_{j=1}^{N_1}\int_0^t \int_{\CX\times \CY} \left(y-h^{N_1,N_2}_s(x')\right)L^{N_1,N_2}_{s}(B^{3,j}_{x}(\theta))\ip{xx'B^{3,j}_{x'}(\theta),\gamma^{N_1,N_2}_0}\pi(dx',dy)ds\\
&\qquad \qquad\left.\left.\left. - \frac{1}{N_1}\sum_{j=1}^{N_1}\int_0^t \int_{\CX\times \CY} \left(y-h^{N_1,N_2}_s(x')\right)\ip{xx'B^{3,j}_{x}(\theta), \gamma^{N_1,N_2}_0}L^{N_1,N_2}_{s}(B^{3,j}_{x'}(\theta))\pi(dx',dy)ds \right)\times \cdots \times d_p(\Psi^{N_1,N_2}_{s_p}) \right \vert\right]\\
&\le C \paren{\frac{1}{N_2^{1-\gamma_2}} + \frac{1}{N_2^{2\varphi-\zeta}} } + C\paren{\frac{1}{N_1^{\varphi}} + \frac{1}{N_1^{2\varphi-\zeta}}} \\
&\quad + C \E \left[\abs{N_2^{\zeta}M_t^{N_1,N_2}}\right] + O(N_2^{-\gamma_2+\zeta})\\
&\le C \paren{\frac{1}{N_2^{1-\gamma_2}} + \frac{1}{N_2^{2\varphi-\zeta}} }.
\end{align*}
Therefore, $\lim_{N_2\to \infty} \E_{\pi^{N_1,N_2}}[F_6(\mathfrak{L},\Psi)] = 0$. Since $F_6(\cdot)$ is continuous and uniformly bounded,
\[\lim_{N_2\to \infty} \E_{\pi^{N_1,N_2}}\left[F_6(\mathfrak{L},\Psi)\right] = \E_{\pi^{N_1}}\left[F_6(\mathfrak{L},\Psi)\right] = 0.\]
Since relative compactness implies that every subsequence $\pi^{N_1,N_{2_k}}$ has a further sub-subsequence that converges weakly. And we have show that any limit point $\pi^{N_1}$ of a convergence sequence must be a Dirac measure concentrated $(\mathfrak{L}^{N_1},\Psi^{N_1})\in D_{E_3}([0,T])$. In particular, $K^{N_1}_t$ satisfies \eqref{K_t for 1-gamma_2}, $L^{N_1}_{1,t}=0$, $L^{N_1}_{2,t}=L^{N_1}_{3,t}=0$ and $\Psi^{N_1}_t$ satisfies equation \eqref{limit_Psi}. Since the solutions to equations \eqref{h_N1_evolution}, \eqref{K_t for 1-gamma_2} and  \eqref{limit_Psi} are unique, by Prokhorov's theorem, the processes $(\mathfrak{L}^{N_1,N_2}_t,\Psi^{N_1,N_2}_t)$ converges in distribution to $(\mathfrak{L}^{N_1}_t,\Psi^{N_1}_t)$.

\item  When $\gamma_2 \in \left[\frac{5}{6}, 1\right)$ and $\zeta = 2-2\gamma_2 = 2\varphi$, for any $t \in [0,T]$, $b^{i,j}_1,\ldots, b^{i,j}_p \in C_b(\R)$, $d_1, \ldots, d_p \in C_b(\R^M)$, and $0 \le s_1 < \cdots < s_p \le t$, we define $F_7(\mathfrak{L},\Psi): D_{E_3}([0,T]) \to \R_+$ as
\begin{equation*}
\begin{aligned}
F_7(\mathfrak{L},\Psi) &= F_5(\mathfrak{K},L(B^1_{x,x'}(\theta))) + \sum_{i=2}^{3} \sum_{j=1}^{N_1} \abs{F_{L}(L^{N_1,j}_{i,t}) \times b^{i,j}_1(L^{N_1,j}_{i,s_1})\times \cdots \times b^{i,j}_p(L^{N_1,j}_{i,s_p})}\\
&+\sum_{x\in\mathcal{X}} \left\vert \left(\Psi^{N_1}_t(x)-\Psi^{N_1}_0(x) + \int^t_0 \int_{\CX \times \CY}\Psi^{N_1}_s(x')\ip{B^1_{x,x'}(\theta)+\frac{1}{N_1}\sum_{j=1}^{N_1}B^{2,j}_{x,x'}(\theta), \gamma^{N_1}_0}\pi(dx',dy) ds\right.\right.\\
&\qquad +\frac{1}{N_1}\sum_{j=1}^{N_1}\int_0^t \int_{\CX\times \CY} \Psi^{N_1}_s(x')xx'\ip{B^{3,j}_{x}(\theta), \gamma^{N_1}_0}\ip{B^{3,j}_{x'}(\theta),\gamma^{N_1}_0}\pi(dx',dy)ds\\
&\qquad -\int^t_0 \int_{\CX \times \CY} \paren{y-h^{N_1}_s(x')}\left[ L^{N_1}_{s}(B^1_{x,x'}(\theta)) + \frac{1}{N_1}\sum_{j=1}^{N_1}   L^{N_1}_{s}((B^{2,j}_{x,x'}(\theta))\right] \pi(dx',dy) ds\\
&\qquad - \frac{1}{N_1}\sum_{j=1}^{N_1}\int_0^t \int_{\CX\times \CY} \left(y-h^{N_1}_s(x')\right)L^{N_1}_{s}(B^{3,j}_{x}(\theta))\ip{xx'B^{3,j}_{x'}(\theta),\gamma^{N_1}_0}\pi(dx',dy)ds\\
&\qquad - \frac{1}{N_1}\sum_{j=1}^{N_1}\int_0^t \int_{\CX\times \CY} \left(y-h^{N_1}_s(x')\right)\ip{xx'B^{3,j}_{x}(\theta), \gamma^{N_1}_0}L^{N_1}_{s}(B^{3,j}_{x'}(\theta))\pi(dx',dy)ds\\
&\qquad + \int_0^t \int_{\CX\times \CY} K^{N_1}_s(x')\left[l^{N_1}_s\left(B^1_{x,x'}(\theta)\right)+\frac{1}{N_1}\sum_{j=1}^{N_1}l^{N_1}_s \left(B^{2,j}_{x,x'}(\theta)\right)\right]\pi(dx',dy)ds\\
&\qquad +\frac{1}{N_1} \sum_{j=1}^{N_1} \int_0^t \int_{\CX\times \CY} K^{N_1}_s(x')xx'l^{N_1}_s\left(B^{3,j}_{x}(\theta)\right)\ip{B^{3,j}_{x'}(\theta),\gamma^{N_1}_0}\pi(dx',dy)ds \\
&\qquad \left.\left.+\frac{1}{N_1} \sum_{j=1}^{N_1} \int_0^t \int_{\CX\times \CY} K^{N_1}_s(x')xx'\ip{B^{3,j}_{x}(\theta), \gamma^{N_1}_0}l^{N_1}_s\left(B^{3,j}_{x'}(\theta)\right)\pi(dx',dy)ds\right)\times d_1(\Psi^{N_1}_{s_1}) \times \cdots \times d_p(\Psi^{N_1}_{s_p}) \right \vert,
\end{aligned}
\end{equation*}
where $F_5(\mathfrak{K},L(B_{x,x'}(c,w)))$ is as given in equation \eqref{F5}, $L^{N_1,j}_{i,t}$ is the $j$-th element of the $N_1$-dimensional vector $L^{N_1}_{i,t}$ for $i=2,3$, and $F_L(f)$ is equal to $L^{N_1}_t(f)$ minus the right-hand side of \eqref{limit_Lt_0}. Note that by equation \eqref{Psi^N_t},

\begin{equation}\label{Psi_2_temp}
\begin{aligned}
&\Psi^{N_1,N_2}_t(x)-\Psi^{N_1,N_2}_0(x) + \int^t_0 \int_{\CX \times \CY}\Psi^{N_1,N_2}_s(x')\ip{B^1_{x,x'}(\theta)+\frac{1}{N_1}\sum_{j=1}^{N_1}B^{2,j}_{x,x'}(\theta), \gamma^{N_1,N_2}_0}\pi(dx',dy) ds\\
&\qquad +\frac{1}{N_1}\sum_{j=1}^{N_1}\int_0^t \int_{\CX\times \CY} \Psi^{N_1,N_2}_s(x')xx'\ip{B^{3,j}_{x}(\theta), \gamma^{N_1,N_2}_0}\ip{B^{3,j}_{x'}(\theta),\gamma^{N_1,N_2}_0}\pi(dx',dy)ds\\
&\qquad -\int^t_0 \int_{\CX \times \CY} \paren{y-h^{N_1,N_2}_s(x')}\left[ L^{N_1,N_2}_{s}(B^1_{x,x'}(\theta)) + \frac{1}{N_1}\sum_{j=1}^{N_1}   L^{N_1,N_2}_{s}((B^{2,j}_{x,x'}(\theta))\right] \pi(dx',dy) ds\\
&\qquad - \frac{1}{N_1}\sum_{j=1}^{N_1}\int_0^t \int_{\CX\times \CY} \left(y-h^{N_1,N_2}_s(x')\right)L^{N_1,N_2}_{s}(B^{3,j}_{x}(\theta))\ip{xx'B^{3,j}_{x'}(\theta),\gamma^{N_1,N_2}_0}\pi(dx',dy)ds\\
&\qquad - \frac{1}{N_1}\sum_{j=1}^{N_1}\int_0^t \int_{\CX\times \CY} \left(y-h^{N_1,N_2}_s(x')\right)\ip{xx'B^{3,j}_{x}(\theta), \gamma^{N_1,N_2}_0}L^{N_1,N_2}_{s}(B^{3,j}_{x'}(\theta))\pi(dx',dy)ds\\
&\qquad + \int_0^t \int_{\CX\times \CY} K^{N_1,N_2}_s(x')\left[l^{N_1,N_2}_s\left(B^1_{x,x'}(\theta)\right)+\frac{1}{N_1}\sum_{j=1}^{N_1}l^{N_1}_s \left(B^{2,j}_{x,x'}(\theta)\right)\right]\pi(dx',dy)ds\\
&\qquad +\frac{1}{N_1} \sum_{j=1}^{N_1} \int_0^t \int_{\CX\times \CY} K^{N_1,N_2}_s(x')xx'l^{N_1,N_2}_s\left(B^{3,j}_{x}(\theta)\right)\ip{B^{3,j}_{x'}(\theta),\gamma^{N_1,N_2}_0}\pi(dx',dy)ds \\
&\qquad +\frac{1}{N_1} \sum_{j=1}^{N_1} \int_0^t \int_{\CX\times \CY} K^{N_1,N_2}_s(x')xx'\ip{B^{3,j}_{x}(\theta), \gamma^{N_1,N_2}_0}l^{N_1,N_2}_s\left(B^{3,j}_{x'}(\theta)\right)\pi(dx',dy)ds\\
\end{aligned}
\end{equation}
\begin{equation*}
\begin{aligned}
&=\frac{1}{N_1N_2^{\varphi}} \sum_{j=1}^{N_1} \int_0^t \int_{\CX\times \CY} K^{N_1,N_2}_s(x')xx'l^{N_1,N_2}_s(B^{3,j}_{x}(\theta))l^{N_1,N_2}_0\left(B^{3,j}_{x'}(\theta)\right)\pi(dx',dy)ds\\
&\quad + \frac{1}{N_1N_2^{\varphi}} \sum_{j=1}^{N_1} \int_0^t \int_{\CX\times \CY} K^{N_1,N_2}_s(x')xx'l^{N_1,N_2}_0(B^{3,j}_{x}(\theta))l^{N_1,N_2}_s\left(B^{3,j}_{x'}(\theta)\right)\pi(dx',dy)ds\\
&\quad - \frac{1}{N_1N_2^{\varphi}} \sum_{j=1}^{N_1} \int_0^t \int_{\CX\times \CY} K^{N_1,N_2}_s(x')xx'l^{N_1,N_2}_s(B^{3,j}_{x}(\theta))l^{N_1,N_2}_s\left(B^{3,j}_{x'}(\theta)\right)\pi(dx',dy)ds\\
&\quad + \frac{1}{N_1N_2^{\varphi}} \sum_{j=1}^{N_1}\int_0^t \int_{\CX\times \CY} \left(y-h^{N_1}_s(x')\right)xx'\ip{B^{3,j}_{x}(\theta), \eta^{N_1,N_2}_s}\ip{B^{3,j}_{x'}(\theta),\eta^{N_1,N_2}_s}\pi(dx',dy)ds\\
&\quad + (1)_{\Psi} + (2)_{\Psi} + (3)_{\Psi}+(4)_{\Psi}+(5)_{\Psi} + N_2^{\zeta}M_t^{N_1,N_2}(x) + O(N_2^{-\gamma_2+\zeta}),
\end{aligned}
\end{equation*}
where $(1)_{\Psi}$ to $(5)_{\Psi}$ are given in \eqref{begin_psi} to \eqref{end_psi}. By Lemmas \ref{CLT:lemma:eta_compact_contatinment} and \ref{CLT:lemma:bound of ex_Kt},
\begin{equation*}
\begin{aligned}
&\E\left[\abs{\frac{1}{N_1N_2^{\varphi}} \sum_{j=1}^{N_1} \int_0^t \int_{\CX\times \CY} K^{N_1,N_2}_s(x')xx'l^{N_1,N_2}_s(B^{3,j}_{x}(\theta))l^{N_1,N_2}_0\left(B^{3,j}_{x'}(\theta)\right)\pi(dx',dy)ds}\right]\\
&\le\frac{C}{N_1N_2^{\varphi}} \sum_{j=1}^{N_1} \int_0^t \int_{\CX\times \CY} \E\left[\abs{K^{N_1,N_2}_s(x')}^2\right]^{\frac{1}{2}} \E\left[\abs{l^{N_1,N_2}_s(B^{3,j}_{x}(\theta))}^4\right]^{\frac{1}{4}} \E\left[\abs{l^{N_1,N_2}_0\left(B^{3,j}_{x'}(\theta)\right)}^4\right]^{\frac{1}{4}}\pi(dx',dy)ds.\\
&\le \frac{C}{N_2^{\varphi}}
\end{aligned}
\end{equation*}
Similarly, the expectation of the absolute value of the first three terms on the right-hand side of \eqref{Psi_2_temp} are bounded by $O(N_2^{-\varphi})$. The analysis for the forth term and $(1)_{\Psi}$ to $(5)_{\Psi}$ are given in \eqref{bound_(1)_Psi} to \eqref{bound_N^zeta-varphi_Gamma}.

Therefore, we have
\begin{equation*}
\begin{aligned}
&\E_{\pi^{N_1,N_2}}\left[F_7(\mathfrak{L},\Psi)\right]\\
 &= \E_{\pi^{N_1,N_2}}\left[F_5(\mathfrak{K},L(B^1_{x,x'}(\theta)))\right] + \sum_{i=2}^{3} \sum_{j=1}^{N_1} \E\left[\abs{F_{L}(L^{N_1,N_2,j}_{i,t}) \times b^{i,j}_1(L^{N_1,N_2,j}_{i,s_1})\times \cdots \times b^{i,j}_p(L^{N_1,N_2,j}_{i,s_p})}\right]\\
&\quad +\sum_{x\in\mathcal{X}} \E\left[\left\vert \left(\Psi^{N_1,N_2}_t(x)-\Psi^{N_1,N_2}_0(x) + \int^t_0 \int_{\CX \times \CY}\Psi^{N_1,N_2}_s(x')\ip{B^1_{x,x'}(\theta)+\frac{1}{N_1}\sum_{j=1}^{N_1}B^{2,j}_{x,x'}(\theta), \gamma^{N_1,N_2}_0}\pi(dx',dy) ds\right.\right.\right.\\
&\qquad +\frac{1}{N_1}\sum_{j=1}^{N_1}\int_0^t \int_{\CX\times \CY} \Psi^{N_1,N_2}_s(x')xx'\ip{B^{3,j}_{x}(\theta), \gamma^{N_1,N_2}_0}\ip{B^{3,j}_{x'}(\theta),\gamma^{N_1,N_2}_0}\pi(dx',dy)ds\\
&\qquad -\int^t_0 \int_{\CX \times \CY} \paren{y-h^{N_1,N_2}_s(x')}\left[ L^{N_1,N_2}_{s}(B^1_{x,x'}(\theta)) + \frac{1}{N_1}\sum_{j=1}^{N_1}   L^{N_1,N_2}_{s}((B^{2,j}_{x,x'}(\theta))\right] \pi(dx',dy) ds\\
&\qquad - \frac{1}{N_1}\sum_{j=1}^{N_1}\int_0^t \int_{\CX\times \CY} \left(y-h^{N_1,N_2}_s(x')\right)L^{N_1,N_2}_{s}(B^{3,j}_{x}(\theta))\ip{xx'B^{3,j}_{x'}(\theta),\gamma^{N_1,N_2}_0}\pi(dx',dy)ds\\
&\qquad - \frac{1}{N_1}\sum_{j=1}^{N_1}\int_0^t \int_{\CX\times \CY} \left(y-h^{N_1,N_2}_s(x')\right)\ip{xx'B^{3,j}_{x}(\theta), \gamma^{N_1,N_2}_0}L^{N_1,N_2}_{s}(B^{3,j}_{x'}(\theta))\pi(dx',dy)ds\\
&\qquad + \int_0^t \int_{\CX\times \CY} K^{N_1,N_2}_s(x')\left[l^{N_1,N_2}_s\left(B^1_{x,x'}(\theta)\right)+\frac{1}{N_1}\sum_{j=1}^{N_1}l^{N_1,N_2}_s \left(B^{2,j}_{x,x'}(\theta)\right)\right]\pi(dx',dy)ds\\
&\qquad +\frac{1}{N_1} \sum_{j=1}^{N_1} \int_0^t \int_{\CX\times \CY} K^{N_1,N_2}_s(x')xx'l^{N_1,N_2}_s\left(B^{3,j}_{x}(\theta)\right)\ip{B^{3,j}_{x'}(\theta),\gamma^{N_1,N_2}_0}\pi(dx',dy)ds \\
&\qquad \left. +\frac{1}{N_1} \sum_{j=1}^{N_1} \int_0^t \int_{\CX\times \CY} K^{N_1,N_2}_s(x')xx'\ip{B^{3,j}_{x}(\theta), \gamma^{N_1,N_2}_0}l^{N_1,N_2}_s\left(B^{3,j}_{x'}(\theta)\right)\pi(dx',dy)ds\right)\\
&\qquad \left.\left.  \times d_1(\Psi^{N_1,N_2}_{s_1}) \times \cdots \times d_p(\Psi^{N_1,N_2}_{s_p}) \right \vert\right]\\
&\le C \paren{\frac{1}{N_2^{1-\gamma_2}}+ \frac{1}{N_2^{\frac{1}{2}-\zeta}}} + \frac{C}{N_2^{\varphi}} + C \E \left[N_2^{\zeta}M_t^{N_1,N_2}(x) \right] + \frac{C}{N_2^{\gamma_2-\zeta}}\\
&\le C \paren{\frac{1}{N_2^{1-\gamma_2}} + \frac{1}{N_2^{\gamma_2-\zeta}} }.
\end{aligned}
\end{equation*}
Hence, $\lim_{N_2\to \infty} \E_{\pi^{N_1,N_2}}[F_7(\mathfrak{L},\Psi)] = 0$. Since $F_7(\cdot)$ is continuous and uniformly bounded,
\[\lim_{N_2\to \infty} \E_{\pi^{N_1,N_2}}\left[F_7(\mathfrak{L},\Psi)\right] = \E_{\pi^{N_1}}\left[F_7(\mathfrak{L},\Psi)\right] = 0.\]
The result then follows by Prokhorov's theorem.
\end{custlist}
\end{proof}

\section{Derivation of the asymptotic expansion of $h^{N_1,N_2}_t$ for  $\gamma_2\in(1/2,1)$}\label{sec::higher order}
%


The goal of this section is to provide an inductive argument to derive the asymptotic expansion for $\ip{f,\gamma^{N_1,N_2}_{t}}$ and $h^{N_1,N_2}_{t}$ as $N_2\rightarrow\infty$ as claimed in (\ref{measure_expansion}) and (\ref{network_expansion}) respectively.

%
%

Let $\nu\in \mathbb{N}$ and let  $\mathcal{G}^{N_1}(x)$ be the Gaussian random variable defined in Section \ref{S:MainResults}. Then, when $\gamma_2 \in \left[\frac{2\nu-1}{2\nu}, \frac{2\nu+1}{2\nu+2}\right)$,  we obtain that for any fixed $f\in C^{\infty}_b(\R^{1+N_1(1+d)})$, as $N_2\rightarrow\infty$, we have the expansion given by (\ref{measure_expansion}) where for $n \ge 3$,
\begin{equation}\label{Eq:l_equation}
\begin{aligned}
l^{N_1}_{n,t}(f) &=  \int_0^t \int_{\CX\times \CY}  \paren{y-Q^{N_1}_{0,s}(x')}\left[l^{N_1}_{n-1,s}(C_{x'}^{f,1}(\theta)) + \frac{1}{N_1^{1-\gamma_1}}l^{N_1}_{n-1,s}(C_{x'}^{f,2}(\theta))\right] \pi(dx',dy)ds\\
&\quad + \frac{1}{N_1^{1-\gamma_1}}\int_0^t\int_{\CX\times \CY}\paren{y - Q^{N_1}_{0,s}(x')}\left[\sum_{k=0}^{n-1} l^{N_1}_{k,s}(C^{3}_{x'}(\theta)) \cdot l^{N_1}_{n-1-k,s}(\nabla_{w^1}f(\theta)x')\right]\pi(dx',dy)ds\\
&\quad -\sum_{m=1}^{n-1}\int_0^t\int_{\CX\times \CY}Q^{N_1}_{n-m,s}(x') \left[ l^{N_1}_{m-1,s}(C^{f,1}_{x'}(\theta)) + \frac{1}{N_1^{1-\gamma_2}}l^{N_1}_{m-1,s}(C^{f,2}_{x'}(\theta))\right]\pi(dx',dy)ds\\
&\quad  - \sum_{m=1}^{n-1}\frac{1}{N_1^{1-\gamma_1}}\int_0^t\int_{\CX\times \CY} Q^{N_1}_{n-m,s}(x')\left[\sum_{k=0}^{m-1} l^{N_1}_{k,s}(C^{3}_{x'}(\theta)) \cdot l^{N_1}_{m-1-k,s}(\nabla_{w^1}f(\theta)x')\right]\pi(dx',dy)ds,
\end{aligned}
\end{equation}
where
\begin{equation}\label{C^f_def}
\begin{aligned}
C^{f,1}_{x}(\theta) &= \partial_{c}f(\theta)  \sigma(Z^{2,N_1}(x)),\\
C^{f,2}_{x}(\theta) &=c\sigma'(Z^{2,N_1}(x))\sigma(w^1x)\cdot \partial_{w^2}f(\theta),\\
C^{3}_{x}(\theta) &= c\sigma'(Z^{2,N_1}(x))\sigma'(w^1x)w^{2}.
\end{aligned}
\end{equation}


As $N_2 \rightarrow\infty$ and when $\gamma_2 \in \left(\frac{2\nu-1}{2\nu}, \frac{2\nu+1}{2\nu+2}\right]$, we have the asymptotic expansion (\ref{network_expansion}) for $h^{N_1,N_2}_t(x)$. The terms on the right hand side of the asymptotic expansion (\ref{network_expansion}) satisfy  the deterministic evolution equations (\ref{Eq:Qj_formula1}), (\ref{Eq:Qk_formula1}) and (\ref{Eq:Qj_formula2}).
\begin{equation}\label{Eq:Qj_formula1}
\begin{aligned}
Q^{N_1}_{n,t}(x)
&= \int_0^t  \int_{\CX \times \CY} \paren{y-Q^{N_1}_{0,s}(x')}  l^{N_1}_{n,s}\left(B^1_{x,x'}(\theta)+ \frac{1}{N_1}\sum_{j=1}^{N_1}B^{2,j}_{x,x'}(\theta)\right)\pi(dx',dy) ds\\
&\quad +\frac{1}{N_1} \sum_{j=1}^{N_1} \int_0^t \int_{\CX\times \CY} \paren{y- Q^{N_1}_{0,s}(x')}xx' \left[\sum_{k=0}^{n} l^{N_1}_{k,s}\left(B^{3,j}_{x}(\theta)\right) l^{N_1}_{n-k,s}\left(B^{3,j}_{x'}(\theta)\right)\right]\pi(dx',dy)ds\\
&\quad  - \sum_{m=0}^{n-1} \int_0^t  \int_{\CX \times \CY}Q^{N_1}_{n-m,s}(x')  l^{N_1}_{m,s}\left(B^1_{x,x'}(\theta)+ \frac{1}{N_1}\sum_{j=1}^{N_1}B^{2,j}_{x,x'}(\theta)\right)\pi(dx',dy) ds\\
&\quad -\frac{1}{N_1} \sum_{j=1}^{N_1} \sum_{m=0}^{n-1} \int_0^t \int_{\CX\times \CY} Q^{N_1}_{n-m,s}(x') xx'\left[\sum_{k=0}^{m}l^{N_1}_{k,s}\left(B^{3,j}_{x}(\theta)\right) l^{N_1}_{m-k,s}\left(B^{3,j}_{x'}(\theta)\right)\right]\pi(dx',dy)ds,
\end{aligned}
\end{equation}


When $\gamma_2 \in \paren{\frac{2\nu-1}{2\nu}, \frac{2\nu+1}{2\nu+2}}$,
\begin{equation}\label{Eq:Qk_formula1}
\begin{aligned}
Q^{N_1}_{\nu,t}(x) &= \mathcal{G}(x)-\int_0^t  \int_{\CX \times \CY} Q^{N_1}_{\nu,s}(x') l^{N_1}_{0,s}\left(B^1_{x,x'}(\theta)+ \frac{1}{N_1}\sum_{j=1}^{N_1}B^{2,j}_{x,x'}(\theta)\right) \pi(dx',dy) ds\\
&\quad  -\frac{1}{N_1} \sum_{j=1}^{N_1} \int_0^t \int_{\CX\times \CY} Q^{N_1}_{\nu,s}(x') xx' l^{N_1}_{0,s}\left(B^{3,j}_{x}(\theta)\right)  l^{N_1}_{0,s}\left(B^{3,j}_{x'}(\theta)\right) \pi(dx',dy)ds.
\end{aligned}
\end{equation}
and when $\gamma_2 = \frac{2\nu+1}{2\nu+2}$,
\begin{equation}\label{Eq:Qj_formula2}
\begin{aligned}
Q^{N_1}_{\nu,t}(x) &= \mathcal{G}^{N_1}(x) + \int_0^t  \int_{\CX \times \CY} \paren{y-Q^{N_1}_{0,s}(x')}  l^{N_1}_{\nu,s}\left(B^1_{x,x'}(\theta)+ \frac{1}{N_1}\sum_{j=1}^{N_1}B^{2,j}_{x,x'}(\theta)\right)\pi(dx',dy) ds\\
&\quad +\frac{1}{N_1} \sum_{j=1}^{N_1} \int_0^t \int_{\CX\times \CY} \paren{y- Q^{N_1}_{0,s}(x')}xx' \left[\sum_{k=0}^{n} l^{N_1}_{k,s}\left(B^{3,j}_{x}(\theta)\right) l^{N_1}_{\nu-k,s}\left(B^{3,j}_{x'}(\theta)\right)\right]\pi(dx',dy)ds\\
&\quad  - \sum_{m=0}^{\nu-1} \int_0^t  \int_{\CX \times \CY}Q^{N_1}_{\nu-m,s}(x')  l^{N_1}_{m,s}\left(B^1_{x,x'}(\theta)+ \frac{1}{N_1}\sum_{j=1}^{N_1}B^{2,j}_{x,x'}(\theta)\right)\pi(dx',dy) ds\\
&\quad -\frac{1}{N_1} \sum_{j=1}^{N_1} \sum_{m=0}^{\nu-1} \int_0^t \int_{\CX\times \CY} Q^{N_1}_{\nu-m,s}(x') xx'\left[\sum_{k=0}^{m}l^{N_1}_{k,s}\left(B^{3,j}_{x}(\theta)\right) l^{N_1}_{m-k,s}\left(B^{3,j}_{x'}(\theta)\right)\right]\pi(dx',dy)ds.
\end{aligned}
\end{equation}


It is interesting to note that the approach that is presented in this section also recovers the rigorously derived formulas for $\nu=1$ and $\nu=2$ as presented in the main theoretical results of Section \ref{S:MainResults}. Below we focus on presenting the argument for the case $\nu> 2$.

\subsection{General $\nu>2$ case}
To find an expression for $Q^{N_1}_{\nu,t}$ for any $\nu >2$, we use an inductive argument. Assuming $Q^{N_1}_{0,t}(x) = h^{N_1}_t(x)$ and $l^{N_1}_{0,t}(f) = \ip{f,\gamma^{N_1}_0}$, we have already rigorously shown that the statement holds for $\nu=1$ and $\nu=2$. For $n=3,\ldots,\nu-1$, we will assume that $Q^{N_1}_{n,t}$ and $l^{N_1}_{n,t}(f)$ satisfy the following deterministic evolution equations,
\begin{equation*}
\begin{aligned}
Q^{N_1}_{n,t}(x)
&= \int_0^t  \int_{\CX \times \CY} \paren{y-Q^{N_1}_{0,s}(x')}  l^{N_1}_{n,s}\left(B^1_{x,x'}(\theta)+ \frac{1}{N_1}\sum_{j=1}^{N_1}B^{2,j}_{x,x'}(\theta)\right)\pi(dx',dy) ds\\
&\quad +\frac{1}{N_1} \sum_{j=1}^{N_1} \int_0^t \int_{\CX\times \CY} \paren{y- Q^{N_1}_{0,s}(x')}xx' \left[\sum_{k=0}^{n} l^{N_1}_{k,s}\left(B^{3,j}_{x}(\theta)\right) l^{N_1}_{n-k,s}\left(B^{3,j}_{x'}(\theta)\right)\right]\pi(dx',dy)ds\\
&\quad  - \sum_{m=0}^{n-1} \int_0^t  \int_{\CX \times \CY}Q^{N_1}_{n-m,s}(x')  l^{N_1}_{m,s}\left(B^1_{x,x'}(\theta)+ \frac{1}{N_1}\sum_{j=1}^{N_1}B^{2,j}_{x,x'}(\theta)\right)\pi(dx',dy) ds\\
&\quad -\frac{1}{N_1} \sum_{j=1}^{N_1} \sum_{m=0}^{n-1} \int_0^t \int_{\CX\times \CY} Q^{N_1}_{n-m,s}(x') xx'\left[\sum_{k=0}^{m}l^{N_1}_{k,s}\left(B^{3,j}_{x}(\theta)\right) l^{N_1}_{m-k,s}\left(B^{3,j}_{x'}(\theta)\right)\right]\pi(dx',dy)ds,
\end{aligned}
\end{equation*}
and
\begin{equation*}
\begin{aligned}
l^{N_1}_{n,t}(f) &=  \int_0^t \int_{\CX\times \CY}  \paren{y-Q^{N_1}_{0,s}(x')}\left[l^{N_1}_{n-1,s}(C_{x'}^{f,1}(\theta)) + \frac{1}{N_1^{1-\gamma_1}}l^{N_1}_{n-1,s}(C_{x'}^{f,2}(\theta))\right] \pi(dx',dy)ds\\
&\quad + \frac{1}{N_1^{1-\gamma_1}}\int_0^t\int_{\CX\times \CY}\paren{y - Q^{N_1}_{0,s}(x')}\left[\sum_{k=0}^{n-1} l^{N_1}_{k,s}(C^{3}_{x'}(\theta)) \cdot l^{N_1}_{n-1-k,s}(\nabla_{w^1}f(\theta)x')\right]\pi(dx',dy)ds\\
&\quad -\sum_{m=1}^{n-1}\int_0^t\int_{\CX\times \CY}Q^{N_1}_{n-m,s}(x') \left[ l^{N_1}_{m-1,s}(C^{f,1}_{x'}(\theta)) + \frac{1}{N_1^{1-\gamma_2}}l^{N_1}_{m-1,s}(C^{f,2}_{x'}(\theta))\right]\pi(dx',dy)ds\\
&\quad  - \sum_{m=1}^{n-1}\frac{1}{N_1^{1-\gamma_1}}\int_0^t\int_{\CX\times \CY} Q^{N_1}_{n-m,s}(x')\left[\sum_{k=0}^{m-1} l^{N_1}_{k,s}(C^{3}_{x'}(\theta)) \cdot l^{N_1}_{m-1-k,s}(\nabla_{w^1}f(\theta)x')\right]\pi(dx',dy)ds\\
\end{aligned}
\end{equation*}
We now derive the formulas for $Q^{N_1}_{\nu,t}$ and $l^{N_1}_{\nu,t}(f)$ for any $\nu \in \mathbb{N}$.

\begin{itemize}
\item When $\gamma \in \left(\frac{2\nu-1}{2\nu}, \frac{2\nu+1}{2\nu+2}\right)$,
plugging equations \eqref{network_expansion} and \eqref{measure_expansion} into the left hand side of equation \eqref{h_N1N2_evolution} gives (the symbol $\approx$ is used to ignore the remainder terms in (\ref{measure_expansion}) and (\ref{network_expansion}))
\begin{equation*}
\begin{aligned}
&h^{N_1,N_2}_t(x)-h^{N_1,N_2}_0(x)\\
&\approx \int_0^t  \int_{\CX \times \CY} \paren{y-\sum_{k=1}^{\nu-1} \frac{1}{N_2^{k(1-\gamma_2)}}Q^{N_1}_{k,s}(x') - \frac{1}{N_2^{\gamma_2-\frac{1}{2}}}Q^{N_1}_{\nu,s}(x')}  \\
&\qquad \qquad \times \sum_{k=0}^{\nu-1} \frac{1}{N_2^{k(1-\gamma_2)}} l^{N_1}_{k,s}\left(B^1_{x,x'}(\theta)+ \frac{1}{N_1}\sum_{j=1}^{N_1}B^{2,j}_{x,x'}(\theta)\right) \pi(dx',dy) ds\\
&\quad +\frac{1}{N_1} \sum_{j=1}^{N_1} \int_0^t \int_{\CX\times \CY} \paren{y-\sum_{k=0}^{\nu-1} \frac{1}{N_2^{k(1-\gamma_2)}}Q^{N_1}_{k,s}(x') - \frac{1}{N_2^{\gamma_2-\frac{1}{2}}}Q^{N_1}_{\nu,s}(x')}xx'\\
&\qquad \qquad \times \left[\sum_{k=0}^{\nu-1} \frac{1}{N_2^{k(1-\gamma_2)}} l^{N_1}_{k,s}\left(B^{3,j}_{x}(\theta)\right) \right]\left[\sum_{k=0}^{\nu-1} \frac{1}{N_2^{k(1-\gamma_2)}} l^{N_1}_{k,s}\left(B^{3,j}_{x'}(\theta)\right) \right]\pi(dx',dy)ds\\
&=\left\lbrace \int_0^t  \int_{\CX \times \CY} \paren{y- Q^{N_1}_{0,s}(x')}  l^{N_1}_{0,s}\left(B^1_{x,x'}(\theta)+ \frac{1}{N_1}\sum_{j=1}^{N_1}B^{2,j}_{x,x'}(\theta)\right) \pi(dx',dy) ds\right.\\
&\qquad\left. +\frac{1}{N_1} \sum_{j=1}^{N_1} \int_0^t \int_{\CX\times \CY} \paren{y-Q^{N_1}_{0,s}(x')}xx' l^{N_1}_{0,s}\left(B^{3,j}_{x}(\theta)\right) l^{N_1}_{0,s}\left(B^{3,j}_{x'}(\theta)\right)\pi(dx',dy)ds \right\rbrace\\
&\quad +\sum_{n=1}^{\nu-1} \frac{1}{N_2^{n(1-\gamma_2)}} \left\lbrace \int_0^t  \int_{\CX \times \CY} \paren{y- Q^{N_1}_{0,s}(x')} l^{N_1}_{n,s}\left(B^1_{x,x'}(\theta)+ \frac{1}{N_1}\sum_{j=1}^{N_1}B^{2,j}_{x,x'}(\theta)\right) \pi(dx',dy) ds \right.\\
&\qquad - \sum_{m=0}^{n-1}\int_0^t  \int_{\CX \times \CY} Q^{N_1}_{n-m,s}(x') l^{N_1}_{m,s}\left(B^1_{x,x'}(\theta)+ \frac{1}{N_1}\sum_{j=1}^{N_1}B^{2,j}_{x,x'}(\theta)\right) \pi(dx',dy) ds\\
&\qquad +\frac{1}{N_1} \sum_{j=1}^{N_1} \int_0^t \int_{\CX\times \CY} \paren{y-Q^{N_1}_{0,s}(x') }xx' \left[\sum_{k=0}^{n} l^{N_1}_{k,s}\left(B^{3,j}_{x}(\theta)\right)  l^{N_1}_{n-k,s}\left(B^{3,j}_{x'}(\theta)\right)\right] \pi(dx',dy)ds\\
&\qquad\left. -  \frac{1}{N_1} \sum_{j=1}^{N_1} \sum_{m=0}^{n-1}\int_0^t \int_{\CX\times \CY} Q^{N_1}_{n-m,s}(x') xx' \left[\sum_{k=0}^{m} l^{N_1}_{k,s}\left(B^{3,j}_{x}(\theta)\right)  l^{N_1}_{m-k,s}\left(B^{3,j}_{x'}(\theta)\right)\right] \pi(dx',dy)ds\right\rbrace\\
&\quad - \frac{1}{N_2^{\gamma_2-\frac{1}{2}}}\left\lbrace\int_0^t  \int_{\CX \times \CY} Q^{N_1}_{\nu,s}(x') l^{N_1}_{0,s}\left(B^1_{x,x'}(\theta)+ \frac{1}{N_1}\sum_{j=1}^{N_1}B^{2,j}_{x,x'}(\theta)\right) \pi(dx',dy) ds\right.\\
&\qquad \left. +\frac{1}{N_1} \sum_{j=1}^{N_1} \int_0^t \int_{\CX\times \CY} Q^{N_1}_{\nu,s}(x') xx' l^{N_1}_{0,s}\left(B^{3,j}_{x}(\theta)\right)  l^{N_1}_{0,s}\left(B^{3,j}_{x'}(\theta)\right) \pi(dx',dy)ds\right\rbrace + O(N_2^{-\Omega_2})\\
&= \sum_{n=0}^{\nu-1} \frac{1}{N_2^{n(1-\gamma_2)}} Q^{N_1}_{n,t}(x) - \frac{1}{N_2^{\gamma_2-\frac{1}{2}}}\left\lbrace\int_0^t  \int_{\CX \times \CY} Q^{N_1}_{\nu,s}(x') l^{N_1}_{0,s}\left(B^1_{x,x'}(\theta)+ \frac{1}{N_1}\sum_{j=1}^{N_1}B^{2,j}_{x,x'}(\theta)\right) \pi(dx',dy) ds\right.\\
&\qquad \left. +\frac{1}{N_1} \sum_{j=1}^{N_1} \int_0^t \int_{\CX\times \CY} Q^{N_1}_{\nu,s}(x') xx' l^{N_1}_{0,s}\left(B^{3,j}_{x}(\theta)\right)  l^{N_1}_{0,s}\left(B^{3,j}_{x'}(\theta)\right) \pi(dx',dy)ds\right\rbrace + O(N^{-\Omega_{\nu}}),
\end{aligned}
\end{equation*}
for some $\Omega_{\nu}  > \gamma - \frac{1}{2}$.
Adding $h^{N_1,N_2}_0(x)$ and subtracting $\sum_{k=0}^{\nu-1} \frac{1}{N_2^{k(1-\gamma_2)}} Q^{N_1}_{k,t}(x)$ on both sides, we have
\begin{equation*}
\begin{aligned}
&\frac{1}{N_2^{\gamma_2-\frac{1}{2}}}Q^{N_1}_{\nu,t}(x)  \\
&= \frac{1}{N_2^{\gamma_2-\frac{1}{2}}}(N_2^{\gamma_2-\frac{1}{2}}h^{N_1,N_2}_0(x))-\frac{1}{N_2^{\gamma_2-\frac{1}{2}}}\left\lbrace\int_0^t  \int_{\CX \times \CY} Q^{N_1}_{\nu,s}(x') l^{N_1}_{0,s}\left(B^1_{x,x'}(\theta)+ \frac{1}{N_1}\sum_{j=1}^{N_1}B^{2,j}_{x,x'}(\theta)\right) \pi(dx',dy) ds\right.\\
&\qquad \left. +\frac{1}{N_1} \sum_{j=1}^{N_1} \int_0^t \int_{\CX\times \CY} Q^{N_1}_{\nu,s}(x') xx' l^{N_1}_{0,s}\left(B^{3,j}_{x}(\theta)\right)  l^{N_1}_{0,s}\left(B^{3,j}_{x'}(\theta)\right) \pi(dx',dy)ds\right\rbrace.
\end{aligned}
\end{equation*}
Since $N_2^{\gamma_2-\frac{1}{2}}h^{N_1,N_2}_0(x)$ converges in distribution to the Gaussian random variable $\mathcal{G}^{N_1}(x)$ defined in (\ref{limit_gaussian}), we have an expression for $Q^{N_1}_{{\nu},t}$:
\begin{equation*}
\begin{aligned}
Q^{N_1}_{\nu,t}(x) &= \mathcal{G}(x)-\int_0^t  \int_{\CX \times \CY} Q^{N_1}_{\nu,s}(x') l^{N_1}_{0,s}\left(B^1_{x,x'}(\theta)+ \frac{1}{N_1}\sum_{j=1}^{N_1}B^{2,j}_{x,x'}(\theta)\right) \pi(dx',dy) ds\\
&\quad  -\frac{1}{N_1} \sum_{j=1}^{N_1} \int_0^t \int_{\CX\times \CY} Q^{N_1}_{\nu,s}(x') xx' l^{N_1}_{0,s}\left(B^{3,j}_{x}(\theta)\right)  l^{N_1}_{0,s}\left(B^{3,j}_{x'}(\theta)\right) \pi(dx',dy)ds.
\end{aligned}
\end{equation*}
which coincides with (\ref{Eq:Qk_formula1}).
\item When $\gamma \ge \frac{2\nu+1}{2\nu+2}$, we first derive an expression for $l^{\nu}_t(f)$ by plugging \eqref{network_expansion} and \eqref{measure_expansion} into equation \eqref{mu_N1N2_evolution},
\begin{equation*}
\begin{aligned}
&\ip{f,\gamma^{N_1,N_2}_{t}} - \ip{ f,\gamma^{N_1,N_2}_{0}}\\
&\approx \frac{1}{N_2^{1-\gamma_2}}\int_0^t\int_{\CX\times \CY}  \paren{y  - \sum_{k=0}^{\nu} \frac{1}{N_2^{k(1-\gamma_2)}} Q^{N_1}_{k,s}(x')- O(N_2^{-(\nu+1)(1-\gamma_2)})} \\
& \qquad \qquad  \times \left[\sum_{k}^{\nu}\frac{1}{N_2^{k(1-\gamma_2)}}l^{N_1}_{k,s}(C^{f,1}_{x'}(\theta))+ O(N_2^{-(\nu+1)(1-\gamma_2)})\right]\pi(dx',dy)ds\\
&\quad +\frac{1}{N_1^{1-\gamma_1}N_2^{1-\gamma_2}}\int_0^t\int_{\CX\times \CY}\paren{y  - \sum_{k=0}^{\nu} \frac{1}{N_2^{k(1-\gamma_2)}} Q^{N_1}_{k,s}(x')- O(N_2^{-(\nu+1)(1-\gamma_2)})}\\
&\qquad \qquad  \times \left[\sum_{k}^{\nu}\frac{1}{N_2^{k(1-\gamma_2)}}l^{N_1}_{k,s}(C^{f,2}_{x'}(\theta))+ O(N_2^{-(\nu+1)(1-\gamma_2)})\right]\pi(dx',dy)ds\\
&\quad + \frac{1}{N_1^{1-\gamma_1}N_2^{1-\gamma_2}}\int_0^t\int_{\CX\times \CY}\paren{y  - \sum_{k=0}^{\nu} \frac{1}{N_2^{k(1-\gamma_2)}} Q^{N_1}_{k,s}(x')- O(N_2^{-(\nu+1)(1-\gamma_2)})}\\
&\qquad \qquad  \times \left[\sum_{k}^{\nu}\frac{1}{N_2^{k(1-\gamma_2)}}l^{N_1}_{k,s}(C^{f,3}_{x'}(\theta))+ O(N_2^{-(\nu+1)(1-\gamma_2)})\right]\\
& \qquad \qquad  \qquad \cdot \left[\sum_{k}^{\nu}\frac{1}{N_2^{k(1-\gamma_2)}}\frac{1}{k(N_2^{1-\gamma_2})}l^{N_1}_{k,s}(\nabla_{w^1}f(\theta)x') + O(N_2^{-(\nu+1)(1-\gamma_2)})\right] \pi(dx',dy)ds
\end{aligned}
\end{equation*}
\begin{equation*}
\begin{aligned}
&=\sum_{n=1}^{\nu-1}\frac{1}{N_2^{n(1-\gamma_2)}}l^{N_1}_{n,t}(f) + \frac{1}{N_2^{\nu(1-\gamma_2)}}\int_0^t\int_{\CX\times \CY}  \paren{y - Q^{N_1}_{0,s}(x')} \left[ l^{N_1}_{\nu-1,s}(C^{f,1}_{x'}(\theta)) + \frac{1}{N_1^{1-\gamma_2}}l^{N_1}_{\nu-1,s}(C^{f,2}_{x'}(\theta))\right]\pi(dx',dy)ds\\
&\quad -\frac{1}{N_2^{\nu(1-\gamma_2)}}\sum_{m=1}^{\nu-1}\int_0^t\int_{\CX\times \CY}Q^{N_1}_{\nu-m,s}(x') \left[ l^{N_1}_{m-1,s}(C^{f,1}_{x'}(\theta)) + \frac{1}{N_1^{1-\gamma_2}}l^{N_1}_{m-1,s}(C^{f,2}_{x'}(\theta))\right]\pi(dx',dy)ds\\
&\quad + \frac{1}{N_1^{1-\gamma_1}N_2^{\nu(1-\gamma_2)}}\int_0^t\int_{\CX\times \CY}\paren{y - Q^{N_1}_{0,s}(x')}\left[\sum_{k=0}^{\nu-1}l^{N_1}_{k,s}(C^{3}_{x'}(\theta)) \cdot l^{N_1}_{\nu-1-k,s}(\nabla_{w^1}f(\theta)x')\right]\pi(dx',dy)ds\\
&\quad - \frac{1}{N_1^{1-\gamma_1}N_2^{\nu(1-\gamma_2)}}\sum_{m=1}^{\nu-1}\int_0^t\int_{\CX\times \CY}Q^{N_1}_{\nu-m,s}(x')\left[\sum_{k=0}^{m-1} l^{N_1}_{k,s}(C^{3}_{x'}(\theta)) \cdot l^{N_1}_{m-1-k,s}(\nabla_{w^1}f(\theta)x')\right]\pi(dx',dy)ds\\
&\quad + O(N_2^{-(\nu+1)(1-\gamma_2)})
\end{aligned}
\end{equation*}

Subtracting $\ip{f,\gamma^{N_1}_0} + \sum_{n=1}^{\nu-1}\frac{1}{N_2^{n(1-\gamma_2)}}l^{N_1}_{n,t}(f)$, multiplying $N_2^{\nu(1-\gamma_2)}$ on both sides of the above equation, and using the fact that
 $N_2^{\nu(1-\gamma_2)}\paren{\ip{f,\gamma^{N_1,N_2}_0}-\ip{f,\gamma^{N_1}_0}}$ converges to 0 in distribution when $\gamma_2 \ge \frac{2\nu+1}{2\nu+2}$, we can get the following evolution equation for $l^{N_1}_{\nu,t}(f)$,
\begin{align*}
l^{N_1}_{\nu,t}(f)
&=\int_0^t\int_{\CX\times \CY}  \paren{y - Q^{N_1}_{0,s}(x')} \left[ l^{N_1}_{\nu-1,s}(C^{f,1}_{x'}(\theta)) + \frac{1}{N_1^{1-\gamma_2}}l^{N_1}_{\nu-1,s}(C^{f,2}_{x'}(\theta))\right]\pi(dx',dy)ds\\
&\quad -\sum_{m=1}^{\nu-1}\int_0^t\int_{\CX\times \CY}Q^{N_1}_{\nu-m,s}(x') \left[ l^{N_1}_{m-1,s}(C^{f,1}_{x'}(\theta)) + \frac{1}{N_1^{1-\gamma_2}}l^{N_1}_{m-1,s}(C^{f,2}_{x'}(\theta))\right]\pi(dx',dy)ds\\
&\quad + \frac{1}{N_1^{1-\gamma_1}}\int_0^t\int_{\CX\times \CY}\paren{y - Q^{N_1}_{0,s}(x')}\left[\sum_{k=0}^{\nu-1}l^{N_1}_{k,s}(C^{3}_{x'}(\theta)) \cdot l^{N_1}_{\nu-1-k,s}(\nabla_{w^1}f(\theta)x')\right]\pi(dx',dy)ds\\
&\quad - \frac{1}{N_1^{1-\gamma_1}}\sum_{m=1}^{\nu-1}\int_0^t\int_{\CX\times \CY}Q^{N_1}_{\nu-m,s}(x')\left[\sum_{k=0}^{m-1} l^{N_1}_{k,s}(C^{3}_{x'}(\theta)) \cdot l^{N_1}_{m-1-k,s}(\nabla_{w^1}f(\theta)x')\right]\pi(dx',dy)ds,
\end{align*}
which concludes the inductive step for $l^{N_1}_{\nu,t}(f)$.

Next, we derive $Q^{N_1}_{\nu,t}$ by plugging equations \eqref{network_expansion} and \eqref{measure_expansion} into the left hand side of equation \eqref{h_N1N2_evolution}:
\begin{equation*}
\begin{aligned}
&h^{N_1,N_2}_t(x)-h^{N_1,N_2}_0(x)\\
&\approx \int_0^t  \int_{\CX \times \CY} \paren{y-\sum_{k=0}^{\nu} \frac{1}{N_2^{k(1-\gamma_2)}}Q^{N_1}_{k,s}(x') - O(N_2^{(\nu+1)(1-\gamma_2)})}  \\
& \quad \times \left[\sum_{k=0}^{\nu} \frac{1}{N_2^{k(1-\gamma_2)}} l^{N_1}_{k,s}\left(B^1_{x,x'}(\theta)+ \frac{1}{N_1}\sum_{j=1}^{N_1}B^{2,j}_{x,x'}(\theta)\right) + O(N_2^{-(\nu+1)(1-\gamma_2)})\right]\pi(dx',dy) ds\\
&\quad +\frac{1}{N_1} \sum_{j=1}^{N_1} \int_0^t \int_{\CX\times \CY} \paren{y-\sum_{k=0}^{\nu} \frac{1}{N_2^{k(1-\gamma_2)}}Q^{N_1}_{k,s}(x') - O(N_2^{-(\nu+1)(1-\gamma_2)})}xx'\\
&\quad \times \left[\sum_{k=0}^{\nu} \frac{1}{N_2^{k(1-\gamma_2)}} l^{N_1}_{k,s}\left(B^{3,j}_{x}(\theta)\right)+O(N_2^{-(\nu+1)(1-\gamma_2)}) \right]\nonumber\\
&\quad\times\left[\sum_{k=0}^{\nu} \frac{1}{N_2^{k(1-\gamma_2)}} l^{N_1}_{k,s}\left(B^{3,j}_{x'}(\theta)\right) +O(N_2^{-(\nu+1)(1-\gamma_2)})\right]\pi(dx',dy)ds
\end{aligned}
\end{equation*}
\begin{equation*}
\begin{aligned}
&= \sum_{n=0}^{\nu-1} \frac{1}{N_2^{n(1-\gamma_2)}} Q^{N_1}_{n,t}(x) + \frac{1}{N_2^{\nu(1-\gamma_2)}}\int_0^t  \int_{\CX \times \CY} \paren{y-Q^{N_1}_{0,s}(x')}  l^{N_1}_{\nu,s}\left(B^1_{x,x'}(\theta)+ \frac{1}{N_1}\sum_{j=1}^{N_1}B^{2,j}_{x,x'}(\theta)\right)\pi(dx',dy) ds\\
&\quad +\frac{1}{N_1N_2^{\nu(1-\gamma_2)}}\sum_{j=1}^{N_1} \int_0^t \int_{\CX\times \CY} \paren{y- Q^{N_1}_{0,s}(x')}xx' \left[\sum_{k=0}^{\nu} l^{N_1}_{k,s}\left(B^{3,j}_{x}(\theta)\right) l^{N_1}_{\nu-k,s}\left(B^{3,j}_{x'}(\theta)\right)\right]\pi(dx',dy)ds\\
&\quad  - \frac{1}{N_2^{\nu(1-\gamma_2)}}\sum_{m=0}^{\nu-1} \int_0^t  \int_{\CX \times \CY}Q^{N_1}_{\nu-m,s}(x')  l^{N_1}_{m,s}\left(B^1_{x,x'}(\theta)+ \frac{1}{N_1}\sum_{j=1}^{N_1}B^{2,j}_{x,x'}(\theta)\right)\pi(dx',dy) ds\\
&\quad -\frac{1}{N_1N_2^{\nu(1-\gamma_2)}} \sum_{j=1}^{N_1} \sum_{m=0}^{\nu-1} \int_0^t \int_{\CX\times \CY} Q^{N_1}_{\nu-m,s}(x') xx'\left[\sum_{k=0}^{m}l^{N_1}_{k,s}\left(B^{3,j}_{x}(\theta)\right) l^{N_1}_{m-k,s}\left(B^{3,j}_{x'}(\theta)\right)\right]\pi(dx',dy)ds\\
&\quad+ O(N^{-\Omega_{\nu+1}}),
\end{aligned}
\end{equation*}
where $\Omega_{\nu+1}>(\nu+1)(1-\gamma_2 )$.
Following the same idea as earlier, when $\gamma_2 = \frac{2\nu+1}{2\nu+2}$, we note that $\nu(1-\gamma_2) = \gamma_2 -\frac{1}{2}=\frac{\nu}{2\nu+2}$, we can obtain an expression for $Q^{N_1}_{\nu,t}$ (which coincides with (\ref{Eq:Qj_formula2})):
\begin{equation*}
\begin{aligned}
Q^{N_1}_{\nu,t}(x) &= \mathcal{G}^{N_1}(x) + \int_0^t  \int_{\CX \times \CY} \paren{y-Q^{N_1}_{0,s}(x')}  l^{N_1}_{\nu,s}\left(B^1_{x,x'}(\theta)+ \frac{1}{N_1}\sum_{j=1}^{N_1}B^{2,j}_{x,x'}(\theta)\right)\pi(dx',dy) ds\\
&\quad +\frac{1}{N_1} \sum_{j=1}^{N_1} \int_0^t \int_{\CX\times \CY} \paren{y- Q^{N_1}_{0,s}(x')}xx' \left[\sum_{k=0}^{n} l^{N_1}_{k,s}\left(B^{3,j}_{x}(\theta)\right) l^{N_1}_{\nu-k,s}\left(B^{3,j}_{x'}(\theta)\right)\right]\pi(dx',dy)ds\\
&\quad  - \sum_{m=0}^{\nu-1} \int_0^t  \int_{\CX \times \CY}Q^{N_1}_{\nu-m,s}(x')  l^{N_1}_{m,s}\left(B^1_{x,x'}(\theta)+ \frac{1}{N_1}\sum_{j=1}^{N_1}B^{2,j}_{x,x'}(\theta)\right)\pi(dx',dy) ds\\
&\quad -\frac{1}{N_1} \sum_{j=1}^{N_1} \sum_{m=0}^{\nu-1} \int_0^t \int_{\CX\times \CY} Q^{N_1}_{\nu-m,s}(x') xx'\left[\sum_{k=0}^{m}l^{N_1}_{k,s}\left(B^{3,j}_{x}(\theta)\right) l^{N_1}_{m-k,s}\left(B^{3,j}_{x'}(\theta)\right)\right]\pi(dx',dy)ds,
\end{aligned}
\end{equation*}
where $\mathcal{G}(x)$ is the Gaussian random variable. And when $\gamma_2 >\frac{2\nu+1}{2\nu+2}$, $Q^{N_1}_{\nu,t}$ is driven by the deterministic equation
\begin{equation*}
\begin{aligned}
Q^{N_1}_{\nu,t}(x) &=  \int_0^t  \int_{\CX \times \CY} \paren{y-Q^{N_1}_{0,s}(x')}  l^{N_1}_{\nu,s}\left(B^1_{x,x'}(\theta)+ \frac{1}{N_1}\sum_{j=1}^{N_1}B^{2,j}_{x,x'}(\theta)\right)\pi(dx',dy) ds\\
&\quad +\frac{1}{N_1} \sum_{j=1}^{N_1} \int_0^t \int_{\CX\times \CY} \paren{y- Q^{N_1}_{0,s}(x')}xx' \left[\sum_{k=0}^{n} l^{N_1}_{k,s}\left(B^{3,j}_{x}(\theta)\right) l^{N_1}_{\nu-k,s}\left(B^{3,j}_{x'}(\theta)\right)\right]\pi(dx',dy)ds\\
&\qquad  - \sum_{m=0}^{\nu-1} \int_0^t  \int_{\CX \times \CY}Q^{N_1}_{\nu-m,s}(x')  l^{N_1}_{m,s}\left(B^1_{x,x'}(\theta)+ \frac{1}{N_1}\sum_{j=1}^{N_1}B^{2,j}_{x,x'}(\theta)\right)\pi(dx',dy) ds\\
&\qquad -\frac{1}{N_1} \sum_{j=1}^{N_1} \sum_{m=0}^{\nu-1} \int_0^t \int_{\CX\times \CY} Q^{N_1}_{\nu-m,s}(x') xx'\left[\sum_{k=0}^{m}l^{N_1}_{k,s}\left(B^{3,j}_{x}(\theta)\right) l^{N_1}_{m-k,s}\left(B^{3,j}_{x'}(\theta)\right)\right]\pi(dx',dy)ds,
\end{aligned}
\end{equation*}

This concludes the inductive step for the derivation of $Q^{N_1}_{\nu,t}(x)$.
\end{itemize}
\bibliographystyle{abbrv}

\end{document}